\documentclass{article} % For LaTeX2e
\usepackage{times}
\usepackage[margin=1in]{geometry}
\usepackage{natbib}
\setcitestyle{authoryear,round,citesep={;},aysep={,},yysep={;}}

\usepackage{amsmath,amsfonts,bm}

\def\eqref#1{equation~\ref{#1}}
\def\1{\bm{1}}

\DeclareMathAlphabet{\mathsfit}{\encodingdefault}{\sfdefault}{m}{sl}
\SetMathAlphabet{\mathsfit}{bold}{\encodingdefault}{\sfdefault}{bx}{n}

\usepackage{etoc}
\usepackage[utf8]{inputenc}
\usepackage[T1]{fontenc}
\usepackage{microtype}
\usepackage{mathrsfs}
\usepackage{amsmath}
\usepackage{amssymb}
\usepackage{amsfonts}
\usepackage{amsthm}
\usepackage{mathtools}
\usepackage{stmaryrd}
\usepackage{bbm}
\usepackage{bm}
\usepackage{nicefrac}

\usepackage{graphicx}
\usepackage{booktabs}
\usepackage{tabularx}
\usepackage{multirow}
\usepackage{makecell}
\usepackage{array}
\usepackage{adjustbox}
\usepackage{arydshln}
\usepackage{float}

\usepackage{tikz}
\usetikzlibrary{positioning}

\usepackage{algorithm}
\usepackage{algpseudocode}

\algnewcommand\algorithmicinput{\textbf{INPUT:}}
\algnewcommand\INPUT{\item[\algorithmicinput]}
\algnewcommand\algorithmicoutput{\textbf{OUTPUT:}}
\algnewcommand\OUTPUT{\item[\algorithmicoutput]}

\usepackage{xcolor}

\usepackage{hyperref}
\hypersetup{hypertexnames=false} % Unique PDF destinations for tagged equations.
\usepackage{url}
\usepackage{cleveref}

\newtheorem{assumption}{Assumption}
\newtheorem{lemma}{Lemma}

\newtheorem{corollary}{Corollary}
\newtheorem{definition}{Definition}

\newtheorem{theorem}{Theorem}
\newtheorem{example}{Example}

\DeclareMathOperator*{\argmin}{arg\,min}

\newcommand{\R}{\mathbb R}

\newcommand{\E}{\mathbb E}
\newcommand{\Prob}{\mathbb P}
\newcommand{\F}{\mathcal F}

\title{Distributional Regression for Arbitrary Response Variables Using ReLU Neural Networks}

\author{
Junpeng Ren \\
Department of Statistics and Data Science \\
University of California, Los Angeles \\
Los Angeles, CA, USA \\
\texttt{junren18@g.ucla.edu} \\
\and
Carlos Misael Madrid Padilla \\
Department of Statistics and Data Science \\
Washington University in St Louis \\
St. Louis, MO, USA \\
\texttt{carlosmisael@wulst.edu} \\
\and
Oscar Hernan Madrid Padilla \\
Department of Statistics and Data Science \\
University of California, Los Angeles \\
Los Angeles, CA, USA \\
\texttt{oscar.madrid@stat.ucla.edu}
}

\begin{document}

\maketitle

\begin{abstract}
This paper develops a unified framework for distributional regression with arbitrary response variables. Rather than relying on scalar thresholds or Euclidean structure, we model conditional event probabilities indexed by measurable sets, which extends distributional regression to general response spaces while retaining a direct connection to classical event-based formulations. Within this framework, we study two complementary estimators: a preliminary estimator based on separate regression of event indicators, and a constrained estimator that enforces inclusion-consistent predictions through a directed acyclic graph (DAG) over a finite set family. We establish general error bounds for both estimators and further specialize the results to dense ReLU networks, yielding explicit convergence rates overcoming the curse of dimensionality. Simulation studies and real-data experiments further demonstrate the practical effectiveness of the proposed framework.
\end{abstract}
\section{Introduction}
\label{sec:intro}
While regression methods are widely used in statistics and machine learning, it is well recognized that the conditional mean alone often fails to capture the full relationship between a response variable and covariates \citep{koenker1978regression, shaked2007stochastic}. In contrast, distributional regression aims to directly model the conditional distribution of the response, providing a richer characterization of the underlying stochastic relationship \citep{kneib2023rage}. This has led to growing interest in distributional regression across a range of applications, such as economics \citep{foresi1995conditional, rothe2012partial, wang2023bivariate}, weather forecasting \citep{schlosser2019distributional}, and treatment effect estimation in clinical trials \citep{heller2022distributional}.

However, most existing approaches to distributional regression focus on real valued scalar \citep{hothorn2014conditional, padilla2025risk} or Euclidean-valued responses \citep{cevid2022distributional, klein2022multivariate, kock2025truly}, where the conditional distribution can be naturally described via cumulative distribution functions, quantiles, or threshold event probabilities. When the response variable is a more complex object, such as a network, a functional object, or more general structured data, it becomes much less clear how to define and estimate a conditional distribution in a unified and tractable way. {In this paper, we consider independent observations $\{(x_i,y_i)\}_{i=1}^n$ from $(X,Y)\in[0,1]^d\times\mathcal{Y}$, and {an arbitrary collection $\Lambda\subseteq\mathscr Y$ of measurable sets in the response space $(\mathcal{Y},\mathscr Y)$}.} Rather than modeling the conditional distribution of $Y$ directly, we study the family of conditional event probabilities
\begin{equation}
\label{eqn:main}
    F_A^*(x) = P(Y \in A \mid X = x), \qquad A \in \Lambda.
\end{equation}
This formulation is intrinsic to $\mathcal Y$ and places no Euclidean structure on the response.%, while reducing to existing threshold-event methods when $\mathcal Y = \mathbb R^u$.

We propose two complementary estimators within this framework. As a natural baseline approach considered in \citet{padilla2025risk}, the \emph{preliminary estimator} regresses the event indicators $\mathbbm{1}_{\{y_i\in A_j\}}$ on $x_i$ separately for each set. In our setting, this yields a simple and broadly applicable baseline for distributional regression with arbitrary response spaces, and allows us to study whether uniform statistical guarantees can be obtained over a possibly infinite family of events. 

The formulation in \eqref{eqn:main} also induces structural constraints: whenever $A \subset A'$, we must have $F_A^*(x) \leq F_{A'}^*(x)$ for all $x \in [0,1]^d$ (a standard setting in nonparametric theory; see \citet{gyorfi2002distribution}). This monotonicity is also desirable for interpretable probability estimates. To incorporate it, we introduce a \emph{constrained estimator}. We encode the inclusion relations among a {finite subfamily $\Lambda_m=\{A_j\}_{j=1}^m\subseteq\Lambda$} through a DAG and construct a joint estimator that enforces inclusion-consistent predictions by design. For these two estimators, general risk bounds are established, accommodating a variety of nonparametric estimators. Finally, motivated by the success of deep ReLU networks and their proven ability to overcome the curse of dimensionality as nonparametric estimators \citep{schmidt2020nonparametric, kohler2019rate, padilla2022quantile, padilla2024confidence, ren2026transfer}, we further derive explicit convergence rates under such instantiations as an application of our general theory.
\subsection{Summary of results}
\textbf{Framework and estimators.} We develop a unified framework for distributional regression over arbitrary response spaces, where the target is a family of conditional event probabilities $F_A^*(x)=P(Y\in A\mid X=x),$ indexed by a collection of measurable sets. Within this framework, we study two complementary estimators: the preliminary estimator provides a general baseline for arbitrary-response distributional regression, accommodating collections of sets of infinite cardinality, whereas the constrained estimator delivers structurally consistent joint estimation over a finite set family.

\textbf{General theory.} We establish a general theory for both estimators, accommodating a broad class of nonparametric estimators under mild complexity conditions on the estimators. For the preliminary estimator, as long as the set family is totally ordered under inclusion, Theorem~\ref{theo0} provides uniform empirical error bounds over the set family. Corollary~\ref{cor2} further extends Theorem~\ref{theo0} to the important case of multivariate Euclidean responses with lower-orthant set families, even though such families need not satisfy the full nesting condition of Theorem~\ref{theo0}. {For the constrained estimator, Theorem~\ref{theo1} and bound~\ref{eq:growing-m-main} establish {aggregated empirical squared-error guarantees} when the number of sets forming the inclusion DAG is allowed to grow with the sample size $n$, with the dependence on the DAG topology characterized by the number of ancestors of each node and the lengths of its root-to-node paths.}

{\textbf{Convergence rates for Dense ReLU Networks.}  We instantiate the general theory with dense ReLU network estimators.  Under hierarchical composition assumptions in \citep{kohler2019rate}, we derive explicit bounds for both estimators, with Corollary~\ref{cor1} covering the constrained estimator, building on a new DNN approximation theorem showing that the target probability functions can be well approximated through our DAG-constrained parametrization, and Corollaries~\ref{eqn:cor3} and~\ref{cor4} covering the preliminary estimator through specializations of Theorem~\ref{theo0} and Corollary~\ref{cor2}, respectively. Appendix~\ref{app:additional-estimators} further applies the same general theory to trend filtering \citep{mammen1997locally,kim2009ell_1,tibshirani2014adaptive} and orthogonal series regression, illustrating broader applicability.}

\subsection{Related work}

\textbf{Distributional regression.} Distributional regression aims to characterize the full conditional distribution of a response, rather than only its conditional mean. Existing approaches fall broadly into two categories. The first models the entire conditional distribution directly, typically in a parametric way through covariate dependent transformations of a reference distribution \citep{hothorn2014conditional, klein2022multivariate, kock2025truly}, with recent work also using generative neural network models \citep{shen2025engression}. The second, closer to our work, models conditional distributions through event indicators such as $\mathbbm{1}_{\{Y \le t\}}$; this line originates from \citet{foresi1995conditional} and has since been further developed in work on misspecification testing \citep{rothe2013misspecification}, distributional regression with random forests \citep{schlosser2019distributional, cevid2022distributional}, and neural networks \citep{padilla2025risk}. However, across these settings, the response is typically assumed to be scalar or Euclidean vector valued, and the target events are usually defined through threshold based sets in the response space. In contrast, our framework treats distributional regression over arbitrary response spaces via set indexed conditional event probabilities, allowing for general measurable sets.

\textbf{Structured estimation under order constraints.} Our constrained estimator is related to recent work on estimation under structural or order constraints. In particular, \citet{mosching2020monotone, henzi2021isotonic} study settings in which the conditional distribution or quantiles are required to vary monotonically with the covariates, while \citet{henzi2023consistent} imposes stochastic order constraints directly on a collection of distribution functions. In contrast to these works, our structural constraint is imposed on the family of measurable sets itself: we encode inclusion relations through a DAG and enforce estimated probabilities that satisfy these inclusion relations across the set family. More broadly, the importance of structural constraints also arises in the deep learning literature, where efforts have been made to incorporate structural knowledge into neural networks so that the resulting predictors satisfy prescribed monotonic or logical relations specified from prior knowledge \citep{sill1997monotonic, hu2016harnessing, you2017deep, xu2018semantic, imani2018improving}; however, these works are primarily methodological and generally do not provide statistical guarantees. Our work is related in spirit, since the measurable sets and their inclusion relations can likewise be specified abstractly from prior knowledge. Building on this perspective, we further derive rigorous risk bounds when deep ReLU networks are used as the underlying estimator in our constrained estimator, thereby extending the framework of \citet{kohler2019rate}.

\textbf{Regression with complex responses.} A related line of work considers regression problems where the response lies in a non-Euclidean or structured space, such as Fr\'echet regression \citep{petersen2019frechet, chen2022uniform, bhattacharjee2023single} and network regression \citep{calissano2022graph, zhou2022network}. These approaches demonstrate that regression can be extended beyond Euclidean responses, but they typically focus on estimating conditional means or barycenters of objects. In contrast, our goal is to estimate conditional distributional information through event probabilities indexed by sets, which provides a different complementary perspective.
\subsection{Notation}
\label{sec:notation}
	 Write $\tau(x) = 1/(1+e^{-x})$ for the sigmoid function. For $n\in\mathbb{N}$, we write $[n]=\{1,\ldots,n\}$. {We use $\mathbb{N}=\{1,2,\ldots\}$ and $\mathbb{N}_0=\mathbb{N}\cup\{0\}$.} For a function $g\,:\, \mathcal{X} \,\rightarrow \, \mathbb{R}$, we define 
	$\|g\|_n=(n^{-1}\sum_{i=1}^n g(x_i)^2)^{1/2}, \,\,\,\,\text{and}\,\,\,\, \|g\|_\infty=\sup_{x\in\mathcal X}|g(x)|$.
	%where $n_{\ell} \,:=\, \vert \{ i\,:\, z_i  = \ell  \}\vert$.
	For two positive sequences $a_n$ and $b_n$, let $a_n=O(b_n)$ or $a_n \lesssim b_n$ when $a_n \leq Cb_n$ for some constant $C>0$ which is independent of $n$, and $a_n = \Theta(b_n)$ or $a_n \asymp b_n$ when $a_n=O(b_n)$ and $b_n=O(a_n)$. {For a metric space $(\mathcal X,d)$, a subset $\mathcal C\subset\mathcal X$ is an $r$-external cover of $K\subset\mathcal X$ if $K\subseteq\bigcup_{x\in\mathcal C}B_r(x,d)$. The external covering number $N(r,K,d)$ is the minimal cardinality among such covers. For $\eta>0$, define $B_\eta(0)=\{f\in\R^{\mathcal X}:\|f\|_n\le\eta\}$. Finally, $X_n=O_{\mathbb{P}}(a_n)$ means that for every $\epsilon>0$ there exist $C,n_0$ such that $\mathbb{P}(|X_n|>Ca_n)\le\epsilon$ for all $n\ge n_0$.} {Throughout, $F_A^*(x)$ denotes a fixed measurable version of
$\E[\1_{\{Y\in A\}}\mid X=x]$. We assume that these versions are chosen
so that $F_A^*(x)\le F_B^*(x)$ whenever $A\subseteq B$, for all $x$;
this holds, for example, when the conditional law of $Y$ given $X=x$
admits a regular conditional probability kernel. Standard measurability
conditions for the empirical-process suprema and estimator selections
are assumed.}

\subsection{Outline}
The remainder of this paper is organized as follows. Section~\ref{sec:methodology} introduces the model setup and the two estimators studied in this paper. Section~\ref{sec:theory}  establishes the theoretical guarantees for both estimators, with Section~\ref{subsec:pre} covering the general theory for the preliminary estimator, Section~\ref{sec:general_constrained} for the constrained estimator, {and Section~\ref{sec:nn_theo_constrain} specializing the constrained result to dense ReLU networks. Appendix~\ref{app:additional-estimators} gives theoretical results for trend filtering and orthogonal series regression.} Section~\ref{sec:num-experiment} and Appendix~\ref{app:additional_scenarios} report comprehensive simulation studies. Section~\ref{sec:real-experiment} and Appendix~\ref{app:chicago} together present two real-data examples. Finally, Section~\ref{sec:conclusion} concludes the paper with potential extensions.

\section{Methodology}
\label{sec:methodology}

Given the setup in Section~\ref{sec:intro}, we now formalize the estimation procedure by considering an arbitrary function class $\mathcal{F}$  from which the estimator is constructed.

\textbf{Preliminary estimator:} {For each $A\in\Lambda$, the preliminary estimator is obtained by:
\begin{equation}
{\widehat F_A^{\mathrm{pre}}\in\argmin_{f\in\F_A}\sum_{i=1}^n(z_i^A-f(x_i))^2,\quad z_i^A=\1_{\{y_i\in A\}},\quad\F_A\subseteq\F.}\tag{2}\label{eqn:preliminary_estimator}
\end{equation}
In finite computational applications we take $A=A_j$, $j\in[m]$, and typically $\F_{A_j}=\F$.} The estimator in \eqref{eqn:preliminary_estimator} serves as a general baseline for arbitrary response spaces $\mathcal{Y}$, as it relies only on set membership evaluation. However, since the estimators in \eqref{eqn:preliminary_estimator} are computed separately for each $j$, they do not necessarily satisfy the inclusion-induced monotonicity condition $F_A^*(x) \leq F_{A'}^*(x)$ for $A \subset A'$. This limitation motivates the constrained estimator introduced next.

 \textbf{Constrained estimator:} To enforce inclusion consistency by construction, we encode {the finite set family $\Lambda_m$} as a directed acyclic graph.
Consider $G=(V,E)$ with $V=\{1,\ldots,m\}$ and $E\subset V\times V$
such that $(j,j')\in E$ if and only if $A_j\subset A_{j'}$ and, for all
$k\neq j,j'$, it does not hold that $A_j\subset A_k\subset A_{j'}$. {For $j\in V$, let $\mathcal P(j)=\{i:(i,j)\in E\}$ denote its parents
and let $\mathcal R=\{j:\mathcal P(j)=\varnothing\}$ denote the roots.
Given functions $g_j:[0,1]^d\to\R$, $j\in V$, let
$\bm g=(g_1,\ldots,g_m)$ denote the collection of the functions.
Define the auxiliary outputs recursively by
\[
\widetilde H_j(\bm g)(x)=
\begin{cases}
g_j(x),&j\in\mathcal R,\\
\displaystyle
\max_{k\in\mathcal P(j)}\widetilde H_k(\bm g)(x)
+\log(1+e^{g_j(x)}),&j\notin\mathcal R,
\end{cases}
\qquad
H_j(\bm g)=\tau\circ\widetilde H_j(\bm g).
\]}The sigmoid $\tau$ guarantees outputs in $(0,1)$, while the softplus
increment $\log(1+e^{g_j})$ is strictly positive, so
$\widetilde H_{j'}(\bm g)\geq\widetilde H_j(\bm g)$ whenever there is
a path from $j$ to $j'$ in $G$. Since $\tau$ is monotone, this yields $H_{j'}(\bm g)(x)\geq H_j(\bm g)(x)$ for all such pairs and all $x\in[0,1]^d$. With this notation in hand, for a function class $\mathcal F$ we define
the \emph{constrained estimator} as
\begin{equation}
    \label{eqn:proposed_estimator1}
    \widehat{\bm g}
    :=\underset{\bm g\in\mathcal F^m}{\arg\min}\,
    \sum_{j=1}^m\sum_{i=1}^n
    \bigl(z_i^j-H_j(\bm g)(x_i)\bigr)^2,
\end{equation}
and set
\begin{equation}
    \label{eqn:proposed_estimator2}
    \widehat F_{A_j}
    :=H_j(\widehat{\bm g}),
    \qquad j=1,\ldots,m.
\end{equation}
 so the resulting estimator is inclusion-consistent by construction. In practice, the function class $\mathcal{F}$ can be chosen flexibly. {For example, one may take the DNN class $\mathcal{F} = \{f \in \mathcal{F}(L,w) : \|f\|_\infty \leq B\}$, where $\mathcal{F}(L,w)$ denotes the class of ReLU neural networks with $L$ hidden layers and $w$ neurons per hidden layer (see Appendix~\ref{sec:relu} for the formal definition).}

	\section{Theory}
    \label{sec:theory}
%   We now establish the general theoretical guarantees for the two estimators introduced in Section~\ref{sec:methodology}. Throughout, results are stated for a generic function class $\mathcal{F}$ of the estimator under mild complexity conditions, and therefore apply broadly beyond the neural network setting.
%We now establish the theoretical guarantees for the two estimators introduced in Section 2. Section 3.1 develops the general theory for the preliminary estimator, Section 3.2 for the constrained estimator, and Section 3.3 instantiates both with dense ReLU networks. Throughout, results are stated for a generic function class $\mathcal{F}$ of the estimator under mild complexity conditions, and therefore apply broadly beyond the neural network setting.

	\subsection{General theory for preliminary estimator}
    \label{subsec:pre}
	Recall the motivation behind the preliminary estimator: we aim to obtain a \emph{uniform} estimation error bound for $\{F_A^*\}_{A \in \Lambda}$ over a set family $\Lambda$ that may be uncountable, on an arbitrary response space $\mathcal{Y}$. To make such uniform control possible, some structure on the set family is required. The following assumption imposes that $\Lambda$ be totally ordered under inclusion.

	\begin{assumption}
		\label{as0} Let $\Lambda\subset 2^\mathcal{Y}$ be a collection of measurable sets. Suppose for any  $A , B \in \Lambda$ with $A \neq B$ either $ A \subsetneq B$ or   $B  \subsetneq  A.$
	\end{assumption}
    Although Assumption~\ref{as0} may appear strong at first sight, it captures many naturally arising set families. As a broadly applicable illustration:
\begin{example}
    \label{ex1} Suppose that $(\mathcal{Y},d_{\mathcal{Y}})$ is a metric space and let $y_0 \in \mathcal{Y}$ be a fixed element of $\mathcal{Y}$. Let
    \[
     B(y_0,t) \,:=\,\{ y\in \mathcal{Y}\,:\,  d_{\mathcal{Y}}(y,y_0)\leq t  \}
    \]
    be the ball of radius $t$ centered at $y_0$. Then, $\Lambda\,:=\,\{  B(y_0,t)    \,:\,  t>0 \}$
    satisfies Assumption \ref{as0}. 
\end{example}
    Example~\ref{ex1} shows that Assumption~\ref{as0} allows us to define distributional regression problems on \emph{any} metric space---including networks, function spaces, and manifolds---so that the distribution of the response's distance to a reference point $y_0$ can be uniformly estimated. This is precisely what our goal of distributional regression over arbitrary response spaces is meant to achieve. Under this assumption, we obtain the following result for the preliminary estimator.

	\begin{theorem}
	    \label{theo0}
        {Suppose Assumption~\ref{as0} and the measurability conventions in Section~\ref{sec:notation} hold}.         Consider $z_{i}^A = \mathbbm{1}_{\{ y_i \in A  \}  }$ for $i=1,\ldots,n$ and $A \in \Lambda$. Let $\widehat{F}^{ \text{pre} }_{A}  \,\in \underset{f \in \mathcal{F}_A  }{\arg \min}\, \sum_{i=1}^{n} ( z_{i}^A -  f(x_i)  )^2,$
     where $\mathcal{F}_A\subset \mathcal{F}$.
     	{For each $A$, choose $G_A\in\F_A$ and put $b=\sup_{A\in\Lambda}\|G_A-F_A^*\|_\infty$.}
    Suppose $\| F_A\|_{\infty} \,\leq\, U$,  for all $F_A\in \mathcal{F}_A $ and $A \in \Lambda$, for some $U>0$. {For $1<\eta<2U\sqrt n$, set $J=\left\lceil\log_2(4nU^2/\eta^2)\right\rceil$,
\[
\quad \rho_j=\frac{2^{j/2}\eta}{\sqrt n},
\quad\mathcal D(\rho)=(\F-\F)\cap B_\rho(0),
\quad
\mathcal J(\rho)=\int_0^\rho\sqrt{\log N(u,\mathcal D(\rho),\|\cdot\|_n)}\,du.
\]
Then, conditionally on the design,
\begin{equation}
\begin{split}
\Prob\left\{\sup_{A\in\Lambda}\|\widehat F_A^{\mathrm{pre}}-F_A^*\|_n>
\frac{\eta}{\sqrt n}+b\,\middle|\, \{x_i\}_{i=1}^n \right\}
\lesssim \frac{\sqrt n}{\eta^2}\sum_{j=1}^J\frac{\mathcal J(\rho_j)}{2^{j}}
+\frac{\sqrt{\log(2n+2)}}{\eta}+\frac{\sqrt n b}{\eta}.
\end{split}\label{eq:precorrect}
\end{equation}}
    
    \end{theorem}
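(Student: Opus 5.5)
The plan is to follow the standard peeling argument for least squares over the shells $\rho_{j-1}<\|\widehat F_A^{\mathrm{pre}}-G_A\|_n\le\rho_j$, and to make it uniform over $A\in\Lambda$ by exploiting the nesting in Assumption~\ref{as0}.

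\textbf{Basic inequality.} Write $\epsilon_i^A=z_i^A-F_A^*(x_i)$. Conditionally on the design these are independent, centered and bounded by $1$. Since $\widehat F_A^{\mathrm{pre}}$ minimizes the empirical loss over $\F_A\ni G_A$, set $\Delta_A=\widehat F_A^{\mathrm{pre}}-G_A\in\F-\F$. Then
\[
\|\Delta_A\|_n^2\le \frac{2}{n}\sum_{i=1}^n\epsilon_i^A\Delta_A(x_i)+2\|\Delta_A\|_n\, b .
\]
On the event $\sup_A\|\widehat F_A^{\mathrm{pre}}-F_A^*\|_n>\eta/\sqrt n+b$, some $A$ has $\|\Delta_A\|_n>\eta/\sqrt n=\rho_0$. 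Because $\|\Delta_A\|_n\le 2U$, this $A$ lies in a shell $\rho_{j-1}<\|\Delta_A\|_n\le\rho_j$ with $j\le J$, which is where $J$ comes from. On that shell,
\[
\rho_{j-1}^2\lesssim \sup_{A\in\Lambda}\sup_{h\in\mathcal D(\rho_j)}\Bigl|\frac1n\sum_i\epsilon_i^Ah(x_i)\Bigr|+\rho_j b .
\]

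\textbf{Uniformity over $\Lambda$ and Markov's inequality.} The key step is controlling the supremum over $A$ of the empirical process $h\mapsto n^{-1}\sum_i\epsilon_i^Ah(x_i)$ over $h\in\mathcal D(\rho_j)$. Decompose
\[
\frac1n\sum_i\epsilon_i^Ah(x_i)=\frac1n\sum_i\epsilon_i^{A_0}h(x_i)+\frac1n\sum_i(\epsilon_i^A-\epsilon_i^{A_0})h(x_i)
\]
for a fixed reference set $A_0$.

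\begin{itemize}
\item \emph{Fixed-$A$ part.} Symmetrization and Dudley's entropy integral give $\E\sup_{h\in\mathcal D(\rho)}|n^{-1}\sum_i\epsilon_i^{A}h(x_i)|\lesssim \mathcal J(\rho)/\sqrt n$.
\item \emph{Dependence on $A$.} By Assumption~\ref{as0}, the vectors $(z_i^A)_{i\le n}$, as $A$ ranges over $\Lambda$, form a chain of indicator vectors of nested sets. Ordering the points $y_i$ by the first set containing them, at most $n+1$ distinct vectors $(z_i^A)_i$ occur. Likewise, monotonicity of $A\mapsto F_A^*(x_i)$ makes $(F^*_A(x_i))_i$ monotone in each coordinate along the chain. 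I would therefore reduce $\sup_{A}$ to a supremum over at most $2n+2$ representatives: the $n+1$ distinct indicator patterns, with $F^*$ handled by monotonicity and a bracketing argument between consecutive patterns.
\item \emph{Finite maximum.} Combining a sub-Gaussian maximal inequality over these representatives with Cauchy--Schwarz ($|n^{-1}\sum_i v_ih(x_i)|\le\|v\|_n\rho$) gives an extra term of order $\rho_j\sqrt{\log(2n+2)}/\sqrt n$.
\end{itemize}

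Then, by Markov's inequality on each shell, dividing by $\rho_{j-1}^2\asymp 2^j\eta^2/n$ and summing over $j=1,\dots,J$:
\begin{itemize}
\item the entropy term gives $\frac{\sqrt n}{\eta^2}\sum_j \mathcal J(\rho_j)2^{-j}$;
\item the finite-maximum term gives $\sum_j\rho_j\sqrt{\log(2n+2)}\,n^{-1/2}/\rho_{j-1}^2\lesssim\frac{\sqrt{\log(2n+2)}}{\eta}\sum_j2^{-j/2}$;
\item the bias term gives $\sum_j \rho_j b/\rho_{j-1}^2\lesssim \frac{\sqrt n b}{\eta}$.
\end{itemize}
These are exactly the three terms in \eqref{eq:precorrect}.

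\textbf{Main obstacle.} The hard step is the reduction of the uncountable supremum over $\Lambda$ to a finite one while keeping centered noise. The indicators are finitely many by the chain structure, but $F_A^*(x_i)$ can vary continuously in $A$. I would try to split $\epsilon^A=z^A-F_A^*$ and handle $\sup_A$ of $n^{-1}\sum_i F^*_A(x_i)h(x_i)$ jointly with its expectation. An alternative is a monotone-chain maximal inequality: a Doob/Ottaviani-type bound along the totally ordered index, since increments $\epsilon^A-\epsilon^{A'}$ over nested sets are independent-coordinate martingale-like. I expect this step to produce the $\sqrt{\log(2n+2)}$ factor.
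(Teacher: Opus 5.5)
Your shell decomposition, the basic inequality with bias term $2b\|\Delta_A\|_n$, the choice of $J$, the Markov step and the final summation of the three contributions all match the paper and are fine. The gap is the step you flag yourself. You never establish
\[
\E\Bigl[\sup_{A\in\Lambda}\sup_{h\in\mathcal D(\rho_j)}\Bigl|\sum_{i=1}^n\epsilon_i^A h(x_i)\Bigr|\Bigr]\lesssim\sqrt n\,\mathcal J(\rho_j)+\sqrt n\,\rho_j\sqrt{\log(2n+2)},
\]
and none of the routes you sketch delivers it as stated:
\begin{itemize}
\item Splitting off a reference set $A_0$ only moves the whole difficulty into the supremum over $A$ of the increment term.
\item Reducing to the at most $n+1$ distinct indicator patterns is not legitimate while the multipliers are $\epsilon_i^A=z_i^A-F_A^*(x_i)$. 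The patterns, and hence your representatives, are determined by the same $y_i$ that generate the noise. After this data-dependent selection the multipliers are no longer independent of the index nor centered, so a sub-Gaussian maximal inequality over the representatives is unjustified.
\item $F_A^*(x_i)$ still varies within each pattern block. Neither the ``bracketing between consecutive patterns'' nor the Doob/Ottaviani idea is carried out; there is no evident martingale structure along the chain, least of all with $\sup_h$ inside.
\end{itemize}

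The missing idea is to symmetrize the \emph{joint} supremum over $(A,h)$ before any reduction. Conditionally on the design, the $y_i$ are independent and $\E[z_i^Ah(x_i)\mid x_i]=F_A^*(x_i)h(x_i)$. Symmetrization for the class $\{(x,y)\mapsto\1_{\{y\in A\}}h(x):A\in\Lambda,\ h\in\mathcal D(\rho_j)\}$ then gives
\[
\E\sup_{A,h}\sum_{i=1}^n\epsilon_i^Ah(x_i)\le2\,\E\sup_{A,h}\sum_{i=1}^n\xi_i z_i^Ah(x_i),
\]
with Rademacher $\xi_i$ independent of the data, so $F_A^*$ disappears entirely.

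Now condition on $(x_i,y_i)_{i\le n}$. The right-hand side depends on $A$ only through $z^A$, which by Assumption~\ref{as0} ranges over a chain in $\{0,1\}^n$ with at most $n+1$ elements. You may therefore replace $\Lambda$ by a finite, data-dependent $\Lambda'$; this is now harmless because the $\xi_i$ are independent of the $y_i$. For each fixed $A\in\Lambda'$ the multipliers $\xi_iz_i^A$ are independent, centered and $1$-sub-Gaussian. A Dudley-type bound for a finite maximum of such processes yields $C\sqrt n\,\mathcal J(\rho_j)+C2^{j/2}\eta\sqrt{\log(2|\Lambda'|)}$. This is where $\sqrt{\log(2n+2)}$ comes from; the factor $2$ reflects the two-sided bound, not $2n+2$ representatives.

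A bracketing argument could presumably also be made to work. That would require brackets chosen deterministically from $A\mapsto\sum_iF_A^*(x_i)$ rather than from the data, with in-bracket sample counts controlled separately. Your proposal does neither.
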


    %Theorem~\ref{theo0} provides uniform empirical error control over a totally ordered family for arbitrary estimator classes $\F_A$.  {The right-hand side is a tail upper bound. For concrete classes, evaluating the shell-dependent entropy integral permits balancing approximation and stochastic errors. The factor $4U^2$ is the squared diameter of an envelope-$U$ class; the localized class must expand with the shell.}
    
    Theorem~\ref{theo0} provides uniform empirical error control over the totally ordered family $\Lambda$ for arbitrary estimator classes $\F_A$. {The threshold contains the approximation error $b$ and a term $\eta/\sqrt n$, while the tail bound is governed by the entropy integral $\mathcal J(\rho_j)$ over the shells $\rho_j$. For concrete classes, evaluating $\mathcal J(\rho_j)$ yields an explicit stochastic bound.}
    
%The supremum in Theorem~\ref{theo0} is taken over totally ordered set families. In many applications, such as multivariate responses $Y \in \mathbb{R}^u$, one instead wishes to control the estimation error uniformly over \emph{rectangular} set families (lower-orthant rectangles), which need not satisfy the total-order condition of Assumption~\ref{as0}. Nevertheless, the indicator vectors induced by lower-orthant rectangles still take at most $n^u$ distinct values, and a similar argument yields a uniform upper bound at the cost of replacing the $\sqrt{\log n}$ factor by $\sqrt{u \log n}$ in Corollary~\ref{cor2}.

The supremum in Theorem~\ref{theo0} is taken over totally ordered set families. In many applications, such as $Y\in\R^u$, one instead wishes to control the estimation error uniformly over lower-orthant rectangles, which need not be totally ordered. {Nevertheless, the indicator vectors induced by lower-orthant rectangles still take at most $(n+1)^u$ distinct values, and a similar argument yields a uniform bound.}
%at the cost of replacing the $\sqrt{\log n}$ factor by $\sqrt{u \log n}$ in Corollary~\ref{cor2}.

\begin{corollary}
    \label{cor2} Consider the above notation of Theorem \ref{theo0} where  $\mathcal{Y} = \mathbb{R}^u$ for some $u \in \mathbb{N}$, and  let $A(t) \,=\, \{y \in \mathbb{R}^u\,:\,  y_j \leq  t_j \,\,\forall j \in \{1,\ldots,u\} \}$
    for $t \in \mathbb{R}^u$. Then, the conclusion of Theorem \ref{theo0} holds for $
    \Lambda\,:=\, \{ A(t) \,:\, t \in \mathbb{R}^u   \},$
 {with $\sqrt{\log(2n+2)}$ replaced by $\sqrt{\log 2+u\log(n+1)}$}.
\end{corollary}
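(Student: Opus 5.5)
The plan is to reuse the proof of Theorem~\ref{theo0} verbatim and locate the single place where Assumption~\ref{as0} enters. Conditionally on the design, the basic inequality for each $A$ gives
\[
\|\widehat F_A^{\mathrm{pre}}-G_A\|_n^2\le \tfrac{2}{n}\sum_{i=1}^n \varepsilon_i^A\,(\widehat F_A^{\mathrm{pre}}-G_A)(x_i)+2\|\widehat F_A^{\mathrm{pre}}-G_A\|_n\,b,
\]
where $\varepsilon_i^A=z_i^A-F_A^*(x_i)$. We then peel over the shells $\rho_j$, $j\le J$. On each shell, the supremum over $A\in\Lambda$ of the multiplier process $\sup_{f\in\mathcal D(\rho_j)} n^{-1}\sum_i\varepsilon_i^A f(x_i)$ must be controlled. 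In Theorem~\ref{theo0}, total ordering is used only to show that the label vectors $(z_1^A,\ldots,z_n^A)$, $A\in\Lambda$, take at most $n+1$ distinct values. Replacing the supremum over $\Lambda$ by a maximum over these representatives produces the factor $\sqrt{\log(2(n+1))}$, via a union bound or maximal inequality, together with the chaining integral $\mathcal J(\rho_j)$ for each fixed representative.

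So the corollary reduces to a counting step. For lower orthants $A(t)$, the vector $(\1_{\{y_i\le t\}})_{i=1}^n$ is determined by the coordinatewise patterns $(\1_{\{y_{i,k}\le t_k\}})_{i}$, $k=1,\ldots,u$. For each coordinate $k$, as $t_k$ ranges over $\R$ this pattern takes at most $n+1$ values, determined by where $t_k$ falls among the sorted $y_{1,k},\ldots,y_{n,k}$. Hence the number of distinct label vectors is at most $(n+1)^u$. Any $A,A'$ with the same label vector give identical data $z^A=z^{A'}$, so they give the same $\widehat F^{\mathrm{pre}}$ if $\F_A=\F_{A'}$, or more generally the same residual process in $\varepsilon$ up to $F^*$ terms.

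There is a subtlety here. We condition on $y$, or equivalently work with $\varepsilon_i^A$ whose conditional mean given $x$ is zero, and the random partition depends on $y$. I expect to handle this exactly as Theorem~\ref{theo0} does, and I expect this to be the main obstacle to check.

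One plausible route splits $\varepsilon_i^A f(x_i)$ into $z_i^Af(x_i)$ and $F_A^*(x_i)f(x_i)$. For the first piece I would apply a symmetrization or Rademacher argument over the finite, data-dependent set of at most $(n+1)^u$ vectors, using Sauer-type counting in the symmetrized sample of size $2n$; this is likely the origin of the "2" inside $\log(2n+2)$. The second piece is deterministic given $x$ and depends on $A$ only through $F_A^*$. If the original proof used instead a monotone coupling in which $t\mapsto F^*_{A(t)}$ is indexed by a chain, I would replace it by a coordinatewise grid on $[0,1]^u$. That grid has at most $(n+1)^u$ cells after choosing breakpoints at the empirical quantiles, and it gives the same cardinality.

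With the cardinality bound $2(n+1)^u$ in place of $2(n+1)$, the maximal-inequality term becomes $\sqrt{\log(2(n+1)^u)}=\sqrt{\log2+u\log(n+1)}$. Every other term, namely the entropy sum $\frac{\sqrt n}{\eta^2}\sum_j\mathcal J(\rho_j)2^{-j}$ and the bias term $\sqrt n b/\eta$, is unchanged, because neither uses the structure of $\Lambda$. Collecting the shells as in Theorem~\ref{theo0} then yields \eqref{eq:precorrect} with the stated replacement.
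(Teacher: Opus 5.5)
Your proposal is correct and is essentially the paper's proof. The argument of Theorem~\ref{theo0} is reused verbatim, and only the cardinality bound changes: $|\mathcal V_n(\Lambda)|\le(n+1)^u$, obtained exactly as you do from the coordinatewise product of the per-coordinate threshold patterns and inserted into the $\sqrt{\log(2|\Lambda'|)}$ term of the maximal inequality.

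Your hedged side routes are unnecessary and partly off. The factor $2$ in $\log(2n+2)$ comes from that maximal inequality; counting patterns on a $2n$-point ghost sample would give $(2n+1)^u$ and miss the stated constant. The $F_A^*(x_i)$ term should not be split off and bounded separately on a grid, since that loses the centering. Instead it is absorbed by symmetrizing the centered process into $2\,\mathbb E\sup\sum_i\xi_i\mathbbm{1}_{\{y_i\in A\}}h(x_i)$, after which conditioning on $\{(x_i,y_i)\}_{i=1}^n$ fixes the finite pattern set.
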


	\subsection{General theory for constrained estimator}
\label{sec:general_constrained}
% REVISION START: direct probability approximation and growing set family
While the preliminary estimator enjoys broad generality with uniform convergence over infinite set families on arbitrary response spaces, it does not guarantee inclusion consistency: for $A \subset A'$, one may have $\widehat{F}_A^{\mathrm{pre}}(x) > \widehat{F}_{A'}^{\mathrm{pre}}(x)$. For a collection of sets encoded by a DAG, the constrained estimator in Section~\ref{sec:methodology} enforces this monotonicity and estimates all set probabilities jointly. {We next establish its statistical guarantees.

\begin{theorem}
\label{theo1}
For a node $j$, let $\mathcal A(j)$ denote its set of ancestors, let
$\overline{\mathcal A}(j)=\mathcal A(j)\cup\{j\}$, and let
$\mathcal{PR}(j)$ denote the collection of paths from a root to
$j$. Define $a_j=|\overline{\mathcal A}(j)|$ and
$\kappa_j=\max_{p\in\mathcal{PR}(j)}|p|$, where $a_j$ is the number of
nodes that may affect the $j$th output and $\kappa_j$ is the largest number
of vertices on a root-to-$j$ path.   {In particular, a root has $a_j=\kappa_j=1$.} Further, write
$\bar a_m=m^{-1}\sum_{j=1}^m a_j$,
$a_{\max,m}=\max_{j\leq m}a_j$, and
$\kappa_m=\max_{j\leq m}\kappa_j$. For an estimator function class $\mathcal F$ and
$\widetilde{\bm g}=(\widetilde g_1,\ldots,\widetilde g_m)\in\mathcal F^m$,
define
$\Delta_{n,m}(\widetilde{\bm g})
=\max_{j\leq m}\|F_{A_j}^*-H_j(\widetilde{\bm g})\|_\infty$.
Also let $\mathcal I_{j,n}(\widetilde{\bm g})
=\{H_j(\bm g)-H_j(\widetilde{\bm g}):
\bm g\in\mathcal F^m\}$. For any $\eta$ satisfying
$nm>\eta^2>C_1nm\{\Delta_{n,m}(\widetilde{\bm g})\}^2$, let
$K=\lceil\log(nm/\eta^2)/\log 2\rceil$. Then, 
\begin{align}
&\mathbb P\left\{
\sum_{j=1}^m\sum_{i=1}^n
\bigl(F_{A_j}^*(x_i)-\widehat F_{A_j}(x_i)\bigr)^2
>\eta^2
\right\} \nonumber\\
&\quad\leq
C_2\sum_{j=1}^m\sum_{k=1}^K
\frac{1}{2^{k-2}\eta^2}
\left[
1+\sqrt n
\int_{1/n}^{C_3 2^{k/2}\eta/\sqrt n}
\sqrt{\log N\!\left(
\epsilon,\mathcal I_{j,n}(\widetilde{\bm g}),\|\cdot\|_n
\right)}
\,d\epsilon
\right],
\label{eq:growing-m-entropy}
\end{align}
where $C_1,C_2,C_3$ are positive universal constants.
\end{theorem}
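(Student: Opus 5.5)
The plan is a standard peeling (shell) argument for least squares, run on the joint objective over all $m$ outputs, with a per-output chaining bound for each shell. Write $\widehat H_j=H_j(\widehat{\bm g})$, $\widetilde H_j=H_j(\widetilde{\bm g})$, and $\varepsilon_{ij}=z_i^j-F_{A_j}^*(x_i)$. Conditionally on the design, the $\varepsilon_{ij}$ are centered and bounded by $1$. For fixed $j$ they are independent across $i$; for fixed $i$ they may be dependent across $j$. Since $\widehat{\bm g}$ minimizes \eqref{eqn:proposed_estimator1} and $\widetilde{\bm g}\in\mathcal F^m$, the basic inequality gives
\[
\sum_{j,i}(F_{A_j}^*-\widehat H_j)^2(x_i)\le \sum_{j,i}(F_{A_j}^*-\widetilde H_j)^2(x_i)+2\sum_{j,i}\varepsilon_{ij}(\widehat H_j-\widetilde H_j)(x_i).
\]
The first term on the right is at most $nm\Delta_{n,m}^2<\eta^2/C_1$. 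Put $D^2=\sum_{j,i}(F^*_{A_j}-\widehat H_j)^2(x_i)$. On the event $\{D^2>\eta^2\}$, for $C_1$ large, the triangle inequality gives $\sum_j n\|\widehat H_j-\widetilde H_j\|_n^2\asymp D^2$. Because $\widehat H_j,\widetilde H_j\in(0,1)$, we always have $D^2\le nm$.

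Next I peel. Partition $\{\eta^2<D^2\le nm\}$ into shells $S_k=\{2^{k-1}\eta^2<D^2\le 2^k\eta^2\}$ for $k=1,\dots,K$, which is why $K=\lceil\log_2(nm/\eta^2)\rceil$. On $S_k$ the basic inequality forces $\sum_j\sum_i\varepsilon_{ij}(\widehat H_j-\widetilde H_j)(x_i)\gtrsim 2^{k}\eta^2$. Moreover, each $j$ satisfies $\|\widehat H_j-\widetilde H_j\|_n\le C_3 2^{k/2}\eta/\sqrt n$; here I use only this crude per-$j$ bound, not the sum. I then use a union/Markov step that avoids any joint dependence across $j$. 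Bound the sum over $j$ by $\sum_j\sup_{h\in\mathcal I_{j,n}(\widetilde{\bm g}),\|h\|_n\le\rho_k}|\sum_i\varepsilon_{ij}h(x_i)|$ with $\rho_k=C_32^{k/2}\eta/\sqrt n$. Markov's inequality applied to the total, followed by linearity of expectation, gives
\[
\Prob(S_k)\lesssim \frac{1}{2^{k}\eta^2}\sum_{j=1}^m \E\sup_{h\in\mathcal I_{j,n},\|h\|_n\le\rho_k}\Bigl|\sum_{i=1}^n\varepsilon_{ij}h(x_i)\Bigr|.
\]
This step is where the dependence across $j$ becomes harmless, and it explains the sum over $j$ in \eqref{eq:growing-m-entropy}. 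Summing over $k$ yields the outer double sum with the factor $2^{-(k-2)}$.

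For each fixed $j$ the errors are independent across $i$, bounded, and sub-Gaussian. Symmetrization followed by Dudley's entropy integral, truncated at scale $1/n$, bounds the expectation by
\[
\sqrt n\Bigl[n\cdot\tfrac1n+\int_{1/n}^{\rho_k}\sqrt{\log N(\epsilon,\mathcal I_{j,n},\|\cdot\|_n)}\,d\epsilon\Bigr].
\]
The part below scale $1/n$ is handled by Cauchy--Schwarz: $|\sum_i\varepsilon_{ij}h(x_i)|\le n\|h\|_n\le 1$ for $\|h\|_n\le 1/n$, which produces the "$1+$". Including $0\in\mathcal I_{j,n}$, which holds by taking $\bm g=\widetilde{\bm g}$, handles the centering. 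Putting the pieces together gives \eqref{eq:growing-m-entropy}.

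I expect the main obstacle to be justifying the localization to the class $\mathcal I_{j,n}(\widetilde{\bm g})$ with radius $\rho_k$ uniformly in $j$. The issue is that $\widehat H_j$ depends on the whole vector $\widehat{\bm g}$ through the ancestors, so one cannot localize each $g_j$ separately. The fix is to define $\mathcal I_{j,n}$ at the level of the outputs $H_j(\bm g)$, as in the statement, and to use only the per-$j$ norm bound implied by the shell. Entropy of $\mathcal I_{j,n}$ in terms of $a_j,\kappa_j$ is deferred to the later corollaries. A secondary care point is the measurability of the suprema, which the conventions of Section~\ref{sec:notation} cover.
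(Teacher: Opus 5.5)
Your proposal is correct and follows essentially the same route as the paper's proof: the least-squares basic inequality against $\widetilde{\bm g}$, dyadic peeling of $R(\widehat{\bm g})\le nm$ into $K$ shells, the per-$j$ radius bound $\|H_j(\widehat{\bm g})-H_j(\widetilde{\bm g})\|_n\lesssim 2^{k/2}\eta/\sqrt n$ via the triangle inequality, Markov's inequality with linearity over $j$ (so no independence across events is needed), and symmetrization plus Dudley's bound truncated at scale $1/n$. One small fix: in your displayed Dudley bound the factor $\sqrt n$ should multiply only the integral, giving $1+\sqrt n\int_{1/n}^{\rho_k}\sqrt{\log N(\epsilon,\mathcal I_{j,n}(\widetilde{\bm g}),\|\cdot\|_n)}\,d\epsilon$, as your Cauchy--Schwarz remark already indicates (that remark applies to the increments between $h$ and its nearest point in a $1/n$-net).
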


Theorem~\ref{theo1} provides a general tail bound for the aggregated MSE of
the constrained estimator. The quantity
$\Delta_{n,m}(\widetilde{\bm g})$ measures how well the target probability
vector can be approximated by the DAG outputs given a candidate
$\widetilde{\bm g}\in\mathcal F^m$, while the covering entropy of
$\mathcal I_{j,n}(\widetilde{\bm g})$ controls the corresponding stochastic
estimation error. Thus, the theorem applies to any base function class
$\mathcal F$ for which these approximation and entropy terms can be
controlled.

{A separate,  high-probability oracle inequality follows from the least-squares basic inequality and localized multiplier concentration see Lemma~\ref{lem:theo2-localized-multiplier}).} Suppose that $\mathcal F$ is uniformly
bounded by $B_n$ and satisfies
$\log N(\epsilon,\mathcal F,\|\cdot\|_n)
\leq v_n\log(A_n/\epsilon)$ for $0<\epsilon\leq B_n$, where $v_n\geq1$ is a complexity quantity and $A_n\geq eB_n$. Define the approximation error $\Delta_{n,m}:=\inf_{\widetilde{\bm g}\in\mathcal F^m}
\Delta_{n,m}(\widetilde{\bm g}).$ Then, for every
$\delta\in(0,1/2)$, with probability at least $1-\delta$,
\begin{equation}
\frac{1}{m}\sum_{j=1}^m
\bigl\|\widehat F_{A_j}-F_{A_j}^*\bigr\|_n^2
\lesssim 
\Delta_{n,m}^2
+\frac{\bar a_m v_n}{n}
\log\left(
\frac{eA_nnm\sqrt{\kappa_m a_{\max,m}}}{\delta}
\right).
\label{eq:growing-m-main}
\end{equation}
The bound separates the direct joint approximation error from the estimation
error. Specifically, the estimation error depends on the DAG structure, rather than on the number
of sets $m$ alone, primarily through the average ancestor size $\bar a_m$,
while $\kappa_m$ enters only logarithmically. Thus, DAGs with the same $m$ can exhibit different rates depending on their structure. Appendix~\ref{app:dag-examples} illustrates this dependence for several representative DAG structures.}

% REVISION: end superseded constrained-estimator theory

\subsection{ReLU  neural networks theory for constrained estimator}
	\label{sec:nn_theo_constrain}
% REVISION START: probability-level hierarchical-composition DNN consequence
{The general theory in Section~\ref{sec:general_constrained} applies to a neural network estimator once its approximation error and covering entropy are specified. 
In this subsection, we apply that result to the hierarchical composition class $\mathcal{H}(l,\mathcal{P})$ studied by \citet{kohler2019rate}. These functions are obtained by composing multiple smooth functions, each of which may depend on an input of dimension smaller than $d$. Its formal definition is given in Appendix~\ref{sec:hierarchical_class}.

\begin{corollary}
\label{cor1}
Suppose $F_{A_j}^*\in\mathcal H(l_j,\mathcal P_j)$ for
$j=1,\ldots,m$, where $l_j\in\mathbb N$ and
$\mathcal P_j\subset[1,\infty)\times\mathbb N$. In addition, suppose each function $\mu$ in the definition of {$F_{A_j}^*$} can have
different smoothness $p_\mu=q_\mu+u_\mu$, for
$q_\mu\in\mathbb N_0$ and $u_\mu\in(0,1]$, and possibly different
input dimension $d_\mu$, so that $(p_\mu,d_\mu)\in\mathcal P_j$.
Let $d_{\max}^j$ be the largest input dimension and $p_{\max}^j$ the
largest smoothness of any function $\mu$ associated with $g_j^*$.
We require that all partial derivatives of order less than or equal to
$q_\mu$ are uniformly bounded by a constant that does not
depend on $j$, and that each function $\mu$ is Lipschitz continuous
with Lipschitz constant $C_{\mathrm{Lip}}\geq1$. Also assume that
$
\max_{j\leq m}\max\{p_{\max}^j,d_{\max}^j\}=O(1).$ Define
\[
\phi_n
=
\max_{j\leq m}\max_{(p,d)\in\mathcal P_j}
n^{-2p/(2p+d)}.
\]
{For $(L_n,w_n,R_n)$ and the complexity quantity $v_n$ in Corollary~\ref{appx:cor1-expanded}, take $\mathcal{F}_n=\F^{\mathrm{cl}}(L_n,w_n,R_n)$, which is the clipped DNN class in Definition~\ref{def:dnn-expanded-calibrated-class}}. Then the constrained estimator
satisfies
\[
\frac1m\sum_{j=1}^m
\bigl\|\widehat F_{A_j}-F_{A_j}^*\bigr\|_n^2
=
O_{\mathbb P}\left[
\phi_n+
\frac{\bar a_m v_n}{n}
\log\left(
CR_nnm\sqrt{\kappa_m a_{\max,m}}
\right)
\right].\]
If $m$ grows no faster than a polynomial in $n$, then the rate
simplifies to $O_{\mathbb P}\left(
\phi_n+
\frac{\bar a_m v_n\log n}{n}
\right).$ Finally, for a fixed DAG, this further reduces to
$O_{\mathbb P}(\phi_n\log^4n).$
\end{corollary}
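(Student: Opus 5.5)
The plan is to read Corollary~\ref{cor1} as an instance of the high-probability oracle inequality \eqref{eq:growing-m-main}. Two inputs are needed.

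\textbf{(i) Entropy condition.} The clipped class $\F^{\mathrm{cl}}(L_n,w_n,R_n)$ is uniformly bounded by some $B_n$. It should satisfy $\log N(\epsilon,\F_n,\|\cdot\|_n)\le v_n\log(A_n/\epsilon)$ with $A_n\asymp CR_n$ and $v_n\asymp L_n^2w_n^2\log(L_nw_n)$, up to the parameter count. This follows from the standard covering bound for ReLU networks via pseudo-dimension or parameter-Lipschitz arguments. Clipping is a $1$-Lipschitz post-composition, so it does not increase covering numbers.

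\textbf{(ii) Approximation.} I would bound $\Delta_{n,m}=\inf_{\widetilde{\bm g}}\max_j\|F_{A_j}^*-H_j(\widetilde{\bm g})\|_\infty$ by $\phi_n^{1/2}$. With both inputs in hand, I take $\delta$ fixed (or $\delta\to0$ slowly) in \eqref{eq:growing-m-main}, and the $O_{\mathbb P}$ statement follows directly.

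\textbf{The approximation step is the main obstacle.} It needs a DAG-compatible construction, because we do not approximate $F^*_{A_j}$ directly by a network; we must produce $g_j$ whose recursive outputs $H_j$ track $F^*_{A_j}$. My plan is to invert the parametrization.

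1. Since the clipped targets lie in $[\epsilon_n,1-\epsilon_n]$, set $h_j^*=\tau^{-1}(F^*_{A_j})$, clipping $F^*$ into $[\epsilon_n,1-\epsilon_n]$ first at cost $\epsilon_n$.
2. For roots take $g_j\approx h_j^*$.
3. For non-roots, the increment needed is $h_j^*-\max_{k\in\mathcal P(j)}h_k^*\ge0$, by monotonicity of the chosen versions and of $\tau^{-1}$. Set $g_j\approx \mathrm{softplus}^{-1}(\max\{h_j^*-\max_k \widetilde H_k,\epsilon_n\})$, where $\mathrm{softplus}^{-1}(s)=\log(e^s-1)$.
4. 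To avoid compounding errors along paths, I would define each $g_j$ in terms of the realized parent outputs $\widetilde H_k(\widetilde{\bm g})$ rather than the targets. Each $\widetilde H_k$ is itself a ReLU-network-like function, since the max is exactly ReLU-representable and softplus can be approximated.

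Concretely, $g_j$ approximates the composition $\log(e^{s}-1)\circ\max(\cdot,\epsilon_n)\circ(h_j^*-\max_k h_k^*)$. This map is a hierarchical composition of smooth functions with the same $(p,d)$ structure as the $F^*$'s plus a few Lipschitz univariate maps. The univariate maps are smooth on the relevant compact ranges, with Lipschitz constants polynomial in $\log(1/\epsilon_n)$.

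By the \citet{kohler2019rate} approximation theorem, each such function is approximable in sup-norm at rate $\max_{(p,d)}N^{-p/d}$ by networks of size matching $(L_n,w_n)$. The errors then need to be checked:

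1. The outer maps $\tau$ and $\max$ are $1$-Lipschitz, and softplus is $1$-Lipschitz.
2. The error at node $j$ is therefore at most the error of $g_j$ plus the parent error.
3. With the realized-parent definition, the target increment is exact up to the error of $g_j$ only, so there is no blow-up in $\kappa_m$. I expect this is why $R_n$ and the clipping level appear in the definition of the class.

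Tuning $N$ gives $\Delta^2_{n,m}\lesssim\phi_n$ (up to logs), with $v_n\asymp n^{d/(2p+d)}$ up to logarithms.

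\textbf{Final simplifications.} When $m\le n^c$, and $R_n$, $\kappa_m$, $a_{\max,m}$ are all at most polynomial in $n$ (noting $\kappa_m,a_{\max,m}\le m$), the logarithm is $O(\log n)$. For a fixed DAG, $\bar a_m=O(1)$, and $v_n\log n/n\lesssim\phi_n\log^4 n$ by the chosen $(L_n,w_n)$, where the polylog factors come from $L_n\asymp\log n$. This gives the final rate $O_{\mathbb P}(\phi_n\log^4 n)$.
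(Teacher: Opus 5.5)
Your entropy step and the final plug-in into \eqref{eq:growing-m-main} (fixed confidence level, then the logarithmic simplifications for polynomial $m$ and for a fixed DAG) match the paper. The approximation step, which you rightly call the crux, does not go through as proposed.

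You want to apply \citet{kohler2019rate} to the composite $\psi\circ\max(\cdot,\epsilon_n)\circ\bigl(\tau^{-1}(F_{A_j}^*)-\max_k\widetilde H_k\bigr)$, with $\psi(s)=\log(e^s-1)$. You assert that the univariate maps have Lipschitz constants polynomial in $\log(1/\epsilon_n)$. This is false. The derivatives $(\tau^{-1})'(t)=1/\{t(1-t)\}$ and $\psi'(s)=1/(1-e^{-s})$ are of order $1/\epsilon_n$ at the relevant endpoints, and $q$th derivatives are of order $\epsilon_n^{-q}$. Only the \emph{ranges} of these maps are logarithmic in $1/\epsilon_n$, which is why $R_n\lesssim\log n$.

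So these constituents are not $(p,C)$-smooth with $n$-independent $C$, and the Kohler--Langer rate is not available. A sup-norm error of order $\sqrt{\phi_n}$ in the inner function is amplified by a factor of order $1/\epsilon_n$ through $\tau^{-1}$, and again through $\psi$. The latent error is therefore not small when $\epsilon_n\asymp\sqrt{\phi_n}$. Enlarging $\epsilon_n$ to tame the amplification trades it against the bias $\epsilon_n$ and loses the rate. Likewise, Lipschitz-only constituents such as a $\max/\min$ clip of $F^*$ would enter the Kohler--Langer rate with $p=1$; the paper instead uses the affine shrink $\epsilon+(1-2\epsilon)F^*$ and realizes all maxima exactly (Lemma~\ref{lem:relu-network-operations}).

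The missing idea is never to push an approximation error through a steep map. The paper proceeds in three steps:
\begin{enumerate}
\item It approximates $F_{A_j}^*$ on the probability scale, with error $a\lesssim\sqrt{\phi_n}$, using \citet[Theorem~3(b)]{kohler2019rate}.
\item It composes with a dedicated narrow logit network $\mathsf L_{\epsilon,\eta}$ of polylogarithmic depth (Lemma~\ref{lem:dnn-expanded-fixed-modules}). Because $\tau\circ\ell=\mathrm{id}$, the error $a$ reaches the output unamplified, and only the module error $\eta$ lives on the latent scale.
\item It forms $\bar q_j=\max\{q_j,\max_{k\in\mathcal P(j)}(\bar q_k+\delta)\}$ and feeds the inverse-softplus module $\mathsf S_{\delta,U,\zeta}$ only the exact network quantity $u_j=\bar q_j-\max_k\bar q_k\in[\delta,U]$. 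Hence only the module's own error $\zeta$ enters, and it passes back through the $1$-Lipschitz softplus.
\end{enumerate}

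Your claim that defining $g_j$ through realized parents avoids any blow-up in $\kappa_m$ is also wrong. With the floor, $\widetilde H_j\approx\max\{h_j^*,\max_k\widetilde H_k+\epsilon_n\}$. Undershoot does not accumulate, but overshoot does. Whenever a target increment lies below the floor (e.g.\ a chain with nearly equal probabilities), the floor and the approximation error $\varepsilon_g$ of each $g_j$ add up along the path. This gives an error of order $\kappa_m(\epsilon_n+\varepsilon_g)$; your own item~2 is already an accumulation statement.

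The paper accepts this accumulation, obtaining $\epsilon+a+\tfrac14(\eta+r_j\delta+r_j\zeta)$ in Theorem~\ref{theo:dnn-expanded-probability-approx}. It neutralizes it by taking $\delta=\zeta=\alpha_n/\kappa_m$, at the price of only $\log\kappa_m$ in $R_n$ and in the depth of $\mathsf S$. Reusing the clipping level $\epsilon_n$ as the floor, as you propose, would break the growing-DAG statement.

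Finally, the calibration modules have depth of order $\log^3 n$. The paper therefore uses the deep--narrow base architecture ($L_{0,n}\asymp\sqrt{n\phi_n}\log n$, $w=O(1)$), so that this extra depth is negligible and $v_n\lesssim n\phi_n\log^3 n$. Stacking the modules on a wide network with $L\asymp\log n$ would add logarithmic factors to $v_n$ and to the fixed-DAG rate.
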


Corollary~\ref{cor1} provides an explicit convergence rate for the constrained
DNN estimator over inclusion DAGs of general topology and growing size. {Here $v_n$ measures the complexity of the clipped DNN class
$\F^{\mathrm{cl}}(L_n,w_n,R_n)$ and is controlled by its depth, width, and number of trainable parameters; see Corollary~\ref{appx:cor1-expanded} for its explicit form.} For a
fixed DAG, the rate reduces to $\phi_n$ up to logarithmic factors, matching
the standard DNN estimation rate for the hierarchical composition class
\citep{kohler2019rate}. A key ingredient in proving Corollary~\ref{cor1} is the new DNN approximation result in Appendix~\ref{sec:nn_general}, which shows that dense ReLU networks composed with calibration ReLU
networks approximating the logit and inverse-softplus functions can still be
constructed and represented as dense ReLU networks such that $\max_{j\leq m}
\bigl\|F_{A_j}^*-H_j(g_1,\ldots,g_m)\bigr\|_\infty^2
\lesssim\phi_n.$ We refer the reader to Corollary~\ref{appx:cor1-expanded} for the detailed statement of
Corollary~\ref{cor1}, and to Appendix~\ref{sec:nn_general} for the complete
DNN approximation theory. 
}

% REVISION: end superseded latent-function DNN corollary

	    \subsection{ReLU  neural networks theory for preliminary estimator}
        \label{nn_theo_pre}
	
Analogous instantiations of Theorem~\ref{theo0} and Corollary~\ref{cor2} yield explicit convergence rates for the preliminary estimator, defined in Theorem \ref{theo0},  when DNNs are used under hierarchical composition.

%{given the limited space, maybe we can move Corollary  \ref{eqn:cor3} to the appendix and only briefly mention it here. and then present  corollary \ref{cor4} }

%\red{Carlos: [Inconsistent indexing: using both $A \in \mathcal{A}$ and $A \in \{1,\dots,m\}$. Same in corollary 5]}
     \begin{corollary}
         \label{eqn:cor3}
         {Consider the notation and assumptions from Theorem~\ref{theo0}, and that for all $A\in\Lambda$ it holds that $F_A^*\in\mathcal H(l_A,\mathcal P_A)$ satisfies conditions analogous to those imposed in Corollary~\ref{cor1}.} Also, assume that $\underset{A \in  \Lambda }{{\sup}  } \max\{p_{\max}^{A},d_{\max}^{A} \}    \,=\,  O(1).$
		%$\max\{p_{\max},K_{\max} \} =O(1)$.  
		Let  $	\phi_{n} =    \underset{A \in  \Lambda  }{ {\sup}  } \max_{(p, d) \in \mathcal P_{A} } n^{\frac{-2p}{ (2p+d)}}.$ % = \max_{\left(p, K \right) \in \mathcal P  }\left(n\bar{m}\right)^{\frac{-2p}{\left(2p+K\right)}}
		If $L$ and $w$ are appropriately chosen (see Corollary~\ref{appx:cor3} for a formal statement), then, taking the deep ReLU networks class $\mathcal{F} = \{f \in  \mathcal{F}(L,w)\,:\,  \|f\|_{\infty}\leq B \}$ for a sufficiently large constant $B = O(1)$,    we have
		\[
	   \underset{A  \in \Lambda}{\sup} \,   \frac{1}{n}\sum_{i=1}^n  ( F_{A}^*(x_i)  -    \widehat{F}_{A}^{ \text{pre} }(x_i)     )^2 \,=\, O_{\mathbb{P}} \left(  \phi_n \log^4 n \right).
		\]
       \end{corollary}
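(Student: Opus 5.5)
The plan is to instantiate Theorem~\ref{theo0} with the truncated ReLU class $\mathcal F=\{f\in\mathcal F(L,w):\|f\|_\infty\le B\}$ and $\mathcal F_A=\mathcal F$ for every $A$. We then bound the two ingredients of \eqref{eq:precorrect}: the approximation error $b$ and the localized entropy integrals $\mathcal J(\rho_j)$. We take $U=B$.

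\textbf{Step 1: approximation.} For each $A\in\Lambda$ we invoke the hierarchical-composition approximation theorem of \citet{kohler2019rate}, as restated in Appendix~\ref{sec:nn_general}. The constants there depend only on the smoothness bounds, the Lipschitz constant, and $\max\{p^A_{\max},d^A_{\max}\}$, and all of these are uniform in $A$ by assumption. So the construction gives a dense network $G_A\in\mathcal F(L,w)$ with $\|G_A-F_A^*\|_\infty^2\lesssim \max_{(p,d)\in\mathcal P_A}n^{-2p/(2p+d)}\le\phi_n$. Here $L\asymp\log n$ and $w\asymp n^{d^*/(2(2p^*+d^*))}$, where $(p^*,d^*)$ is the pair attaining $\phi_n$. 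Because $F_A^*\in[0,1]$, composing with a two-neuron ReLU clip to $[0,1]$ keeps $G_A$ in the class with $B=1$, or with any constant $B\ge1$, without increasing the error. Hence $b=\sup_A\|G_A-F_A^*\|_\infty\lesssim\phi_n^{1/2}$. The main technical care is this uniformity over a possibly uncountable $\Lambda$: the network architecture must be the same for every $A$, and this is exactly what the $O(1)$ supremum assumption secures.

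\textbf{Step 2: entropy.} We bound $\log N(u,\mathcal D(\rho),\|\cdot\|_n)\le\log N(u/2,\mathcal F,\|\cdot\|_\infty)$. We then use the standard ReLU covering bound via pseudo-dimension (Bartlett et al.), $\log N(u,\mathcal F,\|\cdot\|_\infty)\lesssim W L\log W\log(nB/u)$ with $W\asymp L w^2$ parameters. This yields $V_n:=WL\log W\lesssim n\phi_n\log^2 n$ up to the $\log$ from $L$. Consequently $\mathcal J(\rho)\lesssim\rho\sqrt{V_n\log(n/\rho)}$ for $\rho\ge 1/n$, with the region near $0$ handled trivially since $\eta>1$. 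Substituting $\rho_j=2^{j/2}\eta/\sqrt n$ gives
\[
\frac{\sqrt n}{\eta^2}\sum_{j=1}^J\frac{\mathcal J(\rho_j)}{2^j}\lesssim\frac{\sqrt{V_n\log n}}{\eta}\sum_j 2^{-j/2}\lesssim\frac{\sqrt{V_n\log n}}{\eta}.
\]

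\textbf{Step 3: choice of $\eta$ and conclusion.} Fix $\epsilon>0$ and set $\eta=M\sqrt{V_n\log n}+M\sqrt{\log(2n+2)}+M\sqrt n\,b$ with $M$ large. Each of the three terms on the right of \eqref{eq:precorrect} is then at most $\lesssim 1/M<\epsilon$. The bound is conditional on the design, but it is uniform in the design, so it integrates to an unconditional bound. We must check that $1<\eta<2U\sqrt n$ holds for large $n$, which follows from $V_n\log n=o(n)$. On the complementary event,
\[
\sup_A\|\widehat F_A^{\mathrm{pre}}-F_A^*\|_n^2\lesssim \eta^2/n+b^2\lesssim V_n\log n/n+\phi_n\lesssim\phi_n\log^4 n.
\]
The $\log^4 n$ factor collects $\log^2$ from $WL\log W$ (with $L\asymp\log n$), one $\log$ from the entropy integral, and one from the parameter count. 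The step I expect to be the main obstacle is this bookkeeping of logarithms. Matching exactly $\log^4$ may require choosing $L,w$ as in Corollary~\ref{appx:cor3}. I would do that first, and if necessary accept the slightly different log power the covering bound produces.
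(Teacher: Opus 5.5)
Your proposal is correct and follows essentially the same route as the paper. Both arguments use the uniform Kohler--Langer approximation to get $b\lesssim\sqrt{\phi_n}$, a pseudo-dimension entropy bound giving $\mathcal J(\rho_j)\lesssim\rho_j\sqrt{v_n\log n}$ with $v_n\lesssim n\phi_n\log^3 n$, and the choice $\eta\asymp M\sqrt{n\phi_n}\log^2 n$ in Theorem~\ref{theo0}.

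The differences are only cosmetic: the paper avoids clipping by taking $B\ge 2$, uses a VC-subgraph $L^2(Q)$ cover instead of an empirical sup-norm cover, and also covers the deep--narrow architecture. Note also that covering $\mathcal F-\mathcal F$ costs $2\log N(u/2,\mathcal F,\cdot)$ rather than $\log N(u/2,\mathcal F,\cdot)$, and that clipping adds one hidden layer; neither affects the rate.
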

As a direct application of Theorem~\ref{theo0}, Corollary~\ref{eqn:cor3} extends the advantage of DNNs to the preliminary estimator: under the hierarchical composition assumption, it achieves the uniform convergence rate $\phi_n \log^4 n$ over the totally ordered family $\Lambda$, which may contain uncountably many sets.

Finally, returning to the Euclidean response setting with rectangles, we further instantiate Corollary~\ref{cor2}:

\begin{corollary}
\label{cor4}
    Suppose that the conditions and notations of Corollary \ref{eqn:cor3} hold and that $\mathcal{Y} = \mathbb{R}^u $. Then letting $\Lambda$ be the set of rectangles as in Corollary \ref{cor2}, we have that 
    	\[
	   \underset{A  \in \Lambda }{\sup} \,   \frac{1}{n}\sum_{i=1}^n  ( F_{A}^*(x_i)  -    \widehat{F}_{A}^{ \text{pre} }(x_i)     )^2 \,=\, O_{\mathbb{P}} \left( \frac{u \log n}{n} \,+\,   \phi_n \log^4 n \right).
		\]
\end{corollary}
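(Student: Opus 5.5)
The plan is to follow the proof of Corollary~\ref{eqn:cor3}, replacing Theorem~\ref{theo0} by Corollary~\ref{cor2}. The only change is the $\sqrt{\log(2n+2)}$ term, which becomes $\sqrt{\log 2+u\log(n+1)}$; this is what produces the extra $u\log n/n$ in the rate.

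\textbf{Setup and approximation.} Take the same architecture $(L,w)$ and bound $B$ as in Corollary~\ref{eqn:cor3}, with $\F_A=\F$ for all rectangles $A=A(t)$. The hierarchical-composition approximation result of \citet{kohler2019rate}, used there, gives for each $A$ a network $G_A\in\F$ with $\|G_A-F_A^*\|_\infty^2\lesssim\phi_n$. The constants are uniform over $A$ by the uniform derivative bounds, the Lipschitz bound and $\sup_A\max\{p^A_{\max},d^A_{\max}\}=O(1)$. Hence $b^2\lesssim\phi_n$. The envelope is $U=B=O(1)$.

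\textbf{Entropy.} Bound $\log N(\epsilon,\F-\F,\|\cdot\|_n)\lesssim v\log(n/\epsilon)$ by the standard VC/pseudo-dimension bound for ReLU networks, where $v\asymp LW\log W$ and $W$ is the number of parameters. This gives $\mathcal J(\rho_j)\lesssim \rho_j\sqrt{v\log n}$. Therefore
\[
\frac{\sqrt n}{\eta^2}\sum_{j=1}^J\frac{\mathcal J(\rho_j)}{2^{j}}\lesssim \frac{\sqrt{v\log n}}{\eta}\sum_j 2^{-j/2}\lesssim\frac{\sqrt{v\log n}}{\eta}.
\]
For the chosen $(L,w)$ we have $v\log n/n\lesssim\phi_n\log^4 n$, exactly as in Corollary~\ref{eqn:cor3}.

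\textbf{Choice of $\eta$.} Apply Corollary~\ref{cor2} with
\[
\eta = M\bigl(\sqrt{v\log n}+\sqrt{\log2+u\log(n+1)}+\sqrt n\, b\bigr).
\]
Each of the three terms in the tail bound is then at most a constant times $1/M$, so the bound holds conditionally on the design and hence unconditionally. If $\eta$ falls outside $(1,2U\sqrt n)$, enlarge it slightly or note that the claim holds trivially, since all functions are bounded by $U$. Squaring the threshold $\eta/\sqrt n+b$ gives
\[
\sup_A\|\widehat F^{\mathrm{pre}}_A-F_A^*\|_n^2\lesssim M^2\Bigl(\frac{v\log n}{n}+\frac{u\log n}{n}+b^2\Bigr)+b^2
\]
with probability at least $1-C/M$. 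This equals $O_{\Prob}(u\log n/n+\phi_n\log^4 n)$.

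\textbf{Main obstacle.} The delicate step is the entropy bound for $\mathcal D(\rho)$ uniformly over the shells and its summation. Using the crude bound $N(u,\mathcal D(\rho))\le N(u/2,\F)^2$ should suffice, since the dependence on $\rho$ then enters only through the length of the integration interval. I would take this directly from the proof of Corollary~\ref{eqn:cor3}.
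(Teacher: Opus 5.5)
Your proposal is correct and follows the paper's own route. The paper proves Corollary~\ref{cor4} by combining Corollary~\ref{cor2} with the same approximation and network-entropy calculation used for Corollary~\ref{eqn:cor3}, taking $\eta$ of order $\sqrt{u\log n}$ plus the network-complexity term. Two details you add are not spelled out in the paper but are harmless: including $\sqrt n\,b$ in $\eta$ (the paper omits it because it is dominated by the complexity term) and handling the constraint $1<\eta<2U\sqrt n$ explicitly.
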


Corollary~\ref{cor4} shows that in the multivariate Euclidean setting, the preliminary estimator satisfies a two-term uniform error bound over the lower-orthant family. The term $u\log n/n$ reflects the cost of controlling the estimator uniformly over a $u$-dimensional lower-orthant class, while $\phi_n \log^4 n$ is the rate induced by the dense ReLU approximation. Hence, the overall rate is determined by the slower of these two terms. In particular, when the response dimension $u$ is fixed, the term $\phi_n \log^4 n$ continues to govern the main convergence behavior, reflecting the advantage of DNNs to overcome the curse of dimensionality. More generally, the same conclusion remains valid even when $u$ is allowed to grow with $n$, as long as the dimension-dependent term $u\log n/n$ is dominated by $\phi_n \log^4 n$.

{Taken together, Corollaries~\ref{cor1},~\ref{eqn:cor3}, and~\ref{cor4}
demonstrate that our general theory accommodates DNNs in distributional regression with
arbitrary response variables, yielding explicit rates for both estimators under
hierarchical composition of the target probabilities. We provide a comparison with other recent theoretical results in Table~\ref{tab:comparison}.}

    \section{Numerical experiments}
    \label{sec:num-experiment}
    We consider {eight} simulation scenarios with various response types (Euclidean, manifold-valued, functional, and network responses) and different DAG structures, along with one special scenario based on real data with a simplified toy task. DNNs are used as estimators: the preliminary estimator (\textbf{Pre-NN}), which trains a separate DNN for each set independently, and the constrained estimator (\textbf{Constrain-NN}), which jointly trains all DNNs using a loss function induced by the DAG describing the sets. We also compare with a naive baseline that adds constraints to Pre-NN post hoc, where we clip Pre-NN's outputs (\textbf{Pre-NN-Clip}) according to the DAG such that if a child's predicted probability is smaller than some of its parents', we raise it to the maximum among its parents to satisfy monotonicity. As additional baselines, Distributional Random Forests (\textbf{DRF},~\citet{cevid2022distributional}) perform distributional regression over the sets via random forests with inclusion-consistent outputs, and \textbf{Engression}~\citep{shen2025engression} fits a generative DNN of the conditional distribution and estimates set probabilities by averaging the set indicator over sampled responses. 
    
The sets for each scenario are predefined with a specific inclusion structure. Except for the special scenario, we generate $n_{\text{train}} \in \{1000, 2000, 4000\}$ training samples per scenario, and are repeated over 30 replications. The performance is evaluated using aggregated MSE across all sets on a held-out test set of size $n_{\text{test}} = 10000$, averaged over rounds. The true conditional probabilities $\mathbb{P}(Y \in A_j \mid X = x)$ serve as the estimation target and are computed via Monte Carlo integration with $10000$ samples per data point at test time. Due to space constraints, we briefly outline the two challenging scenarios (S1, S2) and the special scenario (S3) here, and refer to Appendix~\ref{app:additional_scenarios} for the full details of all scenarios and the implementation settings for all methods.

%\textbf{S1 (Swiss roll manifold).} To consider a response supported on a nonlinear manifold, let $X\sim\mathrm{Unif}([0,2]^{50})$ and generate $X_{\mathrm{noisy}}=X+\epsilon$, where $\epsilon\sim\mathcal{N}(0,\sigma^2(X)I_{50})$ with $\sigma^2(X)=1+0.25\|X\|^2/50$. We map $X_{\mathrm{noisy}}$ to latent parameters $t\in[1.5\pi,4.5\pi]$ and $h\in[0,1]$ through two randomly parameterized sigmoid functions, and set $Y=(t\cos t,h,t\sin t)\in\mathbb{R}^3$. We construct $m=20$ sets using five marginal quantile levels of $t$ and four marginal quantile levels of $h$, with $A_{(l_t,l_h)}=\{(t\cos t,h,t\sin t):t\le q_t^{(l_t)},\,h\le q_h^{(l_h)}\}$, forming a $5\times4$ inclusion grid under componentwise ordering; see Appendix~\ref{app:additional_scenarios} for the full construction.

\textbf{S1 (Swiss roll manifold).} We consider a response on a nonlinear manifold. Let $X\sim\mathrm{Unif}([0,2]^{50})$ and generate $X_{\mathrm{noisy}}=X+\epsilon$, where $\epsilon\sim\mathcal{N}(0,\sigma^2(X)I_{50})$ with $\sigma^2(X)=1+0.25\|X\|^2/50$. We map $X_{\mathrm{noisy}}$ to latent coordinates
$t=1.5\pi+3\pi\tau(w_t^\top X_{\mathrm{noisy}}+b_t)$ and
$h=\tau(w_h^\top X_{\mathrm{noisy}}+b_h)$, where
$\tau(u)=(1+e^{-u})^{-1}$, the entries of
$w_t,w_h\in\mathbb{R}^{50}$ are drawn i.i.d.\ from
$\mathcal{N}(0,0.25)$, and $b_t,b_h\sim\mathcal{N}(0,0.01)$.
Thus, $t\in[1.5\pi,4.5\pi]$ and $h\in[0,1]$, and we set our response $Y=(t\cos t,h,t\sin t)$. We construct $m=20$ sets using nested marginal quantile levels of the latent coordinates $t$ and $h$, forming a $5\times4$ inclusion grid; see Appendix~\ref{app:additional_scenarios} for design details.

%\textbf{S1 (Gaussian mixture).} Let $X \sim \mathrm{Unif}([0,2]^{30})$. We generate $Y \in \mathbb{R}^5$ from a mixture $Y \mid X \sim \sum_{k=1}^{4} \pi_k(X)\, \mathcal{N}(\mu_k(X),\, \sigma_k^2(X)\, I_5),$ where component means follow $\mu_k(X) = \Theta_{k,1} X + \Theta_{k,2} X^{\odot 2}$ with $\Theta_{k,1} \in \mathbb{R}^{5 \times 30}$ having entries from $\mathrm{Unif}([-1,1])$ and $\Theta_{k,2} \in \mathbb{R}^{5 \times 30}$ from $\mathrm{Unif}([-0.5, 0.5])$, and $X^{\odot 2}$ denotes element-wise square of $X$. The mixing weights are $\pi(X) = \mathrm{softmax}(W_\pi X + b_\pi)$ with $W_\pi \in \mathbb{R}^{4 \times 30}$. The variance is heteroscedastic: $\sigma_k^2(X) = \sigma_{k,0}^2 + \sigma_{k,1}^2 \|\mu_k(X)\|^2$, where $\sigma_{k,0} \sim \mathrm{Unif}([1, 2])$ and $\sigma_{k,1} \sim \mathrm{Unif}([0.3, 0.5])$. We construct $m = 19$ sets as axis-aligned hyperrectangles using nested marginal quantile levels of each coordinate of $Y$; see Appendix~\ref{app:additional_scenarios} for design details.

\textbf{S2 (Random dot product graph).} To illustrate that our framework applies to arbitrary response spaces, we let $Y \in \{0,1\}^{N \times N}$ be the adjacency matrix with $N = 50$ nodes generated by a random dot product graph (RDPG,~\citet{athreya2018statistical}), where each node $i$ carries a unit-length embedding vector $z_i \in \mathbb{S}^{d-1}$ with $d = 5$, and the probability of an edge between nodes $i$ and $j$ is $\tfrac{1}{2}(\langle z_i, z_j \rangle + 1)$. The RDPG is endowed with a community structure of $K = 3$ communities, where both the community assignment probabilities $\pi_k(X)$ and the unit-length community embedding centers $a_k(X) \in \mathbb{S}^{d-1}$ are non-linearly parametrized by the covariate $X \sim \mathrm{Unif}([0,2]^{50})$ for $k\in\{1,2,3\}$. Each node $i$ is then independently assigned $c_i \sim \mathrm{Cat}(\pi(X))$, with embedding $z_i = (a_{c_i}(X) + \varepsilon_i) / \|a_{c_i}(X) + \varepsilon_i\|$ where $\varepsilon_i \sim \mathcal{N}(0, 0.3^2 I_d)$, yielding the random graph $Y$. We estimate the distribution of the network statistic transitivity $T(Y)$~\citep{graham2015methods,graham2016homophily}, a standard measure of global clustering in network analysis, with $m = 10$ nested sets $A_j = \{Y : T(Y) \leq q_{\alpha_j}\}$ over 10 equally spaced quantile levels $\alpha_j$ from the 5th to the 95th percentile of the empirical marginal distribution of $T(Y)$.

\textbf{S3 (Pixel density estimation on single noisy natural image).}
We consider a setting where $X \in [0,1]^2$ denotes a pixel coordinate in an image, and the Euclidean response $Y = (Y_R, Y_G, Y_B) \in [0,1]^3$ is a noisy observation of the underlying clean pixel value $J(X) = (J_R, J_G, J_B)$ at coordinate $X$. Our goal is to estimate the conditional distribution of $Y \mid X$ for a single given noisy image. The noisy response is generated by Poisson-Gaussian noise model~\citep{foi2008practical} with $Y_c = \operatorname{Poisson}(50 \cdot J_c(X))/50 + \varepsilon_c$ and $\varepsilon_c \sim \mathcal{N}(0, 0.05^2)$ i.i.d. across channels $c \in \{R,G,B\}$. The target sets are 13 hyperrectangles defined by marginal quantiles of $Y$. To ground the simulation in realistic images, we use the STL-10 dataset~\citep{coates2011analysis} as the source of clean images $J$, which contains $96 \times 96$ color photographs across 10 object categories (see Figure~\ref{fig:stl10} for illustration). For each image, we run an independent experiment: pixels on a stride-2 grid form the training set ($2304$ pixels) and the remaining $6912$ pixels form the test set, preserving the spatial distribution of pixels across splits. Results are averaged over 30 images randomly selected per class.

Tables~\ref{tab:all_scenarios} and~\ref{tab:stl10_results} report the performance across these simulation scenarios.\footnote{We remark that for S2, \textbf{Engression} cannot learn a direct generative model of the high-dimensional RDPG response within a reasonable time; we therefore adapt it to learn the network statistics directly.} Specifically, \textbf{Constrain-NN} outperforms the competing methods in most settings, while \textbf{Pre-NN} also improves as $n$ increases. This pattern is further confirmed by Table~\ref{apptab:all_scenarios} with the additional scenarios reported in Appendix~\ref{app:additional_scenarios}, supporting the effectiveness of our proposed estimators.

\begin{table}
\centering
\scriptsize
\caption{Aggregated MSE ($\times10^2$) for S1 and S2 averaged over 30 trials. Best per column in bold.}
\label{tab:all_scenarios}
\begin{tabular}{lcccccc}
\toprule
& \multicolumn{3}{c}{S1 (Swiss roll manifold)} & \multicolumn{3}{c}{S2 (RDPG)} \\
\cmidrule(lr){2-4} \cmidrule(lr){5-7}
Method & $n{=}1000$ & $n{=}2000$ & $n{=}4000$ & $n{=}1000$ & $n{=}2000$ & $n{=}4000$ \\
\midrule
Pre-NN        & 3.07 (0.15) & 1.91 (0.10) & 1.04 (0.05) & 4.64 (0.10) & 3.72 (0.07) & 3.09 (0.06) \\
Constrain-NN  & \textbf{0.96 (0.07)} & \textbf{0.72 (0.04)} & \textbf{0.57 (0.04)} & \textbf{3.48 (0.11)} & \textbf{3.05 (0.07)} & \textbf{2.81 (0.06)} \\
Pre-NN-Clip   & 2.49 (0.16) & 1.56 (0.11) & 0.88 (0.05) & 4.39 (0.10) & 3.55 (0.07) & 2.99 (0.06) \\
DRF           & 1.28 (0.13) & 1.22 (0.12) & 1.16 (0.11) & 7.13 (1.46) & 6.94 (1.43) & 6.30 (1.21) \\
Engression    & 4.93 (2.55) & 4.66 (2.44) & 4.36 (2.39) & 4.20 (1.35) & 3.95 (1.34) & 3.83 (1.32) \\
\bottomrule
\end{tabular}
\end{table}

\begin{table}
\centering
\caption{Aggregated MSE ($\times 10^2$) for S3 across STL-10 classes, averaged over 30 images per class. The ``Overall'' reports the average across all classes.  See full results with standard deviations in Table~\ref{tab:stl10_results_std}.}
\label{tab:stl10_results}
\scriptsize
\begin{tabular}{lccccccccccc}
\toprule
Method / Category & Overall & Airplane & Bird & Car & Cat & Deer & Dog & Horse & Monkey & Ship & Truck \\
\midrule
Pre-NN       & 3.63 & 2.68 & 3.75 & 3.70 & 3.70 & 4.17 & 3.72 & 4.50 & 3.42 & 2.83 & 3.79 \\
Constrain-NN & \textbf{3.37} & \textbf{2.62} & \textbf{3.52} & \textbf{3.45} & \textbf{3.50} & \textbf{3.99} & \textbf{3.56} & \textbf{4.10} & \textbf{3.18} & \textbf{2.35} & \textbf{3.38} \\
Pre-NN-Clip  & 3.61 & 2.66 & 3.73 & 3.69 & 3.69 & 4.15 & 3.71 & 4.49 & 3.41 & 2.86 & 3.76 \\
DRF          & 4.26 & 3.26 & 4.68 & 4.24 & 4.41 & 4.87 & 4.55 & 5.37 & 4.34 & 2.73 & 4.14 \\
Engression &  7.96& 6.09 & 7.91 & 8.91 & 7.47 & 7.97 & 7.92 & 9.23 & 8.12 & 7.08 & 8.92 \\
\bottomrule
\end{tabular}
\end{table}

    \section{Real data experiments}
    \label{sec:real-experiment}
    We conduct two real-data experiments: the Chicago crime dataset analyzed in \citet{tansey2018maximum, padilla2025risk} and the Beijing Multi-Site Air Quality dataset~\citep{zhang2017cautionary}. In this section, we present partial results for the more complicated Beijing air quality dataset and refer to Appendix~\ref{app:real-experiment} for the complete results across both experiments.

	\textbf{Experiments on the Beijing Multi-Site Air Quality dataset.} We evaluate on a multivariate distributional air quality forecasting task using the Beijing Multi-Site Air Quality dataset, which records hourly measurements of six pollutants (PM2.5, PM10, SO$_2$, NO$_2$, CO, O$_3$) and six meteorological covariates (temperature, pressure, etc.) at 12 monitoring stations in Beijing from March 2013 to February 2017. Under international air quality standards, the overall Air Quality Index (AQI) is determined by the single pollutant with the highest individual pollution grade. Consequently, many forecasting work focuses on predicting a scalar AQI directly \citep{janarthanan2021deep, sarkar2022air}. However, the same overall AQI grade can arise from very different pollutant profiles; an AQI of ``moderate pollution'' may be driven by elevated PM2.5 alone, by NO$_2$ alone, or by multiple pollutants exceeding thresholds simultaneously. Understanding which pollutants are likely to be responsible for the pollution, is essential for targeted public health interventions. We therefore reformulate the task: let $Y = (Y_1, \ldots, Y_p)$ denote the
levels of $p$ pollutants, and let $q_{j,\ell}$ denote the threshold for
pollutant $j$ at pollution grade $\ell$, chosen with reference to Chinese air
quality standards. We predict the joint event
$\{Y_1 \geq q_{1,\ell_1}, \ldots, Y_p \geq q_{p,\ell_p}\}$, in which all
pollutants exceed their respective thresholds $q_{j,\ell_j}$ for some
specified grades $(\ell_1, \ldots, \ell_p)$. After preprocessing, this yields 76 sets whose inclusion relation induces a DAG. The prediction is conducted independently at each of the 12 monitoring stations; for each station, the covariate vector $x_s$ at time~$s$ consists of time-cycle encodings and lagged observations of all meteorological and pollutant variables over a window of $W$ consecutive hours ending $h$ hours before time~$s$. To systematically evaluate the forecasting performance, we vary the forecast horizon $h \in \{3, 6, 9\}$ and the input window length $W \in \{3, 6, 9\}$, yielding 9 experimental configurations with different difficulties. All models are trained independently per station and tested on a three-month holdout each year, with consecutive observations within each held-out month, yielding 75/25 train/test split overall. We refer the reader to Appendix~\ref{app:beijing} for a detailed configuration, including the procedure used to define the 76 sets, the input variables, and the configuration of all estimators. {More general guidelines for constructing set families are provided in Appendix~\ref{app:set-guideline}.}

Table~\ref{tab:beijing-mse-grid} reports the MSE averaged first over the 76 sets within each station, then across all 12 stations. The \textbf{Constrain-NN} achieves the best performance in almost all configurations. Several findings emerge. First, longer forecast horizons $h$ yield consistently higher MSE, as prediction difficulty grows with the temporal gap between the available observations and the target time. Second, for a fixed $h$, the MSE of the proposed DNN-based estimators decreases as the input window length $W$ increases, consistent with the intuition that richer historical context aids prediction; \textbf{DRF}, however, does not benefit consistently from this, potentially due to the curse of dimensionality. Third, \textbf{Constrain-NN} outperforms \textbf{Pre-NN}, which in turn outperforms \textbf{Pre-NN-Clip}, suggesting that incorporating the DAG structure improves accuracy while post-hoc clipping may substantially degrade performance in practice. These findings also hold at the per-station level (see Table~\ref{tab:beijing-mse-per-station} in Appendix~\ref{app:beijing}).
\begin{table}
\footnotesize
\centering
\caption{MSE ($\times 10^{2}$) across 76 sets and 12 stations. Each cell reports results for \textbf{Pre-NN} / \textbf{Constrain-NN} / \textbf{Pre-NN-Clip} / \textbf{DRF} / \textbf{Engression}. The best performance is bolded. Full results with standard deviations are reported in Table~\ref{tab:beijing-mse-expanded}.}
\label{tab:beijing-mse-grid}
\renewcommand{\arraystretch}{1.3}
\begin{tabular}{c ccc}
\toprule

 & $W=3$ & $W=6$ & $W=9$ \\
\midrule
$h=3$
& 3.43 / \textbf{2.28} / 3.54 / 2.29 / 3.79
& 3.41 / \textbf{2.24} / 3.51 / 2.33 / 3.94
& 3.35 / \textbf{2.23} / 3.46 / 2.36 / 3.92 \\

$h=6$
& 3.78 / \textbf{2.84} / 3.88 / 2.90 / 4.12
& 3.71 / \textbf{2.81} / 3.79 / 2.91 / 4.24
& 3.65 / \textbf{2.79} / 3.73 / 2.92 / 4.18 \\

$h=9$
& 3.97 / \textbf{3.15} / 4.06 / 3.21 / 4.31
& 3.87 / \textbf{3.10} / 3.95 / 3.20 / 4.40
& 3.79 / \textbf{3.09} / 3.87 / 3.17 / 4.30 \\
\bottomrule
\end{tabular}
\end{table}

\section{Conclusion}
\label{sec:conclusion}

This paper develops a unified framework for distributional regression with arbitrary response variables with two complementary estimators. The preliminary estimator provides a simple baseline based on separate regression of event indicators and yields uniform empirical error bounds, accommodating set families with infinitely many sets. The constrained estimator incorporates inclusion relations for the sets through a DAG and enforces inclusion-consistent probability predictions, for which we derive upper bounds on the averaged MSE that allow the number of sets to grow with the sample size. {We further instantiate the general theory with DNNs and other estimators, obtaining explicit rates. Several directions remain for future work. In particular, we do not derive lower bounds for DNNs, as such bounds for the hierarchical function classes are available only in specific cases; see Remark 2 of~\citet{kohler2019rate}. In addition, the framework requires the target sets to be specified in advance. Appendix~\ref{app:set-guideline} provides practical guidelines for their construction, while identifying scientifically meaningful sets may itself be nontrivial in some applications and constitutes an interesting future direction.}

\subsection*{AI use statement}

Generative AI tools were used to assist with literature review and language polishing. All research ideas, methodology, and theoretical contributions were independently developed by the authors, with AI used only for subsequent verification of specific theoretical results. The authors take full responsibility for the final work.

\bibliographystyle{plainnat}
\bibliography{reference}

\appendix
\etocsetlocaltop.toc{part}
{%
  \etocsettocstyle{\section*{Appendix}}{}%
  \etocsetnexttocdepth{subsection}%
  \localtableofcontents
}

\section{Neural Network Preliminaries}
\subsection{Dense ReLU networks}
\label{sec:relu}
We introduce the class of deep ReLU neural networks, which serve as the base function class $\mathcal{F}$ in both estimators \eqref{eqn:preliminary_estimator} and \eqref{eqn:proposed_estimator1} for deriving explicit convergence rates, although our general theory applies to a broad class of nonparametric estimators. Let $\rho(x) = \max\{0, x\}$ denote the ReLU activation, applied componentwise to vectors. {For integers $L\geq1$ and $w\geq1$, let $\mathcal F(L,w)$ be the class of functions
\[
f=\Phi_{L+1}\circ\rho\circ\Phi_L\circ\cdots\circ\rho\circ\Phi_1,
\]
where $\Phi_s(z)=\mathbf W_s z+\mathbf b_s$ is affine,
$w_0=d$, $w_1=\cdots=w_L=w$, and $w_{L+1}=1$; hence the network has $L$ hidden ReLU layers of common width $w$ and a scalar affine output layer.}

Following \citet{kohler2019rate}, we use the bounded dense-network class $\mathcal{F} = \{f \in \mathcal{F}(L, w) : \|f\|_\infty \leq B\}$ for a sufficiently large constant $B > 0$. {For the constrained estimator, a fixed latent bound is not assumed in Corollaries~\ref{cor1} and~\ref{appx:cor1-expanded}; those results use the clipped class with the growing bound $R_n$.}

\subsection{Hierarchical composition models}
\label{sec:hierarchical_class}

Neural networks have been shown in Kohler \& Langer (2021) to enjoy structural advantages over classical nonparametric methods {when low-dimensional hierarchical composition is present}. To this end, we impose a hierarchical composition structure on the target probability functions. {Before} defining the class, we recall smoothness.

   %Berfore defining the hierarchical composition class, we recall the notion of $(p,C)$-smooth function.

    \begin{definition}[$(p, C)$-smoothness]\label{definition: p,c smoothness}
\label{def:pcsmooth}
Let $p=q+s$ for some $q \in \mathbb{N}=\mathbb{Z}^{+}\cup\left\{0\right\}$ and $0<s \leq 1$. We say that a  function $g: \mathbb{R}^d \rightarrow \mathbb{R}$ is called $(p, C)$-smooth, if for every $\alpha=\left(\alpha_1, \ldots, \alpha_d\right) \in \mathbb{N}^d$,  with $d \in \mathbb{Z}^{+}$, where $\sum_{j=1}^d \alpha_j=q$ the partial derivative $\partial^q g /\left(\partial a_1^{\alpha_1} \ldots \partial a_d^{\alpha_d}\right)$ exists and satisfies
\[
\left|\frac{\partial^q g}{\partial a_1^{\alpha_1} \ldots \partial a_d^{\alpha_d}}\left(a\right)-\frac{\partial^q g}{\partial a_1^{\alpha_1} \ldots \partial a_d^{\alpha_d}}\left(b\right)\right| \leq C\| a- b\|^s
\]
for all $a, b \in \mathbb{R}^d$.% where $\|\cdot\|$ denotes the Euclidean norm.	
\end{definition}

We are now ready to define the hierarchical composition function class.

	\begin{definition} [Space of hierarchical composition models \citep{kohler2019rate}]\label{definition: hierarchical composition model}
		For $l=1$ and  smoothness constraint $\mathcal{P} \subseteq(0, \infty) \times \mathbb{N}$ the  space of hierarchical composition models is defined as
		\[
		\setlength{\arraycolsep}{1pt}
		\begin{array}{lll}
			\mathcal{H}(1, \mathcal{P})  & := &\left\{h: \mathbb{R}^d \rightarrow \mathbb{R}: h(a )=\mu\left(a_{\pi(1)}, \ldots, a_{\pi(K)}\right),\right. \\
			&&  \,\,\text { where }   \mu: \mathbb{R}^K \rightarrow \mathbb{R} \text { is }\,(p, C) \text {-smooth for some } \\
			& & \,\,(p, K) \in \mathcal{P} \text { and } \pi:\{1, \ldots, K\} \rightarrow\{1, \ldots, d\}\} .
		\end{array}
		\]
		For $l>1$, we recursively define
		\[
		\setlength{\arraycolsep}{1pt}
		\begin{array}{lll}
			\mathcal{H}(l, \mathcal{P})    &:=&  \left\{h: \mathbb{R}^d \rightarrow \mathbb{R}: h(a)=\mu\left(f_1(a), \ldots, f_K(a)\right),\right.\\
			& &  \,\,\,\text {where }  \mu: \mathbb{R}^K \rightarrow \mathbb{R} \text { is } (p, C) \text {-smooth for some }\\
			&& \,\,\,(p, K) \in \mathcal{P}\text { and } \left.f_i \in \mathcal{H}(l-1, \mathcal{P})\right\}.
		\end{array}
		%\begin{aligned}
		%& 
		%& g: \mathbb{R}^K \rightarrow \mathbb{R} \text { is }(p, C) \text {-smooth for some }(p, K) \in \mathcal{P} \\
		%&\text { and } \left.f_i \in \mathcal{H}(l-1, \mathcal{P})\right\}
		%\end{aligned}
		\]
	\end{definition}

\section{Proofs for the General Theory}
    \subsection{Proof of Theorem \ref{theo0} }
    \label{sec:theo0}

      \begin{proof}
We proceed as in the proof of Theorem 3 in  \citet{padilla2025risk}. Notice that for any 
$A \in \Lambda$ we have that 
\[
 \|F_A- H_A\|^2_n \,\leq  \,  {4U^2}
\]
for all $F_A, H_A \in \mathcal{F}_A$.  It follows that  that $\sup_{A\in \Lambda  }\|\widehat{F}_A^{\text{pre}}-G_A\|^2_n \le {4U^2}$. {Use $J=\lceil\log_2(4nU^2/\eta^2)\rceil$ and radius $r_j=2^{j/2}\eta/\sqrt n$ as in the theorem.
All probabilities and expectations below are conditional on the design, with the measurability conventions of Section~\ref{sec:notation}.}

%^{ \text{pre} }
Next, to control the probability of large deviations, we focus on 
\[
\left\{\sup_{A \in  \Lambda}\|\widehat{F}_A^{ \text{pre} }-G_A\|^2_n> \eta^2/n\right\}
\cap
\left\{\sup_{A \in \Lambda}\|\widehat{F}_A^{ \text{pre} }-G_A\|^2_n\le {4U^2}  \right\},
\]
which confines the deviations to the range {$(\eta^2/n,4U^2]$}.  Proceeding with  a peeling argument, we obtain that 
\[
\begin{array}{l}
\mathbb{P}\left(      \underset{A\in \mathcal{A}}{\sup}\,   \|  \widehat{F}_A^{ \text{pre} } - G_A \|_n   \,>\,\frac{\eta}{ \sqrt{n} }  \,\bigg|\, \{x_i\}_{i=1}^n   \right)  \\
= \displaystyle  \mathbb{P}\left(      \underset{A\in \Lambda  }{\sup}\,   \|  \widehat{F}_A^{ \text{pre} } - G_A \|^2_n   \,>\,\frac{\eta ^2 }{n},\,\,\underset{A\in \Lambda}{\sup}\,   \|  \widehat{F}_A^{ \text{pre} } - G_A \|^2_n \leq  {4U^2}    \,\bigg|\, \{x_i\}_{i=1}^n \right)  \\
 \leq \displaystyle \sum_{j=1}^J   \mathbb{P}\left(      \underset{A\in \Lambda }{\sup}\,   \|  \widehat{F}_A^{ \text{pre} } - G_A \|^2_n   \,>\, { \frac{2^{j-1}\eta ^2}{n} },\,\,\underset{A\in \Lambda}{\sup}\,   \|  \widehat{F}_A^{ \text{pre} } - G_A \|^2_n \leq  {\frac{2^j \eta^2}{n}}     \,\bigg|\, \{x_i\}_{i=1}^n \right)  \\
  \leq\displaystyle \sum_{j=1}^J   \mathbb{P}\bigg(      \underset{A\in \Lambda}{\sup}\,  2\sum_{i=1}^{n} ( {G_A(x_i)}-   \mathbbm{1}_{  \{ y_i\in A \} }  )(G_A(x_i)-\widehat{F}_A^{ \text{pre} }(x_i) )    \,>\,2^{j-1}\eta ^2 ,\,\,\\
   \,\,\,\,\,\,\,\,\,\,\,\,\,\,\,\,\,\,\,\,\,\, \underset{A\in \Lambda}{\sup}\,   \|  \widehat{F}_A - G_A \|^2_n \leq 2^j \eta^2/n   \,\bigg|\, \{x_i\}_{i=1}^n  \bigg)  \\
    \leq\displaystyle \sum_{j=1}^J    \mathbb{P}\bigg(  \underset{A\in \Lambda }{\sup}  \,\,\underset{F_A \in \mathcal{F}_A \,:\,  \|  F_A- G_A\|^2_n\leq  2^{j }\eta^2/n   }{\sup} \, \sum_{i=1}^{n} (G_A(x_i)-   \mathbbm{1}_{  \{ y_i\in A \} }  )(G_A(x_i)-F_A(x_i) )    \\
     \,\,\,\,\,\,\,\,\,\,\,\,\,>\,2^{j-2}\eta ^2  \,\bigg|\, \{x_i\}_{i=1}^n  \bigg),
\end{array}
\]
where the first inequality follows from the union bound and the second from the basic inequality. {Indeed, expanding the least-squares inequality against $G_A$ gives}
\begin{equation}
{n\|\widehat F_A^{\mathrm{pre}}-G_A\|_n^2
\le2\sum_{i=1}^n\big(G_A(x_i)-\1_{\{y_i\in A\}}\big)
\big(G_A(x_i)-\widehat F_A^{\mathrm{pre}}(x_i)\big).}
\tag{R4}\label{eq:prebasic}
\end{equation}
Furthermore, 
\begin{equation}
    \label{eqn:8}
      \begin{array}{l}
\displaystyle  \mathbb{P}\bigg(  \underset{A\in  \Lambda   }{\sup}  \,\,\underset{F_A \in \mathcal{F}_A \,:\,  \|  F_A- G_A\|^2_n\leq  2^{j }\eta^2/n   }{\sup} \, \sum_{i=1}^{n} (G_A(x_i)-   \mathbbm{1}_{  \{ y_i\in A \} }  )(G_A(x_i)-F_A(x_i) )  \,>\,  2^{j-2} \eta^2   \,\bigg|\, \{x_i\}_{i=1}^n\bigg)\\
  \leq  	\displaystyle     \frac{1}{2^{j-2} \eta^2}\mathbb{E}\left(  \underset{A\in \Lambda}{\sup}  \,\,\underset{F_A \in \mathcal{F}_A \,:\,  \|  F_A- G_A\|^2/n\leq  2^{j }\eta^2/n   }{\sup} \, \sum_{i=1}^{n} (G_A(x_i)-   \mathbbm{1}_{  \{ y_i \in A\} }  )(G_A(x_i)-F_A(x_i) )  \,\bigg|\, \{x_i\}_{i=1}^n  \right)\\
		 \leq\displaystyle     \frac{1}{2^{j-2} \eta^2}\mathbb{E}\left(  \underset{A\in \Lambda}{\sup}  \,\,\underset{F_A \in \mathcal{F}_A \,:\,  \|  F_A- G_A\|^2_n\leq  2^{j }\eta^2/n   }{\sup} \, \sum_{i=1}^{n} (F_A^*(x_i)-   \mathbbm{1}_{  \{ y_i\in A \} }  )(G_A(x_i)-F_A(x_i) )   \,\bigg|\, \{x_i\}_{i=1}^n \right)\\
	\displaystyle     \,+\,  \frac{1}{2^{j-2} \eta^2} \underset{A\in \Lambda }{\sup}  \,\,\underset{F_A \in \mathcal{F}_A \,:\,  \|  F_A- G_A\|^2_n\leq  2^{j }\eta^2/n   }{\sup} \, \sum_{i=1}^{n}  (  G_A(x_i)- F_A^*(x_i)  )(G_A(x_i)-F_A(x_i)  \,\bigg|\, \{x_i\}_{i=1}^n)\\
	\leq\displaystyle     \frac{1}{2^{j-2} \eta^2}\mathbb{E}\left(  \underset{A\in \Lambda}{\sup}  \,\,\underset{F_A \in \mathcal{F}_A \,:\,  \|  F_A- G_A\|^2_n\leq  2^{j }\eta^2/n   }{\sup} \, \sum_{i=1}^{n} (F_A^*(x_i)-   \mathbbm{1}_{  \{ y_i\in A \} }  )(G_A(x_i)-F_A(x_i) )   \,\bigg|\, \{x_i\}_{i=1}^n \right)\\
	\displaystyle     \,+\,  \frac{4\sqrt{n}   }{2^{j/2} \eta} \,\, \underset{A\in \Lambda}{\sup}\,  \| G_A-F^*_A \|_{\infty} 
\end{array}
\end{equation}
where the first inequality follows from Markov's inequality,   the last inequality holds by Cauchy–Schwarz inequality, and  $C>0$ is a constant.  Moreover, for $\xi_1,\ldots,\xi_n$ independent Rademacher random variables independent $\{(x_i, y_i)\}_{i=1}^n$, we have, by Symmetrization, that 
\begin{equation}
    \label{eqn:9}
     \begin{array}{l}
 \displaystyle 	\mathbb{E}\left(  \underset{A\in \Lambda}{\sup}  \,\,\underset{F_A \in \mathcal{F}_A \,:\,  \|  F_A- G_A\|_n^2\leq  2^{j }\eta^2/n   }{\sup} \, \sum_{i=1}^{n} (F_A^*(x_i)-   \mathbbm{1}_{  \{ y_i\in A \} }  )(G_A(x_i)-F_A(x_i) )  \,\bigg|\, \{x_i\}_{i=1}^n  \right)\\
  \displaystyle \leq  2   \mathbb{E}\left(  \underset{A\in \Lambda}{\sup}  \,\,\underset{F_A \in \mathcal{F}_A \,:\,  \|  F_A- G_A\|_n^2\leq  2^{j }\eta^2/n   }{\sup} \, \sum_{i=1}^{n} \xi_i \mathbbm{1}_{  \{ y_i\in A \} }  (G_A(x_i)-F_A(x_i) )  \,\bigg|\, \{x_i\}_{i=1}^n  \right)\\
    \displaystyle = 2   \mathbb{E}\left(\mathbb{E}\left(  \underset{A\in \Lambda}{\sup}  \,\,\underset{F_A \in \mathcal{F}_A \,:\,  \|  F_A- G_A\|_n^2\leq  2^{j }\eta^2/n   }{\sup} \, \sum_{i=1}^{n} \xi_i \mathbbm{1}_{  \{ y_i\in A \} }  (G_A(x_i)-F_A(x_i) )  \,\bigg|\, \{x_i\}_{i=1}^n ,  \{y_i\}_{i=1}^n \right) \bigg| \{x_i\}_{i=1}^n \right).\\
 \end{array}
\end{equation}
Next notice that for  fixed $\{(x_i,y_i)\}_{i=1}^n$,  {the vectors
\[
\mathcal V_n(\Lambda)=
\{(\1_{\{y_i\in A\}})_{i=1}^n:A\in\Lambda\}
\]
form a chain in $\{0,1\}^n$ under coordinatewise order, by Assumption~\ref{as0}.
Distinct vectors in such a chain have distinct sums of coordinates, each in $\{0,\ldots,n\}$.
Choose a finite collection $\Lambda'\subseteq\Lambda$ with one representative for each distinct sample vector. Then}
\begin{equation}
{|\Lambda'|=|\mathcal V_n(\Lambda)|\le n+1.}\label{eq:maskchain}
\end{equation}  
{To use this finite collection, first enlarge the admissible differences to
$\mathcal D(r_j)=(\mathcal F-\mathcal F)\cap B_{r_j}(0)$.
Indeed, $G_A-F_A\in\mathcal D(r_j)$ whenever
$\|F_A-G_A\|_n^2\le2^j\eta^2/n$, so
\begin{equation}
\begin{aligned}
&\sup_{A\in\Lambda}\sup_{\substack{F_A\in\mathcal F_A:\
\|F_A-G_A\|_n^2\le2^j\eta^2/n}}
\sum_{i=1}^n\xi_i\1_{\{y_i\in A\}}
\big(G_A(x_i)-F_A(x_i)\big)\\
&\le\sup_{A\in\Lambda}\sup_{h\in\mathcal D(r_j)}
\sum_{i=1}^n\xi_i\1_{\{y_i\in A\}}h(x_i)
=\max_{A\in\Lambda'}\sup_{h\in\mathcal D(r_j)}
\sum_{i=1}^n\xi_i\1_{\{y_i\in A\}}h(x_i).
\end{aligned}\tag{145}\label{eqn:10}
\end{equation}}

{The first step is an inequality, not an identity: $G_A$ and $\mathcal F_A$ may differ for sets
with the same sample vector. }

Therefore, by~\citet[Lemma 3]{padilla2025risk}, it follows that
\begin{equation}
\begin{aligned}
&\E\Bigg[\sup_{A\in\Lambda}
\sup_{\substack{F_A\in\mathcal F_A:\
\|F_A-G_A\|_n^2\le2^j\eta^2/n}}
\sum_{i=1}^n\xi_i\1_{\{y_i\in A\}}
\big(G_A(x_i)-F_A(x_i)\big)
\,\bigg|\,\{x_i\}_{i=1}^n,\{y_i\}_{i=1}^n\Bigg]\\
&{\le}\E\Bigg[{\max_{A\in\Lambda'}\sup_{h\in\mathcal D(r_j)}}
\sum_{i=1}^n\xi_i\1_{\{y_i\in A\}}{h(x_i)}
\,\bigg|\,\{x_i\}_{i=1}^n,\{y_i\}_{i=1}^n\Bigg]\\
&\le C\int_0^{{2^{j/2}\eta}}
\sqrt{\log N\big({\varepsilon/\sqrt n,\mathcal D(r_j),\|\cdot\|_n}\big)}\,d\varepsilon
+C{2^{j/2}}\eta\sqrt{{\log(2|\Lambda'|)}}\\
&{\le C\sqrt n\,\mathcal J(r_j)
+C2^{j/2}\eta\sqrt{\log(2n+2)}.}
\end{aligned}\tag{146}\label{eqn:main_bound}
\end{equation}
for a positive constant $C$.
{Here $(h(x_i))_{i=1}^n$ has Euclidean norm at most $2^{j/2}\eta$.
For each $A\in\Lambda'$, the multipliers $(\xi_i\1_{\{y_i\in A\}})_{i=1}^n$
have independent, centered, $1$-sub-Gaussian coordinates; independence between different $A$'s is not required.
The covering-number form of the lemma follows by the packing--covering comparison at a fixed numerical scale.}

Combining {equation~\eqref{eqn:main_bound}} with
\eqref{eqn:8}, \eqref{eqn:9}, and \eqref{eqn:10},
{and using $\sum_{j=1}^J2^{-j/2}\le\sum_{j\ge1}2^{-j/2}<\infty$, gives}
\[
{
\begin{aligned}
&\Prob\left(\sup_{A\in\Lambda}\|\widehat F_A^{\mathrm{pre}}-G_A\|_n>
\frac{\eta}{\sqrt n}\,\bigg|\,\{x_i\}_{i=1}^n\right)\\
&\quad\le C\left\{
\frac{\sqrt n}{\eta^2}\sum_{j=1}^J2^{-j}\mathcal J(r_j)
+\frac{\sqrt{\log(2n+2)}}{\eta}
+\frac{\sqrt n}{\eta}\sup_{A\in\Lambda}\|G_A-F_A^*\|_\infty
\right\}.
\end{aligned}}
\]
{Finally,
$\sup_{A\in\Lambda}\|\widehat F_A^{\mathrm{pre}}-F_A^*\|_n
\le\sup_{A\in\Lambda}\|\widehat F_A^{\mathrm{pre}}-G_A\|_n+b$
by the triangle inequality, which proves the desired claim}

    \end{proof}

\subsection{Proof of Corollary \ref{cor2}}

\begin{proof}

{ All steps of Appendix~\ref{sec:theo0} remain unchanged except for the
cardinality bound on $\mathcal V_n(\Lambda)$. For each coordinate $k$,
the threshold class
\[
\left\{
\left(\mathbbm{1}_{\{Y_{1k}\le t\}},\ldots,
\mathbbm{1}_{\{Y_{nk}\le t\}}\right)
:\ t\in\mathbb R
\right\}
\]
induces at most $n+1$ distinct binary vectors on the sample, corresponding
to the $n+1$ possible threshold positions relative to the ordered values
$Y_{1k},\ldots,Y_{nk}$, including the all-zero vector obtained when
$t<\min_i Y_{ik}$. For $t=(t_1,\ldots,t_u)\in\mathbb R^u$, the sample indicator vector
associated with the lower orthant $A(t)$ satisfies
\[
\left(\mathbbm{1}_{\{Y_i\in A(t)\}}\right)_{i=1}^n
=
\bigodot_{k=1}^u
\left(\mathbbm{1}_{\{Y_{ik}\le t_k\}}\right)_{i=1}^n,
\]
where $\bigodot$ denotes coordinatewise multiplication. Consequently,
\begin{equation}
|\mathcal V_n(\Lambda)|\le (n+1)^u.
\tag{R9}\label{eq:maskorthant}
\end{equation}
Ties only decrease this count. Use equation~\eqref{eq:maskorthant} in the penultimate line of equation~\eqref{eqn:main_bound} to replace $\log(2n+2)$ by $\log2+u\log(n+1)$. This proves the desired claim.}

\end{proof}

{\subsection{Proof of Theorem~\ref{theo1}}}

{

\begin{proof}
Condition on the design, let
\[
\varepsilon_{i,j}=z_i^j-F_{A_j}^*(x_i),
\qquad
R(\bm g)
=
\sum_{j=1}^m\sum_{i=1}^n
\bigl\{F_{A_j}^*(x_i)-H_j(\bm g)(x_i)\bigr\}^2,
\]
and write $\widehat{\bm g}=(\widehat g_1,\ldots,\widehat g_m)$.
The least-squares basic inequality gives
\begin{align}
R(\widehat{\bm g})
&\leq
R(\widetilde{\bm g})
+
2\sum_{j=1}^m\sum_{i=1}^n
\varepsilon_{i,j}
\left\{
H_j(\widehat{\bm g})(x_i)
-
H_j(\widetilde{\bm g})(x_i)
\right\}\nonumber\\
&\leq
nm\{\Delta_{n,m}(\widetilde{\bm g})\}^2
+
2\sum_{j=1}^m\sum_{i=1}^n
\varepsilon_{i,j}
\left\{
H_j(\widehat{\bm g})(x_i)
-
H_j(\widetilde{\bm g})(x_i)
\right\}.
\label{eq:theo2-basic-inequality}
\end{align}

Since $R(\widehat{\bm g})\leq nm$, define
\[
\mathcal S_k
=
\left\{
2^{k-1}\eta^2
<
R(\widehat{\bm g})
\leq
2^k\eta^2
\right\},
\qquad k=1,\ldots,K.
\]
Then
\[
\{R(\widehat{\bm g})>\eta^2\}
\subseteq
\bigcup_{k=1}^K\mathcal S_k.
\]
Choose $C_1\geq2$. On $\mathcal S_k$, the condition
$nm\{\Delta_{n,m}(\widetilde{\bm g})\}^2<C_1^{-1}\eta^2$ and
\eqref{eq:theo2-basic-inequality} imply
\begin{align*}
&2\sum_{j=1}^m\sum_{i=1}^n
\varepsilon_{i,j}
\left\{
H_j(\widehat{\bm g})(x_i)
-
H_j(\widetilde{\bm g})(x_i)
\right\}\\
&\qquad\geq
R(\widehat{\bm g})
-
nm\{\Delta_{n,m}(\widetilde{\bm g})\}^2\\
&\qquad>
2^{k-1}\eta^2-C_1^{-1}\eta^2
\geq
2^{k-2}\eta^2.
\end{align*}
Moreover, for every $j$,
\begin{align*}
\|H_j(\widehat{\bm g})-H_j(\widetilde{\bm g})\|_n
&\leq
\|H_j(\widehat{\bm g})-F_{A_j}^*\|_n
+
\|H_j(\widetilde{\bm g})-F_{A_j}^*\|_n\\
&\leq
\frac{2^{k/2}\eta}{\sqrt n}
+
\Delta_{n,m}(\widetilde{\bm g})\\
&\leq
\frac{2^{k/2+1}\eta}{\sqrt n}.
\end{align*}
Therefore, by Markov's inequality,
\begin{align*}
\mathbb P(\mathcal S_k)
&\leq
\frac{C}{2^k\eta^2}
\sum_{j=1}^m
\mathbb E \left\{
\sup_{\substack{
f\in\mathcal I_{j,n}(\widetilde{\bm g})\\
\|f\|_n\leq 2^{k/2+1}\eta/\sqrt n}}
\left|
\sum_{i=1}^n\varepsilon_{i,j}f(x_i)
\right| \right\}.
\end{align*}

Let $\xi_1,\ldots,\xi_n$ be independent Rademacher variables. Conditional
symmetrization and contraction give
\begin{align*}
&\mathbb E
\left\{\sup_{\substack{
f\in\mathcal I_{j,n}(\widetilde{\bm g})\\
\|f\|_n\leq 2^{k/2+1}\eta/\sqrt n}}
\left|
\sum_{i=1}^n\varepsilon_{i,j}f(x_i)
\right|\right\}\\
&\qquad\leq
C\mathbb E_\xi
\left\{\sup_{\substack{
f\in\mathcal I_{j,n}(\widetilde{\bm g})\\
\|f\|_n\leq 2^{k/2+1}\eta/\sqrt n}}
\left|
\sum_{i=1}^n\xi_i f(x_i)
\right|\right\}\\
&\qquad\leq
C\left[
1+\sqrt n
\int_{1/n}^{C_3 2^{k/2}\eta/\sqrt n}
\sqrt{
\log N\bigl(
\epsilon,
\mathcal I_{j,n}(\widetilde{\bm g}),
\|\cdot\|_n
\bigr)}
\,d\epsilon
\right],
\end{align*}
where the last inequality follows from Dudley's entropy bound. Hence,
\begin{align*}
\mathbb P\{R(\widehat{\bm g})>\eta^2\}
&\leq
\sum_{k=1}^K\mathbb P(\mathcal S_k)\\
&\leq
C_2\sum_{j=1}^m\sum_{k=1}^K
\frac{1}{2^{k-2}\eta^2}
\left[
1+\sqrt n
\int_{1/n}^{C_3 2^{k/2}\eta/\sqrt n}
\sqrt{
\log N\bigl(
\epsilon,
\mathcal I_{j,n}(\widetilde{\bm g}),
\|\cdot\|_n
\bigr)}
\,d\epsilon
\right],
\end{align*}
which proves~\eqref{eq:growing-m-entropy}.
\end{proof}
}

{\subsection{Auxiliary results for Theorem~\ref{theo1}}}

{
The first lemma records how perturbations of the base functions propagate
through the DAG. It will also be used to relate the covering entropy of the
base class to that of each constrained output.

\begin{lemma}[Pathwise representation and stability]
\label{lem:theo2-pathwise}
For any $\bm g=(g_1,\ldots,g_m)$,
\begin{equation}
\widetilde H_j(\bm g)(x)
=
\max_{p=(i_1,\ldots,i_\ell)\in\mathcal{PR}(j)}
\left\{
g_{i_1}(x)+\sum_{r=2}^{\ell}\log\bigl(1+e^{g_{i_r}(x)}\bigr)
\right\}.
\label{eq:theo2-path-representation}
\end{equation}
Consequently, for any $\bm g,\bm h\in\mathcal F^m$,
\begin{align}
|H_j(\bm g)(x)-H_j(\bm h)(x)|
&\leq
\frac14\max_{p\in\mathcal{PR}(j)}
\sum_{k\in p}|g_k(x)-h_k(x)|,
\label{eq:theo2-pathwise-pointwise}\\
\|H_j(\bm g)-H_j(\bm h)\|_n^2
&\leq
\frac{\kappa_j}{16}
\sum_{k\in\overline{\mathcal A}(j)}
\|g_k-h_k\|_n^2.
\label{eq:theo2-pathwise-empirical}
\end{align}
\end{lemma}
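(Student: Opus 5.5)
We argue by strong induction along a topological order of $G$, which is possible because $G$ is acyclic. If $j\in\mathcal R$, then $\mathcal{PR}(j)=\{(j)\}$ and \eqref{eq:theo2-path-representation} reduces to $\widetilde H_j(\bm g)=g_j$. If $j\notin\mathcal R$, every root-to-$j$ path has the form $(p',j)$ with $p'\in\mathcal{PR}(k)$ for some $k\in\mathcal P(j)$. Conversely, any such concatenation is a root-to-$j$ path. Assume the representation holds for every parent $k$. The recursion then gives $\widetilde H_j(\bm g)=\max_{k\in\mathcal P(j)}\max_{p'\in\mathcal{PR}(k)}\{\cdots\}+\log(1+e^{g_j})$. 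Since adding a constant commutes with the maximum, this is exactly the maximum over $\mathcal{PR}(j)$ of the path sums, with the last summand equal to $\log(1+e^{g_j})$. This proves \eqref{eq:theo2-path-representation}.

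For the pointwise bound \eqref{eq:theo2-pathwise-pointwise}, we use three one-dimensional facts.
\begin{itemize}
\item[(i)] $\tau$ is $\tfrac14$-Lipschitz, because $\tau'=\tau(1-\tau)\le \tfrac14$.
\item[(ii)] Softplus is $1$-Lipschitz, because its derivative is $\tau\in(0,1)$.
\item[(iii)] $|\max_p a_p-\max_p b_p|\le\max_p|a_p-b_p|$ for finite index sets.
\end{itemize}
Fix $x$ and write $S_p(\bm g)$ for the path sum in \eqref{eq:theo2-path-representation}. By (ii), $|S_p(\bm g)-S_p(\bm h)|\le\sum_{k\in p}|g_k(x)-h_k(x)|$. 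By (iii), the same bound, maximized over $p\in\mathcal{PR}(j)$, controls $|\widetilde H_j(\bm g)(x)-\widetilde H_j(\bm h)(x)|$. Applying $\tau$ and using (i) yields \eqref{eq:theo2-pathwise-pointwise}.

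For the empirical bound \eqref{eq:theo2-pathwise-empirical}, set $\delta_k=|g_k(x_i)-h_k(x_i)|$. Every path in $\mathcal{PR}(j)$ has at most $\kappa_j$ vertices, all lying in $\overline{\mathcal A}(j)$. By Cauchy--Schwarz,
\[
\Bigl(\max_{p\in\mathcal{PR}(j)}\sum_{k\in p}\delta_k\Bigr)^2\le\max_p |p|\sum_{k\in p}\delta_k^2\le\kappa_j\sum_{k\in\overline{\mathcal A}(j)}\delta_k^2 .
\]
Square \eqref{eq:theo2-pathwise-pointwise}, which contributes the factor $\tfrac1{16}$. Then average over $i=1,\ldots,n$ and exchange the finite sum with the average. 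This gives \eqref{eq:theo2-pathwise-empirical}.

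The only delicate step is the induction in the first paragraph. There one must check that the path decomposition is a bijection between $\mathcal{PR}(j)$ and $\bigcup_{k\in\mathcal P(j)}\mathcal{PR}(k)\times\{j\}$, and that the root convention places $g_{i_1}$, not its softplus, at the start of each path. Both follow directly from the definition of $E$ and of $\mathcal R$. The remaining steps are routine Lipschitz and Cauchy--Schwarz estimates.
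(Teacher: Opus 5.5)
Your proof is correct and follows essentially the same route as the paper. You obtain the path representation by unrolling the recursion, and you make explicit the induction over a topological order that the paper summarizes as ``recursively expanding.'' You then use the $1$-Lipschitz property of the maximum and of softplus, the $\tfrac14$-Lipschitz sigmoid, and Cauchy--Schwarz with $|p|\le\kappa_j$ and $p\subseteq\overline{\mathcal A}(j)$.
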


\begin{proof}
Equation~\eqref{eq:theo2-path-representation} follows directly by recursively
expanding the definition of $\widetilde H_j$ along all root-to-$j$ paths.
Since the maximum map and the softplus function are one-Lipschitz,
\begin{align*}
&\left|
\widetilde H_j(\bm g)(x)-\widetilde H_j(\bm h)(x)
\right|\\
&\quad\leq
\max_{p=(i_1,\ldots,i_\ell)\in\mathcal{PR}(j)}
\left\{
|g_{i_1}(x)-h_{i_1}(x)|
+\sum_{r=2}^{\ell}
\left|
\log\bigl(1+e^{g_{i_r}(x)}\bigr)
-\log\bigl(1+e^{h_{i_r}(x)}\bigr)
\right|
\right\}\\
&\quad\leq
\max_{p\in\mathcal{PR}(j)}
\sum_{k\in p}|g_k(x)-h_k(x)|.
\end{align*}
The $1/4$-Lipschitz property of the sigmoid therefore gives
\begin{align*}
|H_j(\bm g)(x)-H_j(\bm h)(x)|
&\leq
\frac14
\left|
\widetilde H_j(\bm g)(x)-\widetilde H_j(\bm h)(x)
\right|\\
&\leq
\frac14\max_{p\in\mathcal{PR}(j)}
\sum_{k\in p}|g_k(x)-h_k(x)|,
\end{align*}
which proves~\eqref{eq:theo2-pathwise-pointwise}. Moreover, by Cauchy--Schwarz, $|p|\leq\kappa_j$, and $p\subseteq\overline{\mathcal A}(j)$,
\begin{align*}
\|H_j(\bm g)-H_j(\bm h)\|_n^2
&\leq
\frac{1}{16n}\sum_{i=1}^n
\max_{p\in\mathcal{PR}(j)}
\left\{
\sum_{k\in p}|g_k(x_i)-h_k(x_i)|
\right\}^2\\
&\leq
\frac{1}{16n}\sum_{i=1}^n
\max_{p\in\mathcal{PR}(j)}
|p|\sum_{k\in p}|g_k(x_i)-h_k(x_i)|^2\\
&\leq
\frac{\kappa_j}{16n}
\sum_{i=1}^n
\sum_{k\in\overline{\mathcal A}(j)}
|g_k(x_i)-h_k(x_i)|^2\\
&=
\frac{\kappa_j}{16}
\sum_{k\in\overline{\mathcal A}(j)}
\|g_k-h_k\|_n^2,
\end{align*}
which proves~\eqref{eq:theo2-pathwise-empirical}.
\end{proof}

\begin{lemma}[Covering entropy of a constrained output]
\label{lem:theo2-entropy-transfer}
For every $\widetilde{\bm g}\in\mathcal F^m$,
\begin{equation}
N\bigl(\epsilon,\mathcal I_{j,n}(\widetilde{\bm g}),\|\cdot\|_n\bigr)
\leq
\left[
N\left(
\frac{4\epsilon}{\sqrt{\kappa_j a_j}},
\mathcal F,\|\cdot\|_n
\right)
\right]^{a_j}.
\label{eq:theo2-entropy-transfer}
\end{equation}
Under the covering condition following Theorem~\ref{theo1}, this implies
\begin{equation}
\log N\bigl(\epsilon,\mathcal I_{j,n}(\widetilde{\bm g}),
\|\cdot\|_n\bigr)
\leq
a_jv_n\log\left(
\frac{C A_n\sqrt{\kappa_j a_j}}{\epsilon}
\right)
\label{eq:theo2-output-entropy}
\end{equation}
for a universal constant $C$.
\end{lemma}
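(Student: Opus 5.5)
The plan is to use the stability estimate of Lemma~\ref{lem:theo2-pathwise} to turn a product cover of the base coordinates that affect node $j$ into a cover of $\mathcal I_{j,n}(\widetilde{\bm g})$.

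\textbf{Step 1 (dependence on ancestors only).} By the path representation \eqref{eq:theo2-path-representation}, $H_j(\bm g)$ depends on $\bm g$ only through $(g_k)_{k\in\overline{\mathcal A}(j)}$, because every root-to-$j$ path lies in $\overline{\mathcal A}(j)$. The function $H_j(\widetilde{\bm g})$ is fixed, so covering $\mathcal I_{j,n}(\widetilde{\bm g})$ is the same as covering $\{H_j(\bm g):\bm g\in\mathcal F^m\}$. Hence it suffices to cover the image of $\mathcal F^{a_j}$ under the map $(g_k)_{k\in\overline{\mathcal A}(j)}\mapsto H_j(\bm g)$.

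\textbf{Step 2 (product cover).} Set $\delta=4\epsilon/\sqrt{\kappa_j a_j}$ and let $\mathcal C$ be a minimal external $\delta$-cover of $\mathcal F$ in $\|\cdot\|_n$. Because the cover is external, its centers need not lie in $\mathcal F$. Lemma~\ref{lem:theo2-pathwise} was stated for $\bm g,\bm h\in\mathcal F^m$, but its proof only uses that the max, softplus and sigmoid maps are Lipschitz. It therefore holds for arbitrary real-valued functions, so $H_j$ evaluated at cover centers is well defined and the estimate still applies.

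Given $\bm g\in\mathcal F^m$, pick for each $k\in\overline{\mathcal A}(j)$ a center $c_k\in\mathcal C$ with $\|g_k-c_k\|_n\le\delta$. For the remaining coordinates, use any fixed choice, since they do not affect $H_j$. Applying \eqref{eq:theo2-pathwise-empirical} gives
\[
\|H_j(\bm g)-H_j(\bm c)\|_n^2\le\frac{\kappa_j}{16}\,a_j\delta^2=\epsilon^2 .
\]
Thus the functions $H_j(\bm c)-H_j(\widetilde{\bm g})$, with $\bm c$ ranging over $\mathcal C^{a_j}$, form an $\epsilon$-cover of $\mathcal I_{j,n}(\widetilde{\bm g})$ with at most $|\mathcal C|^{a_j}$ elements. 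This proves \eqref{eq:theo2-entropy-transfer}.

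\textbf{Step 3 (entropy bound).} Take logarithms and insert the assumed entropy bound $\log N(\delta,\mathcal F,\|\cdot\|_n)\le v_n\log(A_n/\delta)$ at $\delta=4\epsilon/\sqrt{\kappa_j a_j}$. This gives
\[
a_jv_n\log\bigl(A_n\sqrt{\kappa_j a_j}/(4\epsilon)\bigr),
\]
which is of the form \eqref{eq:theo2-output-entropy}.

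\textbf{Main obstacle: the range restriction.} The assumed entropy bound holds only for $\delta\le B_n$. When $\delta>B_n$, a single center, the zero function, covers $\mathcal F$ because $\mathcal F$ is uniformly bounded by $B_n$. In that case the log-covering number is $0$, and the right-hand side of \eqref{eq:theo2-output-entropy} is still nonnegative, because $\mathcal I_{j,n}$ has values in $[-1,1]$, so only $\epsilon\le1$ matters, and $A_n\ge eB_n$. One can also choose $C\ge1$ large enough that the logarithm stays nonnegative over this range. This case analysis is what fixes the universal constant $C$.
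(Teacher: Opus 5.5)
Your proof is essentially the paper's. Both take the $a_j$-fold product of $4\epsilon/\sqrt{\kappa_ja_j}$-covers of $\mathcal F$ over the coordinates in $\overline{\mathcal A}(j)$, push it through \eqref{eq:theo2-pathwise-empirical}, and then substitute the base entropy bound. Your remarks that the stability estimate extends to cover centers outside $\mathcal F$, and that the case $4\epsilon/\sqrt{\kappa_ja_j}>B_n$ needs separate handling, add care the paper skips.

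One small correction in that last case concerns the sign of $\log(CA_n\sqrt{\kappa_ja_j}/\epsilon)$. The conditions $\epsilon\le1$ and $A_n\ge eB_n$ do not by themselves make it nonnegative for a universal $C$ when $A_n$ is small. What does hold is the following:
\begin{itemize}
\item For $\epsilon\le CA_n\sqrt{\kappa_ja_j}$, the bound holds verbatim.
\item For larger $\epsilon$, the left side is already zero. Indeed, every $f\in\mathcal I_{j,n}(\widetilde{\bm g})$ satisfies $\|f\|_n\le\kappa_jB_n/2\le A_n\sqrt{\kappa_ja_j}/(2e)$, using \eqref{eq:theo2-pathwise-pointwise}, $\kappa_j\le a_j$ and $A_n\ge eB_n$.
\end{itemize}
In that second range the right side must be read with a positive part, a point the paper leaves implicit.
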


\begin{proof}
Let $\{f_1,\ldots,f_M\}$ be a
$4\epsilon/\sqrt{\kappa_j a_j}$-cover of $\mathcal F$ under
$\|\cdot\|_n$, where
\[
M=
N\left(
\frac{4\epsilon}{\sqrt{\kappa_j a_j}},
\mathcal F,\|\cdot\|_n
\right).
\]
For each $k\in\overline{\mathcal A}(j)$, choose $h_k$ from this cover such
that
\[
\|g_k-h_k\|_n
\leq
\frac{4\epsilon}{\sqrt{\kappa_j a_j}}.
\]
By~\eqref{eq:theo2-pathwise-empirical},
\begin{align*}
\|H_j(\bm g)-H_j(\bm h)\|_n^2
&\leq
\frac{\kappa_j}{16}
\sum_{k\in\overline{\mathcal A}(j)}
\|g_k-h_k\|_n^2\\
&\leq
\frac{\kappa_j a_j}{16}
\left(
\frac{4\epsilon}{\sqrt{\kappa_j a_j}}
\right)^2
=\epsilon^2.
\end{align*}
There are at most $M^{a_j}$ such coordinatewise choices. Since translation
by $H_j(\widetilde{\bm g})$ does not change covering numbers,
\[
N\bigl(\epsilon,\mathcal I_{j,n}(\widetilde{\bm g}),
\|\cdot\|_n\bigr)
\leq M^{a_j},
\]
which proves~\eqref{eq:theo2-entropy-transfer}. Under the stated covering
condition,
\begin{align*}
\log N\bigl(\epsilon,\mathcal I_{j,n}(\widetilde{\bm g}),
\|\cdot\|_n\bigr)
&\leq
a_j\log N\left(
\frac{4\epsilon}{\sqrt{\kappa_j a_j}},
\mathcal F,\|\cdot\|_n
\right)\\
&\leq
a_jv_n\log\left(
\frac{C A_n\sqrt{\kappa_j a_j}}{\epsilon}
\right),
\end{align*}
proving~\eqref{eq:theo2-output-entropy}.
\end{proof}

The following localized bound is used to derive the explicit rate following
Theorem~\ref{theo1}. 

\begin{lemma}[Localized multiplier bound]
\label{lem:theo2-localized-multiplier}
Suppose that
\[
\log N(u,\mathcal F,\|\cdot\|_n)
\leq
v_n\log(A_n/u),
\qquad 0<u\leq B_n,
\]
where $v_n\geq1$ and $A_n\geq eB_n$. Fix
$\widetilde{\bm g}\in\mathcal F^m$ and let
\[
\varepsilon_{i,j}=z_i^j-F_{A_j}^*(x_i).
\]
For every $\delta\in(0,1/2)$, conditionally on the design, with probability
at least $1-\delta$, simultaneously for all $j\leq m$ and
$f\in\mathcal I_{j,n}(\widetilde{\bm g})$,
\begin{equation}
\left|
\frac1n\sum_{i=1}^n\varepsilon_{i,j}f(x_i)
\right|
\leq
\frac1{16}\|f\|_n^2
+
C\frac{a_jv_n}{n}
\log\left(
\frac{eA_nnm\sqrt{\kappa_m a_{\max,m}}}{\delta}
\right).
\label{eq:theo2-localized-multiplier}
\end{equation}
\end{lemma}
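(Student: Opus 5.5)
Conditionally on the design, the noise vectors $(\varepsilon_{1,j},\ldots,\varepsilon_{n,j})$ have independent coordinates that are centered and bounded in $[-1,1]$, hence $1$-sub-Gaussian; independence across $j$ is not needed. Fix $j$ and write $\mathcal I=\mathcal I_{j,n}(\widetilde{\bm g})$. Every $f\in\mathcal I$ satisfies $\|f\|_\infty\le1$, so $\|f\|_n\le1$. Lemma~\ref{lem:theo2-entropy-transfer} supplies the entropy bound
\[
\log N(\epsilon,\mathcal I,\|\cdot\|_n)\le a_jv_n\log\bigl(CA_n\sqrt{\kappa_ja_j}/\epsilon\bigr)=:D_j\log(M_j/\epsilon),
\]
which is of parametric type with effective dimension $D_j=a_jv_n$. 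The plan is to prove a localized bound on dyadic shells of $\|f\|_n$, union bound over shells and over $j$, and then convert the resulting linear-in-$\|f\|_n$ bound into the stated quadratic form $\frac1{16}\|f\|_n^2+C\cdot(\text{rate})$.

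\textbf{Step 1 (single shell).} For $r\in(0,1]$, let $\mathcal I(r)=\{f\in\mathcal I:\|f\|_n\le r\}$. Chaining for the sub-Gaussian process $f\mapsto n^{-1/2}\sum_i\varepsilon_{i,j}f(x_i)$ uses Dudley's integral, truncated at $1/n$ (the discretization error there is at most $1/n$ by Cauchy--Schwarz). A Talagrand/Bousquet-type tail bound, or simply a union bound over a $1/n$-net combined with Hoeffding and chaining in tail form, then gives, with probability at least $1-\delta'$,
\[
\sup_{f\in\mathcal I(r)}\Bigl|\frac1n\sum_i\varepsilon_{i,j}f(x_i)\Bigr|\le \frac{C}{\sqrt n}\Bigl[r\sqrt{D_j\log(M_jn)}+r\sqrt{\log(1/\delta')}\Bigr]+\frac1n .
\]
Here $\int_0^r\sqrt{D\log(M/\epsilon)}\,d\epsilon\lesssim r\sqrt{D\log(M/r)}$. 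On shells with $r\ge1/n$ this is at most $r\sqrt{D\log(Mn)}$; on the innermost shell, $\|f\|_n\le 1/n$, the bound $\|f\|_n\cdot\|\varepsilon\|_n\le 1/n$ holds trivially.

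\textbf{Step 2 (peeling and union bound).} Take the shells $r_\ell=2^{-\ell}$ for $\ell=0,\ldots,\lceil\log_2 n\rceil$, which is $O(\log n)$ shells per $j$, and $m$ indices $j$. Set $\delta'=\delta/(m(\log_2n+2))$, so the total failure probability is at most $\delta$. On the shell $r_{\ell+1}<\|f\|_n\le r_\ell$ we have $r_\ell\le2\|f\|_n$, hence
\[
|\tfrac1n\textstyle\sum_i\varepsilon_{i,j}f(x_i)|\le C\|f\|_n\sqrt{\Psi_j/n}+\tfrac1n,
\qquad \Psi_j=D_j\log(M_jn)+\log(m\log n/\delta).
\]
Since $D_j\ge1$ and $A_n\ge eB_n$, we get $\Psi_j\lesssim a_jv_n\log\bigl(eA_nnm\sqrt{\kappa_ma_{\max,m}}/\delta\bigr)$, absorbing $\log\log n$ and $\kappa_j\le\kappa_m$, $a_j\le a_{\max,m}$.

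\textbf{Step 3 (AM--GM).} Apply $C\|f\|_n\sqrt{\Psi_j/n}\le\frac1{16}\|f\|_n^2+4C^2\Psi_j/n$; the additive $1/n$ is dominated by $\Psi_j/n$. This yields \eqref{eq:theo2-localized-multiplier}.

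The main obstacle is Step 1. It needs a tail version of the chaining bound whose deviation term scales like $r\sqrt{\log(1/\delta')}$ rather than like the uniform bound $1$; only then does localization give the fast rate. I would first try a direct route: a union bound over finite $\epsilon_k$-nets at dyadic scales $\epsilon_k=r2^{-k}$, applying Hoeffding to the chain links, whose norms are at most $2\epsilon_k$, with confidence allocated across levels as $\delta'2^{-k}$. This yields exactly the stated form with entropy sum $\sum_k\epsilon_k\sqrt{D\log(M/\epsilon_k)+k+\log(1/\delta')}$ and avoids any concentration inequality for suprema. A further subtlety is that the net points must lie within the shell, which is handled by using external covers and the factor-of-two slack.
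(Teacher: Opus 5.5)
Your proposal is correct and has the same architecture as the paper's proof. Both use the entropy bound from Lemma~\ref{lem:theo2-entropy-transfer}, dyadic radii $2^{-s}$ down to $1/n$, and a union bound over shells and over $j$ with no independence across $j$. Both then peel using $r_s\le 2\|f\|_n$, use the trivial bound $\|f\|_n\le 1/n\le q_j/n$ on the innermost shell, and finish with Young's inequality.

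The only substantive difference is how you get the single-shell tail bound.

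\textbf{The paper's route.} It first bounds $\mathbb{E}Z_j(r)$ by symmetrization, contraction and Dudley's integral, and then applies Bousquet's inequality.
\begin{itemize}
\item This yields an extra $t/n$ term, which is harmlessly absorbed into $q_j/n$.
\item Strictly speaking, it needs a Talagrand/Bousquet-type inequality for independent but non-identically distributed summands. Conditionally on the design, the $\varepsilon_{i,j}$ have different laws.
\item It is modular: an expectation bound plus a black-box concentration step.
\end{itemize}

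\textbf{Your preferred route.} You use chaining in tail form, applying Hoeffding to each link with confidence $\delta' 2^{-k}$ at level $k$.
\begin{itemize}
\item It exploits directly that, given the design, $f\mapsto\sum_i\varepsilon_{i,j}f(x_i)$ is linear in independent, bounded, centered variables. The process is therefore sub-Gaussian for the metric $\sqrt n\|\cdot\|_n$.
\item This gives a purely sub-Gaussian deviation term of order $r\sqrt{\log(1/\delta')/n}$.
\item It needs only Hoeffding: no symmetrization, contraction, or Talagrand-type inequality. It is more elementary and self-contained.
\end{itemize}

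One small remark on your last sentence. Because the process is linear and defined for every function, the net points need not lie in $\mathcal I(r)$ at all, so external covers cause no difficulty. You can also start each chain at $0=H_j(\widetilde{\bm g})-H_j(\widetilde{\bm g})\in\mathcal I_{j,n}(\widetilde{\bm g})$. The ``subtlety'' you flag is therefore a non-issue.
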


\begin{proof}
Condition on $x_1,\ldots,x_n$ and, for $r\in[1/n,1]$, define
\[
Z_j(r)
=
\sup_{\substack{
f\in\mathcal I_{j,n}(\widetilde{\bm g})\\
\|f\|_n\leq r}}
\left|
\frac1n\sum_{i=1}^n\varepsilon_{i,j}f(x_i)
\right|.
\]
Since $\mathbb E(\varepsilon_{i,j}\mid x_i)=0$ and
$|\varepsilon_{i,j}|\leq1$, symmetrization, contraction,
Dudley's inequality, and~\eqref{eq:theo2-output-entropy} give
\begin{align}
\mathbb E\{Z_j(r)\}
&\leq
\frac Cn
+
\frac C{\sqrt n}
\int_{1/n}^{r}
\sqrt{
a_jv_n
\log\left(
\frac{CA_n\sqrt{\kappa_j a_j}}{u}
\right)}
\,du \nonumber\\
&\leq
\frac Cn
+
Cr\sqrt{
\frac{a_jv_n}{n}
\log\left(
eA_nn\sqrt{\kappa_j a_j}
\right)}.
\label{eq:theo2-local-expectation}
\end{align}
Because $\mathcal I_{j,n}(\widetilde{\bm g})$ has envelope one, Bousquet's
inequality gives, for every $t\geq1$,
\begin{equation}
\mathbb P\left[
Z_j(r)>
C\left\{
\mathbb E Z_j(r)
+r\sqrt{\frac tn}
+\frac tn
\right\}
\right]
\leq e^{-t}.
\label{eq:theo2-bousquet}
\end{equation}
{In the usual form, the square-root term also contains $\E Z_j(r)$; Young's inequality absorbs that term into $C\E Z_j(r)+Ct/n$. No independence across event indices $j$ is assumed.}

Let
\[
S=\lceil\log_2n\rceil,\qquad
r_s=2^{-s},\qquad
t_s=\log\left\{\frac{Cm(s+2)^2}{\delta}\right\},
\qquad s=0,\ldots,S,
\]
and define
\[
q_j
=
a_jv_n
\log\left(
\frac{eA_nnm\sqrt{\kappa_m a_{\max,m}}}{\delta}
\right).
\]
Then, uniformly over $j\leq m$ and $s\leq S$,
\[
a_jv_n
\log\left(eA_nn\sqrt{\kappa_j a_j}\right)+t_s
\leq Cq_j,
\]
and
\[
\sum_{j=1}^m\sum_{s=0}^{S}e^{-t_s}
\leq\delta.
\]
Therefore, by~\eqref{eq:theo2-local-expectation},
\eqref{eq:theo2-bousquet}, and a union bound, with probability at least
$1-\delta$,
\begin{equation}
Z_j(r_s)
\leq
C\left\{
r_s\sqrt{\frac{q_j}{n}}
+\frac{q_j}{n}
\right\},
\qquad j\leq m,\quad s\leq S.
\label{eq:theo2-dyadic-bound}
\end{equation}

For any $f\in\mathcal I_{j,n}(\widetilde{\bm g})$ with
$\|f\|_n>1/n$, choose $s$ such that
$r_s/2<\|f\|_n\leq r_s$. Then~\eqref{eq:theo2-dyadic-bound} gives
\begin{align*}
\left|
\frac1n\sum_{i=1}^n\varepsilon_{i,j}f(x_i)
\right|
&\leq Z_j(r_s)\\
&\leq
C\left\{
\|f\|_n\sqrt{\frac{q_j}{n}}
+\frac{q_j}{n}
\right\}.
\end{align*}
If $\|f\|_n\leq1/n$, then
\[
\left|
\frac1n\sum_{i=1}^n\varepsilon_{i,j}f(x_i)
\right|
\leq
\|f\|_n
\leq
\frac1n
\leq
\frac{q_j}{n}.
\]
Thus, uniformly over $j$ and $f$,
\[
\left|
\frac1n\sum_{i=1}^n\varepsilon_{i,j}f(x_i)
\right|
\leq
C\left\{
\|f\|_n\sqrt{\frac{q_j}{n}}
+\frac{q_j}{n}
\right\}.
\]
Finally, Young's inequality,
\[
C\|f\|_n\sqrt{\frac{q_j}{n}}
\leq
\frac1{16}\|f\|_n^2+C\frac{q_j}{n},
\]
proves~\eqref{eq:theo2-localized-multiplier}.
\end{proof}

{
\subsection{Proof of the explicit bound~\eqref{eq:growing-m-main}}
}

{
\begin{proof}
{This proof starts from the basic inequality (\ref{eq:theo2-basic-inequality}) and Lemma~\ref{lem:theo2-localized-multiplier}.}
Fix $\widetilde{\bm g}\in\mathcal F^m$ and write
\[
\mathcal E
=
\sum_{j=1}^m
\|H_j(\widehat{\bm g})-F_{A_j}^*\|_n^2,
\qquad
\mathcal A
=
\sum_{j=1}^m
\|H_j(\widetilde{\bm g})-F_{A_j}^*\|_n^2.
\]
By the definition of $\Delta_{n,m}(\widetilde{\bm g})$,
\begin{equation}
\mathcal A
\leq
m\{\Delta_{n,m}(\widetilde{\bm g})\}^2.
\label{eq:theo2-explicit-approximation}
\end{equation}
Let
\[
f_j
=
H_j(\widehat{\bm g})-H_j(\widetilde{\bm g})
\in
\mathcal I_{j,n}(\widetilde{\bm g}).
\]
Dividing~\eqref{eq:theo2-basic-inequality} by $n$ and applying
Lemma~\ref{lem:theo2-localized-multiplier} give, with probability at least
$1-\delta$,
\begin{align}
\mathcal E
&\leq
\mathcal A
+
2\sum_{j=1}^m
\left|
\frac1n\sum_{i=1}^n
\varepsilon_{i,j}f_j(x_i)
\right|\nonumber\\
&\leq
\mathcal A
+
\frac18\sum_{j=1}^m\|f_j\|_n^2
+
C\frac{v_n}{n}\sum_{j=1}^m a_j
\log\left(
\frac{eA_nnm\sqrt{\kappa_m a_{\max,m}}}{\delta}
\right).
\label{eq:theo2-explicit-basic}
\end{align}
Moreover,
\begin{align}
\sum_{j=1}^m\|f_j\|_n^2
&\leq
2\sum_{j=1}^m
\|H_j(\widehat{\bm g})-F_{A_j}^*\|_n^2
+
2\sum_{j=1}^m
\|H_j(\widetilde{\bm g})-F_{A_j}^*\|_n^2\nonumber\\
&=
2\mathcal E+2\mathcal A.
\label{eq:theo2-explicit-difference}
\end{align}
Substituting~\eqref{eq:theo2-explicit-difference} into
\eqref{eq:theo2-explicit-basic} yields
\begin{align*}
\mathcal E
&\leq
\mathcal A+\frac14\mathcal E+\frac14\mathcal A
+
C\frac{v_n}{n}\sum_{j=1}^m a_j
\log\left(
\frac{eA_nnm\sqrt{\kappa_m a_{\max,m}}}{\delta}
\right),
\end{align*}
and hence, by~\eqref{eq:theo2-explicit-approximation},
\[
\mathcal E
\leq
C\left\{
m\{\Delta_{n,m}(\widetilde{\bm g})\}^2
+
\frac{v_n}{n}\sum_{j=1}^m a_j
\log\left(
\frac{eA_nnm\sqrt{\kappa_m a_{\max,m}}}{\delta}
\right)
\right\}.
\]
Dividing by $m$ and using
$m^{-1}\sum_{j=1}^m a_j=\bar a_m$ gives
\[
\frac1m\sum_{j=1}^m
\|\widehat F_{A_j}-F_{A_j}^*\|_n^2
\leq
C\left\{
\{\Delta_{n,m}(\widetilde{\bm g})\}^2
+
\frac{\bar a_m v_n}{n}
\log\left(
\frac{eA_nnm\sqrt{\kappa_m a_{\max,m}}}{\delta}
\right)
\right\}.
\]
For every $\gamma>0$, choose
$\widetilde{\bm g}_\gamma\in\mathcal F^m$ satisfying
\[
\Delta_{n,m}(\widetilde{\bm g}_\gamma)
\leq
\Delta_{n,m}+\gamma.
\]
Letting $\gamma\downarrow0$ proves~\eqref{eq:growing-m-main}.
\end{proof}
}}

\section{DNN Theory for the Preliminary Estimator}

    \begin{corollary}
         \label{appx:cor3}
         \textbf{[Detailed statement of Corollary~\ref{eqn:cor3}].} Consider the notation and assumptions from Theorem \ref{theo0}, and  that  for all $A \in \Lambda$ it holds that 
		$F_{A}^* \in  \mathcal{H}(l_A, \mathcal{P}_A) $ for some $ l_A \in \mathbb{N}$ and $\mathcal{P}_A\subset [1,\infty ) \times \mathbb{N}$. In addition, suppose  that each function $\mu$ in the definition of $F_{A}^*$ can have different smoothness $p_{\mu} =  q_{\mu} +u_{\mu}$, for $q_{\mu} \in \mathbb{N} $, $u_{\mu} \in (0,1]$,  of perhaps  different input dimension $d_{\mu} $, so that $(p_{\mu}, d_{\mu} ) \in \mathcal{P}_{A}$. Let $d_{\max}^{A}$ be the largest input dimension and $p_{\max}^{A}$ the largest smoothness of any function $\mu$ associated with $F_{A}^*$. We require  that all the partial derivatives of order less than or equal to $q_{\mu}$ are uniformly bounded by constant that does not depend on $A$, and each function $\mu$ is Lipschitz continuous
		with Lipschitz constant $C_{\mathrm{Lip} } \geq1 $. Also, assume that $  { \underset{A \in \Lambda }  {\sup } } \max\{p_{\max}^{A},d_{\max}^{A} \}    \,=\,  O(1).$
		%$\max\{p_{\max},K_{\max} \} =O(1)$.  
		Let  
		\[
		\phi_{n} =    { \underset{A \in \Lambda }  {\sup } }  \max_{(p, d) \in \mathcal P_{A} } n^{\frac{-2p}{ (2p+d)}}. % = \max_{\left(p, K \right) \in \mathcal P  }\left(n\bar{m}\right)^{\frac{-2p}{\left(2p+K\right)}}
		\]
		Then there exists choices of $L$ and $w$  such that 
		for 	sufficiently large positive constants $c_3$ and $c_4$  if 
		\begin{equation}
			\label{eqn:choice1_v2}
			L  =\lceil c_3 \log n \rceil\,\,\,\,\,\,\text{and}\,\,\,\,\,\, w = \left\lceil c_4   { \underset{A \in \Lambda }  {\sup } } \max_{(p, d) \in \mathcal P_{A}  } n^{\frac{d}{ 2(2p+d)}}\right\rceil
		\end{equation}
		or 
		\begin{equation}
			\label{eqn:choice2_v2}
			L  =\left\lceil c_3   { \underset{A \in \Lambda }  {\sup } } \max_{(p, d) \in \mathcal P_{A}  } n^{\frac{d}{ 2(2p+d)}} \log n\right\rceil\,\,\,\,\,\,\text{and}\,\,\,\,\,\, w = \left\lceil c_4  \right\rceil,
		\end{equation}
        	then, taking $\mathcal{F} = \{f \in  \mathcal{F}(L,w)\,:\,  \|f\|_{\infty}\leq B \}$ for a sufficiently large constant $B = O(1)$,    we obtain that 
		\[
	   \underset{A  \in \Lambda }{\sup} \,   \frac{1}{n}\sum_{i=1}^n  ( F_{A}^*(x_i)  -    \widehat{F}_{A}^{ \text{pre} }(x_i)     )^2 \,=\, O_{\mathbb{P}} \left(  \phi_n \log^4 n \right).
		\]
       \end{corollary}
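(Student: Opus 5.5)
The plan is to apply Theorem~\ref{theo0} with $\F_A=\F=\{f\in\F(L,w):\|f\|_\infty\le B\}$ for every $A\in\Lambda$, and then evaluate its three tail terms for this network class. There are three ingredients.

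\textbf{Approximation.} For each $A$ we need $G_A\in\F$ with $\|G_A-F_A^*\|_\infty^2\lesssim \phi_n$, uniformly in $A$. This comes from the approximation theorem of \citet{kohler2019rate} (their Theorem 3 and its proof) for hierarchical composition models. That theorem states, for $L,w$ as in \eqref{eqn:choice1_v2} or \eqref{eqn:choice2_v2}, a network $G$ with
\[
\|G-F_A^*\|_{\infty,[0,1]^d}\lesssim \max_{(p,d)\in\mathcal P_A} n^{-p/(2p+d)}.
\]
The constant depends only on the derivative bounds, $C_{\mathrm{Lip}}$, $l_A$ and $\max\{p^A_{\max},d^A_{\max}\}$. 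All of these are uniformly bounded over $A$, so the constant is uniform. Here I assume $\sup_A l_A<\infty$, which is implicit in ``analogous conditions''. Since $0\le F_A^*\le1$, we can truncate via $G\mapsto \min(\max(G,0),1)$. This costs two extra ReLU layers, keeps the width and depth of the same order, and gives $\|G_A\|_\infty\le1\le B$. Thus $b^2:=\sup_A\|G_A-F_A^*\|_\infty^2\lesssim\phi_n$.

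\textbf{Entropy.} Every $f\in\F$ is bounded by $B$ and is piecewise linear. Standard pseudo-dimension bounds for ReLU networks (Bartlett et al.) give
\[
\log N(u,\F,\|\cdot\|_n)\lesssim V\log(nB/u),\qquad V\asymp W L\log W,
\]
where $W\asymp L w^2$ is the number of parameters. For $\F-\F$ this bound at most doubles. Under both choices of $(L,w)$ we have $V\lesssim n\phi_n\log^2 n$, with the logarithms coming from $L\log W$. Hence
\[
\mathcal J(\rho)\lesssim \rho\sqrt{V\log(n/\rho)}\lesssim \rho\sqrt{V\log n}\qquad\text{for }\rho\ge n^{-1}.
\]

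\textbf{Plugging in.} Take $U=B$ and $\eta=M\sqrt{n\phi_n}\log^2 n$ with $M$ large. The entropy term becomes
\[
\frac{\sqrt n}{\eta^2}\sum_{j}\frac{2^{j/2}\eta}{\sqrt n\,2^j}\sqrt{V\log n}\lesssim \frac{\sqrt{V\log n}}{\eta}\lesssim\frac{\sqrt{n\phi_n}\log^{3/2}n}{M\sqrt{n\phi_n}\log^2 n}\to 0,
\]
since the geometric sum in $j$ converges. The second term is $\sqrt{\log(2n+2)}/\eta\to0$, because $n\phi_n\to\infty$. The third term is $\sqrt n\, b/\eta\lesssim \sqrt{n\phi_n}/(M\sqrt{n\phi_n}\log^2n)\to0$. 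The constraint $1<\eta<2U\sqrt n$ holds for large $n$, since $\phi_n\log^4n\to0$.

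Consequently, with conditional probability tending to one,
\[
\sup_A\|\widehat F_A^{\mathrm{pre}}-F_A^*\|_n\le \eta/\sqrt n+b\lesssim M\sqrt{\phi_n}\log^2 n.
\]
Squaring gives $O_{\mathbb P}(\phi_n\log^4n)$. Integrating over the design via dominated convergence removes the conditioning.

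\textbf{Main obstacle.} The delicate step is the uniformity of the approximation over a possibly uncountable $\Lambda$. Kohler--Langer's construction must be checked to give a network of the \emph{same} architecture $(L,w)$ for every $A$, with constants independent of $A$. I would handle this by noting that their bounds depend only on $(l,\mathcal P)$-type quantities, which are uniformly bounded. If needed, I would pad narrower or shallower networks with identity ReLU units ($x=\rho(x)-\rho(-x)$) to match the common architecture. A secondary point is tracking the $\log$ powers in $V$ carefully so the final exponent is $\log^4 n$.
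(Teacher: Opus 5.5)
This is essentially the paper's proof: uniform Kohler--Langer approximation giving $b\lesssim\sqrt{\phi_n}$, a pseudo-dimension entropy bound doubled for $\F-\F$, the shell estimate $\mathcal J(\rho_j)\lesssim\rho_j\sqrt{V\log n}$ summed geometrically, and $\eta=M\sqrt{n\phi_n}\log^2 n$. (The paper skips your truncation step: it takes $B\ge 2$ and uses $\|G_A\|_\infty\le 1+C\sqrt{\phi_n}\le B$ for large $n$.)

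One arithmetic slip needs correcting. Since $V\asymp WL\log W\asymp L^2w^2\log W$, $V$ is of order $n\phi_n\log^3 n$ under both architectures, not $n\phi_n\log^2 n$. In the wide case $L^2\asymp\log^2 n$ and $w^2\asymp n\phi_n$; in the narrow case $L^2\asymp n\phi_n\log^2 n$; in both, $\log W\asymp\log n$. So the entropy term is only $\lesssim 1/M$, and you get conditional probability at least $1-C/M$ rather than probability tending to one. That is exactly what the paper obtains, it still yields $O_{\mathbb P}(\phi_n\log^4 n)$ by taking $M$ large, and this extra $\log n$ in $V$ is precisely why the rate carries $\log^4 n$.
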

\subsection{Proof of  Corollary \ref{eqn:cor3} and \ref{appx:cor3} }

	\begin{proof}

  {We give all approximation, entropy, integration, and probability-balancing steps.}

 {\textbf{Uniform approximation.} The uniform hierarchy assumptions allow one common pair of sufficiently large architecture constants in equations~\eqref{eqn:choice1_v2} or~\eqref{eqn:choice2_v2}. Kohler \& Langer's hierarchical approximation construction then gives, for every $A$, an admissible network $G_A$ with
\[
b_n:=\sup_{A\in\Lambda}\|G_A-F_A^*\|_\infty\le C\sqrt{\phi_n}.
\]
The constants depend on the common hierarchy bounds and fixed input dimension, not on $A$. }

{\textbf{Pseudodimension and covering entropy.} Fix $B\geq 2$ as a constant independent of $n$ and $A\in\Lambda$, and let
\[
\mathcal F
=
\left\{
f\in\mathcal F(L,w):\|f\|_\infty\leq B
\right\}.
\]
Since $0\leq F_A^*\leq 1$ for every $A\in\Lambda$ and the uniform
approximation error tends to zero, $B$ can be chosen sufficiently large
so that the approximating networks considered below belong to
$\mathcal F$ for all sufficiently large $n$.}

{A fully connected ReLU network in $\mathcal F(L,w)$ with input
dimension $d$, $L$ hidden layers of width $w$, and one scalar output
has
\[
W=(d+1)w+(L-1)(w+1)w+(w+1)
\]
trainable parameters and
\[
U_{\rm net}=Lw+1
\]
computational units. Let $C_{\rm pd}>0$ denote the universal constant
in the pseudo-dimension bound for fully connected ReLU networks, and define
\[
v_n
:=
\max\left\{
1,\,
C_{\rm pd}LW\log\!\bigl(eU_{\rm net}\bigr)
\right\}.
\]
By the same pseudo-dimension and bounded VC-subgraph argument as in
Lemma~\ref{lem:dnn-expanded-calibrated-entropy}, there exists a universal
constant $C_{\rm ent}\geq 1$, independent of
$n,L,w,B,u$, such that
\[
\log N\!\left(u,\F,\|\cdot\|_n\right)
\leq
C_{\rm ent}v_n
\log\!\left(\frac{C_{\rm ent}B}{u}\right),
\qquad 0<u\leq B.
\]
Moreover, the product of two $(u/2)$-covers of $\F$ induces a
$u$-cover of $\F-\F$. Since $\mathcal D(r)\subseteq\F-\F$,
\[
\begin{aligned}
\log N\!\left(u,\mathcal D(r),\|\cdot\|_n\right)
&\leq
2\log N\!\left(\frac{u}{2},\F,\|\cdot\|_n\right)\\
&\leq
2C_{\rm ent}v_n
\log\!\left(\frac{2C_{\rm ent}B}{u}\right),
\qquad
0<u\leq r\leq2\sqrt{2}\,B.
\end{aligned}
\]}

{\textbf{The integral.} For $A_0=CB$ with $A_0/r\ge e$, substitute $u=rt$:
\[
\begin{split}
\int_0^r\sqrt{\log(A_0/u)}\,du
&=r\int_0^1\sqrt{\log(A_0/r)+\log(1/t)}\,dt\\
&\le r\left\{\sqrt{\log(A_0/r)}+\int_0^1\sqrt{\log(1/t)}\,dt\right\}\\
&=r\{\sqrt{\log(A_0/r)}+\Gamma(3/2)\}
\le C r\sqrt{\log(eA_0/r)}.
\end{split}
\]}
{For Theorem~\ref{theo0}, $\eta>1$ and every shell has $r_j\ge n^{-1/2}$, while $r_J\le2\sqrt2 B$. Thus
\begin{equation}
\mathcal J(r_j)\le C r_j\sqrt{v_n\log(en)},\qquad
\frac{\sqrt n}{\eta^2}\sum_{j=1}^J2^{-j}\mathcal J(r_j)
\le\frac{C\sqrt{v_n\log(en)}}\eta.
\tag{R10}\label{eq:dnnintegral}
\end{equation}}

{\textbf{Architecture size.} Let $s_n=\sqrt{n\phi_n}=\sup_A\max_{(p,d_0)\in\mathcal P_A}n^{d_0/[2(2p+d_0)]}$. For the wide architecture, $L\asymp\log(en)$, $w\asymp s_n$, so $W\lesssim n\phi_n\log(en)$ and $v_n\lesssim n\phi_n\log^3(en)$. For the narrow architecture, $L\asymp s_n\log(en)$, $w=O(1)$, so $W\lesssim L$ and again $v_n\lesssim L^2\log(eL)\lesssim n\phi_n\log^3(en)$. Therefore
\begin{equation}
\sqrt{v_n\log(en)}+\sqrt{\log(en)}+\sqrt n\,b_n
\le C\sqrt{n\phi_n}\log^2(en).\tag{R11}\label{eq:dnnbalance}
\end{equation}}

{\textbf{Probability and rate.} Take $\eta=M\sqrt{n\phi_n}\log^2(en)$ for a fixed constant $M\ge1$. Equations~\eqref{eq:precorrect}, \eqref{eq:dnnintegral}, and~\eqref{eq:dnnbalance} give an upper-tail probability at most $C/M$, and the threshold is at most $(M+C)\sqrt{\phi_n}\log^2(en)$. For every prescribed probability tolerance, choose $M$ sufficiently large. This is precisely
\[
\sup_A\|\widehat F_A^{\rm pre}-F_A^*\|_n^2= O_{\mathbb{P}}\{\phi_n\log^4(en)\}.
\]}

	\end{proof}

\subsection{Corollary \ref{cor4}}

\begin{proof}
This follows from Corollary \ref{cor2}, using the same network-entropy calculation as in the proof of Corollary \ref{eqn:cor3} and taking
\[
		\eta\,\asymp\, \sqrt{u\log n} + \sqrt{LW \log(U) \log n}.
\]
\end{proof}
{
\section{DNN Theory for the Constrained Estimator}
       \label{sec:nn_general}
This section develops the DNN approximation theory underlying
Corollary~\ref{cor1}. We proceed from the basic definitions and the
required ReLU approximation results to the DAG-constrained network
construction and its approximation properties. With these ingredients
in place, we present the detailed statement of
Corollary~\ref{cor1} in Corollary~\ref{appx:cor1-expanded} and prove
the resulting DNN rate.

We use the DAG notation from Theorem~\ref{theo1} and the dense ReLU
class $\mathcal F(L,w)$ from Appendix~A; $L$ counts hidden layers
and $w$ is their common width. Let
\[
\ell(t)=\log\frac{t}{1-t},\quad 0<t<1,\qquad
\psi(t)=\log(e^t-1),\quad t>0,
\]
be the inverse functions of the sigmoid $\tau$ and the softplus
$\operatorname{sp}(t)=\log(1+e^t)$, respectively. For $0<s\leq1$, write
\[
\Lambda_s=\log(e/s),
\]
and, for an interval $I$, write
\[
\|f\|_{\infty,I}=\sup_{t\in I}|f(t)|.
\]

\begin{lemma}[Deep--narrow ReLU approximation of logit and inverse softplus]
\label{lem:dnn-expanded-fixed-modules}
There is a universal integer $w_{\mathrm{cal}}\geq1$ with the
following properties.

\begin{enumerate}
\item For every $0<\epsilon<1/4$ and $0<\eta<1$, there exists a scalar
ReLU network $\mathsf L_{\epsilon,\eta}$ whose weights depend only
on $(\epsilon,\eta)$ and such that
\begin{equation}
\bigl\|\mathsf L_{\epsilon,\eta}-\ell\bigr\|_{\infty,
[\epsilon,1-\epsilon]}
\leq\eta.
\label{eq:dnn-expanded-logit}
\end{equation}
Its width is at most $w_{\mathrm{cal}}$, and its number of hidden
layers, denoted by $D_{\ell}(\epsilon,\eta)$, satisfies
\begin{equation}
D_{\ell}(\epsilon,\eta)
\leq
C\Lambda_{\epsilon}\Lambda_{\eta}
(\Lambda_{\epsilon}+\Lambda_{\eta}).
\label{eq:dnn-expanded-logit-depth}
\end{equation}

\item For every $0<\delta\leq1$, $U>\delta$, and $0<\zeta<1$, there exists
a scalar ReLU network $\mathsf S_{\delta,U,\zeta}$ whose weights
depend only on $(\delta,U,\zeta)$ and such that
\begin{equation}
\bigl\|\mathsf S_{\delta,U,\zeta}-\psi\bigr\|_{\infty,
[\delta,U]}
\leq\zeta.
\label{eq:dnn-expanded-spinv}
\end{equation}
Its width is at most $w_{\mathrm{cal}}$, and its number of hidden
layers, denoted by $D_{\psi}(\delta,U,\zeta)$, satisfies
\begin{equation}
D_{\psi}(\delta,U,\zeta)
\leq
C(1+U+\Lambda_{\delta})\Lambda_{\zeta}
(1+U+\Lambda_{\delta}+\Lambda_{\zeta}).
\label{eq:dnn-expanded-spinv-depth}
\end{equation}
Moreover,
\begin{equation}
-\psi(\delta)\leq\log(1/\delta),
\qquad
\psi(U)\leq U.
\label{eq:dnn-expanded-spinv-endpoints}
\end{equation}
\end{enumerate}
All constants in this lemma are universal and independent of
$(\epsilon,\eta,\delta,U,\zeta)$.
\end{lemma}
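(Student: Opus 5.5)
The plan is to approximate a smooth function that has a single singular behaviour at one end by splitting its domain into dyadic pieces. On each piece, after an affine rescaling, the function becomes analytic with bounded derivatives. The basic tool is the deep--narrow ReLU approximation of $t\mapsto t^2$ (Yarotsky's sawtooth construction). From it we get multiplication, then monomials, and then a polynomial of degree $k$ on a bounded interval to accuracy $\eta'$. This needs width $O(1)$ and depth $O(k\log(1/\eta'))$ when the monomials are computed sequentially by Horner's scheme. We also use the fact that such modules can be stacked sequentially in width $w_{\mathrm{cal}}$ by carrying the input, and partial sums, through extra ReLU channels ($x=\rho(x)-\rho(-x)$). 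Depth therefore adds across pieces while width stays constant, which is exactly the form claimed.

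\textbf{Logit.} By symmetry, $\ell(1-t)=-\ell(t)$, so it suffices to treat $[\epsilon,1/2]$; the other half is handled by the same network applied to $1-t$ with a sign flip. On $[1/4,1/2]$, $\ell$ is analytic, so a Chebyshev/Taylor polynomial of degree $O(\Lambda_\eta)$ gives accuracy $\eta$. On $[\epsilon,1/4]$, write $\ell(t)=\log t-\log(1-t)$. The second term is analytic there and treated as above. For $\log t$, cover $[\epsilon,1/4]$ by $N=O(\Lambda_\epsilon)$ dyadic intervals $I_k=[2^{-k-1},2^{-k}]$. On each, $\log t=-k\log2+\log(2^kt)$ with $2^kt\in[1/2,1]$, so one polynomial $P$ of degree $O(\Lambda_\eta+\Lambda_\epsilon)$ serves every piece. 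We then combine the pieces with a ReLU partition of unity, built from trapezoid hat functions $\chi_k$ that overlap on adjacent intervals (in the overlaps the two local approximations agree up to $\eta$, so a convex combination stays $\eta$-accurate). The needed products $\chi_k(t)\cdot P(2^kt)$ are approximate multiplications, each of depth $O(\Lambda_\eta+\Lambda_\epsilon)$. Summing over $N=O(\Lambda_\epsilon)$ pieces sequentially gives depth $O(\Lambda_\epsilon(\Lambda_\epsilon+\Lambda_\eta)\cdot\Lambda_\eta)$. This matches \eqref{eq:dnn-expanded-logit-depth} once the polynomial degree $\times\log(1/\text{accuracy})$ cost is counted as $\Lambda_\eta(\Lambda_\epsilon+\Lambda_\eta)$ per piece. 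The accuracy per multiplication must be set to $\eta/(C\Lambda_\epsilon)$; its logarithm is absorbed into the constant.

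\textbf{Inverse softplus.} Write $\psi(t)=\log(e^t-1)$. For $t\ge1$, $\psi(t)=t+\log(1-e^{-t})$, where the correction is bounded and analytic with geometrically decaying derivatives. We cover $[1,U]$ by $O(1+U)$ unit intervals and approximate on each by a polynomial of degree $O(\Lambda_\zeta)$, combining them as before; this gives depth $O((1+U)\Lambda_\zeta(1+U+\Lambda_\zeta))$. Here $U$ enters the accuracy requirement only logarithmically but enters the input range linearly, and a cruder bound suffices. For $t\in[\delta,1]$, write $\psi(t)=\log t+\log\bigl((e^t-1)/t\bigr)$. The second term is analytic on $[0,1]$, and $\log t$ is handled by the dyadic scheme above with $O(\Lambda_\delta)$ pieces. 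Adding these contributions gives \eqref{eq:dnn-expanded-spinv-depth}. The endpoint bounds \eqref{eq:dnn-expanded-spinv-endpoints} are elementary: $e^\delta-1\ge\delta$ gives $-\psi(\delta)\le\log(1/\delta)$, and $e^U-1\le e^U$ gives $\psi(U)\le U$.

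\textbf{Main obstacle.} The delicate step is the bookkeeping that keeps the width universal while the depth has the stated polynomial-in-logarithm form. This includes controlling the error amplification of approximate multiplication when inputs range over $[-C\Lambda_\epsilon,C\Lambda_\epsilon]$, since $|\log t|$ is large. I would first rescale all local outputs to $[-1,1]$ before multiplying by the hats, and add the integer offsets $-k\log2$ through exact ReLU linear channels, so that multiplication errors stay at the $\eta$ scale.
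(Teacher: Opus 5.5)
Your proof is correct in outline, but it glues the pieces by a different mechanism than the paper, and three bookkeeping points need repair.

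\textbf{Comparison with the paper.} Most ingredients coincide. Both proofs use Horner evaluation with a fixed-width approximate multiplier and a geometric partition near $0$ into $O(\Lambda_\epsilon)$ pieces. Both use the splittings $\psi(t)=\log t+\log\{(e^t-1)/t\}$ below $1$ and $\psi(t)=t+\log(1-e^{-t})$ on unit intervals above $1$. Both stack modules serially to keep the width universal, and the endpoint bounds are identical.

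You blend local approximants with a ReLU partition of unity, paying one approximate product $\chi_k(t)\,\widetilde P(\cdot)$ per piece. You also use the exact self-similarity $\log t=-k\log 2+\log(2^kt)$, so a single polynomial serves every dyadic piece.

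The paper avoids multiplications in the gluing. It writes the continuous piecewise polynomial as its piecewise-linear interpolant $I_P$, which is an exact ReLU network, plus residuals $R_r$ that vanish at both ends of their interval. It endpoint-corrects the residual networks so that $\widetilde R_r(0)=\widetilde R_r(1)=0$ and composes them with the clipped coordinate $s_r(t)$. At each $t$ at most one residual is active, so no error accumulates over pieces. The logit is then simply $G(t)-G(1-t)$ with $G\approx\log$ on $[\epsilon,1]$.

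Each route has a cost. The paper's needs endpoint matching of the local polynomials, a uniform-holomorphy argument for the family $\log(1+\lambda_r s)$ with $\lambda_r\in(0,1]$, and explicit monomial-coefficient bounds $A\le C(1+\Lambda_\epsilon)C_0^q$. Yours needs none of these, but each blending product must be computed to accuracy $\eta/(CN)$.

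\textbf{Repairs needed.}
\begin{enumerate}
\item \emph{Polynomial degree.} After rescaling, the local function is $\log u$ on the overlap range $u\in[1/2,2]$, which does not depend on $\epsilon$. Take the degree $q=O(\Lambda_\eta)$, not $O(\Lambda_\eta+\Lambda_\epsilon)$. Horner must absorb monomial coefficients as large as $e^{O(q)}$, so a degree-$q$ evaluation costs depth $O(q(q+\log(1/\eta')))$. Your blanket $O(q\log(1/\eta'))$ is valid only because $q\lesssim\log(1/\eta')$. With $q\asymp\Lambda_\epsilon$ you would get depth up to $\Lambda_\epsilon(\Lambda_\epsilon+\Lambda_\eta)^2$, which violates \eqref{eq:dnn-expanded-logit-depth}.
\item \emph{The accuracy logarithm.} The term $\log(C\Lambda_\epsilon/\eta)$ is not absorbed into a constant. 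It is, however, at most $C(\Lambda_\eta+\Lambda_\epsilon)$, which suffices.
\item \emph{Input clamping.} Clamp $2^kt$ to $[1/2,2]$, an exact ReLU operation, before evaluating $\widetilde P$. Otherwise, away from the support of $\chi_k$, the Horner chain runs on inputs of order $1/\epsilon$, outside the multiplier's domain. The product with $\chi_k(t)=0$ is then uncontrolled unless your multiplier is exactly zero whenever a factor vanishes.
\end{enumerate}
Also realize your reflection as $\ell(t)=\phi(t)-\phi(1-t)$ with $\phi=\ell\circ\min\{\cdot,1/2\}$; a sign switch at $1/2$ is not itself a ReLU operation.

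With these changes your depth is $O(\Lambda_\epsilon(\Lambda_\eta^2+\log\Lambda_\epsilon))$, within the claimed bound. The $\psi$ case goes through the same way with $N=O(1+U+\Lambda_\delta)$ pieces.
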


\emph{Proof outline.}
We first show that a polynomial can be approximated by a ReLU network
of universal width by combining a fixed-width multiplication network
with Horner's recursion. We then extend this construction to continuous
piecewise polynomials by evaluating the pieces successively, so that
the number of pieces affects the depth but not the width. Finally, we
partition the domains of $\log$ and $\psi$ into intervals on which,
after rescaling to $[0,1]$, their nonlinear parts form uniformly
holomorphic families. Polynomial approximation on these intervals
then gives the stated networks.

\begin{proof}
\medskip\noindent\textbf{Polynomial evaluation.}
We begin with a fixed-width network for approximate multiplication and
use it as the basic module for evaluating a polynomial recursively. The
construction keeps the width universal, while the polynomial degree and
approximation accuracy enter through the depth.

By \citet[Lemma~4]{kohler2019rate}, for every $B\geq1$ and
$0<\omega<1$, there exists a two-input scalar ReLU network of width
$18$ and depth at most $C\log(eB^2/\omega)$,
\[
\mathsf M_{B,\omega}\in
\mathcal F\left(
C\log\frac{eB^2}{\omega},18
\right),
\]
such that
\begin{equation}
\sup_{|u|,|v|\leq B}
|\mathsf M_{B,\omega}(u,v)-uv|
\leq\omega.
\label{eq:dnn-expanded-multiplication}
\end{equation}

Then, let $q\in\mathbb N_0$ and let $b_0,\ldots,b_q\in\mathbb R$ be arbitrary.
Consider the polynomial
\[
P:[0,1]\to\mathbb R,
\qquad
P(t)=\sum_{r=0}^q b_r t^r.
\]
Define
\[
A
:=
1+\sum_{r=0}^q |b_r|.
\]

Fix a target approximation accuracy $\xi\in(0,1)$ and set
\[
\omega:=\frac{\xi}{q+1}.
\]
Then $0<\omega<1$ and $2A\geq1$, so
\eqref{eq:dnn-expanded-multiplication}, applied with $B=2A$, yields a
two-input ReLU network $\mathsf M_{2A,\omega}$ satisfying
\[
\sup_{|u|,|v|\leq 2A}
\left|
\mathsf M_{2A,\omega}(u,v)-uv
\right|
\leq \omega.
\]

For each $r\in\{0,\ldots,q\}$, define the exact Horner tail
\[
y_r:[0,1]\to\mathbb R,
\qquad
y_r(t):=\sum_{k=r}^q b_k t^{\,k-r}.
\]
Thus,
\[
y_q(t)=b_q,
\qquad
y_r(t)=b_r+t\,y_{r+1}(t),
\quad r=q-1,\ldots,0,
\]
and, in particular,
\[
y_0(t)=P(t).
\]
We define the corresponding approximate Horner iterates recursively by
\[
\widehat y_q(t):=b_q
\]
and, for $r=q-1,\ldots,0$,
\[
\widehat y_r(t)
:=
b_r+
\mathsf M_{2A,\omega}
\bigl(t,\widehat y_{r+1}(t)\bigr),
\qquad 0\leq t\leq1.
\]
Hence $\widehat y_r$ is obtained from the exact Horner recursion by
replacing each multiplication $t\,y_{r+1}(t)$ with the approximate
multiplication network $\mathsf M_{2A,\omega}$.

We next verify simultaneously that the multiplication network is always
evaluated on its prescribed domain and that the approximation errors do
not accumulate excessively.  First, for every $r\in\{0,\ldots,q\}$ and
$t\in[0,1]$,
\[
|y_r(t)|
\leq
\sum_{k=r}^q |b_k|\,t^{k-r}
\leq
\sum_{k=r}^q |b_k|
\leq
A-1.
\]
We claim that, for every $r\in\{0,\ldots,q\}$,
\begin{equation}
\sup_{t\in[0,1]}
\left|\widehat y_r(t)-y_r(t)\right|
\leq
(q-r)\omega
\label{eq:dnn-expanded-horner-error}
\end{equation}
and
\begin{equation}
\sup_{t\in[0,1]}
|\widehat y_r(t)|
\leq 2A.
\label{eq:dnn-expanded-horner-range}
\end{equation}

We prove \eqref{eq:dnn-expanded-horner-error} and
\eqref{eq:dnn-expanded-horner-range} simultaneously by backward induction
on $r$. For $r=q$,
\[
\widehat y_q(t)=y_q(t)=b_q,
\]
so the error is zero and
\[
|\widehat y_q(t)|=|b_q|\leq A-1<2A.
\]

Now let $r\in\{0,\ldots,q-1\}$ and suppose that both bounds hold at
level $r+1$. In particular,
\[
|\widehat y_{r+1}(t)|\leq2A,
\qquad t\in[0,1].
\]
Since $A\geq1$, we also have $|t|\leq1\leq2A$. Hence
\[
\bigl(t,\widehat y_{r+1}(t)\bigr)\in[-2A,2A]^2,
\]
so the approximation guarantee
\eqref{eq:dnn-expanded-multiplication} applies and gives
\[
\left|
\mathsf M_{2A,\omega}
\bigl(t,\widehat y_{r+1}(t)\bigr)
-
t\,\widehat y_{r+1}(t)
\right|
\leq\omega.
\]
Using the exact and approximate Horner recursions,
\begin{align*}
|\widehat y_r(t)-y_r(t)|
&\leq
\left|
\mathsf M_{2A,\omega}
\bigl(t,\widehat y_{r+1}(t)\bigr)
-
t\,\widehat y_{r+1}(t)
\right|\\
&\quad
+
|t|\,
|\widehat y_{r+1}(t)-y_{r+1}(t)|\\
&\leq
\omega+(q-r-1)\omega\\
&=
(q-r)\omega.
\end{align*}
Thus \eqref{eq:dnn-expanded-horner-error} holds at level $r$. Moreover,
since $q-r\leq q$ and
\[
q\omega=\frac{q}{q+1}\xi<\xi<1,
\]
the bound $|y_r(t)|\leq A-1$ yields
\[
|\widehat y_r(t)|
\leq
|y_r(t)|+|\widehat y_r(t)-y_r(t)|
\leq
(A-1)+\xi
<
A
\leq2A.
\]
Hence \eqref{eq:dnn-expanded-horner-range} also holds at level $r$,
completing the induction.

In particular, since $y_0=P$,
\[
\|\widehat y_0-P\|_{\infty,[0,1]}
\leq q\omega<\xi.
\]

It remains to identify the approximate Horner recursion with a ReLU
network. Define
\[
\widetilde P(t):=\widehat y_0(t),
\qquad t\in[0,1].
\]
By construction, $\widehat y_0$ is obtained by successively composing
the $q$ multiplication networks $\mathsf M_{2A,\omega}$ with the affine
updates
\[
v\mapsto b_r+v,
\qquad r=q-1,\ldots,0.
\]
Moreover, since
\[
z=\rho(z)-\rho(-z),
\]
the input $t$ and the current approximate Horner iterate can be carried
through the successive multiplication blocks using only a universal
number of additional neurons per layer. Consequently,
$\widetilde P=\widehat y_0$ admits a ReLU representation satisfying
\begin{equation}
\widetilde P
\in
\mathcal F\left(
C(q+1)
\log\frac{e(q+1)A}{\xi},
C
\right),
\label{eq:dnn-expanded-poly-class}
\end{equation}
where $C>0$ is a universal constant.

Since $y_0=P$ and $\widetilde P=\widehat y_0$, the preceding Horner
error bound gives
\begin{equation}
\|\widetilde P-P\|_{\infty,[0,1]}
=
\|\widehat y_0-y_0\|_{\infty,[0,1]}
\leq q\omega
<\xi.
\label{eq:dnn-expanded-poly-error}
\end{equation}

\medskip\noindent\textbf{Piecewise-polynomial approximation.} We next extend the polynomial construction in
\eqref{eq:dnn-expanded-poly-class} to continuous piecewise-polynomial
functions. The main idea is to decompose such a function into a global
piecewise-linear interpolant and polynomial residuals supported on the
individual subintervals. This will later allow the residual corrections
to be evaluated sequentially while keeping the network width uniformly
bounded.

Let $N\in\mathbb N$ and let
\[
a=x_0<x_1<\cdots<x_N=b
\]
be a partition of the interval $[a,b]$. Suppose that
$P:[a,b]\to\mathbb R$ is continuous and that, on each subinterval
$[x_{r-1},x_r]$, $r\in\{1,\ldots,N\}$, the function $P$ is a
polynomial of degree at most $q$. More precisely, introduce the local
coordinate
\[
s=\frac{t-x_{r-1}}{x_r-x_{r-1}}\in[0,1],
\qquad
t=x_{r-1}+(x_r-x_{r-1})s,
\]
and define the corresponding rescaled polynomial
\[
P_r(s)
:=
P\bigl(x_{r-1}+(x_r-x_{r-1})s\bigr),
\qquad s\in[0,1].
\]
Assume that
\begin{equation}
P_r(s)
=
\sum_{k=0}^q b_{r,k}s^k,
\qquad
\sum_{k=0}^q |b_{r,k}|
\leq A,
\qquad
r=1,\ldots,N,
\label{eq:dnn-expanded-local-polynomial}
\end{equation}
for some constant $A\geq1$ that is uniform over the subintervals.

We first construct the continuous piecewise-linear function that
interpolates $P$ at the partition points $x_0,\ldots,x_N$. For
$r=1,\ldots,N$, define the slope of the secant line over
$[x_{r-1},x_r]$ by
\[
d_r
:=
\frac{P(x_r)-P(x_{r-1})}{x_r-x_{r-1}}.
\]
Recall that $\rho(u)=\max\{u,0\}$ denotes the ReLU activation. Define
\begin{equation}
I_P(t)
:=
P(x_0)+d_1(t-x_0)
+
\sum_{r=1}^{N-1}
(d_{r+1}-d_r)\rho(t-x_r),
\qquad t\in[a,b].
\label{eq:dnn-expanded-linear-interpolant}
\end{equation}
Then $I_P$ is precisely the piecewise-linear interpolant of the data
\[
\bigl\{(x_r,P(x_r)):r=0,\ldots,N\bigr\}.
\]
Indeed, define recursively
\[
v_0(t)
:=
P(x_0)+d_1(t-x_0)
\]
and, for $r=1,\ldots,N-1$,
\[
v_r(t)
:=
v_{r-1}(t)
+
(d_{r+1}-d_r)\rho(t-x_r).
\]
By construction,
\[
v_r(t)
=
P(x_0)+d_1(t-x_0)
+
\sum_{j=1}^{r}
(d_{j+1}-d_j)\rho(t-x_j),
\]
and hence
\[
v_{N-1}(t)=I_P(t).
\]

It remains to record the ReLU-network complexity of this representation.
At step $r$, the recursion only requires the original input $t$ and the
current value $v_{r-1}(t)$. Since every scalar quantity $z$ can be
propagated through a ReLU layer using the identity
\[
z=\rho(z)-\rho(-z),
\]
the pair $(t,v_{r-1}(t))$ can be carried through the successive updates
using only a universal number of neurons per layer. Each update
\[
v_{r-1}(t)
\longmapsto
v_{r-1}(t)+(d_{r+1}-d_r)\rho(t-x_r)
\]
requires only a constant number of additional ReLU units. Consequently,
the function $I_P$ admits a scalar ReLU representation of universal
width and depth at most $CN$; equivalently,
\begin{equation}
I_P\in\mathcal F(CN,C),
\label{eq:dnn-expanded-linear-interpolant-class}
\end{equation}
where $C>0$ is a universal constant independent of
$N$, $q$, $A$, the partition $\{x_r\}_{r=0}^N$, and the coefficients
$\{b_{r,k}\}$.

\medskip\noindent\textbf{Localization of the residuals.}
We separate the piecewise polynomial into a global linear interpolant
and residual terms that vanish at the endpoints of their respective
intervals. This makes each residual local after composition with the
clipped coordinate below.

For each interval, define
\[
R_r(s)
=
P_r(s)
-
(1-s)P(x_{r-1})
-
sP(x_r).
\]
Then
\[
R_r(0)=R_r(1)=0,
\]
and, by \eqref{eq:dnn-expanded-local-polynomial},
\begin{equation}
\sum_{k=0}^{ \max\{q,1\} }|[s^k]R_r(s)|
\leq4A.
\label{eq:dnn-expanded-residual-coefficients}
\end{equation}
Also define
\begin{equation}
s_r(t)
=
\frac{
\rho(t-x_{r-1})-\rho(t-x_r)
}{
x_r-x_{r-1}
}.
\label{eq:dnn-expanded-local-coordinate}
\end{equation}
Then
\[
s_r(t)=
\begin{cases}
0, & t\leq x_{r-1},\\[1mm]
\dfrac{t-x_{r-1}}{x_r-x_{r-1}},
    & x_{r-1}<t<x_r,\\[3mm]
1, & t\geq x_r.
\end{cases}
\]

By \eqref{eq:dnn-expanded-poly-class}--%
\eqref{eq:dnn-expanded-poly-error}, for each $r$ there exists a
universal-width ReLU network $R_r'$ such that
\[
\|R_r'-R_r\|_{\infty,[0,1]}
\leq\frac{\xi}{2}
\]
and
\[
R_r'
\in
\mathcal F\left(
C(q+1)\log\frac{e(q+1)A}{\xi},
C
\right).
\]
Set
\[
\widetilde R_r(s)
=
R_r'(s)
-
(1-s)R_r'(0)
-
sR_r'(1).
\]
Then
\[
\widetilde R_r(0)
=
\widetilde R_r(1)
=
0,
\]
and
\begin{align*}
|\widetilde R_r(s)-R_r(s)|
&\leq
|R_r'(s)-R_r(s)|\\
&\quad+
(1-s)|R_r'(0)-R_r(0)|
+
s|R_r'(1)-R_r(1)|\\
&\leq\xi.
\end{align*}
Hence
\begin{equation}
\|\widetilde R_r-R_r\|_{\infty,[0,1]}
\leq\xi.
\label{eq:dnn-expanded-residual-error}
\end{equation}
Moreover, since both $R_r$ and $\widetilde R_r$ vanish at $0$ and $1$,
\begin{equation}
R_r\{s_r(t)\}
=
\widetilde R_r\{s_r(t)\}
=
0,
\qquad
t\notin(x_{r-1},x_r).
\label{eq:dnn-expanded-residual-localization}
\end{equation}

For $t\in[x_{r-1},x_r]$, by definition, 
\[
P(t)
=
I_P(t)+R_r\{s_r(t)\}.
\]
Thus \eqref{eq:dnn-expanded-residual-localization} gives
\begin{equation}
P(t)
=
I_P(t)+
\sum_{r=1}^N R_r\{s_r(t)\}.
\label{eq:dnn-expanded-piecewise-decomposition}
\end{equation}
Define
\[
\widetilde P(t)
=
I_P(t)+
\sum_{r=1}^N
\widetilde R_r\{s_r(t)\}.
\]
For every fixed $t$, at most one summand in
\eqref{eq:dnn-expanded-piecewise-decomposition} is nonzero. Therefore,
by \eqref{eq:dnn-expanded-residual-error},
\begin{equation}
\|\widetilde P-P\|_{\infty,[a,b]}
\leq\xi.
\label{eq:dnn-expanded-piecewise-error}
\end{equation}

\medskip\noindent\textbf{Serial evaluation.}
Rather than evaluating all localized residual networks in parallel, we
add them successively. This makes the number of pieces contribute to
the depth while the width remains universal.

To evaluate $\widetilde P$ with universal width, let
\[
z_0(t)=I_P(t),
\qquad
z_r(t)=
z_{r-1}(t)+\widetilde R_r\{s_r(t)\},
\quad
r=1,\ldots,N.
\]
At each step only $t$ and $z_{r-1}$ need to be carried. Combining
\eqref{eq:dnn-expanded-linear-interpolant-class} with
\eqref{eq:dnn-expanded-poly-class} therefore gives
\begin{equation}
\widetilde P
\in
\mathcal F\left(
CN(q+1)
\log\frac{e(q+1)A}{\xi},
C
\right).
\label{eq:dnn-expanded-piecewise-poly}
\end{equation}

\medskip\noindent\textbf{Analytic polynomial approximation.}
We next obtain polynomial approximants whose degree and coefficient
size can both be controlled. These are the two quantities needed in
the ReLU construction above.

We next recall a standard polynomial-approximation bound for analytic
functions.  Map $[0,1]$ to $[-1,1]$ by $x=2s-1$ and define
\[
\widetilde f(x):=f\left(\frac{x+1}{2}\right),\qquad x\in[-1,1].
\]
Suppose that, for some $\rho>1$, $\widetilde f$ extends holomorphically
to the Bernstein ellipse $E_\rho$ with foci $\{-1,1\}$ and satisfies
\[
\sup_{z\in E_\rho}|\widetilde f(z)|\leq M,
\]
enlarging the upper bound so that $M\ge1$. Then $\widetilde f$ admits a Chebyshev expansion
\[
\widetilde f(x)=\sum_{k=0}^\infty c_kT_k(x),
\]
where $T_k$ is the $k$th Chebyshev polynomial of the first kind.
Equivalently, returning to $s\in[0,1]$,
\[
f(s)=\sum_{k=0}^\infty c_kT_k(2s-1).
\]
Then \citet[Theorems 8.1--8.2]{trefethen2019approximation} gives
\begin{equation}
|c_k|\leq 2M\rho^{-k},
\qquad
\left\|
f-\sum_{k=0}^q c_kT_k(2\,\cdot-1)
\right\|_{\infty,[0,1]}
\leq
\frac{2M}{\rho-1}\rho^{-q}.
\label{eq:dnn-expanded-analytic-error}
\end{equation}

Recall that the Chebyshev polynomials of the first kind are characterized by
\[
T_k(\cos\theta)=\cos(k\theta),\qquad k\in\mathbb N_0.
\]
The trigonometric identity
\[
\cos\{(k+1)\theta\}
=
2\cos(\theta)\cos(k\theta)-\cos\{(k-1)\theta\}
\]
therefore yields the standard three-term recurrence
\[
T_{k+1}(x)=2xT_k(x)-T_{k-1}(x),
\qquad k\geq1,
\]
with
\[
T_0(x)=1,
\qquad
T_1(x)=x.
\]

We next control the coefficients after the change of variables
$x=2s-1$.  For a polynomial
\[
Q(s)=\sum_{r\geq0}a_rs^r,
\]
write
\[
[s^r]Q(s):=a_r
\]
for the coefficient of $s^r$ in $Q$,
and define its coefficient $\ell_1$-norm by
\[
\|Q\|_{\mathrm{coef},1}
:=
\sum_{r\geq0}|a_r|.
\]
Equivalently,
\[
\|Q\|_{\mathrm{coef},1}
=
\sum_{r\geq0}|[s^r]Q(s)|.
\]
Let
\[
A_k
:=
\|T_k(2\,\cdot-1)\|_{\mathrm{coef},1}
=
\sum_{r=0}^k
|[s^r]T_k(2s-1)|.
\]
Since
\[
T_0(2s-1)=1,
\qquad
T_1(2s-1)=2s-1,
\]
we have
\[
A_0=1,
\qquad
A_1=3.
\]

Substituting $x=2s-1$ into the Chebyshev recurrence gives
\[
T_{k+1}(2s-1)
=
(4s-2)T_k(2s-1)-T_{k-1}(2s-1).
\]
For every polynomial $Q$, multiplication by $s$ only shifts its
coefficients, and therefore
\[
\|sQ\|_{\mathrm{coef},1}
=
\|Q\|_{\mathrm{coef},1}.
\]
Consequently, by the triangle inequality,
\begin{align*}
A_{k+1}
&\leq
4\|sT_k(2s-1)\|_{\mathrm{coef},1}
+
2\|T_k(2s-1)\|_{\mathrm{coef},1}
+
\|T_{k-1}(2s-1)\|_{\mathrm{coef},1}\\
&=
6A_k+A_{k-1}.
\end{align*}
We now prove by induction that
\[
A_k\leq 7^k,\qquad k\geq0.
\]
The claim holds for $k=0,1$, since $A_0=1$ and $A_1=3\leq7$.
If it holds at levels $k$ and $k-1$, then
\[
A_{k+1}
\leq
6\cdot7^k+7^{k-1}
=
43\cdot7^{k-1}
\leq
49\cdot7^{k-1}
=
7^{k+1}.
\]
Hence, for every $k\geq0$,
\[
\sum_{r=0}^k
|[s^r]T_k(2s-1)|
\leq7^k.
\]

Next, define the degree-$q$ truncated Chebyshev approximation by
\[
p_q(s)
:=
\sum_{k=0}^q c_k T_k(2s-1),
\qquad s\in[0,1].
\]
 Since
\[
[s^r]p_q(s)
=
\sum_{k=r}^q c_k [s^r]T_k(2s-1),
\]
the triangle inequality and a change in the order of summation give
\begin{align*}
1+\sum_{r=0}^q |[s^r]p_q(s)|
&\leq
1+
\sum_{r=0}^q
\sum_{k=r}^q
|c_k|\,
|[s^r]T_k(2s-1)|\\
&=
1+
\sum_{k=0}^q
|c_k|
\sum_{r=0}^k
|[s^r]T_k(2s-1)|.
\end{align*}
Using the bounds
\[
|c_k|\leq 2M\rho^{-k}
\qquad\text{and}\qquad
\sum_{r=0}^k |[s^r]T_k(2s-1)|
\leq 7^k,
\]
we obtain
\[
1+\sum_{r=0}^q |[s^r]p_q(s)|
\leq
1+2M\sum_{k=0}^q\left(\frac{7}{\rho}\right)^k.
\]
Since $\rho>1$ is fixed, there exist constants $C>0$ and $C_0>1$,
depending only on $\rho$ (equivalently, only on the fixed analytic
neighborhood), such that
\[
1+2M\sum_{k=0}^q\left(\frac{7}{\rho}\right)^k
\leq
CM C_0^q.
\]

Define
\[
\vartheta:=\rho^{-1}\in(0,1).
\]
then
\[
\|f-p_q\|_{\infty,[0,1]}
\leq
\frac{2M}{\rho-1}\rho^{-q}
\leq
CM\vartheta^q,
\]
where $C>0$ depends only on the fixed value of $\rho$. Combining this
with the coefficient bound above, we obtain
\begin{equation}
\|f-p_q\|_{\infty,[0,1]}
\leq
CM\vartheta^q,
\qquad
1+\sum_{r=0}^q|[s^r]p_q(s)|
\leq
CMC_0^q,
\label{eq:dnn-expanded-analytic-poly}
\end{equation}
where $\vartheta\in(0,1)$, $C_0>1$, and $C>0$ depend only on
$\rho$, equivalently only on the fixed analytic neighborhood.

Replacing $p_q$ by
\[
p_q^0(s)
=
p_q(s)
+
(1-s)\{f(0)-p_q(0)\}
+
s\{f(1)-p_q(1)\}
\]
gives
\[
p_q^0(0)=f(0),
\qquad
p_q^0(1)=f(1),
\]
and
\begin{align*}
|p_q^0(s)-f(s)|
&\leq
|p_q(s)-f(s)|
+
(1-s)|p_q(0)-f(0)|\\
&\quad+
s|p_q(1)-f(1)|\\
&\leq
2\|p_q-f\|_{\infty,[0,1]}.
\end{align*}
The coefficient sum of $p_q^0$ is bounded by $CMC_0^q$ after changing
$C$.

\medskip\noindent\textbf{Approximation of the logarithm.}
We use geometrically growing intervals so that, after rescaling each
interval to $[0,1]$, the nonlinear part of $\log$ belongs to a
uniformly holomorphic family.

We first apply these bounds to $\log$. Let
\[
a_0=\epsilon,
\qquad
a_{r+1}=\min\{2a_r,1\},
\]
and stop at $a_N=1$. Then
\begin{equation}
N
\leq
1+\log_2(1/\epsilon)
\leq
C\Lambda_\epsilon.
\label{eq:dnn-expanded-log-pieces}
\end{equation}
For $t\in[a_r,a_{r+1}]$, write
\[
t=a_r(1+\lambda_rs),
\qquad
\lambda_r=
\frac{a_{r+1}-a_r}{a_r},
\qquad
0\leq s\leq1.
\]
Then
\[
0<\lambda_r\leq1,
\qquad
\log t
=
\log a_r+\log(1+\lambda_rs).
\]

Choose a universal constant $c\in(0,1/2]$ and define
\[
\mathcal N
=
\{z\in\mathbb C:
\operatorname{dist}(z,[0,1])<c\}.
\]
If $z\in\mathcal N$, then $\operatorname{Re}(z)\geq -c$. Hence, for
$0\leq\lambda\leq1$,
\[
\operatorname{Re}(1+\lambda z)
=
1+\lambda\operatorname{Re}(z)
\geq
1-\lambda c
\geq
1-c
\geq
\frac12.
\]
Thus $1+\lambda z$ remains uniformly inside the right half-plane, where
the principal logarithm is holomorphic. Moreover,
$|z|\leq1+c$, so
\[
\frac12
\leq
|1+\lambda z|
\leq
2+c.
\]
Therefore the family $z\mapsto\log(1+\lambda z)$ is uniformly bounded
on $\mathcal N$:
\begin{equation}
\sup_{\substack{z\in\mathcal N\\0\leq\lambda\leq1}}
|\log(1+\lambda z)|
\leq C,
\label{eq:dnn-expanded-log-uniform}
\end{equation}
for a universal constant $C>0$.

For each $r\in\{0,\ldots,N-1\}$ and
$t\in[a_r,a_{r+1}]$, introduce the rescaled coordinate
\[
s=\frac{t-a_r}{a_{r+1}-a_r}\in[0,1]
\]
and set
\[
\lambda_r
:=
\frac{a_{r+1}-a_r}{a_r}\in(0,1].
\]
Then
\[
t=a_r(1+\lambda_r s)
\]
and therefore
\begin{equation}
\log t
=
\log a_r+h_r(s),
\qquad
h_r(s):=\log(1+\lambda_r s).
\label{eq:dnn-expanded-log-local}
\end{equation}

By \eqref{eq:dnn-expanded-log-uniform}, the family
$\{h_r\}_{r=0}^{N-1}$ is holomorphic and uniformly bounded on a common
complex neighborhood of $[0,1]$.  Hence
\eqref{eq:dnn-expanded-analytic-poly} applies uniformly in $r$:
there exist constants $\vartheta\in(0,1)$ and $C_0>1$, independent of
$r$, such that for every $q\in\mathbb N_0$ there is a polynomial
$p_{r,q}$ of degree at most $q$ satisfying
\begin{equation}
\|h_r-p_{r,q}\|_{\infty,[0,1]}
\leq C\vartheta^q,
\qquad
1+\sum_{k=0}^q |[s^k]p_{r,q}(s)|
\leq CC_0^q.
\label{eq:dnn-expanded-log-local-poly}
\end{equation}

Choose
\[
q=\left\lceil C\Lambda_\xi\right\rceil
\]
with the universal constant $C>0$ sufficiently large that
$C\vartheta^q\leq\xi/4$.  Thus
\[
\|h_r-p_{r,q}\|_{\infty,[0,1]}
\leq\frac{\xi}{4}
\]
uniformly in $r$.  Define the endpoint-corrected polynomial
\[
\widetilde p_{r,q}(s)
:=
p_{r,q}(s)
+(1-s)\{h_r(0)-p_{r,q}(0)\}
+s\{h_r(1)-p_{r,q}(1)\}.
\]
Then
\[
\widetilde p_{r,q}(0)=h_r(0),
\qquad
\widetilde p_{r,q}(1)=h_r(1),
\]
and, by the same endpoint-correction argument used above,
\[
\|\widetilde p_{r,q}-h_r\|_{\infty,[0,1]}
\leq\frac{\xi}{2}.
\]
Consequently, the local polynomials
\[
Q_r(s):=\log a_r+\widetilde p_{r,q}(s)
\]
match $\log t$ at the endpoints of $[a_r,a_{r+1}]$.  Hence the
functions
\[
P^{\log}_{\epsilon,\xi}(t)
:=
Q_r\left(
\frac{t-a_r}{a_{r+1}-a_r}
\right),
\qquad
t\in[a_r,a_{r+1}],
\]
fit together to form a continuous piecewise polynomial on
$[\epsilon,1]$ satisfying
\begin{equation}
\|P^{\log}_{\epsilon,\xi}-\log\|_{\infty,[\epsilon,1]}
\leq\frac{\xi}{2}.
\label{eq:dnn-expanded-log-piecewise-error}
\end{equation}

We next bound the coefficients of the local polynomials $Q_r$. Define
\[
e_{r,0}
:=
h_r(0)-p_{r,q}(0),
\qquad
e_{r,1}
:=
h_r(1)-p_{r,q}(1).
\]
Since
\[
\|h_r-p_{r,q}\|_{\infty,[0,1]}
\leq \frac{\xi}{4},
\]
we have
\[
|e_{r,0}|\leq\frac{\xi}{4},
\qquad
|e_{r,1}|\leq\frac{\xi}{4}.
\]
The endpoint-corrected polynomial can be written as
\[
\widetilde p_{r,q}(s)
=
p_{r,q}(s)
+
e_{r,0}
+
(e_{r,1}-e_{r,0})s.
\]
Thus the correction is affine, and the sum of the absolute values of
its coefficients is bounded by
\[
|e_{r,0}|+|e_{r,1}-e_{r,0}|
\leq
2|e_{r,0}|+|e_{r,1}|
\leq
\frac{3\xi}{4}.
\]
Consequently, by the triangle inequality for polynomial coefficients
and \eqref{eq:dnn-expanded-log-local-poly},
\begin{align*}
1+\sum_{k=0}^q
|[s^k]\widetilde p_{r,q}(s)|
&\leq
1+\sum_{k=0}^q
|[s^k]p_{r,q}(s)|
+\frac{3\xi}{4}\\
&\leq
CC_0^q,
\end{align*}
after enlarging the universal constant $C$, since $\xi<1$ and
$C_0^q\geq1$.

Moreover, since $\epsilon\leq a_r\leq1$,
\[
|\log a_r|
=
\log(1/a_r)
\leq
\log(1/\epsilon)
\leq
\Lambda_\epsilon.
\]
Thus the coefficient $\ell_1$-norm of every $Q_r$ is bounded uniformly
by
\begin{equation}
A
\leq
C(1+\Lambda_\epsilon)C_0^q.
\label{eq:dnn-expanded-log-piecewise-coeff}
\end{equation}

Applying
\eqref{eq:dnn-expanded-piecewise-error}--%
\eqref{eq:dnn-expanded-piecewise-poly}
to $P^{\log}_{\epsilon,\xi}$ with network-approximation error
$\xi/2$ therefore yields a scalar ReLU network
\[
G_{\epsilon,\xi}
\in
\mathcal F\left(
CN(q+1)
\log\frac{e(q+1)A}{\xi},
C
\right)
\]
such that, by \eqref{eq:dnn-expanded-log-piecewise-error},
\begin{equation}
\|G_{\epsilon,\xi}-\log\|_{\infty,[\epsilon,1]}
\leq\xi.
\label{eq:dnn-expanded-log-network}
\end{equation}

Finally,
\eqref{eq:dnn-expanded-log-pieces},
$q+1\lesssim\Lambda_\xi$, and
\eqref{eq:dnn-expanded-log-piecewise-coeff} imply
\begin{align*}
N(q+1)
\log\frac{e(q+1)A}{\xi}
&\leq
C\Lambda_\epsilon\Lambda_\xi
\left\{
\Lambda_\xi+\log(1+\Lambda_\epsilon)
\right\}\\
&\leq
C\Lambda_\epsilon\Lambda_\xi
(\Lambda_\epsilon+\Lambda_\xi).
\end{align*}
Hence
\[
G_{\epsilon,\xi}
\in
\mathcal F\left(
C\Lambda_\epsilon\Lambda_\xi
(\Lambda_\epsilon+\Lambda_\xi),
C
\right).
\]

\medskip\noindent\textbf{Construction of the logit network.}
Since $\ell(t)=\log t-\log(1-t)$, the logit approximator is obtained
from two copies of the logarithm network evaluated in parallel.

Let $\eta \in (0,1)$ and set
\[
\mathsf L_{\epsilon,\eta}(t)
=
G_{\epsilon,\eta/2}(t)
-
G_{\epsilon,\eta/2}(1-t).
\]
For $t\in[\epsilon,1-\epsilon]$,
\begin{align*}
|\mathsf L_{\epsilon,\eta}(t)-\ell(t)|
&\leq
|G_{\epsilon,\eta/2}(t)-\log t|\\
&\quad+
|G_{\epsilon,\eta/2}(1-t)-\log(1-t)|\\
&\leq\eta.
\end{align*}
Evaluating the two universal-width networks in parallel changes the
width only by a universal constant. Since
$\Lambda_{\eta/2}\leq C\Lambda_\eta$,
\[
\mathsf L_{\epsilon,\eta}
\in
\mathcal F\left(
C\Lambda_\epsilon\Lambda_\eta
(\Lambda_\epsilon+\Lambda_\eta),
C
\right),
\]
which proves
\eqref{eq:dnn-expanded-logit}--%
\eqref{eq:dnn-expanded-logit-depth}.

\medskip\noindent\textbf{Approximation of inverse softplus below one.}
Near zero, $\psi$ has the same logarithmic singularity as $\log t$.
We therefore separate this term from a remainder that is uniformly
analytic after rescaling.

It remains to treat $\psi$. For $0<t\leq1$,
\begin{equation}
\psi(t)
=
\log t+r(t),
\qquad
r(t)
=
\log\left(\frac{e^t-1}{t}\right),
\qquad
r(0)=0.
\label{eq:dnn-expanded-psi-small-decomp}
\end{equation}
With the removable singularity at zero filled in,
\[
\frac{e^z-1}{z}=0
\quad\Longleftrightarrow\quad
z=2\pi ik,
\qquad
k\in\mathbb Z\setminus\{0\}.
\]
Thus, after decreasing the universal constant $c$ defining
$\mathcal N$ if necessary,
\[
\frac{e^z-1}{z}\neq0
\]
on a simply connected neighborhood containing
\[
\{b(1+\lambda z):
z\in\mathcal N,\;
0<b\leq1,\;
0\leq\lambda\leq1,\;
b(1+\lambda)\leq1\}.
\]
Let
\[
h(w):=
\begin{cases}
\dfrac{e^w-1}{w}, & w\neq0,\\[1mm]
1, & w=0.
\end{cases}
\]
By the preceding zero-free argument, after decreasing $c$ if necessary,
there exists a fixed simply connected open set $D$ containing
\[
K:=
\left\{
b(1+\lambda z):
z\in\overline{\mathcal N},\;
0\leq b\leq1,\;
0\leq\lambda\leq1,\;
b(1+\lambda)\leq1
\right\},
\]
such that $h$ has no zeros on $D$. Hence $h$ admits a holomorphic
logarithm $r$ on $D$, chosen to agree with
$r(t)=\log\{(e^t-1)/t\}$ for $t>0$. Since $K$ is compact and $r$ is
continuous on $K$, there is a universal constant $C>0$ such that
\begin{equation}
\sup_{\substack{z\in\mathcal N\\
0<b\leq1,\;0\leq\lambda\leq1\\
b(1+\lambda)\leq1}}
|r\{b(1+\lambda z)\}|
\leq C.
\label{eq:dnn-expanded-r-uniform}
\end{equation}

Partition $[\delta,\min\{U,1\}]$ geometrically as follows.  Set
\[
b_0=\delta,
\qquad
b_{j+1}=\min\{2b_j,\min\{U,1\}\},
\]
and stop at the first index $N_0$ for which
$b_{N_0}=\min\{U,1\}$.  Then every interval
$[b_j,b_{j+1}]$ satisfies
\[
1<\frac{b_{j+1}}{b_j}\leq2,
\]
except possibly for equality at the final endpoint. Since
$\min\{U,1\}\leq1$,
\[
N_0
\leq
1+\log_2(1/\delta)
\leq
C\Lambda_\delta,
\qquad
\Lambda_\delta=\log(e/\delta).
\]
On each interval $[b_j,b_{j+1}]$ of the geometric partition, write
\[
t=b(1+\lambda s),
\qquad
b=b_j,\qquad
\lambda=\frac{b_{j+1}-b_j}{b_j}\in(0,1],
\qquad
s\in[0,1].
\]
Using the decomposition
\[
\psi(t)=\log t+r(t),
\]
we obtain
\begin{equation}
    \label{eqn:100}
    \psi\{b(1+\lambda s)\}
=
\log b+
\log(1+\lambda s)
+
r\{b(1+\lambda s)\}.
\end{equation}
The term $\log b$ is constant in $s$.  By
\eqref{eq:dnn-expanded-log-uniform}, the functions
$s\mapsto\log(1+\lambda s)$ are holomorphic and uniformly bounded on
the same complex neighborhood, uniformly over $\lambda\in(0,1]$.
Likewise, by \eqref{eq:dnn-expanded-r-uniform}, the functions
$s\mapsto r\{b(1+\lambda s)\}$ are holomorphic and uniformly bounded
there, uniformly over all admissible $b$ and $\lambda$.  Hence the
nonconstant part of $\psi\{b(1+\lambda s)\}$ satisfies the assumptions
of \eqref{eq:dnn-expanded-analytic-poly} with constants independent of
the interval index.

\medskip\noindent\textbf{Approximation of inverse softplus above one.} For $t\geq1$, we separate the linear growth of $\psi$ from a bounded
remainder. Since
\[
e^t-1=e^t(1-e^{-t}),
\]
we may write
\begin{equation}
\psi(t)
=
\log(e^t-1)
=
t+r_+(t),
\qquad
r_+(t):=\log(1-e^{-t}).
\label{eq:dnn-expanded-psi-large-decomp}
\end{equation}
The function $r_+$ is bounded on $[1,\infty)$ and converges to zero as
$t\to\infty$.

When $U>1$, partition $[1,U]$ into consecutive intervals
$[b_j,b_j+h_j]$, $j=1,\ldots,N_1$, with
\[
b_j\geq1,
\qquad
0<h_j\leq1.
\]
For example, one may use intervals of length one, except possibly for
the final interval. Their number satisfies
\begin{equation}
N_1
\leq
\lceil U-1\rceil
\leq
1+U.
\label{eq:dnn-expanded-psi-large-pieces}
\end{equation}
On the $j$th interval, introduce the local coordinate $u\in[0,1]$ by
\[
t=b_j+h_j u.
\]
Then
\[
\psi(b_j+h_j u)
=
b_j+h_j u+r_+(b_j+h_j u),
\]
where the first two terms are affine in $u$ and only the bounded
remainder $r_+(b_j+h_j u)$ requires polynomial approximation.

For $z\in\mathcal N$, we have $\operatorname{Re}(z)\geq-c$. Hence, for
$b\geq1$ and $0<h\leq1$,
\[
\operatorname{Re}(b+hz)
=
b+h\operatorname{Re}(z)
\geq
1-c.
\]
Choosing $c\leq1/4$ gives
\[
\operatorname{Re}(b+hz)\geq\frac34,
\]
and therefore
\[
|e^{-(b+hz)}|
=
e^{-\operatorname{Re}(b+hz)}
\leq
e^{-3/4}<1.
\]
Consequently,
\[
1-e^{-(b+hz)}
\in
\{w\in\mathbb C:|w-1|\leq e^{-3/4}\},
\]
a fixed disk contained in the right half-plane and bounded away from
zero. Hence the principal logarithm is holomorphic on this disk, so
\[
z\mapsto
r_+(b+hz)
=
\log\{1-e^{-(b+hz)}\}
\]
is holomorphic on $\mathcal N$. Since the disk is compactly contained
in the domain of the logarithm, there exists a universal constant
$C>0$ such that
\begin{equation}
\sup_{\substack{z\in\mathcal N\\ b\geq1,\;0<h\leq1}}
|r_+(b+hz)|
\leq C.
\label{eq:dnn-expanded-s-uniform}
\end{equation}

\medskip\noindent\textbf{Assembly of the inverse-softplus network.}
Let $N_0$ denote the number of intervals in the geometric partition of
$[\delta,\min\{U,1\}]$ constructed above, with $N_0=0$ if this interval
is degenerate.  By construction,
\[
N_0\leq C\Lambda_\delta.
\]
Similarly, when $U>1$, let $N_1$ denote the number of intervals
$[b_j,b_j+h_j]$ in the partition of $[1,U]$ constructed above, where
$b_j\geq1$ and $0<h_j\leq1$; set $N_1=0$ when $U\leq1$.
By \eqref{eq:dnn-expanded-psi-large-pieces},
\[
N_1\leq1+U.
\]  

The two collections of intervals together form a partition of
$[\delta,U]$ (with the common endpoint $1$ shared when $U>1$).
Therefore, if $N$ denotes the total number of intervals,
\begin{equation}
N=N_0+N_1
\leq
C\Lambda_\delta+(1+U)
\leq
C(1+U+\Lambda_\delta).
\label{eq:dnn-expanded-psi-pieces}
\end{equation}

On every interval below $1$, the rescaled function has the form (\ref{eqn:100}). The preceding argument shows that
$s\mapsto\log(1+\lambda s)$ is holomorphic and uniformly bounded on a
fixed complex neighborhood of $[0,1]$, uniformly over
$0<\lambda\leq1$. Moreover, by
\eqref{eq:dnn-expanded-r-uniform},
$s\mapsto r\{b(1+\lambda s)\}$ is holomorphic and uniformly bounded on
the same neighborhood, uniformly over all admissible $b$ and $\lambda$.  On every interval above $1$, writing
$t=b+hs$ gives
\[
\psi(b+hs)
=
b+hs+r_+(b+hs),
\]
where $r_+$ is holomorphic and uniformly bounded by
\eqref{eq:dnn-expanded-s-uniform}.  Hence
\eqref{eq:dnn-expanded-analytic-poly} applies on every interval with
constants independent of the interval.

Choose
\[
q=\left\lceil C\Lambda_\zeta\right\rceil
\]
with $C>0$ sufficiently large that the degree-$q$ polynomial
approximation error on each interval is at most $\zeta/4$.
After the endpoint correction, the error on each interval is at most
$\zeta/2$, and adjacent polynomial pieces agree at their common
endpoints.  Thus they define a continuous piecewise polynomial
$P_\psi$ on $[\delta,U]$ satisfying
\begin{equation}
\|P_\psi-\psi\|_{\infty,[\delta,U]}
\leq
\frac{\zeta}{2}.
\label{eq:dnn-expanded-psi-piecewise-error}
\end{equation}

We next bound the coefficients of its local polynomial pieces.
For intervals below $1$, the constant term satisfies
\[
|\log b|
\leq
\log(1/\delta)
\leq
\Lambda_\delta,
\]
while the remaining analytic terms have coefficient $\ell_1$-norm at
most $CC_0^q$.  For intervals above $1$, the affine term
$b+hs$ has coefficient $\ell_1$-norm $b+h\leq U+1$, while the
polynomial approximation to $r_+(b+hs)$ again has coefficient
$\ell_1$-norm at most $CC_0^q$.  The endpoint corrections contribute
only $O(\zeta)$ to these coefficient sums.  Consequently, uniformly
over all local pieces,
\begin{equation}
A
\leq
C(1+U+\Lambda_\delta)C_0^q.
\label{eq:dnn-expanded-psi-piecewise-coeff}
\end{equation}

Applying
\eqref{eq:dnn-expanded-piecewise-error}--%
\eqref{eq:dnn-expanded-piecewise-poly}
to $P_\psi$ with network-approximation error $\zeta/2$ yields a
universal-width ReLU network $\mathsf S_{\delta,U,\zeta}$ satisfying
\begin{equation}
\|\mathsf S_{\delta,U,\zeta}-\psi\|_{\infty,[\delta,U]}
\leq
\zeta.
\label{eq:dnn-expanded-spinv2}
\end{equation}
Its depth satisfies
\begin{align*}
D_\psi(\delta,U,\zeta)
&\leq
CN(q+1)
\log\frac{e(q+1)A}{\zeta}\\
&\leq
C(1+U+\Lambda_\delta)\Lambda_\zeta
\log\left(
\frac{
e\Lambda_\zeta
(1+U+\Lambda_\delta)
C_0^{C\Lambda_\zeta}
}{
\zeta
}
\right)\\
&\leq
C(1+U+\Lambda_\delta)\Lambda_\zeta
(1+U+\Lambda_\delta+\Lambda_\zeta).
\end{align*}
Here the first inequality follows from the piecewise-polynomial network
bound \eqref{eq:dnn-expanded-piecewise-poly}, applied with
network-approximation error $\zeta/2$ (with the resulting numerical
factor absorbed into $C$).  The second inequality uses
\eqref{eq:dnn-expanded-psi-pieces},
the choice $q=\lceil C\Lambda_\zeta\rceil$, which implies
$q+1\leq C\Lambda_\zeta$, and the coefficient bound
\eqref{eq:dnn-expanded-psi-piecewise-coeff}; in particular,
$C_0^q\leq C_0^{C\Lambda_\zeta}$.  Finally, the last inequality follows
by expanding the logarithm and using
\[
\log(1/\zeta)\leq\Lambda_\zeta,\qquad
\log\Lambda_\zeta\leq\Lambda_\zeta,\qquad
\log(1+U+\Lambda_\delta)\leq1+U+\Lambda_\delta,
\]
together with the fact that $C_0>1$ is a fixed constant.  This proves
\eqref{eq:dnn-expanded-spinv}--%
\eqref{eq:dnn-expanded-spinv-depth}.

\medskip\noindent\textbf{Endpoint bounds.}
Finally, since $e^\delta-1\geq\delta$ and
$e^U-1\leq e^U$,
\[
-\psi(\delta)
=
\log\frac{1}{e^\delta-1}
\leq
\log(1/\delta),
\qquad
\psi(U)
=
\log(e^U-1)
\leq
U.
\]
This proves \eqref{eq:dnn-expanded-spinv-endpoints}. Enlarging the
universal width constant appearing above, if necessary, defines
$w_{\mathrm{cal}}$ and completes the proof.
\end{proof}

\begin{lemma}[Parallelization, composition, and maximum of ReLU networks]
\label{lem:relu-network-operations}
Let $f_s:\mathbb R^d\to\mathbb R$, $s=1,\ldots,r$, be ReLU networks
with depths $L_s$ and widths $w_s$. Put
\[
L_{\max}=\max_{s\leq r}L_s,
\qquad
w_{\Sigma}=\sum_{s=1}^r(w_s+1).
\]
Then for a constant $C>0$ that can change from line to line:
\begin{enumerate}
\item $(f_1,\ldots,f_r)$ has a ReLU representation of depth at most
$L_{\max}+C$ and width at most $Cw_{\Sigma}$;

\item if $q:\mathbb R^r\to\mathbb R$ has depth $L_q$ and width
$w_q$, then $q\circ(f_1,\ldots,f_r)$ has depth at most
$L_{\max}+L_q+C$ and width at most
$C\max\{w_{\Sigma},w_q\}$;

\item $\max_{s\leq r}f_s$ is represented exactly with depth at most
$L_{\max}+\lceil\log_2r\rceil+C$ and width at most
$Cw_{\Sigma}$.
\end{enumerate}
\end{lemma}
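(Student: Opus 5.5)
The plan is to prove the three items by explicit block constructions in the dense class $\mathcal F(L,w)$ from Appendix~\ref{sec:relu}. The basic device is the identity $z=\rho(z)-\rho(-z)$: it lets any scalar pass through one ReLU layer exactly using two neurons. This is what equalizes depths and carries intermediate values forward. The key observation is that a block-diagonal network can be embedded in a dense network of the stated width: set the off-diagonal weights to zero, and pad each block to a common width with zero neurons.

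For item 1, I would first equalize depths. Append to each $f_s$ with $L_s<L_{\max}$ a chain of $L_{\max}-L_s$ identity layers of the form $z\mapsto(\rho(z),\rho(-z))\mapsto\rho(z)-\rho(-z)$. The last affine map $\Phi_{L_s+1}$ of $f_s$ is absorbed into the first padding layer, so each padded layer has width $2\le w_s+1$. Then I stack the $r$ equalized networks block-diagonally: the first affine map is the vertical concatenation of the first-layer maps, which all read the common input, and later maps are block diagonal. This yields depth $L_{\max}+C$ with $C\le1$ and width $\sum_s (w_s+1)\le w_\Sigma$. The output layer is the stacked vector of outputs. A vector output does not fit the scalar convention, so I will phrase this as a network with $r$ outputs.

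For item 2, I would take the network of item 1 with its $r$ outputs and compose its final affine map with the first affine map of $q$. The composition of two affine maps is affine, so no layer is lost and none is added; the only change is $C$ extra layers if one insists on keeping outputs explicit via the identity trick. The hidden widths are then either those of the parallel block, at most $Cw_\Sigma$, or those of $q$, at most $w_q$. This gives depth $L_{\max}+L_q+C$ and width $C\max\{w_\Sigma,w_q\}$.

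For item 3, I would use the exact two-input identity $\max(a,b)=\rho(a-b)+\rho(b)-\rho(-b)$, a one-hidden-layer network of width $3$. I then build a binary tournament tree. First pad to $2^{\lceil\log_2 r\rceil}$ inputs by duplicating $f_r$, since the maximum is unchanged by duplicates. Each level of the tree halves the number of values, so the depth is $\lceil\log_2 r\rceil$ and the width at the first level is $3\lceil r/2\rceil\le Cw_\Sigma$. Composing this tree with item 1 via item 2 gives depth $L_{\max}+\lceil\log_2 r\rceil+C$ and width $Cw_\Sigma$, and the maximum is represented exactly.

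The main obstacle is bookkeeping rather than any conceptual step. One must check that the duplicated $f_r$ adds only $O(w_r+1)\le w_\Sigma$ width, since at most one extra copy per level is needed; alternatively, pass odd leftovers through the identity trick with two neurons. One must also check that the paper's convention of common hidden width is respected by zero-padding narrower layers. Both checks only inflate the universal constant $C$.
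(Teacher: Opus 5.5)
Your proposal is correct and follows essentially the same route as the paper. For items 1--2 you equalize depths with the identity $z=\rho(z)-\rho(-z)$ and stack the networks block-diagonally; the paper simply defers these steps to the parallelization and composition constructions of Kohler and Langer. For item 3 you build a balanced binary tree from $\max\{a,b\}=\rho(a-b)+\rho(b)-\rho(-b)$ after padding to $2^{\lceil\log_2 r\rceil}$ inputs with duplicates of $f_r$, exactly as the paper does.

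The only point to tighten concerns those duplicates. They should be copies of the output value of $f_r$, reused by the affine map that feeds the first tree layer, not replicated subnetworks. Up to $2^{\lceil\log_2 r\rceil}-r$ of them may be needed, not one per level, so replicating the subnetwork could break the $Cw_\Sigma$ width bound, whereas reusing the value costs no extra width. Your alternative of passing odd leftovers through the identity also avoids the issue.
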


\begin{proof}
The first two statements follow from the parallelization and composition
constructions in the proof of \citet[Theorem~3]{kohler2019rate}. To
equalize the depths, use
\begin{equation}
z=\rho(z)-\rho(-z).
\label{eq:relu-identity-map}
\end{equation}
Thus, each $f_s$ can be padded to depth $L_{\max}$ with width at most
$C(w_s+1)$. Placing the padded networks in parallel gives
\[
L_{\bm f}\leq L_{\max}+C,
\qquad
w_{\bm f}
\leq
C\sum_{s=1}^r(w_s+1)
=
Cw_\Sigma,
\]
where $\bm f=(f_1,\ldots,f_r)$. Composing this representation with $q$
gives
\[
L_{q\circ\bm f}
\leq
L_{\max}+L_q+C,
\qquad
w_{q\circ\bm f}
\leq
C\max\{w_\Sigma,w_q\},
\]
which proves the first two statements.

For the third statement, let
\[
R=2^{\lceil\log_2r\rceil}
\]
and, if $R>r$, set $f_{r+1}=\cdots=f_R=f_r$. Define
\[
v_s^{(0)}=f_s,
\qquad s=1,\ldots,R,
\]
and recursively, for $t=1,\ldots,\log_2R$,
\begin{equation}
v_s^{(t)}
=
\rho\left(
v_{2s-1}^{(t-1)}-v_{2s}^{(t-1)}
\right)
+
v_{2s}^{(t-1)},
\qquad
s=1,\ldots,\frac{R}{2^t}.
\label{eq:relu-balanced-maximum}
\end{equation}
Since
\[
\rho(a-b)+b=\max\{a,b\},
\]
induction gives
\[
v_s^{(t)}
=
\max_{(s-1)2^t<k\leq s2^t}f_k,
\]
and hence
\[
v_1^{(\log_2R)}
=
\max_{1\leq s\leq r}f_s.
\]

Using
\[
b=\rho(b)-\rho(-b),
\]
each update in \eqref{eq:relu-balanced-maximum} is represented by one
ReLU layer with at most a constant number of units per active value.
At level $t$ there are $R/2^t$ active values, and therefore
\[
w_t
\leq
C\frac{R}{2^t}
\leq
CR
\leq
2Cr
\leq
Cw_\Sigma.
\]
Since
\[
\log_2R=\lceil\log_2r\rceil,
\]
the resulting network has depth at most
\[
L_{\max}+\lceil\log_2r\rceil+C
\]
and width at most $Cw_\Sigma$.
\end{proof}

\begin{definition}[Construction of the latent ReLU networks]
\label{def:dnn-expanded-latent-construction}
Let $F_{A_1}^*,\ldots,F_{A_m}^*$ be the target probability functions and
let $h_1,\ldots,h_m$ be $[0,1]$-valued ReLU networks. Fix
\[
0<\epsilon<\frac14,
\qquad
0<\eta<1,
\qquad
0<\delta\leq1,
\qquad
0<\zeta<1.
\]
Define
\begin{equation}
F_{A_j,\epsilon}^*
=
\epsilon+(1-2\epsilon)F_{A_j}^*,
\qquad
\widetilde h_j
=
\epsilon+(1-2\epsilon)h_j,
\qquad
\lambda_\epsilon
=
\log\frac{1-\epsilon}{\epsilon}.
\label{eq:dnn-expanded-regularized-target}
\end{equation}

Let $\mathsf L_{\epsilon,\eta}$ be the logit approximation from
Lemma~\ref{lem:dnn-expanded-fixed-modules}, and set
\begin{equation}
q_j
=
\mathsf L_{\epsilon,\eta}\circ\widetilde h_j.
\label{eq:dnn-expanded-q}
\end{equation}

Define the DAG level recursively by
\begin{equation}
r_j=0,
\quad j\in\mathcal R,
\qquad
r_j
=
1+\max_{k\in\mathcal P(j)}r_k,
\quad j\notin\mathcal R.
\label{eq:dnn-expanded-level}
\end{equation}
Then, with the notation of Theorem \ref{theo1},
\[
r_j\leq\kappa_j-1\leq\kappa_m-1.
\]
Define
\begin{equation}
\bar q_j=q_j,
\quad j\in\mathcal R,
\qquad
\bar q_j
=
\max\left\{
q_j,\,
\max_{k\in\mathcal P(j)}(\bar q_k+\delta)
\right\},
\quad j\notin\mathcal R.
\label{eq:dnn-expanded-ordered-q}
\end{equation}

For $j\notin\mathcal R$, set
\begin{equation}
u_j
=
\bar q_j-\max_{k\in\mathcal P(j)}\bar q_k,
\qquad
U_{\epsilon,\delta,\eta}
=
2\lambda_\epsilon+2\eta+\delta\kappa_m.
\label{eq:dnn-expanded-u}
\end{equation}
Let $\mathsf S_{\delta,U,\zeta}$ be the inverse-softplus approximation
from Lemma~\ref{lem:dnn-expanded-fixed-modules}, and define
\begin{equation}
g_j=\bar q_j,
\quad j\in\mathcal R,
\qquad
g_j
=
\mathsf S_{\delta,U_{\epsilon,\delta,\eta},\zeta}\circ u_j,
\quad j\notin\mathcal R.
\label{eq:dnn-expanded-g}
\end{equation}
\end{definition}

\begin{theorem}[Probability-scale ReLU approximation]
\label{theo:dnn-expanded-probability-approx}
Suppose that
\[
F_{A_k}^*(x)\leq F_{A_j}^*(x),
\qquad
(k,j)\in E,\quad x\in[0,1]^d,
\]
and that, for some $a\geq0$, with the notation in Definition \ref{def:dnn-expanded-latent-construction}, we have
\begin{equation}
\max_{j\leq m}
\|h_j-F_{A_j}^*\|_\infty
\leq a.
\label{eq:dnn-expanded-base-approx}
\end{equation}
Let $\bm g=(g_1,\ldots,g_m)$ be the networks constructed in
Definition~\ref{def:dnn-expanded-latent-construction}. Then
\begin{equation}
\|H_j(\bm g)-F_{A_j}^*\|_\infty
\leq
\epsilon+a+
\frac14\left(
\eta+r_j\delta+r_j\zeta
\right),
\qquad
j\in[m],
\label{eq:dnn-expanded-output-error}
\end{equation}
and therefore
\begin{equation}
\max_{j\leq m}
\|H_j(\bm g)-F_{A_j}^*\|_\infty
\leq
\epsilon+a+
\frac14
\left(
\eta+\kappa_m\delta+\kappa_m\zeta
\right).
\label{eq:dnn-expanded-output-error-max}
\end{equation}
Moreover,
\begin{equation}
\max_{j\leq m}\|g_j\|_\infty
\leq
C\left\{
\lambda_\epsilon+\eta+\delta\kappa_m
+\log(1/\delta)+\zeta+1
\right\}.
\label{eq:dnn-expanded-envelope}
\end{equation}

If the $h_j$ have depth at most $L$ and width at most $w$, and
\begin{equation}
\deg^{-}_{\max,m}
=
\max_{j\leq m}|\mathcal P(j)|,
\label{eq:dnn-expanded-max-indegree}
\end{equation}
where $\deg^{-}_{\max,m}$ is the maximum in-degree of the DAG. Then there exist integers $L_g,w_g\geq1$ such that, after padding
with identity layers and zero weights if necessary, every latent network
$g_j$, $j\in[m]$, admits a representation in the common dense ReLU class
\[
g_j\in\mathcal F(L_g,w_g),
\qquad j\in[m].
\]
Moreover, the common depth and width may be chosen so that
\begin{align}
L_g
&\leq
L+D_\ell(\epsilon,\eta)
+D_\psi(\delta,U_{\epsilon,\delta,\eta},\zeta)
+C\kappa_m
\left\lceil
\log_2(\deg^{-}_{\max,m}+2)
\right\rceil
+C,
\label{eq:dnn-expanded-g-depth}\\
w_g
&\leq
Ca_{\max,m}(\deg^{-}_{\max,m}+2)
\max\{w,w_{\mathrm{cal}},1\}.
\label{eq:dnn-expanded-g-width}
\end{align}
Here $C>0$ is a universal constant independent of the node index $j$.
\end{theorem}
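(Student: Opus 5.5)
The plan is to track the latent recursion exactly and compare it with the ``ideal'' logits, then handle the network representation separately.

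\textbf{Step 1 (root accuracy).} Put $\ell_j^*=\ell(F^*_{A_j,\epsilon})$. Since $\widetilde h_j\in[\epsilon,1-\epsilon]$, \eqref{eq:dnn-expanded-logit} gives $|q_j-\ell(\widetilde h_j)|\le\eta$. Because $\tau$ is $1/4$-Lipschitz, the approximation error of $\tau(q_j)$ is controlled on the probability scale by $|\tau(q_j)-\widetilde h_j|\le\eta/4$. We also have $|\widetilde h_j-F^*_{A_j,\epsilon}|\le(1-2\epsilon)a\le a$ and $|F^*_{A_j,\epsilon}-F^*_{A_j}|\le\epsilon$.

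\textbf{Step 2 (exact softplus step).} For non-roots, $u_j\ge\delta$ by \eqref{eq:dnn-expanded-ordered-q}. We will show $u_j\le U_{\epsilon,\delta,\eta}$: each $|q_j|\le\lambda_\epsilon+\eta$, and by induction on the level, $q_j\le\bar q_j\le\max_{k\in\overline{\mathcal A}(j)}q_k+r_j\delta$. Hence $u_j\le 2\lambda_\epsilon+2\eta+\delta\kappa_m$. On $[\delta,U]$, \eqref{eq:dnn-expanded-spinv} and the $1$-Lipschitz property of softplus give $|\operatorname{sp}(g_j)-u_j|\le\zeta$, because $\operatorname{sp}(\psi(u_j))=u_j$.

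Induction on $r_j$ then yields
\[
|\widetilde H_j(\bm g)-\bar q_j|\le r_j\zeta,
\]
since the max over parents is $1$-Lipschitz and $\widetilde H_j=\max_k\widetilde H_k+\operatorname{sp}(g_j)$ while $\bar q_j=\max_k\bar q_k+u_j$.

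\textbf{Step 3 (the main point).} We need to bound $|\bar q_j-\ell(\widetilde h_j)|\le\eta+r_j\delta$ in a form that does not add $a$ per level. The key is to argue through probabilities and monotonicity rather than logits. First, $\bar q_j\ge q_j$. For the upper side, $\bar q_j$ equals either $q_j$ or $\bar q_k+\delta$ for some parent $k$.

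Working on the $\tau$-scale, we want $\tau(\bar q_j)\le\max_{i\in\overline{\mathcal A}(j)}\tau(q_i)+r_j\delta/4$. Each ancestor $i$ satisfies
\[
\tau(q_i)\le\widetilde h_i+\eta/4\le F^*_{A_i,\epsilon}+a+\eta/4\le F^*_{A_j,\epsilon}+a+\eta/4,
\]
using the assumed monotonicity along edges. This gives $\tau(\bar q_j)\le F^*_{A_j,\epsilon}+a+(\eta+r_j\delta)/4$. The lower bound follows from $\tau(\bar q_j)\ge\tau(q_j)\ge F^*_{A_j,\epsilon}-a-\eta/4$.

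Combining this with Step 2 and the $1/4$-Lipschitz property of $\tau$ gives \eqref{eq:dnn-expanded-output-error}. Using $r_j\le\kappa_m$ then gives \eqref{eq:dnn-expanded-output-error-max}. I expect this step to be the main obstacle: one has to avoid accumulating $a$ along paths, and the passage through $F^*$ monotonicity is what prevents that.

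\textbf{Step 4 (envelope and architecture).} For the envelope, roots satisfy $|g_j|\le\lambda_\epsilon+\eta$. For non-roots, $g_j=\mathsf S(u_j)$ lies within $\zeta$ of $\psi(u_j)\in[-\log(1/\delta),U]$ by \eqref{eq:dnn-expanded-spinv-endpoints}, which gives \eqref{eq:dnn-expanded-envelope}.

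For the architecture, $g_j$ depends only on $\{h_k:k\in\overline{\mathcal A}(j)\}$. We proceed in three stages:
\begin{enumerate}
\item Place those $a_j$ networks in parallel, each followed by the affine map to $\widetilde h_k$ and by $\mathsf L$. This has width $Ca_{\max,m}\max\{w,w_{\mathrm{cal}}\}$ and depth $L+D_\ell+C$.
\item Compute the $\bar q_k$ level by level. Each level uses a max over at most $\deg^-_{\max,m}+1$ terms, which costs depth $\lceil\log_2(\deg^-+2)\rceil+C$ by Lemma~\ref{lem:relu-network-operations}, and carries all ancestor values by identity padding. This produces the factor $(\deg^-+2)a_{\max,m}$ in the width and adds $C\kappa_m\lceil\log_2(\deg^-+2)\rceil$ to the depth.
\item Subtract to get $u_j$ and compose with $\mathsf S$, adding $D_\psi$.
\end{enumerate}
Padding with identity layers and zero weights then yields a common class $\mathcal F(L_g,w_g)$ satisfying \eqref{eq:dnn-expanded-g-depth}--\eqref{eq:dnn-expanded-g-width}.
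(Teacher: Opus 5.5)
Your proof is correct and is essentially the paper's argument. It has the same logit-module error, the same range bound $u_j\in[\delta,U_{\epsilon,\delta,\eta}]$, the same probability-scale sandwich for $\tau(\bar q_j)$ driven by monotonicity of the targets, the same $r_j\zeta$ softplus recursion, and the same parallel/level-wise-max/composition architecture count. For the sandwich, you unroll $\bar q_j\le\max_{i\in\overline{\mathcal A}(j)}q_i+r_j\delta$ and use monotonicity along ancestor chains, whereas the paper inducts over parents via $\tau(\bar q_k+\delta)\le\tau(\bar q_k)+\delta/4$; the two are equivalent.

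Do delete the opening sentence of Step 3. The logit-scale bound $|\bar q_j-\ell(\widetilde h_j)|\le\eta+r_j\delta$ is false in general: if a parent $k$ has $h_k$ above and $h_j$ below a common target value, then $\bar q_j\ge q_k+\delta$ exceeds $\ell(\widetilde h_j)$ by an amount of order $a$. You never actually use that bound; you only use the $\tau$-scale bound against $F^*_{A_j,\epsilon}$, which is what the paper proves.
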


\begin{proof}
By \eqref{eq:dnn-expanded-regularized-target} and
\eqref{eq:dnn-expanded-base-approx},
\begin{align}
\|\widetilde h_j-F_{A_j,\epsilon}^*\|_\infty
&=
(1-2\epsilon)
\|h_j-F_{A_j}^*\|_\infty
\leq a,
\label{eq:dnn-expanded-regularized-base-error}\\
\widetilde h_j(x)
&\in[\epsilon,1-\epsilon].
\label{eq:dnn-expanded-regularized-range}
\end{align}
Lemma~\ref{lem:dnn-expanded-fixed-modules} therefore gives 
\begin{equation}
\|q_j-\ell\circ\widetilde h_j\|_\infty
\leq\eta.
\label{eq:dnn-expanded-logit-error}
\end{equation}
Since \ref{eq:dnn-expanded-regularized-range} holds, because the logit function
\[
\ell(t)=\log\frac{t}{1-t}
\]
is increasing on $(0,1)$ and
\[
\ell(\epsilon)=-\lambda_\epsilon,
\qquad
\ell(1-\epsilon)=\lambda_\epsilon,
\]
we obtain
\[
-\lambda_\epsilon
\leq
\ell(\widetilde h_j)
\leq
\lambda_\epsilon.
\]
equation~\eqref{eq:dnn-expanded-logit-error} implies
\begin{equation}
-\lambda_\epsilon-\eta
\leq
q_j
\leq
\lambda_\epsilon+\eta.
\label{eq:dnn-expanded-q-range}
\end{equation}

We next show that
\begin{equation}
-\lambda_\epsilon-\eta
\leq
\bar q_j
\leq
\lambda_\epsilon+\eta+r_j\delta.
\label{eq:dnn-expanded-qbar-range}
\end{equation}
For $j\in\mathcal R$, the claim follows directly from
\eqref{eq:dnn-expanded-q-range}.  Now let $j\notin\mathcal R$ and
assume the bound holds for all parents $k\in\mathcal P(j)$. Then by \eqref{eq:dnn-expanded-qbar-range},
\[
\bar q_k
\leq
\lambda_\epsilon+\eta+r_k\delta.
\]
Since
\[
r_j=1+\max_{k\in\mathcal P(j)}r_k,
\]
we have $r_k+1\leq r_j$ for every parent $k$, and therefore
\begin{equation}
    \label{eqn:102}
    \bar q_k+\delta
\leq
\lambda_\epsilon+\eta+r_j\delta.
\end{equation}
Since
\[
q_j\leq \lambda_\epsilon+\eta
\leq
\lambda_\epsilon+\eta+r_j\delta,
\]
and, for every $k\in\mathcal P(j)$ we have that (\ref{eqn:102}) holds, then every term in the maximum defining $\bar q_j$ is bounded above by
$\lambda_\epsilon+\eta+r_j\delta$. Hence
\[
\bar q_j
\leq
\lambda_\epsilon+\eta+r_j\delta.
\]
Moreover, since $\bar q_j\geq q_j$, \eqref{eq:dnn-expanded-q-range}
gives
\[
\bar q_j
\geq
-\lambda_\epsilon-\eta.
\]
This completes the induction and proves
\eqref{eq:dnn-expanded-qbar-range}.

For $j\notin\mathcal R$,
\eqref{eq:dnn-expanded-ordered-q} gives
\[
\bar q_j
\geq
\max_{k\in\mathcal P(j)}\bar q_k+\delta,
\]
and hence $u_j\geq\delta.$  Moreover, by \eqref{eq:dnn-expanded-qbar-range},
\begin{align}
u_j
&=
\bar q_j-
\max_{k\in\mathcal P(j)}\bar q_k
\nonumber\\
&\leq
\lambda_\epsilon+\eta+r_j\delta
-(-\lambda_\epsilon-\eta)
\nonumber\\
&=
2\lambda_\epsilon+2\eta+r_j\delta
\leq
U_{\epsilon,\delta,\eta},
\label{eq:dnn-expanded-u-range}
\end{align}
where the last inequality follows from (\ref{eq:dnn-expanded-u}). Thus
\begin{equation}
u_j\in
[\delta,U_{\epsilon,\delta,\eta}],
\qquad
j\notin\mathcal R.
\label{eq:dnn-expanded-u-domain}
\end{equation}

Set
\[
\bar F_j=\tau(\bar q_j).
\]
Since $\tau\circ\ell$ is the identity on $(0,1)$ and
$\|\tau'\|_\infty\leq1/4$,
\begin{align}
\|\tau(q_j)-F_{A_j,\epsilon}^*\|_\infty
&\leq
\|\tau(q_j)-\tau(\ell\circ\widetilde h_j)\|_\infty
+
\|\widetilde h_j-F_{A_j,\epsilon}^*\|_\infty
\nonumber\\
&\leq
\frac14
\|q_j-\ell\circ\widetilde h_j\|_\infty+a
\nonumber\\
&\leq
\frac{\eta}{4} + a,
\label{eq:dnn-expanded-q-probability}
\end{align}
where the second inequality follows from (\ref{eq:dnn-expanded-regularized-base-error}), and the last from (\ref{eq:dnn-expanded-logit-error}).

For every edge $(k,j)\in E$,
\begin{equation}
F_{A_j,\epsilon}^*
-
F_{A_k,\epsilon}^*
=
(1-2\epsilon)
\left(
F_{A_j}^*-F_{A_k}^*
\right)
\geq0.
\label{eq:dnn-expanded-regularized-order}
\end{equation}
We now claim that  
\begin{equation}
F_{A_j,\epsilon}^*
-a-\frac{\eta}{4}
\leq
\bar F_j
\leq
F_{A_j,\epsilon}^*
+a+\frac{\eta}{4}
+\frac{r_j\delta}{4}.
\label{eq:dnn-expanded-probability-sandwich}
\end{equation}
Since $\bar q_j\geq q_j$ and $\tau$ is increasing, from (\ref{eq:dnn-expanded-q-probability}),
\[
\bar F_j
\geq
\tau(q_j)
\geq
F_{A_j,\epsilon}^*
-a-\frac{\eta}{4}.
\]
We prove the upper bound in (\ref{eq:dnn-expanded-probability-sandwich}) by induction. For a root, the upper bound in  (\ref{eq:dnn-expanded-output-error})  follows from
\eqref{eq:dnn-expanded-q-probability}. For $j\notin\mathcal R$,
monotonicity of $\tau$ gives
\begin{equation}
\bar F_j
=
\max\left\{
\tau(q_j),\,
\max_{k\in\mathcal P(j)}
\tau(\bar q_k+\delta)
\right\}.
\label{eq:dnn-expanded-ordered-probability}
\end{equation}
For every $k\in\mathcal P(j)$, the induction hypothesis for the upper
bound in \eqref{eq:dnn-expanded-probability-sandwich} gives
\[
\tau(\bar q_k)
=
\bar F_k
\leq
F_{A_k,\epsilon}^*
+a+\frac{\eta}{4}
+\frac{r_k\delta}{4}.
\]
Since $\tau$ is $1/4$-Lipschitz,
\eqref{eq:dnn-expanded-regularized-order} holds, and
$r_k+1\leq r_j$, we obtain
\begin{align*}
\tau(\bar q_k+\delta)
&\leq
\tau(\bar q_k)+\frac{\delta}{4}\\
&\leq
F_{A_k,\epsilon}^*
+a+\frac{\eta}{4}
+\frac{(r_k+1)\delta}{4}\\
&\leq
F_{A_j,\epsilon}^*
+a+\frac{\eta}{4}
+\frac{r_j\delta}{4}.
\end{align*}
Together with \eqref{eq:dnn-expanded-q-probability}, this proves
\eqref{eq:dnn-expanded-probability-sandwich}. Hence
\begin{equation}
\|\bar F_j-F_{A_j,\epsilon}^*\|_\infty
\leq
a+\frac14(\eta+r_j\delta).
\label{eq:dnn-expanded-ordered-probability-error}
\end{equation}

Let
\[
e_j
=
\|\widetilde H_j(\bm g)-\bar q_j\|_\infty.
\]
For $j\in\mathcal R$,
\[
e_j=0.
\]
For $j\notin\mathcal R$,
\[
\widetilde H_j(\bm g)
=
\max_{k\in\mathcal P(j)}
\widetilde H_k(\bm g)
+
\operatorname{sp}(g_j),
\]
whereas, by \eqref{eq:dnn-expanded-u},
\[
\bar q_j
=
\max_{k\in\mathcal P(j)}
\bar q_k+u_j.
\]

Therefore,
For $j\notin\mathcal R$, using the recursions for
$\widetilde H_j(\bm g)$ and $\bar q_j$, we have pointwise
\begin{align*}
\widetilde H_j(\bm g)-\bar q_j
&=
\left\{
\max_{k\in\mathcal P(j)}
\widetilde H_k(\bm g)
-
\max_{k\in\mathcal P(j)}
\bar q_k
\right\}
+
\{\operatorname{sp}(g_j)-u_j\}.
\end{align*}
Since, for any finite collections $(a_k)$ and $(b_k)$,
\[
\left|
\max_k a_k-\max_k b_k
\right|
\leq
\max_k|a_k-b_k|,
\]
the triangle inequality and the definition
$e_k=\|\widetilde H_k(\bm g)-\bar q_k\|_\infty$ give
\[
e_j
\leq
\max_{k\in\mathcal P(j)}e_k
+
\|\operatorname{sp}(g_j)-u_j\|_\infty.
\]
Therefore,
\begin{align}
e_j
&\leq
\max_{k\in\mathcal P(j)}e_k
+
\|\operatorname{sp}(g_j)-u_j\|_\infty
\nonumber\\
&=
\max_{k\in\mathcal P(j)}e_k
+
\left\|
\operatorname{sp}
\left\{
\mathsf S_{\delta,U_{\epsilon,\delta,\eta},\zeta}(u_j)
\right\}
-
\operatorname{sp}\{\psi(u_j)\}
\right\|_\infty
\nonumber\\
&\leq
\max_{k\in\mathcal P(j)}e_k
+
\left\|
\mathsf S_{\delta,U_{\epsilon,\delta,\eta},\zeta}(u_j)
-\psi(u_j)
\right\|_\infty
\nonumber\\
&\leq
\max_{k\in\mathcal P(j)}e_k+\zeta.
\label{eq:dnn-expanded-recursion-error}
\end{align}
Here, the equality uses \eqref{eq:dnn-expanded-g} and
$\operatorname{sp}\{\psi(u_j)\}=u_j$, since $\psi$ is the inverse of
$\operatorname{sp}$.  The next inequality follows from
$\|\operatorname{sp}'\|_\infty\leq1$, so $\operatorname{sp}$ is
$1$-Lipschitz.  Finally, \eqref{eq:dnn-expanded-u-domain} ensures that
$u_j\in[\delta,U_{\epsilon,\delta,\eta}]$, and hence
\eqref{eq:dnn-expanded-spinv} gives
\[
\left\|
\mathsf S_{\delta,U_{\epsilon,\delta,\eta},\zeta}(u_j)
-\psi(u_j)
\right\|_\infty
\leq\zeta.
\]

 Induction gives
\begin{equation}
e_j\leq r_j\zeta.
\label{eq:dnn-expanded-recursion-error-bound}
\end{equation}
Since
\[
H_j(\bm g)
=
\tau\{\widetilde H_j(\bm g)\},
\qquad
\bar F_j
=
\tau(\bar q_j),
\]
from (\ref{eq:dnn-expanded-recursion-error-bound}), we obtain
\begin{equation}
\|H_j(\bm g)-\bar F_j\|_\infty
\leq
\frac14e_j
\leq
\frac{r_j\zeta}{4}.
\label{eq:dnn-expanded-sp-error}
\end{equation}

Finally,
\begin{equation}
\|F_{A_j,\epsilon}^*-F_{A_j}^*\|_\infty
=
\epsilon
\|1-2F_{A_j}^*\|_\infty
\leq\epsilon.
\label{eq:dnn-expanded-boundary-error}
\end{equation}
Combining
\eqref{eq:dnn-expanded-ordered-probability-error},
\eqref{eq:dnn-expanded-sp-error}, and
\eqref{eq:dnn-expanded-boundary-error} gives
\begin{align*}
\|H_j(\bm g)-F_{A_j}^*\|_\infty
&\leq
\|H_j(\bm g)-\bar F_j\|_\infty
+
\|\bar F_j-F_{A_j,\epsilon}^*\|_\infty\\
&\quad+
\|F_{A_j,\epsilon}^*-F_{A_j}^*\|_\infty\\
&\leq
\epsilon+a+
\frac14
\left(
\eta+r_j\delta+r_j\zeta
\right),
\end{align*}
which proves \eqref{eq:dnn-expanded-output-error}.
Since $r_j\leq\kappa_m$,
\eqref{eq:dnn-expanded-output-error-max} follows.

For $j\in\mathcal R$,
\eqref{eq:dnn-expanded-g} and
\eqref{eq:dnn-expanded-q-range} give
\[
\|g_j\|_\infty
\leq
\lambda_\epsilon+\eta.
\]
For $j\notin\mathcal R$, Lemma~\ref{lem:dnn-expanded-fixed-modules} and
\eqref{eq:dnn-expanded-u-domain} give
\begin{align*}
\|g_j\|_\infty
&=
\left\|
S_{\delta,U_{\epsilon,\delta,\eta},\zeta}(u_j)
\right\|_\infty\\
&\leq
\|\psi(u_j)\|_\infty
+
\left\|
S_{\delta,U_{\epsilon,\delta,\eta},\zeta}(u_j)
-\psi(u_j)
\right\|_\infty\\
&\leq
\sup_{u\in[\delta,U_{\epsilon,\delta,\eta}]}
|\psi(u)|+\zeta\\
&=
\max\left\{
-\psi(\delta),
\psi(U_{\epsilon,\delta,\eta})
\right\}
+\zeta\\
&\leq
\log(1/\delta)
+
U_{\epsilon,\delta,\eta}
+\zeta,
\end{align*}
where the second inequality uses
$u_j\in[\delta,U_{\epsilon,\delta,\eta}]$ and the uniform approximation
bound
\[
\left\|
S_{\delta,U_{\epsilon,\delta,\eta},\zeta}
-\psi
\right\|_{\infty,[\delta,U_{\epsilon,\delta,\eta}]}
\leq\zeta.
\]
Since $\psi(u)=\log(e^u-1)$ is increasing, the maximum of
$|\psi(u)|$ over the interval is attained at one of its endpoints.
Moreover,
\[
-\psi(\delta)
=
\log\frac{1}{e^\delta-1}
\leq
\log(1/\delta),
\qquad
\psi(U_{\epsilon,\delta,\eta})
=
\log\!\left(e^{U_{\epsilon,\delta,\eta}}-1\right)
\leq
U_{\epsilon,\delta,\eta},
\]
where we used $e^\delta-1\geq\delta$ and
$e^{U_{\epsilon,\delta,\eta}}-1
\leq e^{U_{\epsilon,\delta,\eta}}$.
Setting
\[
U_{\epsilon,\delta,\eta}
=
2\lambda_\epsilon+2\eta+\delta\kappa_m
\]
proves \eqref{eq:dnn-expanded-envelope}.

It remains to establish the architecture bounds. Indeed,
\[
q_j=\mathsf L_{\epsilon,\eta}\circ\widetilde h_j,
\qquad
\widetilde h_j=\epsilon+(1-2\epsilon)h_j.
\]
The affine transformation defining $\widetilde h_j$ does not change the
architecture order of $h_j$. Since $h_j$ has depth at most $L$ and width
at most $w$, while $\mathsf L_{\epsilon,\eta}$ has depth
$D_\ell(\epsilon,\eta)$ and width at most $w_{\mathrm{cal}}$,
Lemma~\ref{lem:relu-network-operations} yields
\[
q_j
\in
\mathcal F\left(
L+D_\ell(\epsilon,\eta)+C,\,
C\max\{w,w_{\mathrm{cal}},1\}
\right).
\]

Fix $j\in[m]$. The networks
\[
q_s,
\qquad
s\in\overline{\mathcal A}(j),
\]
can be evaluated in parallel with width at most
\begin{equation}
Ca_j\max\{w,w_{\mathrm{cal}},1\}.
\label{eq:dnn-expanded-ancestor-width}
\end{equation}
At each DAG level, every maximum in
\eqref{eq:dnn-expanded-ordered-q} contains at most
$\deg^{-}_{\max,m}+1$ inputs. Lemma~\ref{lem:relu-network-operations}
therefore adds at most
\[
C
\left\lceil
\log_2(\deg^{-}_{\max,m}+2)
\right\rceil
\]
hidden layers per graph level and increases the common width to at most
\[
Ca_j(\deg^{-}_{\max,m}+2)
\max\{w,w_{\mathrm{cal}},1\}.
\]
Iterating this construction over the $r_j$ graph levels yields
\begin{align}
L_{\bar q_j}
&\leq
L+D_\ell(\epsilon,\eta)
+
Cr_j
\left\lceil
\log_2(\deg^{-}_{\max,m}+2)
\right\rceil
+C,
\label{eq:dnn-expanded-qbar-depth}\\
w_{\bar q_j}
&\leq
Ca_j(\deg^{-}_{\max,m}+2)
\max\{w,w_{\mathrm{cal}},1\}.
\label{eq:dnn-expanded-qbar-width}
\end{align}

For $j\notin\mathcal R$, Lemma~\ref{lem:relu-network-operations}
shows that computing $\max_{k\in\mathcal P(j)}\bar q_k$ adds at most
\[
C\left\lceil\log_2(\deg^{-}_{\max,m}+2)\right\rceil
\]
hidden layers. Since $r_j\geq1$, this additional depth is absorbed into
the term in \eqref{eq:dnn-expanded-qbar-depth}. The subtraction defining
$u_j$ is then affine. For $j\notin\mathcal R$,
composition with
$\mathsf S_{\delta,U_{\epsilon,\delta,\eta},\zeta}$ gives
\begin{align*}
L_{g_j}
&\leq
L+D_\ell(\epsilon,\eta)
+D_\psi(\delta,U_{\epsilon,\delta,\eta},\zeta)\\
&\quad+
Cr_j
\left\lceil
\log_2(\deg^{-}_{\max,m}+2)
\right\rceil
+C,\\
w_{g_j}
&\leq
Ca_j(\deg^{-}_{\max,m}+2)
\max\{w,w_{\mathrm{cal}},1\}.
\end{align*}
The same bounds hold for roots after padding. Taking the maximum over
$j$, using
\[
r_j\leq\kappa_m,
\qquad
a_j\leq a_{\max,m},
\]
gives
\eqref{eq:dnn-expanded-g-depth}--%
\eqref{eq:dnn-expanded-g-width}.
\end{proof}

\begin{definition}[Constructed functions and clipped dense ReLU class]
\label{def:dnn-expanded-calibrated-class}
Let $\mathcal B$ be a class of $[0,1]$-valued ReLU networks. For fixed
\[
0<\epsilon<\frac14,\qquad
0<\eta<1,\qquad
0<\delta\leq1,\qquad
0<\zeta<1,
\]
 let
$\mathcal T_j(\mathcal B;\epsilon,\eta,\delta,\zeta)$ be the set of all
functions $g_j$ obtained from
Definition~\ref{def:dnn-expanded-latent-construction} with
\[
h_s\in\mathcal B,
\qquad
s\in\overline{\mathcal A}(j),
\]
and define
\begin{equation}
\mathcal T_m(\mathcal B;\epsilon,\eta,\delta,\zeta)
=
\bigcup_{j=1}^m
\mathcal T_j(\mathcal B;\epsilon,\eta,\delta,\zeta).
\label{eq:dnn-expanded-calibrated-class}
\end{equation}

For $R\geq1$, define
\begin{equation}
\pi_R(t)
=
-R+\rho(t+R)-\rho(t-R)
=
\max\{-R,\min(t,R)\},
\label{eq:dnn-expanded-latent-clipping-map}
\end{equation}
and, for integers $L,w\geq1$, let
\begin{equation}
\mathcal F^{\mathrm{cl}}(L,w,R)
=
\{\pi_R\circ f:f\in\mathcal F(L,w)\}.
\label{eq:dnn-expanded-ambient-class}
\end{equation}
All weights of $f$ are trainable, whereas the final clipping map
$\pi_R$ is fixed.
\end{definition}

\begin{lemma}[Embedding and entropy of the ambient class]
\label{lem:dnn-expanded-calibrated-entropy}
Suppose that every $h\in\mathcal B$ has depth at most
$L_{\mathcal B}$ and width at most $w_{\mathcal B}$.  With the notation of
Theorem~\ref{theo1}, \eqref{eq:dnn-expanded-max-indegree},
\eqref{eq:dnn-expanded-u}, and
Lemma~\ref{lem:dnn-expanded-fixed-modules},  define
\begin{align}
L_{\mathrm{amb}}
&=
\left\lceil
L_{\mathcal B}
+D_{\ell}(\epsilon,\eta)
+D_{\psi}(\delta,U_{\epsilon,\delta,\eta},\zeta)
+C\kappa_m
\left\lceil
\log_2(\deg^{-}_{\max,m}+2)
\right\rceil
+C
\right\rceil,
\label{eq:dnn-expanded-ambient-depth}\\
w_{\mathrm{amb}}
&=
\left\lceil
Ca_{\max,m}(\deg^{-}_{\max,m}+2)
\max\{w_{\mathcal B},w_{\mathrm{cal}},1\}
\right\rceil,
\label{eq:dnn-expanded-ambient-width}
\end{align}
and
\begin{equation}
R_{\mathrm{amb}}
=
\max\left[
1,\,
C\left\{
\lambda_\epsilon+\eta+\delta\kappa_m
+\log(1/\delta)+\zeta+1
\right\}
\right].
\label{eq:dnn-expanded-ambient-radius}
\end{equation}
Then
\begin{equation}
\mathcal T_m(\mathcal B;\epsilon,\eta,\delta,\zeta)
\subseteq
\mathcal F^{\mathrm{cl}}
(L_{\mathrm{amb}},w_{\mathrm{amb}},R_{\mathrm{amb}}).
\label{eq:dnn-expanded-ambient-embedding}
\end{equation}

Moreover, uniformly over probability measures $Q$ on $[0,1]^d$ and
$0<u\leq R_{\mathrm{amb}}$,
\begin{equation}
\log N\left(
u,\,
\mathcal F^{\mathrm{cl}}
(L_{\mathrm{amb}},w_{\mathrm{amb}},R_{\mathrm{amb}}),
\|\cdot\|_{L^2(Q)}
\right)
\leq
v_{\mathrm{amb}}
\log\left(
\frac{CR_{\mathrm{amb}}}{u}
\right),
\label{eq:dnn-expanded-ambient-entropy}
\end{equation}
where
\[
\overline L=L_{\mathrm{amb}}+1,
\qquad
\overline w=\max\{w_{\mathrm{amb}},2\},
\]
and
\begin{align}
W_{\mathrm{amb}}
&=
(d+1)\overline w
+(\overline L-1)(\overline w+1)\overline w
+(\overline w+1),
\label{eq:dnn-expanded-ambient-parameters}\\
U_{\mathrm{amb}}
&=
\overline L\,\overline w+1,
\qquad
v_{\mathrm{amb}}
=
C\overline L W_{\mathrm{amb}}
\log(eU_{\mathrm{amb}}).
\label{eq:dnn-expanded-ambient-vc-index}
\end{align}
\end{lemma}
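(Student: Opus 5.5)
The proof has two parts: the embedding \eqref{eq:dnn-expanded-ambient-embedding}, which is mostly bookkeeping from Theorem~\ref{theo:dnn-expanded-probability-approx}, and the entropy bound \eqref{eq:dnn-expanded-ambient-entropy}, which I will get from a pseudo-dimension argument for dense ReLU networks.

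\textbf{Embedding.} Fix $j$ and $g_j\in\mathcal T_j(\mathcal B;\epsilon,\eta,\delta,\zeta)$, built from $h_s\in\mathcal B$, $s\in\overline{\mathcal A}(j)$. Theorem~\ref{theo:dnn-expanded-probability-approx} gives a representation $g_j\in\mathcal F(L_g,w_g)$, with $L_g$ and $w_g$ bounded by \eqref{eq:dnn-expanded-g-depth}--\eqref{eq:dnn-expanded-g-width} after taking $L=L_{\mathcal B}$ and $w=w_{\mathcal B}$. These bounds are dominated by $L_{\mathrm{amb}}$ and $w_{\mathrm{amb}}$ once the constants $C$ in \eqref{eq:dnn-expanded-ambient-depth}--\eqref{eq:dnn-expanded-ambient-width} are taken at least as large as those in the theorem.

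Next I pad the network to exactly $L_{\mathrm{amb}}$ hidden layers and width $w_{\mathrm{amb}}$. Extra neurons get zero weights. Extra layers use the identity $z=\rho(z)-\rho(-z)$ of Lemma~\ref{lem:relu-network-operations}, which costs a constant factor in width that is absorbed into $C$. This gives $g_j=f$ for some $f\in\mathcal F(L_{\mathrm{amb}},w_{\mathrm{amb}})$.

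The envelope bound \eqref{eq:dnn-expanded-envelope} gives $\|g_j\|_\infty\le R_{\mathrm{amb}}$, again after matching constants. Since $\pi_R$ is the identity on $[-R,R]$, we get $g_j=\pi_{R_{\mathrm{amb}}}\circ f$, which is an element of $\mathcal F^{\mathrm{cl}}(L_{\mathrm{amb}},w_{\mathrm{amb}},R_{\mathrm{amb}})$. Taking the union over $j$ proves \eqref{eq:dnn-expanded-ambient-embedding}.

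\textbf{Entropy.} First I rewrite each clipped function as a single dense network. From \eqref{eq:dnn-expanded-latent-clipping-map}, $\pi_R$ is one ReLU layer of width $2$ followed by an affine map. Hence every $\pi_R\circ f$ is a ReLU network with $\overline L=L_{\mathrm{amb}}+1$ hidden layers of width at most $\overline w=\max\{w_{\mathrm{amb}},2\}$. Such a network has at most $W_{\mathrm{amb}}$ parameters and $U_{\mathrm{amb}}$ units, as in \eqref{eq:dnn-expanded-ambient-parameters}--\eqref{eq:dnn-expanded-ambient-vc-index}; the zero-padded architecture contains it.

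The pseudo-dimension bound for piecewise-linear networks (Bartlett--Harvey--Liaw--Mehrabian) then gives
\[
\mathrm{Pdim}\le C\overline L W_{\mathrm{amb}}\log(eU_{\mathrm{amb}})=v_{\mathrm{amb}}.
\]
The clipped class is uniformly bounded by $R_{\mathrm{amb}}$. Haussler's bound for classes with bounded pseudo-dimension (equivalently, bounded VC-subgraph classes, as in van der Vaart--Wellner, Theorem 2.6.7) gives
\[
N\bigl(u,\mathcal F^{\mathrm{cl}},L^2(Q)\bigr)\le C\,v_{\mathrm{amb}}\,\bigl(CR_{\mathrm{amb}}/u\bigr)^{2v_{\mathrm{amb}}},
\]
uniformly over probability measures $Q$. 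Taking logarithms and using $u\le R_{\mathrm{amb}}$ and $v_{\mathrm{amb}}\ge1$, the $\log v_{\mathrm{amb}}$ term and the factor $2$ are absorbed into $v_{\mathrm{amb}}\log(CR_{\mathrm{amb}}/u)$ by enlarging $C$ in the definition of $v_{\mathrm{amb}}$. This is \eqref{eq:dnn-expanded-ambient-entropy}.

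\textbf{Main obstacle.} The delicate point is the bookkeeping of universal constants. The constants $C$ in $L_{\mathrm{amb}}$, $w_{\mathrm{amb}}$ and $R_{\mathrm{amb}}$ must be chosen to dominate those of Theorem~\ref{theo:dnn-expanded-probability-approx} and of the padding, uniformly over $j$. In addition, padding to a common depth and width must preserve exact equality of functions, not just approximate equality; I will verify this explicitly using the identity-layer construction.
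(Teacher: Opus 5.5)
Your proposal is correct and follows the paper's proof essentially step for step. The shared route is: the architecture and envelope bounds of Theorem~\ref{theo:dnn-expanded-probability-approx} together with the fact that $\pi_{R}$ is the identity on $[-R,R]$ give the embedding; one extra width-$2$ layer realizes the clipping; the Bartlett--Harvey--Liaw--Mehrabian pseudo-dimension bound follows; and the VC-subgraph covering bound of van der Vaart--Wellner finishes. The only difference is that the paper first normalizes by $R_{\mathrm{amb}}$ while you use the constant envelope directly, and your explicit treatment of the padding and of the factor $2$ and $\log v_{\mathrm{amb}}$ term in Haussler's bound is slightly more careful than the paper's write-up.
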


\begin{proof}
By Theorem~\ref{theo:dnn-expanded-probability-approx}, every
\[
g\in
\mathcal T_m(\mathcal B;\epsilon,\eta,\delta,\zeta)
\]
admits a representation in
\[
\mathcal F(L_{\mathrm{amb}},w_{\mathrm{amb}})
\]
and satisfies
\[
\|g\|_\infty
\leq
R_{\mathrm{amb}}.
\]
Hence, by \eqref{eq:dnn-expanded-latent-clipping-map},
\[
\pi_{R_{\mathrm{amb}}}\{g(x)\}
=
g(x),
\qquad
x\in[0,1]^d,
\]
and therefore
\[
g
=
\pi_{R_{\mathrm{amb}}}\circ g
\in
\mathcal F^{\mathrm{cl}}
(L_{\mathrm{amb}},w_{\mathrm{amb}},R_{\mathrm{amb}}).
\]
This proves \eqref{eq:dnn-expanded-ambient-embedding}.

The representation
\[
\pi_R(t)
=
-R+\rho(t+R)-\rho(t-R)
\]
adds one hidden layer with two ReLU units. After padding the hidden
layers to common width
\[
\overline w
=
\max\{w_{\mathrm{amb}},2\},
\]
the clipped class is therefore contained in a dense ReLU architecture
with
\[
\overline L=L_{\mathrm{amb}}+1
\]
hidden layers and width $\overline w$. Its number of parameters is at
most
\begin{align*}
&(d+1)\overline w
+
\sum_{\ell=2}^{\overline L}
(\overline w+1)\overline w
+
(\overline w+1)\\
&\qquad=
(d+1)\overline w
+
(\overline L-1)(\overline w+1)\overline w
+
(\overline w+1)
=
W_{\mathrm{amb}},
\end{align*}
and its number of computational units is at most
\begin{equation}
\overline L\,\overline w+1
=
U_{\mathrm{amb}}.
\label{eq:dnn-expanded-ambient-unit-count}
\end{equation}

Since the clipping weights are fixed,
$\mathcal F^{\mathrm{cl}}$ is a subclass of this dense ReLU
architecture. Hence \citet[Theorem~7]{bartlett2019nearly} gives
\begin{align}
\operatorname{Pdim}\left\{
\mathcal F^{\mathrm{cl}}
(L_{\mathrm{amb}},w_{\mathrm{amb}},R_{\mathrm{amb}})
\right\}
&\leq
C\overline L W_{\mathrm{amb}}
\log(eU_{\mathrm{amb}})
\nonumber\\
&=
v_{\mathrm{amb}}.
\label{eq:dnn-expanded-ambient-pdim}
\end{align}

Set
\[
\mathcal G
=
\left\{
\frac{f}{R_{\mathrm{amb}}}:
f\in
\mathcal F^{\mathrm{cl}}
(L_{\mathrm{amb}},w_{\mathrm{amb}},R_{\mathrm{amb}})
\right\}.
\]
By construction,
\begin{equation}
\sup_{g\in\mathcal G}\|g\|_\infty
\leq1,
\qquad
\operatorname{Pdim}(\mathcal G)
=
\operatorname{Pdim}
\left\{
\mathcal F^{\mathrm{cl}}
(L_{\mathrm{amb}},w_{\mathrm{amb}},R_{\mathrm{amb}})
\right\}
\leq
v_{\mathrm{amb}}.
\label{eq:dnn-expanded-normalized-class}
\end{equation}
The bounded VC-subgraph covering inequality
\citet[Theorem~2.6.7]{vandervaart1996weak} therefore gives, for
$0<\varepsilon\leq1$,
\begin{equation}
\log N\left(
\varepsilon,\mathcal G,\|\cdot\|_{L^2(Q)}
\right)
\leq
v_{\mathrm{amb}}
\log\left(\frac{C}{\varepsilon}\right).
\label{eq:dnn-expanded-normalized-cover}
\end{equation}
Taking
\[
\varepsilon
=
\frac{u}{R_{\mathrm{amb}}}
\]
and using
\[
N\left(
u,\,
\mathcal F^{\mathrm{cl}}
(L_{\mathrm{amb}},w_{\mathrm{amb}},R_{\mathrm{amb}}),
\|\cdot\|_{L^2(Q)}
\right)
=
N\left(
\frac{u}{R_{\mathrm{amb}}},
\mathcal G,
\|\cdot\|_{L^2(Q)}
\right)
\]
proves \eqref{eq:dnn-expanded-ambient-entropy}.
\end{proof}

\begin{corollary}[Detailed statement of Corollary~\ref{cor1}]
\label{appx:cor1-expanded}
Suppose that
\[
F_{A_j}^*\in\mathcal H(l_j,\mathcal P_j),
\qquad
j=1,\ldots,m,
\]
where $l_j\in\mathbb N$ and
$\mathcal P_j\subset[1,\infty)\times\mathbb N$. Each constituent
function $\mu$ may have smoothness
$p_\mu=q_\mu+u_\mu$, where $q_\mu\in\mathbb N_0$ and
$u_\mu\in(0,1]$, and input dimension $d_\mu$, with
$(p_\mu,d_\mu)\in\mathcal P_j$. Assume uniformly over $j\leq m$ that
the hierarchy depths, { constituent counts and ranges, smoothness constants}, the relevant derivative bounds, and the Lipschitz constants
are bounded, and that
\begin{equation}
\max_{j\leq m}\max_{(p,d_0)\in\mathcal P_j}
\max\{p,d_0\}
=
O(1).
\label{eq:dnn-expanded-hc-uniformity}
\end{equation}

Define
\begin{equation}
\phi_n
=
\max_{j\leq m}\max_{(p,d_0)\in\mathcal P_j}
n^{-2p/(2p+d_0)}.
\label{eq:dnn-expanded-phi}
\end{equation}
For sufficiently large constants $c_3,c_4$, set
\begin{equation}
L_{0,n}
=
\left\lceil
c_3
\max_{j\leq m}\max_{(p,d_0)\in\mathcal P_j}
n^{d_0/[2(2p+d_0)]}\log(en)
\right\rceil,
\qquad
w_{0,n}
=
\lceil c_4\rceil.
\label{eq:dnn-expanded-deep}
\end{equation}

For a sufficiently large positive constant $C_0$, set
\begin{equation}
\alpha_n=C_0\sqrt{\phi_n}.
\label{eq:dnn-expanded-alpha}
\end{equation}
For all sufficiently large $n$ such that $\alpha_n<1/4$, let
\begin{equation}
\epsilon_n=\eta_n=\alpha_n,
\qquad
\delta_n=\zeta_n=\frac{\alpha_n}{\kappa_m},
\label{eq:dnn-expanded-tuning}
\end{equation}
and define
\begin{equation}
U_n
=
2\log\frac{1-\alpha_n}{\alpha_n}
+3\alpha_n.
\label{eq:dnn-expanded-U}
\end{equation}

With the notation of Theorem~\ref{theo1},
\eqref{eq:dnn-expanded-max-indegree},
\eqref{eq:dnn-expanded-deep},
\eqref{eq:dnn-expanded-alpha}, and
\eqref{eq:dnn-expanded-U}, and with
$D_{\ell}$ and $D_{\psi}$ defined in
\eqref{eq:dnn-expanded-logit-depth} and
\eqref{eq:dnn-expanded-spinv-depth}, respectively, set
\begin{align}
L_n
&=
\left\lceil
L_{0,n}+1
+D_\ell(\alpha_n,\alpha_n)
+D_\psi\left(
\frac{\alpha_n}{\kappa_m},
U_n,
\frac{\alpha_n}{\kappa_m}
\right)
\right.
\nonumber\\
&\hspace{35mm}\left.
+C\kappa_m
\left\lceil
\log_2(\deg^{-}_{\max,m}+2)
\right\rceil
+C
\right\rceil,
\label{eq:dnn-expanded-training-depth}\\
w_n
&=
\left\lceil
Ca_{\max,m}(\deg^{-}_{\max,m}+2)
\max\{w_{0,n},w_{\mathrm{cal}},2\}
\right\rceil,
\label{eq:dnn-expanded-training-width}\\
R_n
&=
\max\left\{
1,\,
C\left[
\log\frac{1-\alpha_n}{\alpha_n}
+\log\frac{\kappa_m}{\alpha_n}
+1
\right]
\right\},
\label{eq:dnn-expanded-training-radius}
\end{align}
for $C>0$ large enough. In the constrained least-squares estimator in equations~(3)--(4),
take the network class to be
\[
\mathcal F^{\mathrm{cl}}(L_n,w_n,R_n).
\]

Let $v_n$ denote the quantity in
\eqref{eq:dnn-expanded-ambient-vc-index} evaluated at
\[
(L_{\mathrm{amb}},w_{\mathrm{amb}})
=
(L_n,w_n).
\]
Then, conditionally on the design, for every
$\gamma\in(0,1/2)$, with probability at least $1-\gamma$,
\begin{equation}
\frac1m\sum_{j=1}^m
\|\widehat F_{A_j}-F_{A_j}^*\|_n^2
\leq
C\left[
\phi_n+
\frac{\bar a_m v_n}{n}
\log\left\{
\frac{
CR_nnm\sqrt{\kappa_m a_{\max,m}}
}{
\gamma
}
\right\}
\right].
\label{eq:dnn-expanded-full-rate}
\end{equation}

If $m$ grows no faster than polynomially in $n$, then
\begin{equation}
\frac1m\sum_{j=1}^m
\|\widehat F_{A_j}-F_{A_j}^*\|_n^2
=
O_{\mathbb P}\left(
\phi_n+
\frac{\bar a_m v_n\log(en)}{n}
\right).
\label{eq:dnn-expanded-simplified-rate}
\end{equation}

For a fixed DAG,
\begin{equation}
L_n
\lesssim
\sqrt{n\phi_n}\log(en),
\qquad
w_n=O(1),
\qquad
R_n\lesssim\log(en),
\qquad
v_n\lesssim n\phi_n\log^3(en),
\label{eq:dnn-expanded-fixed-complexity}
\end{equation}
and hence
\begin{equation}
\frac1m\sum_{j=1}^m
\|\widehat F_{A_j}-F_{A_j}^*\|_n^2
=
O_{\mathbb P}\{\phi_n\log^4(en)\}.
\label{eq:dnn-expanded-fixed-dag-rate}
\end{equation}
The constant in \eqref{eq:dnn-expanded-full-rate} depends only on the
fixed input dimension and the uniform hierarchical-composition bounds,
and not on $n$ or the DAG.
\end{corollary}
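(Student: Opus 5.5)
The proof will combine the high-probability oracle inequality \eqref{eq:growing-m-main} with two ingredients. The first is an approximation bound $\Delta_{n,m}^2\lesssim\phi_n$ from Theorem~\ref{theo:dnn-expanded-probability-approx}. The second is an entropy bound for $\F^{\mathrm{cl}}(L_n,w_n,R_n)$ from Lemma~\ref{lem:dnn-expanded-calibrated-entropy}.

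\textbf{Approximation step.} Take $\mathcal B$ to be the bounded deep--narrow networks of \eqref{eq:dnn-expanded-deep}, clipped to $[0,1]$ via $\pi$-type maps. Clipping only adds $O(1)$ layers and does not increase sup-error, since $F_{A_j}^*\in[0,1]$. By the Kohler--Langer approximation theorem, uniformly in $j$ (using \eqref{eq:dnn-expanded-hc-uniformity}), there exist $h_j\in\mathcal B$ with $\max_j\|h_j-F_{A_j}^*\|_\infty\le a\lesssim\sqrt{\phi_n}$.

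Apply Definition~\ref{def:dnn-expanded-latent-construction} with the tuning \eqref{eq:dnn-expanded-tuning}. Then \eqref{eq:dnn-expanded-output-error-max} gives
\[
\max_j\|H_j(\bm g)-F_{A_j}^*\|_\infty\le \alpha_n+a+\tfrac14(\alpha_n+2\alpha_n)\lesssim\sqrt{\phi_n}.
\]
Here the key point is that $\kappa_m\delta_n=\kappa_m\zeta_n=\alpha_n$, which kills the DAG-length factor. One checks that $U_{\epsilon,\delta,\eta}\le U_n$; this holds because $\delta\kappa_m=\alpha_n$, so $U=2\lambda_{\alpha_n}+3\alpha_n=U_n$. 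One also checks that \eqref{eq:dnn-expanded-ambient-depth}--\eqref{eq:dnn-expanded-ambient-radius} are dominated by \eqref{eq:dnn-expanded-training-depth}--\eqref{eq:dnn-expanded-training-radius}. Note that $\log(1/\delta_n)=\log(\kappa_m/\alpha_n)$, and $\lambda_{\alpha_n}\le\log((1-\alpha_n)/\alpha_n)$. The embedding \eqref{eq:dnn-expanded-ambient-embedding} then yields $\bm g\in(\F^{\mathrm{cl}}(L_n,w_n,R_n))^m$, after padding to the larger architecture, which preserves membership. Hence $\Delta_{n,m}^2\lesssim\phi_n$.

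\textbf{Entropy and oracle step.} The empirical norm $\|\cdot\|_n$ is $L^2(Q_n)$ for the empirical design measure. Therefore \eqref{eq:dnn-expanded-ambient-entropy} gives $\log N(u,\F_n,\|\cdot\|_n)\le v_n\log(CR_n/u)$ for $u\le R_n$. This is exactly the covering hypothesis preceding \eqref{eq:growing-m-main}, with $B_n=R_n$ and $A_n=CR_n\ge eR_n$ (enlarging $C$ if needed). Plugging into \eqref{eq:growing-m-main} with $\delta=\gamma$ gives \eqref{eq:dnn-expanded-full-rate}.

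If $m\le n^c$, then $\kappa_m,a_{\max,m}\le m$ and $R_n\lesssim\log(nm)$, so the logarithm is $O(\log n)$ for fixed $\gamma$. Since the bound holds conditionally with probability $1-\gamma$ for every $\gamma$, this yields \eqref{eq:dnn-expanded-simplified-rate} in $O_{\mathbb P}$ form.

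\textbf{Fixed-DAG step.} For a fixed DAG, $\kappa_m,a_{\max,m},\deg^-_{\max,m}=O(1)$. With $\alpha_n\asymp\sqrt{\phi_n}\ge n^{-1/2}$ we have $\Lambda_{\alpha_n}\lesssim\log n$, so by \eqref{eq:dnn-expanded-logit-depth} and \eqref{eq:dnn-expanded-spinv-depth} the calibration depths $D_\ell,D_\psi$ are $O(\log^3 n)$. These are dominated by $L_{0,n}\asymp\sqrt{n\phi_n}\log n$, since $\sqrt{n\phi_n}$ is a positive power of $n$ (as $d_0\ge1$). Hence $L_n\lesssim\sqrt{n\phi_n}\log n$, $w_n=O(1)$, and $R_n\lesssim\log n$.

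It follows that $W\lesssim L_n$ and $v_n\lesssim L_n^2\log(eL_n)\lesssim n\phi_n\log^3 n$, giving \eqref{eq:dnn-expanded-fixed-complexity}. Substituting into \eqref{eq:dnn-expanded-simplified-rate} gives $\phi_n+\phi_n\log^4 n$, which is \eqref{eq:dnn-expanded-fixed-dag-rate}.

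\textbf{Main obstacle.} The hard part is the bookkeeping in the approximation step. One must verify that the Kohler--Langer networks can be taken $[0,1]$-valued with uniform-in-$j$ constants. One must also confirm that every constructed $g_j$ lands in the single common clipped class with the stated $(L_n,w_n,R_n)$, so that the radius, depth, and width inequalities line up with the constants $C$ chosen large enough.
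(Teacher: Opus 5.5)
Your proposal is correct and follows essentially the same route as the paper. You clip the Kohler--Langer deep--narrow approximants to $[0,1]$ and run the latent construction with $\kappa_m\delta_n=\kappa_m\zeta_n=\alpha_n$, so that $U_{\epsilon,\delta,\eta}=U_n$ and the ambient depth, width and radius are dominated by $L_n,w_n,R_n$. You then embed into $\mathcal F^{\mathrm{cl}}(L_n,w_n,R_n)$ to get $\Delta_{n,m}^2\lesssim\phi_n$ and apply the oracle bound \eqref{eq:growing-m-main} with $B_n=R_n$ and $A_n\asymp R_n$, and your polynomial-$m$ and fixed-DAG bookkeeping ($D_\ell+D_\psi=O(\log^3 n)=o(L_{0,n})$, $v_n\lesssim L_n^2\log(eL_n)\lesssim n\phi_n\log^3 n$) matches the paper's argument.
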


\begin{proof}
For brevity, write
\[
D_{\ell,n}
=
D_\ell(\alpha_n,\alpha_n),
\qquad
D_{\psi,n}
=
D_\psi\left(
\frac{\alpha_n}{\kappa_m},
U_n,
\frac{\alpha_n}{\kappa_m}
\right),
\]
and, within this proof, let
\begin{equation}
\mathcal F
=
\mathcal F^{\mathrm{cl}}(L_n,w_n,R_n),
\qquad
A_n=CeR_n.
\label{eq:dnn-expanded-proof-notation}
\end{equation}

\emph{Approximation.}
By \citet[Theorem~3(b)]{kohler2019rate}, for each $j\in[m]$ there exists
\[
f_j\in\mathcal F(L_{0,n},w_{0,n})
\]
here $L_{0,n}$ and $w_{0,n}$ are defined in
\eqref{eq:dnn-expanded-deep}, and such that
\begin{equation}
\|f_j-F_{A_j}^*\|_\infty^2
\leq
C
\max_{(p,d_0)\in\mathcal P_j}
n^{-2p/(2p+d_0)}
\leq
C\phi_n.
\label{eq:dnn-expanded-kohler-base-approx}
\end{equation}

Define
\[
c(t)=\rho(t)-\rho(t-1)
=
\max\{0,\min(t,1)\},
\]
and, only for this proof, set
\begin{equation}
\mathcal B_n
=
\{c\circ f:f\in\mathcal F(L_{0,n},w_{0,n})\}.
\label{eq:dnn-expanded-proof-base-class}
\end{equation}
Since
\[
|c(t)-y|\leq |t-y|,
\qquad
t\in\mathbb R,\quad y\in[0,1],
\]
the networks
\[
h_j=c\circ f_j\in\mathcal B_n
\]
satisfy
\begin{equation}
\max_{j\leq m}
\|h_j-F_{A_j}^*\|_\infty
\leq
\max_{j\leq m}
\|f_j-F_{A_j}^*\|_\infty
\leq
C\sqrt{\phi_n}
\leq
\alpha_n,
\label{eq:dnn-expanded-clipped-base-approx}
\end{equation}
where $C_0$ is chosen sufficiently large.

Applying
Theorem~\ref{theo:dnn-expanded-probability-approx} with
$a=\alpha_n$ and
\[
\epsilon_n=\eta_n=\alpha_n,
\qquad
\delta_n=\zeta_n=\frac{\alpha_n}{\kappa_m},
\]
gives
\begin{align}
\max_{j\leq m}
\|H_j(\bm g)-F_{A_j}^*\|_\infty
&\leq
\alpha_n+\alpha_n
+\frac14
\left(
\alpha_n
+\kappa_m\frac{\alpha_n}{\kappa_m}
+\kappa_m\frac{\alpha_n}{\kappa_m}
\right)
\nonumber\\
&=
\frac{11}{4}\alpha_n.
\label{eq:dnn-expanded-oracle-approx}
\end{align}

Every $h\in\mathcal B_n$ has depth at most $L_{0,n}+1$ and width at
most
\[
\max\{w_{0,n},2\}.
\]

Hence Lemma~\ref{lem:dnn-expanded-calibrated-entropy}, applied with
\[
L_{\mathcal B}=L_{0,n}+1,
\qquad
w_{\mathcal B}=\max\{w_{0,n},2\},
\]
gives
\begin{equation}
    \label{eqn:300}
    L_{\mathrm{amb}}\leq L_n,
\qquad
w_{\mathrm{amb}}\leq w_n,
\end{equation}
because $L_n$ and $w_n$ in
\eqref{eq:dnn-expanded-training-depth}--%
\eqref{eq:dnn-expanded-training-width}
are defined as the corresponding ambient architecture bounds after these substitutions.

Moreover, by the tuning parameters,
\begin{equation}
    \label{eqn:301}
\begin{array}{lll}
 \displaystyle   L_{\epsilon_n}
+\eta_n
+\kappa_m\delta_n
+\log(1/\delta_n)
+\zeta_n+1
&=& \displaystyle  
\log\frac{1-\alpha_n}{\alpha_n}
+2\alpha_n
+\log\frac{\kappa_m}{\alpha_n}
+\frac{\alpha_n}{\kappa_m}
+1\\
&\leq& \displaystyle  
C\left\{
\log\frac{1-\alpha_n}{\alpha_n}
+\log\frac{\kappa_m}{\alpha_n}
+1
\right\},
\end{array}
\end{equation}
for a postive constant $C$. The first equality follows by substituting the tuning choices
$\epsilon_n=\eta_n=\alpha_n$ and
$\delta_n=\zeta_n=\alpha_n/\kappa_m$, together with the definition
$L_{\epsilon}=\log\{(1-\epsilon)/\epsilon\}$ in
\eqref{eq:dnn-expanded-regularized-target}. The second inequality follows
from $\alpha_n<1/4$ and $\kappa_m\geq1$.

%so the ambient radius in
%Lemma~\ref{lem:dnn-expanded-calibrated-entropy} is at most $R_n$.

By \eqref{eq:dnn-expanded-proof-base-class}, the functions
$h_s=c\circ f_s$  satisfy $h_s\in\mathcal B_n$.
Hence, by Definition~\ref{def:dnn-expanded-calibrated-class}, the
corresponding latent functions satisfy
\[
g_j\in
\mathcal T_m(\mathcal B_n;\epsilon_n,\eta_n,\delta_n,\zeta_n),
\qquad j\in[m].
\]
Moreover, from \eqref{eq:dnn-expanded-ambient-radius},  \eqref{eq:dnn-expanded-training-radius}, (\ref{eqn:301}) we obtain that $R_{\mathrm{amb}}\leq R_n$. Hence, from \eqref{eq:dnn-expanded-ambient-embedding} and  definition of
$\mathcal F$ in \eqref{eq:dnn-expanded-proof-notation}, yields
\begin{equation}
g_j
\in
\mathcal T_m(
\mathcal B_n;
\epsilon_n,\eta_n,\delta_n,\zeta_n)
\subseteq
\mathcal F,
\qquad
j\in[m].
\label{eq:dnn-expanded-oracle-membership}
\end{equation}

Hence $\bm g\in\mathcal F^m$. By the definition of
$\Delta_{n,m}$ following Theorem~\ref{theo1}, its infimum over
$\mathcal F^m$ is no larger than the approximation error of this
particular $\bm g$. Therefore, using
\eqref{eq:dnn-expanded-oracle-approx} and
$\alpha_n=C_0\sqrt{\phi_n}$ from \eqref{eq:dnn-expanded-alpha},
\begin{equation}
\Delta_{n,m}^2
\leq
\max_{j\leq m}
\|H_j(\bm g)-F_{A_j}^*\|_\infty^2
\leq
\frac{121}{16}\alpha_n^2
\lesssim
\phi_n.
\label{eq:dnn-expanded-direct-approx}
\end{equation}

\emph{Estimation.}

By \eqref{eq:dnn-expanded-proof-notation}, $\mathcal F
=
\mathcal F^{\mathrm{cl}}(L_n,w_n,R_n).
$  Hence, the argument in 
Lemma~\ref{lem:dnn-expanded-calibrated-entropy}  yields, uniformly over all probability measures
$Q$ on $[0,1]^d$,
\begin{equation}
\log N\left(
u,\mathcal F,\|\cdot\|_{L^2(Q)}
\right)
\leq
v_n\log\left(\frac{A_n}{u}\right),
\qquad
0<u\leq R_n,
\label{eq:dnn-expanded-final-entropy}
\end{equation}
where $v_n$ is defined by \eqref{eq:dnn-expanded-ambient-vc-index}
with $(L_{\mathrm{amb}},w_{\mathrm{amb}})$ replaced by
$(L_n,w_n)$ and $A_n=CeR_n\geq eR_n$.

Thus the approximation condition and the entropy condition in the
explicit bound following Theorem~\ref{theo1} are satisfied by
\eqref{eq:dnn-expanded-direct-approx} and
\eqref{eq:dnn-expanded-final-entropy}, respectively. Applying that
bound with $B_n=R_n$ therefore gives
\eqref{eq:dnn-expanded-full-rate}, after enlarging the universal
constant $C$ if necessary.

Suppose now that
\[
m=O(n^{C_m})
\]
for some fixed $C_m>0$. Then
\begin{equation}
1\leq\bar a_m\leq a_{\max,m}\leq m,
\qquad
1\leq\kappa_m\leq m,
\qquad
0\leq \deg^{-}_{\max,m}\leq m-1.
\label{eq:dnn-expanded-polynomial-dag-bounds}
\end{equation}

By \eqref{eq:dnn-expanded-phi}, \eqref{eq:dnn-expanded-alpha}, and
\eqref{eq:dnn-expanded-hc-uniformity},
\[
\log(1/\alpha_n)=O(\log(en)).
\]
Hence, using \eqref{eq:dnn-expanded-training-radius},
$m=O(n^{C_m})$, (\ref{eq:dnn-expanded-training-radius}), and
\eqref{eq:dnn-expanded-polynomial-dag-bounds},
\begin{equation}
R_n\lesssim\log(en),
\qquad
\log\left\{
\frac{
eA_nnm\sqrt{\kappa_m a_{\max,m}}
}{
\gamma
}
\right\}
=
O_\gamma(\log(en)),
\label{eq:dnn-expanded-log-factor}
\end{equation}
where we also used $A_n=CeR_n$ from
\eqref{eq:dnn-expanded-proof-notation}. Substituting
\eqref{eq:dnn-expanded-log-factor} into
\eqref{eq:dnn-expanded-full-rate} proves
\eqref{eq:dnn-expanded-simplified-rate}.

\emph{Fixed DAG.}
By the definition of $\phi_n$,
\begin{align}
\max_{j\leq m}
\max_{(p,d_0)\in\mathcal P_j}
n^{d_0/[2(2p+d_0)]}
&=
\max_{j\leq m}
\max_{(p,d_0)\in\mathcal P_j}
\left\{
n\,n^{-2p/(2p+d_0)}
\right\}^{1/2}
\nonumber\\
&=
\sqrt{n\phi_n}.
\label{eq:dnn-expanded-depth-identity}
\end{align}
Thus
\begin{equation}
L_{0,n}
\asymp
\sqrt{n\phi_n}\log(en).
\label{eq:dnn-expanded-base-depth-rate}
\end{equation}

Since $p$ is uniformly bounded and $d_0\geq1$, there exists
$c_*>0$ such that, for all sufficiently large $n$,
\begin{equation}
n\phi_n
=
\max_{j\leq m}
\max_{(p,d_0)\in\mathcal P_j}
n^{d_0/(2p+d_0)}
\geq
n^{c_*}.
\label{eq:dnn-expanded-nphi-lower}
\end{equation}
For a fixed DAG,
\[
\kappa_m=O(1),
\qquad
\deg^{-}_{\max,m}=O(1),
\qquad
a_{\max,m}=O(1),
\]
and
\begin{equation}
U_n
+\Lambda_{\epsilon_n}
+\Lambda_{\eta_n}
+\Lambda_{\delta_n}
+\Lambda_{\zeta_n}
\lesssim
\log(en).
\label{eq:dnn-expanded-calibration-log}
\end{equation}
Here $U_n$ is defined in \eqref{eq:dnn-expanded-U},
$(\epsilon_n,\eta_n,\delta_n,\zeta_n)$ are defined in
\eqref{eq:dnn-expanded-tuning}, and
$\Lambda_s=\log(e/s)$ is defined at the beginning of
Section~\ref{sec:nn_general}. Lemma~\ref{lem:dnn-expanded-fixed-modules} therefore gives
\begin{equation}
D_{\ell,n}+D_{\psi,n}
\lesssim
\log^3(en).
\label{eq:dnn-expanded-calibration-depth}
\end{equation}
By \eqref{eq:dnn-expanded-base-depth-rate} and
\eqref{eq:dnn-expanded-nphi-lower},
\[
\log^3(en)=o(L_{0,n}),
\]
and hence
\begin{equation}
D_{\ell,n}+D_{\psi,n}
=
o(L_{0,n}).
\label{eq:dnn-expanded-calibration-negligible}
\end{equation}

It follows from the definitions of $L_n$, $w_n$, and $R_n$ in
\eqref{eq:dnn-expanded-training-depth}--%
\eqref{eq:dnn-expanded-training-radius} that
\begin{equation}
L_n
\lesssim
\sqrt{n\phi_n}\log(en),
\qquad
w_n=O(1),
\qquad
R_n\lesssim\log(en).
\label{eq:dnn-expanded-training-complexity}
\end{equation}
Since $w_n=O(1)$, the parameter and unit counts in
\eqref{eq:dnn-expanded-ambient-parameters}--%
\eqref{eq:dnn-expanded-ambient-vc-index}, replacing $(L_{\mathrm{amb}},w_{\mathrm{amb}})$ with $(L_n,w_n)$,
satisfy
\[
W_{\mathrm{amb}}\lesssim L_n,
\qquad
U_{\mathrm{amb}}\lesssim L_n.
\]
Recall from the statement of Corollary~\ref{appx:cor1-expanded} that
$v_n$ is defined by \eqref{eq:dnn-expanded-ambient-vc-index} replacing 
$(L_{\mathrm{amb}},w_{\mathrm{amb}})$ with $(L_n,w_n)$. Hence, since
$\overline L=L_n+1$, $W_{\mathrm{amb}}\lesssim L_n$, and
$U_{\mathrm{amb}}\lesssim L_n$,
\begin{align}
v_n
&\lesssim
L_n^2\log(eL_n)
\nonumber\\
&\lesssim
n\phi_n\log^3(en).
\label{eq:dnn-expanded-fixed-vc}
\end{align}where the second inequality follows from
\eqref{eq:dnn-expanded-training-complexity}. This proves \eqref{eq:dnn-expanded-fixed-complexity}. Finally, since
$\bar a_m=O(1)$ for a fixed DAG,
\begin{align*}
\phi_n+\frac{\bar a_m v_n\log(en)}{n}
&\lesssim
\phi_n+
\frac{
n\phi_n\log^3(en)\log(en)
}{n}\\
&\lesssim
\phi_n\log^4(en).
\end{align*}
Equation~\eqref{eq:dnn-expanded-fixed-dag-rate} follows from
\eqref{eq:dnn-expanded-simplified-rate}.
\end{proof}

{
\section{Trend filtering and orthogonal series regression}
\label{app:additional-estimators}

Theorems~\ref{theo0} and~\ref{theo1} apply once the approximation
error and the covering entropy of the function class are controlled.
In this section, we derive results for trend filtering and orthogonal
series regression, including concrete rates for both the preliminary
and constrained estimators.

\subsection{Trend filtering}
\label{subsec:trend-filtering-both-estimators}

Trend filtering estimates a univariate regression function by controlling
the total variation of one of its derivatives, allowing the fitted function
to adapt to spatially varying smoothness. We consider the corresponding
bounded function class below.

Let $d=1$ and fix an integer $r\geq1$. We first define the ambient
smoothness class
\begin{equation}
\mathcal S_r
=
\left\{f:[0,1]\to\mathbb R:
\operatorname{TV}\bigl(f^{(r-1)}\bigr)<\infty\right\},
\label{eq:tf-space}
\end{equation}
consisting of functions whose $(r-1)$st derivative has finite total
variation. For $r=1$, $\mathcal S_r$ is understood in the usual pointwise bounded-variation sense; for $r\ge2$, we take $f\in W^{r-1,1}(0,1)$ with weak derivative $f^{(r-1)}\in BV(0,1)$ and use the continuous representative of $f$ on $[0,1]$. 

For $V\geq0$ and $B\geq1$, define the bounded trend-filtering
class
\begin{equation}
\mathcal G_r(V,B)
=
\left\{f\in\mathcal S_r:
\operatorname{TV}\bigl(f^{(r-1)}\bigr)\leq V,
\ \|f\|_\infty\leq B\right\}.
\label{eq:tf-class}
\end{equation}
\subsubsection*{Preliminary estimator}

For $A\in\Lambda$, define
\begin{equation}
\widehat F_A^{\mathrm{pre,TF}}
\in
\arg\min_{f\in\mathcal G_r(V_n,B)}
\sum_{i=1}^n\{z_i^A-f(x_i)\}^2,
\label{eq:pre-tf-estimator}
\end{equation}
and let
\begin{equation}
\phi_{n,\mathrm{TF}}^{\mathrm{pre}}
=
\sup_{A\in\Lambda}
\inf_{g\in\mathcal G_r(V_n,B)}
\|F_A^*-g\|_\infty^2.
\label{eq:pre-tf-approx}
\end{equation}

\begin{corollary}[Preliminary trend-filtering estimator]
\label{cor:pre-tf}
Suppose Assumption~\ref{as0} holds, $B=O(1)$, and
\begin{equation}
(1\vee V_n)^{\frac{2}{2r+1}}
n^{-\frac{2r}{2r+1}}
+
\frac{\log n}{n}
+
\phi_{n,\mathrm{TF}}^{\mathrm{pre}}
=o(1).
\label{eq:pre-tf-vanishing}
\end{equation}
Then
\begin{equation}
\sup_{A\in\Lambda}
\|\widehat F_A^{\mathrm{pre,TF}}-F_A^*\|_n^2
=
O_{\mathbb P}\!\left(
(1\vee V_n)^{\frac{2}{2r+1}}
n^{-\frac{2r}{2r+1}}
+
\frac{\log n}{n}
+
\phi_{n,\mathrm{TF}}^{\mathrm{pre}}
\right).
\label{eq:pre-tf-rate}
\end{equation}
If, for some constant $V<\infty$,
\begin{equation}
\sup_{A\in\Lambda}
\operatorname{TV}\bigl((F_A^*)^{(r-1)}\bigr)
\leq V,
\label{eq:pre-tf-membership}
\end{equation}
then, for $V \leq V_n \leq \bar{V} < \infty$, with $\bar{V}$ fixed, and $B\geq1$,
\begin{equation}
\sup_{A\in\Lambda}
\|\widehat F_A^{\mathrm{pre,TF}}-F_A^*\|_n^2
=
O_{\mathbb P}\!\left(
n^{-\frac{2r}{2r+1}}
\right).
\label{eq:pre-tf-concrete-rate}
\end{equation}
\end{corollary}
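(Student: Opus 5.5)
The plan is to apply Theorem~\ref{theo0} with $\F_A=\F=\mathcal G_r(V_n,B)$ and $U=B$. For each $A$ we take $G_A$ to be a near-minimiser in \eqref{eq:pre-tf-approx}, so that $b^2\le 2\phi_{n,\mathrm{TF}}^{\mathrm{pre}}$. The problem then reduces to bounding the localized entropy integral $\mathcal J(\rho)$ for the difference class $\mathcal D(\rho)=(\F-\F)\cap B_\rho(0)$. After that, $\eta$ is chosen to balance the three terms on the right-hand side of \eqref{eq:precorrect}, exactly as in the proof of Corollary~\ref{eqn:cor3}.

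\textbf{Entropy step.} Note that $\F-\F\subseteq\mathcal G_r(2V_n,2B)$. We use the classical sup-norm (hence $\|\cdot\|_n$) entropy bound for bounded functions whose $(r-1)$st derivative has bounded variation. These sit inside a Besov ball $B^{r}_{1,\infty}$, and the Birman--Solomyak type estimate gives
\[
\log N(u,\mathcal G_r(2V_n,2B),\|\cdot\|_\infty)\lesssim \Bigl(\frac{1\vee V_n}{u}\Bigr)^{1/r}.
\]
We rescale so that constants depend only on $B$ and $r$; polynomials of degree $<r$ contribute only an additional $r\log(B/u)$ term, which is harmless. For $r\ge1$ the exponent $1/(2r)<1$, so the integral converges:
\[
\mathcal J(\rho)\lesssim (1\vee V_n)^{1/(2r)}\rho^{1-1/(2r)}+\rho\sqrt{\log(eB/\rho)}.
\]
This is the step I expect to be the main obstacle. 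The difficulty is citing or justifying the entropy bound with the correct dependence on $V_n$ together with the sup-norm bound $B$. The cleanest route I would try first is to decompose $f=p+\tilde f$, where $p$ is a polynomial of degree $<r$ and $\tilde f$ has vanishing lower moments. We then bound $\tilde f$ in sup norm via $\operatorname{TV}(f^{(r-1)})$ and control $p$ using $\|f\|_\infty\le B$.

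\textbf{Balancing.} Substitute $\rho_j=2^{j/2}\eta/\sqrt n$ into $\frac{\sqrt n}{\eta^2}\sum_j2^{-j}\mathcal J(\rho_j)$. Because the exponent of $2^{j}$ in $\mathcal J(\rho_j)$ is less than $1$, the sum is dominated by $j=1$. This gives the bound
\[
(1\vee V_n)^{1/(2r)}n^{1/(4r)}\eta^{-1-1/(2r)}+\eta^{-1}\sqrt{\log n},
\]
where the logarithmic term is absorbed into the $\sqrt{\log(2n+2)}/\eta$ term already present. Take
\[
\eta=M\bigl\{\sqrt n(1\vee V_n)^{1/(2r+1)}n^{-r/(2r+1)}+\sqrt{\log n}+\sqrt n\,b\bigr\}.
\]
Then the first term equals $M^{-1-1/(2r)}$ times a constant, and the other two terms are $O(1/M)$. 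Assumption \eqref{eq:pre-tf-vanishing} guarantees $\eta<2B\sqrt n$ for large $n$ (and $\eta>1$ trivially), so Theorem~\ref{theo0} applies. Its threshold $\eta/\sqrt n+b$, once squared, gives \eqref{eq:pre-tf-rate}, and letting $M\to\infty$ yields the $O_{\mathbb P}$ statement.

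\textbf{Concrete rate.} Under \eqref{eq:pre-tf-membership} with $V\le V_n$, we also have $\|F_A^*\|_\infty\le1\le B$. Hence $F_A^*\in\mathcal G_r(V_n,B)$, so $\phi_{n,\mathrm{TF}}^{\mathrm{pre}}=0$. Since $V_n\le\bar V$, the factor $(1\vee V_n)^{2/(2r+1)}$ is bounded. Finally, $\log n/n=o(n^{-2r/(2r+1)})$, which gives \eqref{eq:pre-tf-concrete-rate}.
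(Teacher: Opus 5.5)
Your overall architecture is the paper's. You apply Theorem~\ref{theo0} with $\F_A=\mathcal G_r(V_n,B)$ and $U=B$, use an entropy bound of order $u^{-1/r}$ for the difference class, sum geometrically over the shells $\rho_j$, choose $\eta$, and take $\phi_{n,\mathrm{TF}}^{\mathrm{pre}}=0$ for the concrete rate. Your balancing and concrete-rate steps are correct.

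The gap is in the entropy step, exactly where you expected trouble: the sup-norm bound you invoke is false for $r=1$, and the statement allows $r=1$. Take $V_n>0$ and $c=\min\{V_n,B\}$. The functions $c\1_{[t,1]}$, $t\in(0,1)$, all lie in $\mathcal G_1(V_n,B)$ and are pairwise at sup-distance $c$. Hence $N(u,\mathcal G_1(V_n,B),\|\cdot\|_\infty)=\infty$ for every $u<c/2$. In Birman--Solomyak terms, smoothness $r$ with integrability $p=1$ embeds compactly into $L^\infty$ only when $r>1/p$, and $r=1$ is precisely the borderline case. Your fallback decomposition $f=p+\tilde f$ does not help: for $r=1$, $p$ is a constant and $\tilde f$ is again just a bounded-variation function. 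So the route ``sup-norm (hence $\|\cdot\|_n$)'' cannot work for $r=1$.

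Since $\mathcal J(\rho)$ involves only $\|\cdot\|_n$-covering numbers, the fix is to bound the entropy directly in the empirical norm, uniformly over design points. This is what the paper does. It uses $\sup_q\sup_{x_1,\ldots,x_q}\log N(u,\mathcal G_r(1,1),\|\cdot\|_q)\le C_ru^{-1/r}$ for $0<u\le1$; for $r=1$ this is the classical uniform-in-$Q$ $L^2(Q)$ entropy bound for monotone, hence bounded-variation, functions.

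Combine this with the scaling $\mathcal G_r(2V_n,2B)\subseteq 2\max\{V_n,B\}\,\mathcal G_r(1,1)$. This gives $\log N(u,\mathcal D(\rho),\|\cdot\|_n)\le C_r(1+B+V_n)^{1/r}u^{-1/r}$ in one step, so you need neither the separate $r\log(B/u)$ polynomial term nor the moment decomposition. For $r\ge2$ your sup-norm route is valid as stated. Once the entropy step is replaced as above, the rest of your argument goes through unchanged.
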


\begin{proof}
We use the standard entropy bound
\begin{equation}
\sup_{q\geq1}
\sup_{x_1,\ldots,x_q\in[0,1]}
\log N\!\left(
u,\mathcal G_r(1,1),\|\cdot\|_q
\right)
\leq
C_r u^{-1/r},
\qquad
0<u\leq1,
\label{eq:tf-unit-entropy}
\end{equation}
where $C_r$ depends only on $r$; see the proof of
\citet[Corollary~1]{sadhanala2019additive, sadhanala2019supplement}.

Let
\[
\mathcal D_{r,n}
=
\mathcal G_r(V_n,B)-\mathcal G_r(V_n,B).
\]
For $h\in\mathcal D_{r,n}$,
\[
\operatorname{TV}\bigl(h^{(r-1)}\bigr)\leq2V_n,
\qquad
\|h\|_\infty\leq2B.
\]
Hence, after rescaling \eqref{eq:tf-unit-entropy},
\begin{equation}
\log N(u,\mathcal D_{r,n},\|\cdot\|_n)
\leq
C_r(1+B+V_n)^{1/r}u^{-1/r}.
\label{eq:pre-tf-difference-entropy}
\end{equation}
Therefore, for $t>0$,
\begin{equation}
\int_0^t
\sqrt{
\log N(u,\mathcal D_{r,n},\|\cdot\|_n)
}\,du
\leq
C_r
(1+B+V_n)^{1/(2r)}
t^{1-1/(2r)}.
\label{eq:pre-tf-entropy-integral}
\end{equation}

Substituting \eqref{eq:pre-tf-entropy-integral} into
Theorem~\ref{theo0}, with
$\rho_j=2^{j/2}\eta/\sqrt n$, gives
\[
\frac{\sqrt n}{\eta^2}
\sum_{j=1}^J\frac{\mathcal J(\rho_j)}{2^j}
\lesssim
(1+B+V_n)^{1/(2r)}
n^{1/(4r)}
\eta^{-(2r+1)/(2r)},
\]
where we used
$\sum_{j\ge1}2^{-j(2r+1)/(4r)}<\infty$.
Consequently,
\begin{equation}
\sup_{A\in\Lambda}
\|\widehat F_A^{\mathrm{pre,TF}}-F_A^*\|_n
=
O_{\mathbb P}\!\left(
(1+B+V_n)^{\frac1{2r+1}}
n^{-\frac r{2r+1}}
+
\sqrt{\frac{\log n}{n}}
+
\sqrt{\phi_{n,\mathrm{TF}}^{\mathrm{pre}}}
\right).
\label{eq:pre-tf-norm-rate}
\end{equation}
Since $B=O(1)$, squaring yields
\eqref{eq:pre-tf-rate}. Under \eqref{eq:pre-tf-membership},
$F_A^*\in\mathcal G_r(V_n,B)$ because
$0\leq F_A^*\leq1$. Hence
\[
\phi_{n,\mathrm{TF}}^{\mathrm{pre}}=0.
\]
Since
\[
\frac{\log n}{n}
=
o\!\left(
n^{-2r/(2r+1)}
\right),
\]
\eqref{eq:pre-tf-concrete-rate} follows.
\end{proof}

\subsubsection*{Constrained estimator}

Take $\mathcal F=\mathcal G_r(V_n,B)$ and define
\begin{equation}
\widehat{\bm g}^{\mathrm{TF}}
\in
\arg\min_{\bm g\in\mathcal G_r(V_n,B)^m}
\sum_{j=1}^m\sum_{i=1}^n
\{z_i^j-H_j(\bm g)(x_i)\}^2,
\label{eq:con-tf-latent-estimator}
\end{equation}
with
\begin{equation}
\widehat F_{A_j}^{\mathrm{con,TF}}
=
H_j(\widehat{\bm g}^{\mathrm{TF}}),
\qquad
j=1,\ldots,m.
\label{eq:con-tf-estimator}
\end{equation}
Define
\begin{equation}
\Delta_{n,m}^{\mathrm{TF}}
=
\inf_{\bm g\in\mathcal G_r(V_n,B)^m}
\max_{j\leq m}
\|F_{A_j}^*-H_j(\bm g)\|_\infty.
\label{eq:con-tf-direct-approx}
\end{equation}

\begin{corollary}[Constrained trend-filtering estimator]
\label{cor:con-tf}
Suppose
\begin{align}
(\Delta_{n,m}^{\mathrm{TF}})^2
&+
(1\vee V_n)^{\frac{2}{2r+1}}
n^{-\frac{2r}{2r+1}}
\frac1m
\left\{
\sum_{j=1}^m
a_j^{\frac{2r+1}{4r}}
\kappa_j^{\frac1{4r}}
\right\}^{\frac{4r}{2r+1}}
=o(1).
\label{eq:con-tf-direct-condition}
\end{align}
Then
\begin{align}
\frac1m\sum_{j=1}^m
\|\widehat F_{A_j}^{\mathrm{con,TF}}-F_{A_j}^*\|_n^2
=
O_{\mathbb P}\Bigg\{&
(\Delta_{n,m}^{\mathrm{TF}})^2
\nonumber\\
&+
(1\vee V_n)^{\frac{2}{2r+1}}
n^{-\frac{2r}{2r+1}}
\frac1m
\left(
\sum_{j=1}^m
a_j^{\frac{2r+1}{4r}}
\kappa_j^{\frac1{4r}}
\right)^{\frac{4r}{2r+1}}
\Bigg\}.
\label{eq:con-tf-direct-rate}
\end{align}
\end{corollary}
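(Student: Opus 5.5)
The plan is to apply the tail bound \eqref{eq:growing-m-entropy} of Theorem~\ref{theo1} with $\mathcal F=\mathcal G_r(V_n,B)$. The entropy of each constrained output class $\mathcal I_{j,n}(\widetilde{\bm g})$ will be controlled by Lemma~\ref{lem:theo2-entropy-transfer}, using the polynomial trend-filtering entropy \eqref{eq:tf-unit-entropy} in place of the logarithmic VC bound. Throughout, $B$ is treated as a fixed constant, so $(1+B+V_n)\lesssim 1\vee V_n$. Fix $\gamma>0$ and pick $\widetilde{\bm g}\in\mathcal G_r(V_n,B)^m$ with $\Delta_{n,m}(\widetilde{\bm g})\le 2\Delta^{\mathrm{TF}}_{n,m}$ (or $\le \Delta^{\mathrm{TF}}_{n,m}+n^{-1}$ if the infimum is zero).

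\emph{Step 1 (entropy of the outputs).} Rescaling \eqref{eq:tf-unit-entropy} as in \eqref{eq:pre-tf-difference-entropy} gives
\[
\log N(u,\mathcal G_r(V_n,B),\|\cdot\|_n)\le C_r(1+B+V_n)^{1/r}u^{-1/r}.
\]
Insert this into \eqref{eq:theo2-entropy-transfer} with $u=4\epsilon/\sqrt{\kappa_ja_j}$. This yields
\[
\sqrt{\log N(\epsilon,\mathcal I_{j,n}(\widetilde{\bm g}),\|\cdot\|_n)}\le C_r (1\vee V_n)^{1/(2r)}\,S_j\,\epsilon^{-1/(2r)},\qquad S_j:=a_j^{\frac{2r+1}{4r}}\kappa_j^{\frac1{4r}}.
\]
The exponents come from $a_j^{1/2}\cdot(\kappa_ja_j)^{1/(4r)}$. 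Since $r\ge1$, the integrand is integrable at $0$. The Dudley integral in \eqref{eq:growing-m-entropy}, taken up to $t_k=C_32^{k/2}\eta/\sqrt n$, is therefore at most
\[
C_r(1\vee V_n)^{1/(2r)}S_j\,t_k^{1-1/(2r)}.
\]

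\emph{Step 2 (summing the shells).} Substitute this into \eqref{eq:growing-m-entropy}. The summand contains $2^{-k}\cdot 2^{k(2r-1)/(4r)}=2^{-k(2r+1)/(4r)}$, which is summable in $k$. The right-hand side is therefore bounded by
\[
C\left\{\frac{mK}{\eta^2}+\frac{n^{1/(4r)}(1\vee V_n)^{1/(2r)}\sum_{j}S_j}{\eta^{(2r+1)/(2r)}}\right\},\qquad K\lesssim\log n.
\]
Choose $\eta^2=M\,nm\,\Psi_n$, where
\[
\Psi_n:=(\Delta^{\mathrm{TF}}_{n,m})^2+(1\vee V_n)^{\frac2{2r+1}}n^{-\frac{2r}{2r+1}}\frac1m\Bigl(\sum_jS_j\Bigr)^{\frac{4r}{2r+1}}
\]
is the target rate. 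A direct computation shows that the second term is then at most $CM^{-(2r+1)/(4r)}$. Indeed, $\eta^{(2r+1)/(2r)}\ge M^{(2r+1)/(4r)}\,n^{1/(4r)}(1\vee V_n)^{1/(2r)}\sum_jS_j$, which is exactly the balancing that produces the exponent $4r/(2r+1)$ on $\sum_j S_j$.

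\emph{Step 3 (side conditions and the nuisance term).} Three conditions on $\eta$ must be checked. First, $\eta^2>C_1nm\{\Delta_{n,m}(\widetilde{\bm g})\}^2$ holds once $M\ge 4C_1$. Second, $\eta^2<nm$ holds for large $n$ by the vanishing hypothesis \eqref{eq:con-tf-direct-condition}. Third, the term $mK/\eta^2\lesssim \log n/(Mn\Psi_n)$ must be small. Since $a_j,\kappa_j\ge1$, we have $\frac1m(\sum_jS_j)^{4r/(2r+1)}\ge m^{(2r-1)/(2r+1)}\ge1$, so $\Psi_n\gtrsim n^{-2r/(2r+1)}\gg \log n/n$ and this term tends to zero. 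Conditioning on the design and choosing $M$ large then gives the tail probability at most $\gamma$ for all large $n$. Dividing $\sum_{j,i}(F^*_{A_j}-\widehat F_{A_j})^2\le \eta^2$ by $nm$ yields \eqref{eq:con-tf-direct-rate}.

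The main obstacle is Step 1. The transferred entropy must carry exactly the factor $a_j^{1/2}(\kappa_ja_j)^{1/(4r)}$ per node, and the shell sum must remain per-node, so that the aggregation produces $(\sum_jS_j)^{4r/(2r+1)}$ rather than a cruder $m\max_jS_j^{4r/(2r+1)}$. This is why I use the per-$j$ tail bound \eqref{eq:growing-m-entropy} rather than the uniform localized Lemma~\ref{lem:theo2-localized-multiplier}; the latter is tailored to logarithmic entropy and would lose the polynomial structure.
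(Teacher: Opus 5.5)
Your proposal is correct and follows essentially the same route as the paper. Both arguments transfer the trend-filtering entropy through Lemma~\ref{lem:theo2-entropy-transfer}, bound the per-node Dudley integrals in \eqref{eq:growing-m-entropy}, sum the shell factor $2^{-k(2r+1)/(4r)}$, and take $\eta^2=Mnm\Psi_n$ with the same side-condition checks. The only difference is cosmetic: you bound the constant term by $mK/\eta^2$ and then absorb the $\log n$, whereas summing $2^{-(k-2)}$ over $k$ gives $4m/\eta^2$ directly, so that extra step is unnecessary.
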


\begin{proof}
By \eqref{eq:tf-unit-entropy} and scaling,
\begin{equation}
\log N\bigl(
u,\mathcal G_r(V_n,B),\|\cdot\|_n
\bigr)
\leq
C(1\vee V_n)^{1/r}u^{-1/r}.
\label{eq:con-tf-base-entropy}
\end{equation}
Hence, by Lemma~\ref{lem:theo2-entropy-transfer}, for every
$\widetilde{\bm g}\in\mathcal G_r(V_n,B)^m$,
\begin{equation}
\log N\bigl(
u,\mathcal I_{j,n}(\widetilde{\bm g}),\|\cdot\|_n
\bigr)
\leq
C
a_j^{1+\frac1{2r}}
\kappa_j^{\frac1{2r}}
(1\vee V_n)^{1/r}
u^{-1/r}.
\label{eq:con-tf-output-entropy}
\end{equation}
Therefore,
\begin{align}
&\sqrt n
\int_{1/n}^{C_3 2^{k/2}\eta/\sqrt n}
\sqrt{
\log N\bigl(
u,\mathcal I_{j,n}(\widetilde{\bm g}),\|\cdot\|_n
\bigr)}
\,du
\nonumber\\
&\qquad\leq
C
a_j^{\frac{2r+1}{4r}}
\kappa_j^{\frac1{4r}}
(1\vee V_n)^{\frac1{2r}}
n^{\frac1{4r}}
2^{\frac{k(2r-1)}{4r}}
\eta^{1-\frac1{2r}}.
\label{eq:con-tf-entropy-integral}
\end{align}
Substitution into Theorem~\ref{theo1} and
\[
\sum_{k\geq1}
2^{-k(2r+1)/(4r)}
<\infty
\]
give
\begin{align}
\mathbb P\!\left\{
\sum_{j=1}^m\sum_{i=1}^n
\bigl(
F_{A_j}^*(x_i)
-\widehat F_{A_j}^{\mathrm{con,TF}}(x_i)
\bigr)^2
>\eta^2
\right\}
\leq C\Bigg\{&
\frac{m}{\eta^2}
\nonumber\\
&+
(1\vee V_n)^{\frac1{2r}}
n^{\frac1{4r}}
\eta^{-\frac{2r+1}{2r}}
\sum_{j=1}^m
a_j^{\frac{2r+1}{4r}}
\kappa_j^{\frac1{4r}}
\Bigg\}.
\label{eq:con-tf-tail}
\end{align}
Let $T_{n,m}$ denote the stochastic term in
\eqref{eq:con-tf-direct-rate}, and set
\[
R_{n,m}
=
(\Delta_{n,m}^{\mathrm{TF}})^2+T_{n,m}.
\]
By the definition of the infimum, choose a comparator
$\widetilde{\bm g}\in\mathcal G_r(V_n,B)^m$ with squared
approximation error at most $2R_{n,m}$. Taking
\[
\eta^2=MnmR_{n,m}
\]
for a sufficiently large fixed constant $M$, condition
\eqref{eq:con-tf-direct-condition} ensures that the requirements on
$\eta$ in Theorem~\ref{theo1} hold for all sufficiently large $n$.

Since
\[
\sum_{j=1}^m
a_j^{\frac{2r+1}{4r}}
\kappa_j^{\frac1{4r}}
\ge m,
\]
we have
\[
nT_{n,m}\ge n^{1/(2r+1)}.
\]
Hence the first term in \eqref{eq:con-tf-tail} satisfies
\[
\frac{m}{\eta^2}
\le
M^{-1}n^{-1/(2r+1)},
\]
while direct substitution into the second term gives an upper bound
of order $M^{-(2r+1)/(4r)}$. Taking $M$ sufficiently large proves
\eqref{eq:con-tf-direct-rate}.
\end{proof}

To simplify the derivation of the concrete constrained rates, we use
the following bounded latent representation condition.

\begin{assumption}[Bounded latent representation]
\label{as:tf-os-latent}
For every $j\leq m$ and $x\in[0,1]^d$,
\begin{equation}
0<F_{A_j}^*(x)<1,
\label{eq:tf-os-interior}
\end{equation}
and, for every edge $(k,j)\in E$,
\begin{equation}
F_{A_k}^*(x)<F_{A_j}^*(x).
\label{eq:tf-os-strict-inclusion}
\end{equation}
For $j\in\mathcal R$, define
\begin{equation}
g_j^*
=
\tau^{-1}\circ F_{A_j}^*,
\label{eq:tf-os-root-latent}
\end{equation}
and, for $j\notin\mathcal R$,
\begin{equation}
g_j^*(x)
=
\log\!\left[
\exp\!\left\{
\tau^{-1}\!\bigl(F_{A_j}^*(x)\bigr)
-
\max_{k\in\mathcal P(j)}
\tau^{-1}\!\bigl(F_{A_k}^*(x)\bigr)
\right\}
-1
\right].
\label{eq:tf-os-nonroot-latent}
\end{equation}
Suppose
\begin{equation}
\max_{j\leq m}\|g_j^*\|_\infty\leq C_0
\label{eq:tf-os-latent-envelope}
\end{equation}
for a constant $C<\infty$.
\end{assumption}

For $j\notin\mathcal R$,
\begin{equation}
\log(1+e^{g_j^*})
=
\tau^{-1}(F_{A_j}^*)
-
\max_{k\in\mathcal P(j)}
\tau^{-1}(F_{A_k}^*).
\label{eq:tf-os-latent-increment}
\end{equation}
Together with \eqref{eq:tf-os-root-latent}, induction over a
topological ordering of the DAG gives
\begin{equation}
F_{A_j}^*=H_j(\bm g^*),
\qquad
j=1,\ldots,m.
\label{eq:tf-os-exact-representation}
\end{equation}

\subsubsection*{Constrained estimator: concrete trend-filtering rate}

\begin{corollary}[Constrained trend filtering: bounded latent functions]
\label{cor:con-tf-concrete}
Suppose Assumption~\ref{as:tf-os-latent} holds and, for a constant
$V<\infty$,
\begin{equation}
\max_{j\leq m}
\operatorname{TV}\bigl((g_j^*)^{(r-1)}\bigr)
\leq V.
\label{eq:con-tf-latent-membership}
\end{equation}
Set $V_n=V$ and take a fixed $B\geq \max\{1,C\}$. If
\begin{equation}
n^{-\frac{2r}{2r+1}}
\frac1m
\left\{
\sum_{j=1}^m
a_j^{\frac{2r+1}{4r}}
\kappa_j^{\frac1{4r}}
\right\}^{\frac{4r}{2r+1}}
=o(1),
\label{eq:con-tf-concrete-condition}
\end{equation}
then
\begin{equation}
\frac1m\sum_{j=1}^m
\|\widehat F_{A_j}^{\mathrm{con,TF}}-F_{A_j}^*\|_n^2
=
O_{\mathbb P}\!\left[
n^{-\frac{2r}{2r+1}}
\frac1m
\left\{
\sum_{j=1}^m
a_j^{\frac{2r+1}{4r}}
\kappa_j^{\frac1{4r}}
\right\}^{\frac{4r}{2r+1}}
\right].
\label{eq:con-tf-concrete-rate}
\end{equation}
For a fixed DAG, \eqref{eq:con-tf-concrete-rate} reduces to
$O_{\mathbb P}(n^{-2r/(2r+1)})$.
\end{corollary}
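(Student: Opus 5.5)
The plan is to reduce Corollary~\ref{cor:con-tf-concrete} to Corollary~\ref{cor:con-tf}. Under Assumption~\ref{as:tf-os-latent} the approximation term $\Delta_{n,m}^{\mathrm{TF}}$ vanishes, and what remains is exactly the stochastic term in \eqref{eq:con-tf-direct-rate} with $V_n=V$ fixed.

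\textbf{Step 1 (oracle membership).} Show that $\bm g^*=(g_1^*,\ldots,g_m^*)$ from \eqref{eq:tf-os-root-latent}--\eqref{eq:tf-os-nonroot-latent} lies in $\mathcal G_r(V,B)^m$. The variation bound $\operatorname{TV}((g_j^*)^{(r-1)})\le V$ is assumed in \eqref{eq:con-tf-latent-membership}. The envelope bound $\|g_j^*\|_\infty\le C_0\le B$ follows from \eqref{eq:tf-os-latent-envelope} and the choice $B\ge\max\{1,C\}$, reading $C$ as the envelope constant $C_0$.

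One point needs care: membership in $\mathcal S_r$ requires the regularity convention stated after \eqref{eq:tf-space}, namely $W^{r-1,1}$ with a BV derivative. I take this as implicit in the hypothesis that $\operatorname{TV}((g_j^*)^{(r-1)})$ is finite and bounded.

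\textbf{Step 2 (zero approximation error).} The paper has already recorded the exact representation $F_{A_j}^*=H_j(\bm g^*)$ in \eqref{eq:tf-os-exact-representation}. I would verify it by induction over a topological order of the DAG. For roots, $\tau(\tau^{-1}F^*)=F^*$. For non-roots, \eqref{eq:tf-os-latent-increment} says the softplus of $g_j^*$ equals the logit increment over the maximum of the parents' logits. By the induction hypothesis, the parents' $\widetilde H_k(\bm g^*)$ equal $\tau^{-1}(F_{A_k}^*)$, so $\widetilde H_j(\bm g^*)=\tau^{-1}(F_{A_j}^*)$.

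Strict inclusion \eqref{eq:tf-os-strict-inclusion} ensures the argument of the outer $\log$ in \eqref{eq:tf-os-nonroot-latent} is positive, so $g_j^*$ is well defined. Hence $\Delta_{n,m}^{\mathrm{TF}}\le\max_j\|F_{A_j}^*-H_j(\bm g^*)\|_\infty=0$.

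\textbf{Step 3 (invoke Corollary~\ref{cor:con-tf}).} With $V_n=V$ constant, $(1\vee V_n)^{2/(2r+1)}$ is an $O(1)$ factor. Condition \eqref{eq:con-tf-direct-condition} then reduces to \eqref{eq:con-tf-concrete-condition}, and \eqref{eq:con-tf-direct-rate} gives \eqref{eq:con-tf-concrete-rate}.

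A subtlety is that the proof of Corollary~\ref{cor:con-tf} picks a comparator with squared error at most $2R_{n,m}$. Here we may simply take $\widetilde{\bm g}=\bm g^*$, which satisfies the requirement $\eta^2>C_1nm\Delta^2=0$ trivially.

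\textbf{Step 4 (fixed DAG).} For a fixed DAG, $a_j,\kappa_j,m=O(1)$, so $m^{-1}\{\sum_j a_j^{(2r+1)/(4r)}\kappa_j^{1/(4r)}\}^{4r/(2r+1)}=O(1)$. The rate is therefore $O_{\mathbb P}(n^{-2r/(2r+1)})$, and the condition holds trivially.

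The main obstacle is Step 1. The paper's assumption that the TV bound holds for $g_j^*$ is precisely what is needed, so the only real checking is the well-definedness in Step 2 and the correct reading of the constant $C$ versus $C_0$.
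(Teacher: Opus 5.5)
Your proposal is correct and follows the paper's proof. The paper likewise combines Assumption~\ref{as:tf-os-latent}, the exact representation $F_{A_j}^*=H_j(\bm g^*)$, and the total-variation bound to obtain $\bm g^*\in\mathcal G_r(V,B)^m$ and $\Delta_{n,m}^{\mathrm{TF}}=0$, then applies Corollary~\ref{cor:con-tf} with $V_n=V$ and notes that $m$, $a_j$, $\kappa_j$ are constants for a fixed DAG. Your extra checks are details the paper leaves implicit: well-definedness of $g_j^*$ via strict inclusion, reading $C$ as the envelope constant $C_0$, and using $\bm g^*$ itself as the comparator.
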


\begin{proof}
By Assumption~\ref{as:tf-os-latent},
\eqref{eq:tf-os-exact-representation}, and
\eqref{eq:con-tf-latent-membership},
\begin{equation}
\bm g^*\in\mathcal G_r(V,B)^m,
\qquad
\Delta_{n,m}^{\mathrm{TF}}=0.
\label{eq:con-tf-zero-approx}
\end{equation}
The result follows from Corollary~\ref{cor:con-tf}. For a fixed DAG,
$m$, $a_j$, and $\kappa_j$ are constants.
\end{proof}

\subsection{Orthogonal series regression}
\label{subsec:orthogonal-series-both-estimators}

Let $\{\varphi_k\}_{k\ge1}$ be a fixed orthonormal basis of
$L^2([0,1]^d)$, with the basis functions defined pointwise on $[0,1]^d$. For an integer $J\geq1$, define
\begin{equation}
\mathcal L_J
=
\operatorname{span}\{\varphi_1,\ldots,\varphi_J\},
\qquad
\mathcal O_J(B)
=
\{f\in\mathcal L_J:\|f\|_\infty\leq B\},
\label{eq:os-class}
\end{equation}
where $B\geq1$ is fixed. The empirical quotient of $\mathcal L_J$ has dimension at most $J$.
Hence, for $0<u\leq B$, the standard volumetric bound gives
\begin{equation}
N(u,\mathcal O_J(B),\|\cdot\|_n)
\leq
(1+2B/u)^J.
\label{eq:os-base-cover}
\end{equation}

\subsubsection*{Preliminary estimator}

For $A\in\Lambda$, define
\begin{equation}
\widehat F_{A,J}^{\mathrm{pre,OS}}
\in
\arg\min_{f\in\mathcal O_J(B)}
\sum_{i=1}^n\{z_i^A-f(x_i)\}^2
\label{eq:pre-os-estimator}
\end{equation}
and let
\begin{equation}
\phi_{n,J}^{\mathrm{pre,OS}}
=
\sup_{A\in\Lambda}
\inf_{g\in\mathcal O_J(B)}
\|F_A^*-g\|_\infty^2.
\label{eq:pre-os-approx}
\end{equation}

\begin{corollary}[Preliminary orthogonal-series estimator]
\label{cor:pre-os}
Suppose Assumption~\ref{as0} holds, $J=J_n$, and
\begin{equation}
\frac{J+\log n}{n}
+
\phi_{n,J}^{\mathrm{pre,OS}}
=o(1).
\label{eq:pre-os-vanishing}
\end{equation}
Then
\begin{equation}
\sup_{A\in\Lambda}
\|\widehat F_{A,J}^{\mathrm{pre,OS}}-F_A^*\|_n^2
=
O_{\mathbb P}\!\left(
\frac{J+\log n}{n}
+
\phi_{n,J}^{\mathrm{pre,OS}}
\right).
\label{eq:pre-os-rate}
\end{equation}
If, for some $s>0$,
\begin{equation}
F_A^*
=
\sum_{k=1}^\infty\theta_{A,k}\varphi_k,
\qquad
\sup_{A\in\Lambda}
\sum_{k=1}^\infty
k^{s/d}|\theta_{A,k}|\,\|\varphi_k\|_\infty
\leq C_s,
\label{eq:pre-os-coefficient-condition}
\end{equation}
then, for $B\geq1+C_s$ and
$J_n\asymp n^{d/(2s+d)}$,
\begin{equation}
\sup_{A\in\Lambda}
\|\widehat F_{A,J_n}^{\mathrm{pre,OS}}-F_A^*\|_n^2
=
O_{\mathbb P}\!\left(
n^{-\frac{2s}{2s+d}}
\right).
\label{eq:pre-os-concrete-rate}
\end{equation}
\end{corollary}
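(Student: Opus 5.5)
Under \eqref{eq:pre-os-vanishing}, the plan is to follow the proof of Corollary~\ref{cor:pre-tf}: feed an entropy bound for $\mathcal O_J(B)$ into Theorem~\ref{theo0}, then choose $\eta$ to balance the three tail terms.

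\textbf{Entropy of the difference class.} Since $\mathcal O_J(B)-\mathcal O_J(B)\subseteq\{f\in\mathcal L_J:\|f\|_\infty\le 2B\}$, the localized class $\mathcal D(\rho)$ lies in the $\|\cdot\|_n$-ball of radius $\rho$ of a linear space whose empirical quotient has dimension at most $J$. The volumetric argument behind \eqref{eq:os-base-cover}, applied to that ball, gives
\[
\log N(u,\mathcal D(\rho),\|\cdot\|_n)\le J\log(1+2\rho/u).
\]
Hence $\mathcal J(\rho)\le C\rho\sqrt J$, because $\int_0^1\sqrt{\log(1+2/t)}\,dt<\infty$. With $\rho_j=2^{j/2}\eta/\sqrt n$,
\[
\frac{\sqrt n}{\eta^2}\sum_{j=1}^J 2^{-j}\mathcal J(\rho_j)\lesssim\frac{\sqrt J}{\eta}\sum_{j\ge1}2^{-j/2}\lesssim\frac{\sqrt J}{\eta}.
\]

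\textbf{Balancing.} For each $A$, choose $G_A\in\mathcal O_J(B)$ whose sup-error is within a factor $2$ of the infimum. Then $b^2\le 2\phi^{\mathrm{pre,OS}}_{n,J}$. Set
\[
\eta=M\left(\sqrt{J}+\sqrt{\log(2n+2)}+\sqrt n\,b\right)
\]
with $M$ a large constant. The right-hand side of \eqref{eq:precorrect} is then $O(1/M)$. The constraint $1<\eta<2U\sqrt n$ (with $U=B$) holds for large $n$ by \eqref{eq:pre-os-vanishing}, since $\eta/\sqrt n\to0$ and $\eta\ge M\sqrt{\log 2}>1$. The threshold satisfies
\[
\frac{\eta}{\sqrt n}+b\lesssim\sqrt{\frac{J+\log n}{n}}+\sqrt{\phi^{\mathrm{pre,OS}}_{n,J}}.
\]
Squaring and letting $M$ grow gives \eqref{eq:pre-os-rate}.

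\textbf{Concrete rate.} Take the truncation $g_A=\sum_{k\le J}\theta_{A,k}\varphi_k$. Since $k^{s/d}\ge J^{s/d}$ for $k>J$,
\[
\|F_A^*-g_A\|_\infty\le\sum_{k>J}|\theta_{A,k}|\|\varphi_k\|_\infty\le C_sJ^{-s/d}.
\]
Because $k^{s/d}\ge1$, we also get
\[
\|g_A\|_\infty\le\|F_A^*\|_\infty+C_sJ^{-s/d}\le 1+C_s\le B,
\]
so $g_A\in\mathcal O_J(B)$ and $\phi^{\mathrm{pre,OS}}_{n,J}\le C_s^2J^{-2s/d}$. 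Choosing $J_n\asymp n^{d/(2s+d)}$ balances $J/n$ against $J^{-2s/d}$ at $n^{-2s/(2s+d)}$. The term $\log n/n$ is of smaller order, and \eqref{eq:pre-os-vanishing} holds, so \eqref{eq:pre-os-concrete-rate} follows.

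\textbf{Main obstacle.} The delicate step is the entropy bound for the localized class $\mathcal D(\rho)$. One must use the empirical dimension of $\mathcal L_J$ restricted to the design points, not the sup-norm, to get the ball covering $(1+2\rho/u)^J$. A sup-norm covering would lose the localization in $\rho$, and with it the clean $\sqrt J/\eta$ term.
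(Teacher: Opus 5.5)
Your proposal is correct and follows essentially the same route as the paper: the volumetric bound $(1+2t/u)^J$ in the empirical quotient of $\mathcal L_J$ gives $\mathcal J(\rho)\lesssim\rho\sqrt J$, you take $\eta\asymp\sqrt{J+\log n}+\sqrt{n\phi_{n,J}^{\mathrm{pre,OS}}}$ in Theorem~\ref{theo0}, and you use the truncation bound $\phi_{n,J}^{\mathrm{pre,OS}}\le C_s^2J^{-2s/d}$ followed by balancing. Your explicit check, via \eqref{eq:pre-os-vanishing}, that $1<\eta<2B\sqrt n$ is admissible is a small detail the paper leaves implicit.
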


\begin{proof}
For $t>0$, let
\[
\mathcal D_J(t)
=
\{f-g:f,g\in\mathcal O_J(B),\ \|f-g\|_n\leq t\}.
\]
The volumetric bound in the empirical quotient space gives
\[
N(u,\mathcal D_J(t),\|\cdot\|_n)
\leq
(1+2t/u)^J,
\]
and hence
\begin{equation}
\int_0^t
\sqrt{\log N(u,\mathcal D_J(t),\|\cdot\|_n)}\,du
\leq
Ct\sqrt J.
\label{eq:pre-os-entropy-integral}
\end{equation}
For the shell radii
$\rho_\ell=2^{\ell/2}\eta/\sqrt n$ in
Theorem~\ref{theo0}, \eqref{eq:pre-os-entropy-integral} gives
\[
\frac{\sqrt n}{\eta^2}
\sum_{\ell}
\frac{\mathcal J(\rho_\ell)}{2^\ell}
\lesssim
\frac{\sqrt J}{\eta}
\sum_{\ell\geq1}2^{-\ell/2}
\lesssim
\frac{\sqrt J}{\eta}.
\]
Taking $\eta$ to be a sufficiently large constant multiple of
\[
\sqrt{J+\log n}
+
\sqrt{n\phi_{n,J}^{\mathrm{pre,OS}}}
\]
in Theorem~\ref{theo0} yields
\[
\sup_{A\in\Lambda}
\|\widehat F_{A,J}^{\mathrm{pre,OS}}-F_A^*\|_n
=
O_{\mathbb P}\!\left(
\sqrt{\frac{J+\log n}{n}}
+
\sqrt{\phi_{n,J}^{\mathrm{pre,OS}}}
\right),
\]
which proves \eqref{eq:pre-os-rate}.

Under \eqref{eq:pre-os-coefficient-condition}, let
\[
G_{A,J}
=
\sum_{k=1}^J\theta_{A,k}\varphi_k.
\]
Then
\begin{equation}
\|F_A^*-G_{A,J}\|_\infty
\leq
J^{-s/d}
\sum_{k>J}
k^{s/d}|\theta_{A,k}|\,\|\varphi_k\|_\infty
\leq
C_sJ^{-s/d}.
\label{eq:pre-os-tail}
\end{equation}
Since $\|F_A^*\|_\infty\leq1$,
$\|G_{A,J}\|_\infty\leq1+C_s\leq B$, and therefore
\[
\phi_{n,J}^{\mathrm{pre,OS}}
\leq
C_s^2J^{-2s/d}.
\]
Balancing $J/n$ and $J^{-2s/d}$ proves
\eqref{eq:pre-os-concrete-rate}.
\end{proof}

\subsubsection*{Constrained estimator}

Take $\mathcal F=\mathcal O_J(B)$ and define
\begin{equation}
\widehat{\bm g}^{\mathrm{OS}}
\in
\arg\min_{\bm g\in\mathcal O_J(B)^m}
\sum_{j=1}^m\sum_{i=1}^n
\{z_i^j-H_j(\bm g)(x_i)\}^2,
\label{eq:con-os-latent-estimator}
\end{equation}
with
\begin{equation}
\widehat F_{A_j}^{\mathrm{con,OS}}
=
H_j(\widehat{\bm g}^{\mathrm{OS}}),
\qquad
j=1,\ldots,m.
\label{eq:con-os-estimator}
\end{equation}
Define
\begin{equation}
\Delta_{n,m}^{\mathrm{OS}}
=
\inf_{\bm g\in\mathcal O_J(B)^m}
\max_{j\leq m}
\|F_{A_j}^*-H_j(\bm g)\|_\infty.
\label{eq:con-os-direct-approx}
\end{equation}

\begin{corollary}[Constrained orthogonal-series estimator]
\label{cor:con-os}
For every $\delta\in(0,1/2)$, with probability at least $1-\delta$,
\begin{equation}
\frac1m\sum_{j=1}^m
\|\widehat F_{A_j}^{\mathrm{con,OS}}-F_{A_j}^*\|_n^2
\leq
C\left\{
(\Delta_{n,m}^{\mathrm{OS}})^2
+
\frac{\bar a_mJ}{n}
\log\!\left(
\frac{eBnm\sqrt{\kappa_ma_{\max,m}}}{\delta}
\right)
\right\}.
\label{eq:con-os-direct-rate}
\end{equation}
\end{corollary}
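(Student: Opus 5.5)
This bound is a direct instance of the explicit high-probability bound \eqref{eq:growing-m-main}, applied with the base class $\mathcal F=\mathcal O_J(B)$. That result needs two inputs. The first is a uniform bound $B_n$ on $\mathcal F$. The second is a covering condition of the form $\log N(\epsilon,\mathcal F,\|\cdot\|_n)\le v_n\log(A_n/\epsilon)$ for $0<\epsilon\le B_n$, with $v_n\ge1$ and $A_n\ge eB_n$. Once both are checked, the conclusion reads off with $\Delta_{n,m}=\Delta_{n,m}^{\mathrm{OS}}$. The approximation error also needs no separate work: $\Delta_{n,m}^{\mathrm{OS}}$ in \eqref{eq:con-os-direct-approx} is by definition the infimum over $\mathcal O_J(B)^m$ used in that bound.

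\textbf{Envelope.} Every $f\in\mathcal O_J(B)$ satisfies $\|f\|_\infty\le B$ by definition, so we take $B_n=B$.

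\textbf{Entropy.} The volumetric bound \eqref{eq:os-base-cover} gives, for $0<u\le B$,
\[
\log N(u,\mathcal O_J(B),\|\cdot\|_n)\le J\log(1+2B/u)\le J\log(3B/u).
\]
Since $3B\le eB\cdot(3/e)$, we may set $A_n=3eB$, which satisfies $A_n\ge eB_n$. With $v_n=J\ge1$ we then have $J\log(3B/u)\le v_n\log(A_n/u)$. The covering hypothesis of \eqref{eq:growing-m-main} therefore holds with $v_n=J$ and $A_n=3eB$.

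\textbf{Conclusion.} Substituting into \eqref{eq:growing-m-main}, with probability at least $1-\delta$,
\[
\frac1m\sum_j\|\widehat F_{A_j}^{\mathrm{con,OS}}-F_{A_j}^*\|_n^2\lesssim(\Delta_{n,m}^{\mathrm{OS}})^2+\frac{\bar a_mJ}{n}\log\!\left(\frac{3e^2Bnm\sqrt{\kappa_ma_{\max,m}}}{\delta}\right).
\]
Because $nm\ge1$, $\kappa_m,a_{\max,m}\ge1$ and $\delta<1/2$, the argument of the logarithm exceeds $e$. Hence $\log(3e\cdot x)\le C\log x$ with $x=eBnm\sqrt{\kappa_ma_{\max,m}}/\delta$, and the numerical constant $3e$ inside the logarithm is absorbed into $C$. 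This yields \eqref{eq:con-os-direct-rate}.

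\textbf{Main obstacle.} The only real point is that \eqref{eq:growing-m-main} rests on Lemma~\ref{lem:theo2-localized-multiplier}. That lemma transfers the base entropy to each $\mathcal I_{j,n}(\widetilde{\bm g})$ through Lemma~\ref{lem:theo2-entropy-transfer}, whose proof requires $\epsilon$-covers at scales $4\epsilon/\sqrt{\kappa_ja_j}$. These scales can exceed $B$ when $\epsilon$ is large, which falls outside the range where \eqref{eq:os-base-cover} was stated. In that regime I would note that a single function already covers $\mathcal O_J(B)$ once the radius exceeds $B$, so the entropy is zero and the bound holds trivially. Outside of this check, the argument is purely a matter of verifying hypotheses.
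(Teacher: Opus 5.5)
Your proposal is correct and follows the paper's proof: you verify the covering condition for $\mathcal O_J(B)$ through the volumetric bound with $v_n=J$ and $A_n$ a constant multiple of $B$, and then substitute into \eqref{eq:growing-m-main}. The paper takes $A_n=3B$ directly, which already satisfies $A_n\ge eB$ because $3>e$, so your inflation to $3eB$ is unnecessary but harmless; your absorption of the constant inside the logarithm and your check that the entropy vanishes at radii exceeding $B$ simply spell out details the paper leaves implicit.
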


\begin{proof}
By \eqref{eq:os-base-cover},
\begin{equation}
\log N(u,\mathcal O_J(B),\|\cdot\|_n)
\leq
J\log(3B/u),
\qquad
0<u\leq B.
\label{eq:con-os-log-cover}
\end{equation}
Thus the covering condition following Theorem~\ref{theo1} holds with
$v_n=J$ and $A_n=3B$. Substitution into
\eqref{eq:growing-m-main} proves \eqref{eq:con-os-direct-rate}.
\end{proof}

\subsubsection*{Constrained estimator: concrete orthogonal-series rate}

\begin{corollary}[Constrained orthogonal series: bounded latent functions]
\label{cor:con-os-concrete}
Suppose Assumption~\ref{as:tf-os-latent} holds and, for some $s>0$,
\begin{equation}
g_j^*
=
\sum_{k=1}^\infty\gamma_{j,k}\varphi_k,
\qquad
\max_{j\leq m}
\sum_{k=1}^\infty
k^{s/d}|\gamma_{j,k}|\,\|\varphi_k\|_\infty
\leq C_s.
\label{eq:con-os-coefficient-condition}
\end{equation}
If $B\ge \max\{1,C_0+C_s\}$ is fixed, then
\begin{equation}
\frac1m\sum_{j=1}^m
\|\widehat F_{A_j}^{\mathrm{con,OS}}-F_{A_j}^*\|_n^2
=
O_{\mathbb P}\!\left\{
\kappa_m^2J^{-2s/d}
+
\frac{\bar a_mJ}{n}
\log\!\left(
eBnm\sqrt{\kappa_ma_{\max,m}}
\right)
\right\}.
\label{eq:con-os-concrete-oracle-rate}
\end{equation}
Choosing the integer truncation level
\[
J_n
=
\max\left\{
1,\,
\left\lceil
\left[
\frac{\kappa_m^2 n}
{\bar a_m
\log\!\left(
eBnm\sqrt{\kappa_ma_{\max,m}}
\right)}
\right]^{\frac{d}{2s+d}}
\right\rceil
\right\},
\]
we further obtain
\begin{equation}
\frac1m\sum_{j=1}^m
\|\widehat F_{A_j}^{\mathrm{con,OS}}-F_{A_j}^*\|_n^2
=
O_{\mathbb P}\!\left\{
\kappa_m^{\frac{2d}{2s+d}}
\left[
\frac{\bar a_m}{n}
\log\!\left(
eBnm\sqrt{\kappa_ma_{\max,m}}
\right)
\right]^{\frac{2s}{2s+d}}
+
\frac{\bar a_m}{n}
\log\!\left(
eBnm\sqrt{\kappa_ma_{\max,m}}
\right)
\right\}.
\label{eq:con-os-balanced-rate}
\end{equation}

In particular, if
$\kappa_m,\bar a_m,a_{\max,m}=O(1)$
and $m$ grows at most polynomially in $n$, then, ignoring logarithmic
factors,
\[
\frac1m\sum_{j=1}^m
\|\widehat F_{A_j}^{\mathrm{con,OS}}-F_{A_j}^*\|_n^2
=
O_{\mathbb P}\!\left(
n^{-\frac{2s}{2s+d}}
\right).
\]
\end{corollary}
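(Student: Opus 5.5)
The plan is to reduce this to Corollary~\ref{cor:con-os} and then bound the direct approximation error $\Delta_{n,m}^{\mathrm{OS}}$ using the exact latent representation of Assumption~\ref{as:tf-os-latent}. By \eqref{eq:tf-os-exact-representation}, $F_{A_j}^*=H_j(\bm g^*)$ for all $j$. The comparator will be the truncated expansion $\widetilde g_j=\sum_{k\le J}\gamma_{j,k}\varphi_k$. The same tail computation as in \eqref{eq:pre-os-tail} gives
\[
\|g_j^*-\widetilde g_j\|_\infty\le J^{-s/d}\sum_{k>J}k^{s/d}|\gamma_{j,k}|\,\|\varphi_k\|_\infty\le C_sJ^{-s/d},
\]
uniformly in $j$. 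Then \eqref{eq:tf-os-latent-envelope} gives $\|\widetilde g_j\|_\infty\le C_0+C_s\le B$, so $\widetilde{\bm g}\in\mathcal O_J(B)^m$ is admissible.

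Next I transfer the latent error to the probability scale with the pointwise stability bound \eqref{eq:theo2-pathwise-pointwise} of Lemma~\ref{lem:theo2-pathwise}:
\[
|H_j(\bm g^*)(x)-H_j(\widetilde{\bm g})(x)|\le\tfrac14\max_{p\in\mathcal{PR}(j)}\sum_{k\in p}|g_k^*(x)-\widetilde g_k(x)|\le\tfrac14\kappa_m C_sJ^{-s/d}.
\]
Hence $(\Delta_{n,m}^{\mathrm{OS}})^2\lesssim\kappa_m^2J^{-2s/d}$. Substituting this into \eqref{eq:con-os-direct-rate} and taking $\delta$ small but fixed turns the high-probability bound into an $O_{\mathbb P}$ statement. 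The $\log(1/\delta)$ term is absorbed because $\log(1/\delta)\lesssim_\delta\log(eBnm\sqrt{\kappa_m a_{\max,m}})$, which is at least $1$. This yields \eqref{eq:con-os-concrete-oracle-rate}.

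For the balanced rate, write $\ell_n=\log(eBnm\sqrt{\kappa_ma_{\max,m}})$ and balance $\kappa_m^2J^{-2s/d}$ against $\bar a_mJ\ell_n/n$. This is solved at $J^\ast=(\kappa_m^2n/(\bar a_m\ell_n))^{d/(2s+d)}$, and the choice $J_n=\max\{1,\lceil J^\ast\rceil\}$ handles rounding. When $J^\ast\ge1$, we have $J_n\le 2J^\ast$, so both terms are of order
\[
\kappa_m^2(J^\ast)^{-2s/d}=\kappa_m^{2d/(2s+d)}(\bar a_m\ell_n/n)^{2s/(2s+d)}.
\]
When $J^\ast<1$, $J_n=1$ and the variance term is $\bar a_m\ell_n/n$, while the bias term satisfies $\kappa_m^2\le\bar a_m\ell_n/n$. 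Together these two cases give the two summands in \eqref{eq:con-os-balanced-rate}.

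The final specialization is direct. With $\kappa_m,\bar a_m,a_{\max,m}=O(1)$ and $m$ polynomial in $n$, we have $\ell_n\asymp\log n$, and the rate becomes $(\log n/n)^{2s/(2s+d)}$, which is $n^{-2s/(2s+d)}$ up to logarithmic factors. I expect no real obstacle. The only steps needing care are confirming that the truncated latent functions stay inside $\mathcal O_J(B)$, which relies on $B\ge C_0+C_s$, and that \eqref{eq:theo2-pathwise-pointwise} gives a path-length factor of $\kappa_m$ rather than $a_j$ in the sup-norm approximation.
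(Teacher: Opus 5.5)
Your proposal is correct and follows essentially the same route as the paper. Both arguments truncate the latent expansions, check that the truncation lies in $\mathcal O_J(B)^m$, push the $C_sJ^{-s/d}$ latent error through \eqref{eq:theo2-pathwise-pointwise} to get $(\Delta_{n,m}^{\mathrm{OS}})^2\lesssim\kappa_m^2J^{-2s/d}$, and substitute into Corollary~\ref{cor:con-os}. The differences are cosmetic: the paper handles rounding in one step via $t\le J_n\le t+1$ instead of your case split on $J^\ast\ge1$, and you make the conversion of the $1-\delta$ bound into an $O_{\mathbb P}$ statement explicit where the paper leaves it implicit.
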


\begin{proof}
Let
\[
g_{j,J}
=
\sum_{k=1}^J\gamma_{j,k}\varphi_k.
\]
By \eqref{eq:con-os-coefficient-condition},
\begin{equation}
\max_{j\leq m}
\|g_j^*-g_{j,J}\|_\infty
\leq
C_sJ^{-s/d}.
\label{eq:con-os-latent-tail}
\end{equation}
Moreover,
\[
\|g_{j,J}\|_\infty
\leq
C_0+C_s
\leq B,
\]
so $\bm g_J\in\mathcal O_J(B)^m$. By
\eqref{eq:tf-os-exact-representation} and
\eqref{eq:theo2-pathwise-pointwise},
\begin{equation}
\max_{j\leq m}
\|H_j(\bm g_J)-F_{A_j}^*\|_\infty
\leq
C\kappa_mJ^{-s/d}.
\label{eq:con-os-joint-approx}
\end{equation}
Hence
\begin{equation}
(\Delta_{n,m}^{\mathrm{OS}})^2
\leq
C\kappa_m^2J^{-2s/d}.
\label{eq:con-os-direct-approx-bound}
\end{equation}
For the stated choice of $J_n$, put
\[
t=
\left\{
\frac{\kappa_m^2 n}
{\bar a_m
\log\!\left(eBnm\sqrt{\kappa_m a_{\max,m}}\right)}
\right\}^{\frac{d}{2s+d}}.
\]
Since $t\leq J_n\leq t+1$,
\[
\begin{aligned}
&\kappa_m^2J_n^{-2s/d}
+
\frac{\bar a_mJ_n}{n}
\log\!\left(eBnm\sqrt{\kappa_m a_{\max,m}}\right)
\\
&\qquad\lesssim
\kappa_m^{\frac{2d}{2s+d}}
\left[
\frac{\bar a_m}{n}
\log\!\left(eBnm\sqrt{\kappa_m a_{\max,m}}\right)
\right]^{\frac{2s}{2s+d}}+
\frac{\bar a_m}{n}
\log\!\left(eBnm\sqrt{\kappa_m a_{\max,m}}\right).
\end{aligned}
\]
This proves \eqref{eq:con-os-balanced-rate}. If
$\kappa_m,\bar a_m,a_{\max,m}=O(1)$ and $m$ grows at most
polynomially in $n$, ignoring logarithmic factors gives
$O_{\mathbb P}(n^{-2s/(2s+d)})$.
\end{proof}
}}

\newpage

\section{Representative DAG structures}
\label{app:dag-examples}

This section illustrates how the topology-dependent quantities
$\kappa_m$ and $\bar a_m$ vary across several representative DAG
structures. Recall that $\kappa_m$ is the maximum number of vertices
on a root-to-node path, and $\bar a_m = \frac{1}{m}\sum_{j=1}^m a_j,$ where $a_j$ denotes the number of ancestors of node $j$, including
the node itself. Ignoring logarithmic factors, \eqref{eq:growing-m-main} yields an estimation term of order
$\bar a_m v_n/n$.

We consider the following DAGs.

\begin{itemize}
    \item \textbf{Isolated roots.}
    The graph contains $m$ nodes and no edges. Hence every node is a
    root and has only itself as an ancestor.

    \item \textbf{Chain.}
    The nodes are ordered as
    $1 \to 2 \to \cdots \to m.$ 
    Thus node $j$ has ancestor set $\{1,\ldots,j\}$.

    \item \textbf{Rooted tree of depth $D$.}
    The DAG is a rooted directed tree with all edges oriented away
    from the root, and the longest root-to-node path contains $D+1$
    vertices. A node at depth $r$ has $r+1$ ancestors.

    \item \textbf{Star with $m-1$ roots and one sink.}
    There are $m-1$ root nodes, each with an edge to a single sink
    node. Thus the sink has all $m$ nodes as ancestors, whereas every
    root has only itself as an ancestor.

    \item \textbf{Complete two-layer DAG.}
    The nodes are partitioned into two layers, with every node in the
    first layer being a root and every possible edge directed from
    the first layer to the second layer. We take both layers to have
    order $m$, up to constant factors. Hence each node in the second
    layer has order $m$ ancestors, while root nodes have only
    themselves as ancestors.
\end{itemize}

The resulting quantities and rates are summarized below.

\begin{center}
\small
\begin{tabular}{lccc}
\toprule
DAG structure
& $\kappa_m$
& $\bar a_m$
& Average MSE bound \\
\midrule

$m$ isolated roots
& $1$
& $1$
& $\Delta_{n,m}^2 + v_n/n$ \\

Chain of length $m$
& $m$
& $(m+1)/2$
& $\Delta_{n,m}^2 + mv_n/n$ \\

Rooted tree of depth $D$
& $D+1$
& $\leq D+1$
& $\Delta_{n,m}^2 + Dv_n/n$ \\

Star with $m-1$ roots and one sink
& $2$
& $2-1/m$
& $\Delta_{n,m}^2 + v_n/n$ \\

Complete two-layer DAG
& $2$
& $\asymp m$
& $\Delta_{n,m}^2 + mv_n/n$ \\

\bottomrule
\end{tabular}
\end{center}

These examples show that the number of nodes $m$ alone does not
determine the estimation rate. For isolated roots and the star,
$\bar a_m$ remains bounded, so the estimation term is of order
$v_n/n$ up to logarithmic factors. For a rooted tree, the dependence
is controlled by its depth; in particular, a balanced tree has depth
of order $\log m$. In contrast, both the chain and the complete
two-layer DAG have $\bar a_m$ of order $m$, leading to an estimation
term of order $mv_n/n$.

\section{Comparison with prior theory}
\label{app:compare-theory}
Table~\ref{tab:comparison} summarizes recent theoretical results most closely
related to our work. Existing results primarily study distributional regression
for Euclidean responses, and only \citet{padilla2025risk} provides an explicit
convergence rate for DNN-based distributional regression. Moreover,
\citet{padilla2025risk} does not provide monotonicity estimation with respect to set
inclusion. Methods such as \citet{cevid2022distributional} and \citet{shen2025engression} may preserve
monotonicity from an algorithmic perspective; however, rather than addressing
general set-indexed conditional event probabilities, their theoretical
convergence guarantees apply to particular distributional functionals
(e.g., CDFs and quantiles), and these guarantees do not provide explicit
convergence rates in terms of $n$. In contrast, our framework estimates set-indexed conditional event probabilities on a general response space $\mathcal{Y}$ that allows non-Euclidean responses, and enforces monotonicity with respect to set inclusion. Specifically, we provide explicit convergence rates for both the uniform MSE of the preliminary estimator and the aggregated MSE of the constrained estimator.

\begin{table}[H]
\centering
\caption{Comparison of recent theoretical results in distributional regression. Here, $o_p(1)$ denotes convergence to zero in probability.}
\label{tab:comparison}
\scriptsize
\renewcommand{\arraystretch}{1.15}
\setlength{\tabcolsep}{4pt}
\begin{tabular}{c >{\centering\arraybackslash}m{2.4cm} >{\raggedright\arraybackslash}m{4.2cm} c >{\centering\arraybackslash}m{4.4cm}}
\toprule
& \textsc{Method} 
& \multicolumn{1}{c}{\textsc{Assumptions / Target}} 
& \textsc{Monotonicity} 
& \textsc{Convergence Rate} \\
\midrule
\multirow{4}{*}{\raisebox{-17ex}{\textsc{\shortstack{Prior \\ Work}}}}
& ~\cite{kohler2019rate} DNN non-parametric regression (not distributional)
& $Y \in \mathbb{R}$ \newline
  $f^*(x) = \mathbb{E}(Y|X=x)$ \newline
  $f^*(\cdot) \in \mathcal{H}(l, \mathcal{P})$
& No 
& $\phi_n$  (\textsc{MSE}) \\
\cmidrule(lr){2-5}
& ~\cite{cevid2022distributional} Forest-weighted distributional regression
& $Y \in \mathbb{R}^u$ \newline
  $F_A^*(x) = \mathbb{P}(Y \in A \mid X=x)$ \newline
  $F_A^*(\cdot) \in \mathcal{H}(1,\{1,d_0\})$
& Yes 
& $o_p(1)$ \\
\cmidrule(lr){2-5}
& ~\cite{shen2025engression} Generative DNN-based sampling
& $Y\in\mathbb{R}$ for theory; extends to\newline
   $Y\in\mathbb{R}^u$ methodologically \newline
  $F_A^*(x) = \mathbb{P}(Y \in A \mid X=x)$ \newline
  $F_A^*(\cdot) \in \mathcal{H}(1,\{1,d_0\})$
& Yes 
& $o_p(1)$ \\
\cmidrule(lr){2-5}
& \cite{padilla2025risk} DNN-based distributional regression for scalar CDF
& $Y \in \mathbb{R}$ \newline
  $F_t^*(x) = \mathbb{P}(Y \leq t \mid X=x)$ \newline
  $F_t^*(\cdot) \in \mathcal{H}(l,\mathcal{P})$
& No 
& $\dfrac{\log n}{n} + \phi_n \log^4 n$ \newline
  (\textsc{CRPS}~\citep{gneiting2007strictly}, 
  {Worst-case MSE}) \\
\midrule
\multirow{2}{*}{\raisebox{-8ex}{\textsc{\shortstack{This \\ Work}}}}
& Preliminary estimator
& $Y\in\mathcal{Y}$: general, non-Euclidean \newline
  $\Lambda$: totally ordered, potentially uncountable \newline
  $F_A^*(x)=\mathbb{P}(Y\in A\mid X=x)$, $A\in\Lambda$ \newline
  $F_A^*(\cdot)\in\mathcal{H}(l_A,\mathcal{P}_A)$
& No
& $\phi_n \log^4 n$ \newline
  (Uniform MSE over $\Lambda$) \\
\cmidrule(lr){2-5}
& Constrained estimator
& $Y\in\mathcal{Y}$: general, non-Euclidean \newline
  {$\Lambda=\{A_j\}_{j=1}^{m}\subset\mathcal{Y}$ with inclusion DAG} \newline
  $F_{A_j}^*(x)=\mathbb{P}(Y\in A_j\mid X=x)$ \newline
  {$F_A^*(\cdot)\in\mathcal{H}(l_A,\mathcal{P}_A)$}
& Yes
& {$\phi_n \log^4 n$} \newline
  {(Average MSE over $\Lambda$ with fixed sets)} \\
\bottomrule
\end{tabular}
\end{table}

\section{Compute resources}

\label{app:compute}

All experiments are conducted on a single laptop equipped with an Intel Core Ultra 9 275HX CPU, $32$GB of RAM, and an NVIDIA RTX 5070 Ti laptop GPU with $12$GB of VRAM. 

\section{Detailed configuration and results for numerical experiments}
\label{app:additional_scenarios}
In this section, we provide the detailed settings for all eight scenarios, including the data-generating mechanisms and the design of the sets (Appendix~\ref{app:num-setting}), full experimental results (Appendix~\ref{app:num-results}), and the configurations of the estimators used (Appendix~\ref{app:num-config-estim}).

\subsection{Detailed design for the eight scenarios}
\label{app:num-setting}
Here we present detailed design for the three scenarios (S1, S2, S3) mentioned in the main text and the additional five scenarios (S4, S5, S6, S7, S8).

\subsubsection{Scenario 1 (Swiss roll manifold).} 
We consider $m=20$ sets for a response $Y\in\mathbb{R}^3$ lying on a Swiss roll manifold. Let $X\sim\mathrm{Unif}([0,2]^{50})$ and define $X_{\mathrm{noisy}}=X+\epsilon$, where $\epsilon\sim\mathcal{N}(0,\sigma^2(X)I_{50})$ with heteroscedastic variance $\sigma^2(X)=1+0.25\|X\|^2/50$. We generate two latent coordinates from $X_{\mathrm{noisy}}$ as
\[
t=1.5\pi+3\pi\tau(w_t^\top X_{\mathrm{noisy}}+b_t),
\qquad
h=\tau(w_h^\top X_{\mathrm{noisy}}+b_h),
\]
where $\tau(u)=(1+e^{-u})^{-1}$, the entries of $w_t,w_h\in\mathbb{R}^{50}$ are drawn i.i.d.\ from $\mathcal{N}(0,0.25)$, and $b_t,b_h\sim\mathcal{N}(0,0.01)$. Thus, $t\in[1.5\pi,4.5\pi]$ and $h\in[0,1]$. The response is then obtained through the Swiss roll parametrization
\[
Y=(t\cos t,\,h,\,t\sin t).
\]

The $m = 20$ sets are defined in the $(t, h)$ parameter space using two families of quantile thresholds:
\[
q_t^{(l_t)} = Q_{\alpha_t^{(l_t)}}^{(t)}, \quad \alpha_t \in \{15\%, 30\%, 50\%, 70\%, 85\%\}, \quad l_t = 1,\ldots,5,
\]
\[
q_h^{(l_h)} = Q_{\alpha_h^{(l_h)}}^{(h)}, \quad \alpha_h \in \{20\%, 40\%, 60\%, 80\%\}, \quad l_h = 1,\ldots,4,
\]
where $Q_\alpha^{(\cdot)}$ denotes the $\alpha$-th percentile of the respective marginal distribution, estimated from an independent sample of size $10{,}000$. The 20 sets form a full $5 \times 4$ grid:
\begin{itemize}
    \item $A_{(l_t, l_h)} = \{(t,h) : t \leq q_t^{(l_t)},\, h \leq q_h^{(l_h)}\}$, \quad $l_t \in \{1,\ldots,5\}$,  \quad $l_h \in \{1,\ldots,4\}$.
\end{itemize}
The DAG structure is induced by componentwise ordering: $A_{(l_t, l_h)} \subset A_{(l_t', l_h')}$ if and only if $l_t \leq l_t'$ and $l_h \leq l_h'$.

\subsubsection{S2 (Random dot product graph).} 
Let $Y \in \{0,1\}^{N \times N}$ be the adjacency matrix with $N = 50$, where each node $i$ carries a unit-length embedding vector $z_i \in \mathbb{S}^{d-1}$ with $d = 5$, and the probability of an edge between nodes $i$ and $j$ is $\tfrac{1}{2}(\langle z_i, z_j \rangle + 1)$. The RDPG is endowed with a community structure of $K = 3$ communities, where both the community assignment probabilities $\pi_k(X)$ and the unit-length community embedding centers $a_k(X) \in \mathbb{S}^{d-1}$ are non-linearly parametrized by the covariate $X \sim \mathrm{Unif}([0,2]^{50})$ for $k\in\{1,2,3\}$. 

Specifically, we use a two-layer MLP to map $X$ to a vector $Q(X) \in \mathbb{R}^{18}$ with $Q(X) := W^{(2)} \tanh\bigl(W^{(1)} X\bigr)$, where $W^{(1)} \in \mathbb{R}^{16 \times 50}$ and $W^{(2)} \in \mathbb{R}^{18 \times 16}$ have entries independently drawn from $\mathcal{N}(0, 2/50)$ and $\mathcal{N}(0, 2/16)$, respectively. The output $Q(X)$ is split into two parts indexed by community $k \in \{1, 2, 3\}$. The first $15$ components are reshaped into $3$ vectors $\{\tilde{a}_k(X)\}_{k=1}^{3} \in \mathbb{R}^5$, each normalized to obtain the community embedding center $a_k(X) = \tilde{a}_k(X) / \|\tilde{a}_k(X)\| \in \mathbb{S}^{d-1}$. The remaining $3$ components, denoted $\{\ell_k(X)\}_{k=1}^{3}$, are treated as logits and passed through a softmax to obtain the community assignment probability $\pi_k(X) = \exp(\ell_k(X)) / \sum_{k'=1}^{3} \exp(\ell_{k'}(X))$.

Each node $i$ is then independently assigned $c_i \sim \mathrm{Cat}(\pi(X))$, with embedding $z_i = (a_{c_i}(X) + \varepsilon_i) / \|a_{c_i}(X) + \varepsilon_i\|$ where $\varepsilon_i \sim \mathcal{N}(0, 0.3^2 I_d)$, yielding the random graph $Y$. 

We estimate the distribution of the network statistic transitivity. For a graph $Y$ with adjacency matrix $A \in \{0,1\}^{N \times N}$, the transitivity is defined as $T(Y) = \frac{3 \cdot \mathrm{tri}(Y)}{\mathrm{tp}(Y)},$ where $\mathrm{tri}(Y) = \sum_{i < j < k} A_{ij} A_{jk} A_{ik}$ counts the number of triangles in $Y$, and $\mathrm{tp}(Y) = \sum_{i} \binom{d_i}{2}$ with $d_i = \sum_{j \neq i} A_{ij}$ counts the number of length-two paths of node $i$. 

The $m = 10$ sets form a chain $A_1 \subset A_2 \subset \cdots \subset A_{10}$, defined as sublevel sets of the transitivity:
\[
A_j = \bigl\{Y : T(Y) \leq q_{\alpha_j}\bigr\}, \qquad q_{\alpha_j} = Q_{\alpha_j}\bigl(T(Y)\bigr), \qquad j = 1, \ldots, 10,
\]
where $Q_\alpha$ denotes the $\alpha$-th percentile of the marginal distribution of $T(Y)$, and $\alpha_j = 5 + 10(j-1)$ for $j = 1, \ldots, 10$ (i.e.\ evenly spaced from $5\%$ to $95\%$). 

\subsubsection{S3 (Pixel density estimation on single noisy natural image).} We consider $X \in [0,1]^2$ a pixel coordinate and $Y = (Y_R, Y_G, Y_B) \in [0,1]^3$ a noisy observation of the clean pixel value $J(X) = (J_R, J_G, J_B)$ at $X$, with the goal of estimating $Y \mid X$ for a single given noisy image. Since we use $96 \times 96$ images from STL-10 (see Figure~\ref{fig:stl10} for illustration), $X$ takes values on the discrete grid $\{k/96 : k = 1, \ldots, 96\}^2$. The noise follows a Poisson-Gaussian model: $Y_c = \operatorname{Poisson}(50 \cdot J_c(X))/50 + \varepsilon_c$ with $\varepsilon_c \sim \mathcal{N}(0, 0.05^2)$ i.i.d. across $c \in \{R,G,B\}$.

The $m = 13$ sets are upper-orthants in $\mathbb{R}^3$, where each channel admits three nested intervals:
\[
L_1^{(c)} = (-\infty, Q_{30}^{(c)}], \quad L_2^{(c)} = (-\infty, Q_{60}^{(c)}], \quad L_3^{(c)} = (-\infty, Q_{90}^{(c)}],
\]
with $Q_\alpha^{(c)}$ denoting the $\alpha$-th percentile of the marginal distribution of $Y_c$, estimated from $50000$ noisy samples. The 13 sets are organized as follows:
\begin{itemize}
    \item $A_1 = L_1^{(R)} \times L_1^{(G)} \times L_1^{(B)}$,
    \item $A_2 = L_2^{(R)} \times L_1^{(G)} \times L_1^{(B)}$, \quad $A_3 = L_1^{(R)} \times L_2^{(G)} \times L_1^{(B)}$, \quad $A_4 = L_1^{(R)} \times L_1^{(G)} \times L_2^{(B)}$,
    \item $A_5 = L_2^{(R)} \times L_2^{(G)} \times L_1^{(B)}$, \quad $A_6 = L_2^{(R)} \times L_1^{(G)} \times L_2^{(B)}$, \quad $A_7 = L_1^{(R)} \times L_2^{(G)} \times L_2^{(B)}$,
    \item $A_8 = L_2^{(R)} \times L_2^{(G)} \times L_2^{(B)}$,
    \item $A_9 = L_3^{(R)} \times L_2^{(G)} \times L_2^{(B)}$, \quad $A_{10} = L_2^{(R)} \times L_3^{(G)} \times L_2^{(B)}$, \quad $A_{11} = L_2^{(R)} \times L_2^{(G)} \times L_3^{(B)}$,
    \item $A_{12} = L_3^{(R)} \times L_3^{(G)} \times L_2^{(B)}$,
    \item $A_{13} = L_3^{(R)} \times L_3^{(G)} \times L_3^{(B)}$.
\end{itemize}
The DAG structure is induced by componentwise inclusion: $A_i \subset A_j$ if and only if $L_{\ell_c^{(i)}}^{(c)} \subseteq L_{\ell_c^{(j)}}^{(c)}$ for all $c \in \{R,G,B\}$.
Also, since the per-pixel data are abundant, we follow~\citet{foi2008practical} and use the heteroskedastic Gaussian approximation to obtain the true conditional probability in closed form:
\[
\mathbb{P}(Y \in A_j \mid X) \approx \prod_{c \in \{R,G,B\}} \Phi\left( \frac{Q_{\alpha_c^{(j)}}^{(c)} - J_c(X)}{\sqrt{J_c(X)/\lambda + \sigma^2}} \right),
\]
where $\alpha_c^{(j)} \in \{30, 60, 90\}$ is the percentile level associated with channel $c$ in $A_j$.

\begin{figure}[H]
\label{fig:stl10}
    \centering
    \includegraphics[width=0.75\linewidth]{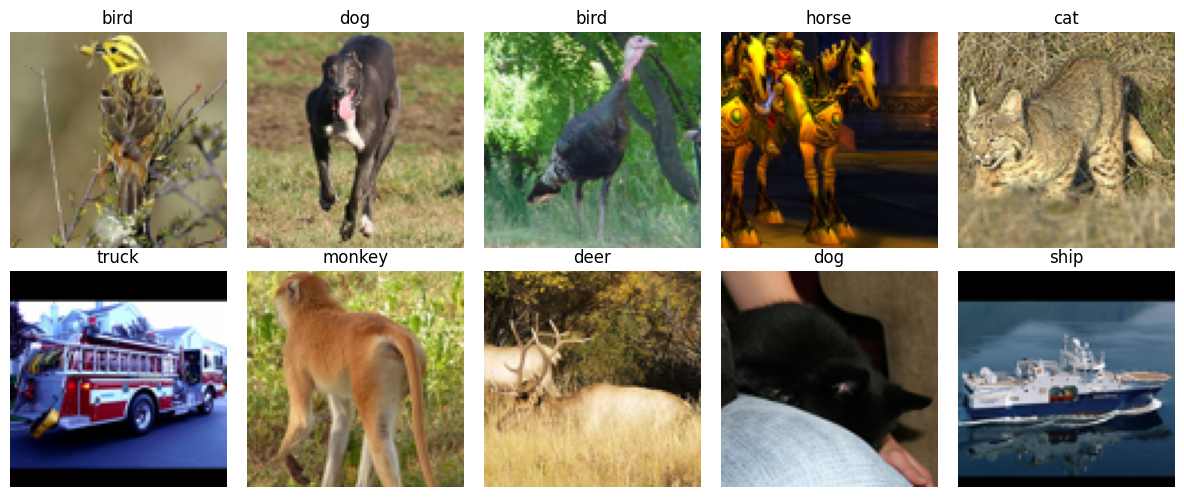}
    \caption{Example noiseless images from the STL-10 dataset across 10 categories.}
    \label{fig:placeholder}
\end{figure}

\subsubsection{S4 (Chain DAG).}
We consider $m = 10$ nested sets in a chain structure
$A_1 \subset A_2 \subset \cdots \subset A_{10}$, representing
the simplest DAG topology. Covariates are $X \sim \mathrm{Unif}([0,2]^{10})$. The response $Y \in \mathbb{R}^2$ follows a heteroscedastic Gaussian model $Y \mid X \sim \mathcal{N}(\mu(X), \sigma^2(X) I_2)$, where $\mu(X) = \Theta_1 X + \Theta_2 X^{\odot 2}$ with $\Theta_1$ and $\Theta_2$ having entries drawn i.i.d.\ from $\mathrm{Unif}([-1,1])$ and $\mathrm{Unif}([-0.5, 0.5])$, respectively, and $\sigma^2(X) = 1 + 0.25\|X\|^2$.

The $m = 10$ sets are Euclidean balls $A_j = \{y :
\|y\|^2 \leq r_j^2\}$ with radii given by the 10th through
95th percentiles of $\|Y\|^2$. Specifically, the sets are defined as
\[
A_j = \bigl\{ y \in \mathbb{R}^2 : \|y\|^2 \leq r_j^2 \bigr\}, \quad j = 1, \ldots, 10,
\]
with radii $r_1 < r_2 < \cdots < r_{10}$ defined by
\[
r_j = \sqrt{q_{\alpha_j}}, \quad q_{\alpha_j} = Q_{\alpha_j}(\|Y\|^2),
\]
where $Q_\alpha$ denotes the $\alpha$-th percentile of the marginal distribution of $\|Y\|^2$, and $\alpha_j = 10 + \tfrac{85}{9}(j-1)$ for $j = 1, \ldots, 10$ (i.e.\ evenly spaced from $10\%$ to $95\%$). 

\subsubsection{S5 (General DAG).}
We use the same covariate and response distributions as
Scenario~4, but consider $m = 7$ sets with a richer DAG
structure involving multiple roots and shared descendants.
Sets are axis-aligned rectangles $A_j = \{y : y_k \in
[q_{k,l_j}, q_{k,u_j}],\, k=1,2\}$, where $q_{k,\alpha}$
are the $\alpha$-quantile of $Y_k$.

The $m = 7$ sets are axis-aligned rectangles $A_j = I_1^{(j)} \times I_2^{(j)}$, where $Q_\alpha^{(k)}$ denotes the $\alpha$-th percentile of the marginal distribution of $Y_k$. The 7 sets take:
\begin{itemize}
    \item $A_1 = [Q_{30}^{(1)}, Q_{70}^{(1)}] \times [Q_{30}^{(2)}, Q_{70}^{(2)}]$,
    \item $A_2 = [Q_{10}^{(1)}, Q_{30}^{(1)}] \times [Q_{25}^{(2)}, Q_{75}^{(2)}]$,
    \item $A_3 = [Q_{25}^{(1)}, Q_{75}^{(1)}] \times [Q_{25}^{(2)}, Q_{75}^{(2)}]$,
    \item $A_4 = [Q_{70}^{(1)}, Q_{90}^{(1)}] \times [Q_{25}^{(2)}, Q_{75}^{(2)}]$,
    \item $A_5 = [Q_{10}^{(1)}, Q_{75}^{(1)}] \times [Q_{10}^{(2)}, Q_{90}^{(2)}]$,
    \item $A_6 = [Q_{25}^{(1)}, Q_{90}^{(1)}] \times [Q_{10}^{(2)}, Q_{90}^{(2)}]$,
    \item $A_7 = [Q_{5}^{(1)},  Q_{95}^{(1)}]  \times [Q_{5}^{(2)},  Q_{95}^{(2)}]$.
\end{itemize}
The DAG structure follows from set inclusion and is illustrated in Figure~\ref{fig:dag_structure_s2}.
\begin{figure}[h]
\label{fig:dag_structure_s2}
\centering
\begin{tikzpicture}[
    node distance=1.2cm and 0.8cm,
    every node/.style={circle, draw, minimum size=0.8cm, font=\small},
    arrow/.style={->, >=stealth, thick}
]
% Bottom level - roots
\node (A1) {$A_1$};
\node (A2) [left=of A1, xshift=-0.5cm] {$A_2$};
\node (A4) [right=of A1, xshift=0.5cm] {$A_4$};
% Middle level
\node (A3) [above=of A1] {$A_3$};
\node (A5) [above=of A2, xshift=0.8cm] {$A_5$};
\node (A6) [above=of A4, xshift=-0.8cm] {$A_6$};
% Top level
\node (A7) [above=of A3, yshift=0.3cm] {$A_7$};
% Edges (from subset to superset)
\draw[arrow] (A1) -- (A3);
\draw[arrow] (A2) -- (A5);
\draw[arrow] (A3) -- (A5);
\draw[arrow] (A3) -- (A6);
\draw[arrow] (A4) -- (A6);
\draw[arrow] (A5) -- (A7);
\draw[arrow] (A6) -- (A7);
\end{tikzpicture}
\caption{DAG structure for S5. An arrow from $A_i$ to $A_j$ indicates $A_i \subset A_j$.}
\end{figure}

\subsubsection{S6 (gaussian mixture).}
Consider the covariates follow $X \sim \mathrm{Unif}([0,2]^{30})$. We generate $Y \in \mathbb{R}^5$ from a mixture
\[
Y \mid X \sim \sum_{k=1}^{4} \pi_k(X)\, \mathcal{N}(\mu_k(X),\, \sigma_k^2(X)\, I_5),
\]
where component means follow $\mu_k(X) =\Theta_{k,1} X + \Theta_{k,2} X^{\odot 2}$ with $\Theta_{k,1} \in \mathbb{R}^{5 \times 30}$ having entries from $\mathrm{Unif}([-1,1])$ and $\Theta_{k,2} \in \mathbb{R}^{5 \times 30}$ from $\mathrm{Unif}([-0.5, 0.5])$, and $X^{\odot 2}$ denotes element-wise square of $X$. The mixing weights are $\pi(X) = \mathrm{softmax}(W_\pi X + b_\pi)$ with $W_\pi \in \mathbb{R}^{4 \times 30}$. The variance is heteroscedastic: $\sigma_k^2(X) = \sigma_{k,0}^2 + \sigma_{k,1}^2 \|\mu_k(X)\|^2$, where $\sigma_{k,0} \sim \mathrm{Unif}([1, 2])$ and $\sigma_{k,1} \sim \mathrm{Unif}([0.3, 0.5])$.

The $m = 19$ sets are axis-aligned hyperrectangles $A_j = \prod_{k=1}^{5} I_k^{(j)}$, where each interval $I_k^{(j)}$ is drawn from four nested levels:
\[
L_1^{(k)} = [Q_{20}^{(k)}, Q_{80}^{(k)}], \quad
L_2^{(k)} = [Q_{15}^{(k)}, Q_{85}^{(k)}], \quad
L_3^{(k)} = [Q_{10}^{(k)}, Q_{90}^{(k)}], \quad
L_4^{(k)} = [Q_{5}^{(k)},  Q_{95}^{(k)}],
\]
where $Q_\alpha^{(k)}$ denotes the $\alpha$-th percentile of the marginal distribution of $Y_k$, estimated from an independent sample of size $10{,}000$. The 19 sets are organized as follows:
\begin{itemize}
    \item $A_1 = \prod_{k=1}^{5} L_1^{(k)}$,
    \item $A_{i+1} = L_2^{(i)} \times \prod_{k \neq i} L_1^{(k)}$, \quad $i = 1, \ldots, 5$,
    \item $A_{i+6} = L_3^{(i)} \times \prod_{k \neq i} L_1^{(k)}$, \quad $i = 1, \ldots, 5$,
    \item $A_{i+11} = L_3^{(i)} \times L_3^{(i \bmod 5 + 1)} \times \prod_{k \neq i, i\bmod 5+1} L_1^{(k)}$, \quad $i = 1, \ldots, 5$,
    \item $A_{17} = \prod_{k=1}^{5} L_2^{(k)}$,
    \item $A_{18} = \prod_{k=1}^{5} L_3^{(k)}$,
    \item $A_{19} = \prod_{k=1}^{5} L_4^{(k)}$.
\end{itemize}
The DAG structure follows from set inclusion and is illustrated in Figure~\ref{fig:dag_structure_s3}.

\begin{figure}[H]
\centering
\resizebox{0.5\textwidth}{!}{%
\begin{tikzpicture}[
    every node/.style={circle, draw, minimum size=0.8cm, font=\small},
    arrow/.style={->, >=stealth, semithick}
]

% Level 0
\node (A1)  at ( 0,   0) {$A_1$};
% Level 1: A2--A6
\node (A2)  at (-3,  2.5) {$A_2$};
\node (A3)  at (-1.5,  2.5) {$A_3$};
\node (A4)  at ( 0,  2.5) {$A_4$};
\node (A5)  at ( 1.5,  2.5) {$A_5$};
\node (A6)  at ( 3,  2.5) {$A_6$};
% Level 2: A7--A11
\node (A7)  at (-3,  5) {$A_7$};
\node (A8)  at (-1.5,  5) {$A_8$};
\node (A9)  at ( 0,  5) {$A_9$};
\node (A10) at ( 1.5,  5) {$A_{10}$};
\node (A11) at ( 3,  5) {$A_{11}$};
% Level 3: A12--A16
\node (A12) at (-3,  7.5) {$A_{12}$};
\node (A13) at (-1.5,  7.5) {$A_{13}$};
\node (A14) at ( 0,  7.5) {$A_{14}$};
\node (A15) at ( 1.5,  7.5) {$A_{15}$};
\node (A16) at ( 3,  7.5) {$A_{16}$};
% A17: all dims L1 -- placed right
\node (A17) at ( 7,  5.75) {$A_{17}$};
% A18: all dims L2
\node (A18) at ( 0,  10) {$A_{18}$};
% A19: all dims L3
\node (A19) at ( 0,  12) {$A_{19}$};
% A1 -> A2..A6
\foreach \i in {2,3,4,5,6}
    \draw[arrow] (A1) -- (A\i);
% A2..A6 -> A7..A11
\draw[arrow] (A2) -- (A7);
\draw[arrow] (A3) -- (A8);
\draw[arrow] (A4) -- (A9);
\draw[arrow] (A5) -- (A10);
\draw[arrow] (A6) -- (A11);
% A2..A6 -> A17
\foreach \i in {2,3,4,5,6}
    \draw[arrow] (A\i) -- (A17);
% A7..A11 -> A12..A16 (adjacent pairs, cyclic)
\draw[arrow] (A7)  -- (A12);
\draw[arrow] (A8)  -- (A12);
\draw[arrow] (A8)  -- (A13);
\draw[arrow] (A9)  -- (A13);
\draw[arrow] (A9)  -- (A14);
\draw[arrow] (A10) -- (A14);
\draw[arrow] (A10) -- (A15);
\draw[arrow] (A11) -- (A15);
\draw[arrow] (A11) -- (A16);
\draw[arrow] (A7)  -- (A16);
% A12..A16 -> A18
\foreach \i in {12,13,14,15,16}
    \draw[arrow] (A\i) -- (A18);
% A17 -> A18
\draw[arrow] (A17) -- (A18);
% A18 -> A19
\draw[arrow] (A18) -- (A19);

\end{tikzpicture}%
}
\caption{DAG structure for S1. An arrow from $A_i$ to $A_j$ indicates $A_i \subset A_j$.
}
\label{fig:dag_structure_s3}
\end{figure}

% REVISION START: functional- and network-valued response experiments
{
\subsubsection{Scenario 7 (Functional DAG).}
We consider a functional-valued response $Y:[0,1]\to\mathbb{R}$ with
covariates $X\sim\operatorname{Unif}([-1,1]^{10})$. The response is
generated through the finite Fourier expansion
\[
Y(t)=\sum_{k=1}^6 \xi_k\varphi_k(t), \qquad t\in[0,1],
\]
where $(\varphi_1,\ldots,\varphi_6)
=
\bigl(
1,\sqrt{2}\sin(2\pi t),\sqrt{2}\cos(2\pi t),
\sqrt{2}\sin(4\pi t),\sqrt{2}\cos(4\pi t),
\sqrt{2}\sin(6\pi t)
\bigr).$ Conditionally on $X=x$, the coefficient vector
$\xi=(\xi_1,\ldots,\xi_6)^\top$ follows
\[
\xi\mid X=x\sim\mathcal{N}_6\bigl(0,\Sigma(x)\bigr),
\qquad
\Sigma(x)=D(x)R D(x),
\]
where $R_{k\ell}=0.4^{|k-\ell|}$ and $D(x)=\operatorname{diag}\{s_1(x),\ldots,s_6(x)\}.$ Let $b=(1,0.9,0.7,0.6,0.5,0.4)$. The scale functions are defined by
\[
u(x)
=
1.2\sin(\pi x_1x_2)
+0.8(x_3^2-1/3)
+0.6x_4,
\]
and
\[
s_k(x)
=
b_k\{0.4+0.8\tau(u(x))\}
\{0.8+0.4\tau(1.5x_{k+4})\},
\qquad k=1,\ldots,6,
\]
where $\tau(x)=1/(1+e^{-x})$. Thus, although the conditional mean of $Y$ is zero, its conditional
distribution varies nonlinearly with $X$ through the covariance
structure.
To construct the target sets, we define two functionals
\[
D_1(Y)=\int_0^1 |Y(t)|\,dt,
\qquad
D_\infty(Y)=\sup_{t\in[0,1]}|Y(t)|.
\]
Let $Q_\alpha^{(D_1)}$ and $Q_\alpha^{(D_\infty)}$ denote the
$\alpha$-th percentiles of the marginal distributions of $D_1(Y)$ and
$D_\infty(Y)$, respectively. We use the quantile levels
\[
(\alpha_1^{(1)},\alpha_1^{(2)},\alpha_1^{(3)})
=
(\alpha_\infty^{(1)},\alpha_\infty^{(2)},\alpha_\infty^{(3)})
=
(40\%,60\%,80\%),
\]
and define
\[
q_1^{(l_1)}=Q_{\alpha_1^{(l_1)}}^{(D_1)},
\qquad
q_\infty^{(l_\infty)}
=Q_{\alpha_\infty^{(l_\infty)}}^{(D_\infty)},
\qquad l_1,l_\infty\in\{1,2,3\}.
\]
We consider $m=9$ sets,
\[
A_{(l_1,l_\infty)}
=
\left\{
Y:
D_1(Y)\le q_1^{(l_1)},
\quad
D_\infty(Y)\le q_\infty^{(l_\infty)}
\right\},
\qquad l_1,l_\infty\in\{1,2,3\}.
\]
The DAG structure is then induced by componentwise ordering: 
$A_{(l_1,l_\infty)}\subseteq A_{(l_1',l_\infty')}$ if 
$l_1\le l_1'$ and $l_\infty\le l_\infty'$. Thus, the DAG is a full $3\times3$ grid, induced by the componentwise
ordering of the two quantile indices, with edges directed from each set
to its immediate supersets.

\subsubsection{Scenario 8 (Network balls).}
We consider an additional experiment for network responses. Each
response is the adjacency matrix $Y\in\{0,1\}^{50\times50}$ of a
50-node RDPG generated using the same mechanism as in S2. For each replication, we generate a reference network
$Y_0\in\{0,1\}^{50\times50}$ once from the same model with fixed input covariates $x_0=(1,\ldots,1)\in[0,2]^{50}$. Define the Frobenius distance
\[
D(Y,Y_0)=\|Y-Y_0\|_F.
\]
For aligned undirected binary networks, $D(Y,Y_0)^2$ is proportional to
the fraction of edges on which $Y$ and $Y_0$ disagree. We define ten
nested network balls centered at $Y_0$:
\[
A_j=\{Y:D(Y,Y_0)\le q_{\alpha_j}\},
\qquad j=1,\ldots,10,
\]
where $q_{\alpha_j}=Q_{\alpha_j}(D(Y,Y_0))$ is the $\alpha_j$-th
percentile of the marginal distribution of $D(Y,Y_0)$, with
$\alpha_j=5\%+10\%(j-1)$. We refer to this setting as network balls.

}
\subsection{Full experiment results}
\label{app:num-results}
We present the full experimental results for all scenarios in this section. Consistent with the conclusions in the main text, \textbf{Constrain-NN} achieves the best performance in most cases, while \textbf{Pre-NN} also converges and surpasses \textbf{DRF} and \textbf{Engression} as the sample $n$ grows.

\begin{table}[H]
\scriptsize
\centering
\caption{Mean Squared Error across all simulation scenarios. Best result in each row is bolded.}
\label{apptab:all_scenarios}
\begin{tabular}{clccccc}
\toprule
Scenario & $n_{\text{train}}$ & Constrain-NN & Pre-NN & Pre-NN-Clip & DRF & Engression \\
\midrule
\multirow{3}{*}{S1 (Swiss roll)}
& 1000 & \textbf{0.0096 (0.0007)} & 0.0307 (0.0015) & 0.0249 (0.0016) & 0.0128 (0.0013) & 0.0493 (0.0255) \\
& 2000 & \textbf{0.0072 (0.0004)} & 0.0191 (0.0010) & 0.0156 (0.0011) & 0.0122 (0.0012) & 0.0466 (0.0244) \\
& 4000 & \textbf{0.0057 (0.0004)} & 0.0104 (0.0005) & 0.0088 (0.0005) & 0.0116 (0.0011) & 0.0436 (0.0239) \\
\midrule
\multirow{3}{*}{S2 (RDPG)}
& 1000 & \textbf{0.0348 (0.0011)} & 0.0464 (0.0010) & 0.0439 (0.0010) & 0.0713 (0.0146) & 0.0420 (0.0135)\\
& 2000 & \textbf{0.0305 (0.0007)} & 0.0372 (0.0007) & 0.0355 (0.0007) & 0.0694 (0.0143)& 0.0395 (0.0134)\\
& 4000 & \textbf{0.0281 (0.0006)} & 0.0309 (0.0006) & 0.0299 (0.0006) & 0.0630 (0.0121)& 0.0383 (0.0132)\\
\midrule
\multirow{3}{*}{S4 (Chain)}
& 1000 & 0.0061 (0.0009) & 0.0095 (0.0011) & 0.0091 (0.0011) & 0.0128 (0.0046) & \textbf{0.0055 (0.0018)}\\
& 2000 & \textbf{0.0043 (0.0003)} & 0.0056 (0.0004) & 0.0055 (0.0004) & 0.0108 (0.0038) & 0.0049 (0.0015)\\
& 4000 & \textbf{0.0035 (0.0003)} & 0.0038 (0.0002) & 0.0037 (0.0002) & 0.0089 (0.0031) & 0.0047 (0.0015)\\
\midrule
\multirow{3}{*}{S5 (General DAG)}
& 1000 & \textbf{0.0055 (0.0009)} & 0.0094 (0.0008) & 0.0092 (0.0008) & 0.0072 (0.0023) & \textbf{0.0055 (0.0019)}\\
& 2000 & \textbf{0.0036 (0.0004)} & 0.0058 (0.0004) & 0.0058 (0.0004) & 0.0064 (0.0020) & 0.0052 (0.0018)\\
& 4000 & \textbf{0.0026 (0.0004)}& 0.0037 (0.0003) & 0.0037 (0.0003) & 0.0055 (0.0018) & 0.0049 (0.0016)\\
\midrule
\multirow{3}{*}{S6 (Gaussian mixture)}
& 1000 & \textbf{0.0100 (0.0023)} & 0.0145 (0.0025) & 0.0134 (0.0023) & 0.0118 (0.0028) & 0.0114 (0.0037) \\
& 2000 & \textbf{0.0085 (0.0023)} & 0.0105 (0.0022) & 0.0101 (0.0022) & 0.0114 (0.0027) & 0.0108 (0.0037)\\
& 4000 & \textbf{0.0077 (0.0022)} & 0.0082 (0.0022) & 0.0081 (0.0022) & 0.0109 (0.0027) & 0.0102 (0.0036)\\
\midrule
\multirow{3}{*}{S7 (Functional DAG)}
& 1000 & 0.0085 (0.0007) & 0.0173 (0.0021) & 0.0156 (0.0026) & \textbf{0.0075 (0.0006)} & 0.0675 (0.0243) \\
& 2000 & \textbf{0.0053 (0.0004)} & 0.0090 (0.0010) & 0.0083 (0.0011) & 0.0068 (0.0002) & 0.0633 (0.0185) \\
& 4000 & \textbf{0.0047 (0.0005)} & 0.0053 (0.0004) & 0.0050 (0.0004) & 0.0058 (0.0006) & 0.0623 (0.0182) \\
\midrule
\multirow{3}{*}{S8 (Network balls)}
& 1000 & \textbf{0.0434 (0.0022)} & 0.0645 (0.0018) & 0.0612 (0.0025) & 0.0691 (0.0011) & 0.0463 (0.0009) \\
& 2000 & \textbf{0.0382 (0.0021)} & 0.0519 (0.0018) & 0.0492 (0.0017) & 0.0668 (0.0013) & 0.0438 (0.0011) \\
& 4000 & \textbf{0.0362 (0.0015)} & 0.0417 (0.0015) & 0.0402 (0.0013) & 0.0644 (0.0009) & 0.0428 (0.0010) \\
\bottomrule
\end{tabular}
\end{table}

\begin{table}[H]
\centering
\caption{Test MSE ($\times 10^2$) for S3 by object class. Standard deviations are shown in parentheses. Results are averaged over 30 images per class.}
\scriptsize
\label{tab:stl10_results_std}
\begin{tabular}{lccccc}
\toprule
Class & Pre-NN & Constrain-NN & Pre-NN-Clip & DRF & Engression\\
\midrule
Airplane & 2.68 (0.95) & \textbf{2.62 (0.93)} & 2.66 (0.95) & 3.26 (1.07) & 6.09 (2.21)\\
Bird     & 3.75 (1.34) & \textbf{3.52 (1.12)} & 3.73 (1.32) & 4.68 (1.53) & 7.91 (2.29) \\
Car      & 3.70 (1.12) & \textbf{3.45 (1.05)} & 3.69 (1.11) & 4.24 (1.22) & 8.91 (2.36) \\
Cat      & 3.70 (1.60) & \textbf{3.50 (1.62)} & 3.69 (1.60) & 4.41 (1.75) & 7.47 (2.27) \\
Deer     & 4.17 (1.18) & \textbf{3.99 (1.14)} & 4.15 (1.18) & 4.87 (1.21) & 7.97 (1.84) \\
Dog      & 3.72 (1.43) & \textbf{3.56 (1.32)} & 3.71 (1.43) & 4.55 (1.58) & 7.92 (2.56) \\
Horse    & 4.50 (1.20) & \textbf{4.10 (1.09)} & 4.49 (1.19) & 5.37 (1.52) & 9.23 (2.39) \\
Monkey   & 3.42 (1.11) & \textbf{3.18 (1.10)} & 3.41 (1.12) & 4.34 (1.25) & 8.12 (2.06) \\
Ship     & 2.83 (0.81) & \textbf{2.35 (0.87)} & 2.86 (0.82) & 2.73 (0.98) & 7.08 (1.69) \\
Truck    & 3.79 (1.12) & \textbf{3.38 (0.99)} & 3.76 (1.10) & 4.14 (1.36) & 8.92 (2.44) \\
\midrule
Overall  & 3.63 (1.29) & \textbf{3.37 (1.24)} & 3.61 (1.29) & 4.26 (1.53) & 7.96 (2.37) \\
\bottomrule
\end{tabular}
\end{table}

\subsection{Configuration for estimators}
\label{app:num-config-estim}
In each simulation scenario except the special scenario using real images, each neural network for \textbf{Pre-NN} and \textbf{Constrain-NN} is a four-layer MLP with $64$ ReLU units per layer, trained for $200$ epochs using Adam with initial learning rate $10^{-3}$ and a cosine annealing schedule decaying to $1\%$ of the initial learning rate. For the special scenario, in order to model the more complex data, we use a five-layer MLP with $256$ ReLU units per layer, trained for $500$ epochs with the same optimizer and schedule. For \textbf{Engression}, we use the same architecture as the corresponding \textbf{Pre-NN}/\textbf{Constrain-NN} and draw $500$ samples per query point to estimate set probabilities with the \texttt{Engression} R package. For \textbf{DRF}, we fit $2000$ trees using the default hyperparameters of the \texttt{drf} R package, with the same covariate inputs as the neural network methods.

% REVISION START: set-family construction guideline

% REVISION END: set-family construction guideline

\section{Real data experiments}
\label{app:real-experiment}
In this section, we provide additional details and complete results for the two real-data experiments. Appendix~\ref{app:beijing} covers the Beijing Multi-Site Air Quality experiment, including target set construction, input processing, estimator configurations, and per-station results, while Appendix~\ref{app:chicago} reports the Chicago 2015 Crime experiment in full.

\subsection{Real data: Beijing multi-Site air quality prediction}
\label{app:beijing}
In this section, we provide the experimental details omitted from the main text. In Subsection~\ref{app:beijing-sets}, we provide a summary of the response variables (the pollutants) and describe the construction of the target sets. In Subsection~\ref{app:beijing-vars}, we list the variables contained in the dataset and describe how the model inputs are processed. In Subsection~\ref{app:beijing-config}, we provide the configurations of all estimators. In Subsection~\ref{app:beijing-result}, we provide full results and detailed results broken down by monitoring station.

\subsubsection{Set construction}
\label{app:beijing-sets}
Here we describe the construction of the target sets. Specifically,  an initial set of candidates is defined based on the regulatory document, and the collection is then pruned with several steps according to the empirical data distribution to retain only the meaningful sets.

\paragraph{Pollutant grading and empirical distribution.}
The Chinese Ambient Air Quality Standards (HJ~633-2012) regulate each pollutant under multiple threshold schemes. The starting point of our threshold design draws from this document, adopting the grading scheme that corresponds to the strictest available setting per pollutant; this choice is without loss of generality in comparing our experimental estimators in a distributional regression regime. Table~\ref{tab:aqi_breakpoints} summarizes the original grade breakpoints we use and their empirical prevalence across all $420768$ hourly observations in the dataset.

\begin{table}[htbp]
\centering
\caption{AQI grade breakpoints we used and empirical prevalence (\%) for each pollutant. All concentration units in $\mu$g/m$^3$. For each pollutant, the first row (``upper'') gives the upper concentration boundary of each grade, and the second row (``\%'') gives the empirical prevalence of that grade in the dataset. The eight columns correspond to the eight pollution grades: Excellent, Good, Light pollution, Moderate pollution, Heavy pollution, Severe pollution, Hazardous, and Beyond Index.}
\label{tab:aqi_breakpoints}
\begin{tabular}{lcccccccc}
\toprule
 & Excellent & Good & Light & Moderate & Heavy & Severe & Hazardous & Beyond \\
\midrule
PM2.5 upper & 35   & 75   & 115  & 150   & 250   & 350   & 500   & $\infty$ \\
PM2.5 \%    & 37.34 & 23.58 & 15.48 & 8.39 & 10.74 & 3.21 & 1.04 & 0.23 \\
\midrule
PM10 upper  & 50   & 150  & 250  & 350   & 420   & 500   & 600   & $\infty$ \\
PM10 \%     & 33.87 & 42.66 & 16.41 & 4.88 & 1.15 & 0.54 & 0.29 & 0.19 \\
\midrule
SO$_2$ upper & 50  & 150  & 475  & 800   & 1600  & 2100  & 2620  & $\infty$ \\
SO$_2$ \%   & 92.81 & 6.99 & 0.20 & 0.00 & 0.00 & 0.00 & 0.00 & 0.00 \\
\midrule
NO$_2$ upper & 40  & 80   & 180  & 280   & 565   & 750   & 940   & $\infty$ \\
NO$_2$ \%   & 46.99 & 34.07 & 18.59 & 0.35 & 0.00 & 0.00 & 0.00 & 0.00 \\
\midrule
CO upper    & 2000 & 4000 & 14000 & 24000 & 36000 & 48000 & 60000 & $\infty$ \\
CO \%       & 84.29 & 12.32 & 3.39 & 0.00 & 0.00 & 0.00 & 0.00 & 0.00 \\
\midrule
O$_3$ upper & 160  & 200  & 300  & 400   & 800   & 1000  & 1200  & $\infty$ \\
O$_3$ \%    & 93.40 & 3.38 & 3.06 & 0.14 & 0.01 & 0.00 & 0.00 & 0.00 \\
\bottomrule
\end{tabular}
\end{table}
\paragraph{Grade merging.}
After merging sparse grades into the next lower grade from the top down (i.e., if the highest remaining grade has prevalence $<2\%$, it is merged with the grade immediately below, repeating until the top grade exceeds $2\%$), and removing the excellent/good boundary to focus on pollution events, we retain thresholds from the light pollution grade upward. For SO$_2$, although exceedances of the light pollution threshold (SO$_2 \geq 150\ \mu$g/m$^3$) occur in only $0.20\%$ of observations, we retain this pollutant to ensure all six dimensions are represented in the joint exceedance structure. The resulting thresholds used to define exceedance sets are summarized in Table~\ref{tab:final_thresholds}.

\begin{table}[htbp]
\centering
\caption{Exceedance thresholds after grade merging. Each value defines an exceedance event $\{\text{pollutant} \geq \text{value}\}$; the value $0$ indicates the dimension is unconstrained.}
\label{tab:final_thresholds}
\begin{tabular}{ll}
\toprule
Pollutant & Thresholds ($\mu$g/m$^3$) \\
\midrule
PM2.5  & $0,\ 75,\ 115,\ 150,\ 250$ \\
PM10   & $0,\ 150,\ 250,\ 350$      \\
SO$_2$ & $0,\ 150$                  \\
NO$_2$ & $0,\ 80$                   \\
CO     & $0,\ 4{,}000$              \\
O$_3$  & $0,\ 200$                  \\
\bottomrule
\end{tabular}
\end{table}

\paragraph{Set construction and pruning.}
For each pollutant $k \in \{1, \ldots, 6\}$, let $\mathcal{T}_k = \{t_0^{(k)}, t_1^{(k)}, \ldots, t_{L_k}^{(k)}\}$ denote the set of available thresholds, with $t_0^{(k)} = 0$ indicating no constraint. Each exceedance set is then specified by a level tuple $(\ell_1, \ldots, \ell_6)$ with $\ell_k \in \{0, 1, \ldots, L_k\}$, defining
\[
A_{(\ell_1, \ldots, \ell_6)} = \bigl\{ y \in \mathbb{R}^6 : y_k \geq t_{\ell_k}^{(k)} \text{ for all } k = 1, \ldots, 6 \bigr\}.
\]
A level $\ell_k = 0$ imposes no constraint on pollutant $k$. Taking the full Cartesian product of all per-pollutant level sets yields $5 \times 4 \times 2 \times 2 \times 2 \times 2 = 320$ candidate sets, which we reduce in 2 steps:
\begin{enumerate}
    \item \textbf{Prevalence filtering.} We retain only those sets whose empirical prevalence---the fraction of observations satisfying the joint exceedance condition---is at least $0.5\%$. This reduces the $320$ candidates to $86$.
    \item \textbf{Gap pruning.} We construct DAG of the retained sets under the componentwise partial ordering: $A_{(\ell_1, \ldots, \ell_6)} \subseteq A_{(\ell_1', \ldots, \ell_6')}$ if and only if $\ell_k \geq \ell_k'$ for all $k$ (higher levels correspond to smaller sets). Starting from the deepest level of the diagram, a set is removed if its empirical prevalence differs from that of its closest parent by less than $0.001$ percentage points, indicating that the two sets are nearly indistinguishable. This step removes $9$ sets.
\end{enumerate}
Finally, we exclude the unconstrained full set ($\ell_k = 0$ for all $k$), whose probability is trivially $1$, leaving $76$ nested sets. Table~\ref{tab:sets_extremes} reports the ten sets with the highest and the ten sets with the lowest empirical prevalences.

\begin{table}[htbp]
\centering
\caption{Exceedance sets with the ten highest and ten lowest empirical prevalences (after pruning). The full collection contains $76$ sets.}
\label{tab:sets_extremes}
\begin{tabular}{cl r}
\toprule
Rank & Exceedance condition & Prevalence (\%) \\
\midrule
\multicolumn{3}{l}{\textit{Highest prevalence}} \\
1  & $\text{PM}_{2.5} \geq 75$                                          & 39.43 \\
2  & $\text{PM}_{2.5} \geq 115$                                         & 23.80 \\
3  & $\text{PM}_{10} \geq 150$                                          & 23.79 \\
4  & $\text{PM}_{2.5} \geq 75 \;\wedge\; \text{PM}_{10} \geq 150$      & 21.71 \\
5  & $\text{NO}_2 \geq 80$                                              & 19.53 \\
6  & $\text{PM}_{2.5} \geq 115 \;\wedge\; \text{PM}_{10} \geq 150$     & 18.73 \\
7  & $\text{PM}_{2.5} \geq 75 \;\wedge\; \text{NO}_2 \geq 80$          & 16.16 \\
8  & $\text{PM}_{2.5} \geq 150$                                         & 15.34 \\
9  & $\text{PM}_{2.5} \geq 150 \;\wedge\; \text{PM}_{10} \geq 150$     & 14.91 \\
10 & $\text{PM}_{10} \geq 150 \;\wedge\; \text{NO}_2 \geq 80$          & 13.07 \\
\midrule
\multicolumn{3}{l}{\textit{Lowest prevalence}} \\
67 & $\text{PM}_{2.5} \geq 250 \;\wedge\; \text{PM}_{10} \geq 350$                                                                      & 1.74 \\
68 & $\text{PM}_{10} \geq 350 \;\wedge\; \text{NO}_2 \geq 80$                                                                           & 1.73 \\
69 & $\text{PM}_{2.5} \geq 75 \;\wedge\; \text{PM}_{10} \geq 350 \;\wedge\; \text{NO}_2 \geq 80$                                       & 1.73 \\
70 & $\text{PM}_{10} \geq 350 \;\wedge\; \text{NO}_2 \geq 80 \;\wedge\; \text{CO} \geq 4000$                                           & 1.15 \\
71 & $\text{PM}_{2.5} \geq 250 \;\wedge\; \text{PM}_{10} \geq 350 \;\wedge\; \text{CO} \geq 4000$                                      & 1.14 \\
72 & $\text{PM}_{2.5} \geq 250 \;\wedge\; \text{PM}_{10} \geq 350 \;\wedge\; \text{NO}_2 \geq 80 \;\wedge\; \text{CO} \geq 4000$       & 1.13 \\
73 & $\text{PM}_{2.5} \geq 115 \;\wedge\; \text{O}_3 \geq 200$                                                                          & 0.92 \\
74 & $\text{PM}_{10} \geq 150 \;\wedge\; \text{O}_3 \geq 200$                                                                           & 0.91 \\
75 & $\text{PM}_{2.5} \geq 75 \;\wedge\; \text{PM}_{10} \geq 150 \;\wedge\; \text{O}_3 \geq 200$                                       & 0.84 \\
76 & $\text{PM}_{2.5} \geq 115 \;\wedge\; \text{PM}_{10} \geq 150 \;\wedge\; \text{O}_3 \geq 200$                                      & 0.62 \\
\bottomrule
\end{tabular}
\end{table}

\subsubsection{Dataset variables and input processing}
\label{app:beijing-vars}

The Beijing Multi-Site Air Quality dataset contains hourly observations from 12 monitoring stations across Beijing, covering March 2013 to February 2017. For each station and each timestamp, the dataset records six pollutant concentrations $(\text{PM}_{2.5}, \text{PM}_{10}, \text{SO}_2, \text{NO}_2, \text{CO}, \text{O}_3)$, five meteorological variables (temperature, pressure, dew point temperature, wind speed, precipitation), and the wind direction. 

We construct the model inputs from these raw variables in three steps. First, the hour-of-day, day-of-week, and month-of-year are each mapped to a $(\sin, \cos)$ pair to preserve cyclic continuity, and the 16-direction wind label is similarly converted to its angle (e.g., $\text{N} \to 0^\circ$, $\text{NNE} \to 22.5^\circ$, etc.) and then to $(\sin, \cos)$. Second, given a forecast horizon $h$ and an input window length $W$, we collect the measurements at lags $\{h, h+1, \ldots, h+W-1\}$ relative to the target time; for each lag, we include all five meteorological variables, the two wind-direction components, and the six pollutants, yielding $13$ variables per lag. Third, rows containing any missing pollutant value at the target timestamp or any missing variable in the input window are removed.

%Note that this encoding is not one-to-one with individual data points, so it does not introduce any index-based information leakage between training and test sets. 

For each target time $s$, the input is the concatenation of the cyclical time encodings at $s-h$ (the earliest end of the input window) with the $W$ lagged variable blocks aggregated over the input window.

For each monitoring station, we construct the training and test sets separately 
within each year of the 4-year study period. In each year, 3 consecutive 
months are held out for testing, and all observations within the held out period 
are assigned to the test set, with no intervening observations included in the 
training set. The remaining 9 months are then used for training, yielding an 
overall 75/25 train/test split.

\subsubsection{Estimator configurations}
\label{app:beijing-config}
Each neural network instance consists of a three-layer MLP with ReLU activations. Hidden layer width is set to $64$ for all instances. All input features are standardized to zero mean and unit variance using training set statistics. Models are trained for $40$ epochs with the Adam optimizer at a learning rate of $3\times 10^{-4}$ and a batch size of \(100\). For Distributional Random Forests, we fit $2000$ trees per station using the default hyperparameters provided by the \texttt{drf} R package, with the same covariate inputs as the neural network methods.

\subsubsection{Detailed results}

\label{app:beijing-result}

In this subsection, we report both the aggregated test MSE with standard deviation and the per-station test MSE for the five estimators: \textbf{Pre-NN}, \textbf{Constrain-NN}, \textbf{Pre-NN-Clip}, \textbf{DRF}, and \textbf{Engression}. Results are presented across all $3 \times 3 = 9$ configurations of the forecast horizon $h$ and input window length $W$. Table~\ref{tab:beijing-mse-expanded} reports the aggregated results. Table~\ref{tab:beijing-mse-per-station} presents the results for all 12 monitoring stations, with each cell reporting the average MSE over the 76 target sets. Consistent with the results in the main text, \textbf{Constrain-NN} achieves the lowest MSE in the vast majority of cells across all stations, and the other findings reported in the main text also hold at the per-station level.

\begin{table}[htbp]
\centering
\caption{Test MSE ($\times 10^{2}$) across 76 sets and aggregated over the 12 monitoring stations. Each entry reports $\text{mean},(\text{std})$ over stations. The best performance is bolded.}
\label{tab:beijing-mse-expanded}
\renewcommand{\arraystretch}{1.15}
\setlength{\tabcolsep}{5pt}
\begin{tabular}{c l c c c}
\toprule
Horizon $h$ & Method & $W=3$ & $W=6$ & $W=9$ \\
\midrule
\multirow{5}{*}{$h=3$}
& Pre-NN
& 3.43 (0.68)
& 3.41 (0.67)
& 3.35 (0.66) \\
& Constrain-NN
& \textbf{2.28 (0.48)}
& \textbf{2.24 (0.46)}
& \textbf{2.23 (0.46)} \\
& Pre-NN-Clip
& 3.54 (0.71)
& 3.51 (0.70)
& 3.46 (0.69) \\
& DRF
& 2.29 (0.48)
& 2.33 (0.50)
& 2.36 (0.50) \\
& Engression
& 3.79 (0.79)
& 3.94 (0.87)
& 3.92 (0.87) \\
\midrule
\multirow{5}{*}{$h=6$}
& Pre-NN
& 3.78 (0.74)
& 3.71 (0.74)
& 3.65 (0.74) \\
& Constrain-NN
& \textbf{2.84 (0.59)}
& \textbf{2.81 (0.57)}
& \textbf{2.79 (0.57)} \\
& Pre-NN-Clip
& 3.88 (0.74)
& 3.79 (0.77)
& 3.73 (0.77) \\
& DRF
& 2.90 (0.62)
& 2.91 (0.62)
& 2.92 (0.62) \\
& Engression
& 4.12 (0.90)
& 4.24 (0.95)
& 4.18 (0.94) \\
\midrule
\multirow{5}{*}{$h=9$}
& Pre-NN
& 3.97 (0.74)
& 3.87 (0.76)
& 3.79 (0.77) \\
& Constrain-NN
& \textbf{3.15 (0.61)}
& \textbf{3.10 (0.60)}
& \textbf{3.09 (0.61)} \\
& Pre-NN-Clip
& 4.06 (0.75)
& 3.95 (0.76)
& 3.87 (0.79) \\
& DRF
& 3.21 (0.67)
& 3.20 (0.67)
& 3.17 (0.66) \\
& Engression
& 4.31 (0.94)
& 4.40 (0.99)
& 4.30 (0.96) \\
\bottomrule
\end{tabular}
\end{table}

\begin{table}
\scriptsize
\centering
\caption{Test MSE ($\times 10^{2}$) across 76 sets, reported per station and per (horizon $h$, input window $W$) configuration. Each cell reports results for \textbf{Pre-NN} / \textbf{Constrain-NN} / \textbf{Pre-NN-Clip} / \textbf{DRF} / \textbf{Engression}. The best performance in each cell is bolded.}
\label{tab:beijing-mse-per-station}
\renewcommand{\arraystretch}{1.2}
\begin{tabular}{cc ccc}
\toprule
 & & \multicolumn{3}{c}{Input Window $W$ (hours)} \\
\cmidrule(lr){3-5}
Station & Horizon $h$ & $W=3$ & $W=6$ & $W=9$ \\
\midrule
\multirow{3}{*}{Aotizhongxin}
& $h=3$ & 3.86 / \textbf{2.72} / 3.99 / \textbf{2.72} / 4.36 & 3.90 / \textbf{2.73} / 4.02 / 2.83 / 4.69 & 3.86 / \textbf{2.71} / 3.99 / 2.88 / 4.70 \\
& $h=6$ & 4.29 / \textbf{3.30} / 4.38 / 3.44 / 4.78 & 4.27 / \textbf{3.37} / 4.37 / 3.48 / 5.08 & 4.25 / \textbf{3.39} / 4.36 / 3.52 / 5.06 \\
& $h=9$ & 4.46 / \textbf{3.62} / 4.54 / 3.75 / 5.01 & 4.44 / \textbf{3.65} / 4.52 / 3.79 / 5.27 & 4.38 / \textbf{3.75} / 4.47 / 3.78 / 5.16 \\
\midrule
\multirow{3}{*}{Changping}
& $h=3$ & 3.06 / \textbf{2.03} / 3.16 / 2.04 / 3.25 & 3.00 / \textbf{1.98} / 3.10 / 2.04 / 3.22 & 2.91 / \textbf{1.92} / 2.99 / 2.03 / 3.15 \\
& $h=6$ & 3.38 / \textbf{2.48} / 3.48 / 2.50 / 3.51 & 3.23 / \textbf{2.41} / 3.32 / 2.48 / 3.46 & 3.12 / \textbf{2.37} / 3.19 / 2.43 / 3.37 \\
& $h=9$ & 3.61 / 2.77 / 3.69 / \textbf{2.75} / 3.64 & 3.40 / \textbf{2.65} / 3.46 / 2.67 / 3.55 & 3.28 / \textbf{2.59} / 3.31 / 2.62 / 3.49 \\
\midrule
\multirow{3}{*}{Dingling}
& $h=3$ & 2.18 / 1.52 / 2.23 / \textbf{1.42} / 2.37 & 2.19 / 1.55 / 2.23 / \textbf{1.46} / 2.47 & 2.18 / 1.55 / 2.23 / \textbf{1.50} / 2.47 \\
& $h=6$ & 2.38 / 1.82 / 2.47 / \textbf{1.77} / 2.49 & 2.35 / 1.84 / 2.38 / \textbf{1.80} / 2.58 & 2.34 / 1.84 / 2.38 / \textbf{1.83} / 2.59 \\
& $h=9$ & 2.57 / 2.04 / 2.63 / \textbf{2.01} / 2.62 & 2.50 / \textbf{2.03} / 2.58 / \textbf{2.03} / 2.70 & 2.49 / 2.09 / 2.52 / \textbf{2.06} / 2.75 \\
\midrule
\multirow{3}{*}{Dongsi}
& $h=3$ & 3.56 / \textbf{2.43} / 3.70 / 2.50 / 3.88 & 3.54 / \textbf{2.39} / 3.69 / 2.52 / 4.13 & 3.44 / \textbf{2.40} / 3.62 / 2.53 / 4.02 \\
& $h=6$ & 3.94 / \textbf{3.08} / 4.04 / 3.14 / 4.27 & 3.88 / \textbf{2.99} / 4.00 / 3.13 / 4.43 & 3.77 / \textbf{2.89} / 3.91 / 3.09 / 4.25 \\
& $h=9$ & 4.12 / \textbf{3.35} / 4.21 / 3.44 / 4.49 & 4.01 / \textbf{3.17} / 4.12 / 3.40 / 4.57 & 3.92 / \textbf{3.09} / 4.05 / 3.34 / 4.38 \\
\midrule
\multirow{3}{*}{Guanyuan}
& $h=3$ & 3.53 / 2.49 / 3.65 / \textbf{2.42} / 3.88 & 3.53 / \textbf{2.40} / 3.65 / 2.47 / 4.13 & 3.47 / \textbf{2.37} / 3.61 / 2.50 / 4.13 \\
& $h=6$ & 3.94 / \textbf{3.06} / 4.04 / 3.10 / 4.32 & 3.90 / \textbf{3.04} / 4.00 / 3.12 / 4.52 & 3.83 / \textbf{3.00} / 3.94 / 3.10 / 4.44 \\
& $h=9$ & 4.11 / \textbf{3.33} / 4.19 / 3.44 / 4.54 & 4.03 / \textbf{3.30} / 4.12 / 3.40 / 4.65 & 3.94 / \textbf{3.25} / 4.04 / 3.34 / 4.50 \\
\midrule
\multirow{3}{*}{Gucheng}
& $h=3$ & 4.00 / \textbf{2.75} / 4.13 / 2.78 / 4.55 & 3.98 / \textbf{2.70} / 4.10 / 2.84 / 4.81 & 3.89 / \textbf{2.66} / 4.02 / 2.85 / 4.73 \\
& $h=6$ & 4.46 / \textbf{3.40} / 4.55 / 3.44 / 5.08 & 4.29 / \textbf{3.28} / 4.39 / 3.45 / 5.13 & 4.21 / \textbf{3.22} / 4.31 / 3.51 / 5.06 \\
& $h=9$ & 4.61 / \textbf{3.75} / 4.73 / 3.79 / 5.20 & 4.47 / \textbf{3.64} / 4.55 / 3.80 / 5.32 & 4.34 / \textbf{3.59} / 4.43 / 3.80 / 5.17 \\
\midrule
\multirow{3}{*}{Huairou}
& $h=3$ & 2.10 / \textbf{1.32} / 2.13 / 1.36 / 2.29 & 2.09 / \textbf{1.34} / 2.13 / 1.37 / 2.38 & 2.09 / \textbf{1.32} / 2.13 / 1.43 / 2.36 \\
& $h=6$ & 2.37 / \textbf{1.71} / 2.44 / 1.72 / 2.43 & 2.24 / \textbf{1.73} / 2.27 / 1.76 / 2.51 & 2.21 / \textbf{1.72} / 2.24 / 1.79 / 2.47 \\
& $h=9$ & 2.50 / \textbf{1.96} / 2.57 / \textbf{1.96} / 2.54 & 2.36 / \textbf{1.97} / 2.44 / \textbf{1.97} / 2.56 & 2.26 / 2.00 / 2.29 / \textbf{1.95} / 2.49 \\
\midrule
\multirow{3}{*}{Nongzhanguan}
& $h=3$ & 4.17 / \textbf{2.78} / 4.30 / 2.80 / 4.71 & 4.13 / \textbf{2.68} / 4.25 / 2.85 / 5.00 & 4.07 / \textbf{2.67} / 4.20 / 2.89 / 4.95 \\
& $h=6$ & 4.59 / \textbf{3.53} / 4.70 / 3.61 / 5.14 & 4.54 / \textbf{3.45} / 4.63 / 3.66 / 5.39 & 4.46 / \textbf{3.44} / 4.57 / 3.66 / 5.25 \\
& $h=9$ & 4.81 / \textbf{3.86} / 4.89 / 4.07 / 5.40 & 4.71 / \textbf{3.76} / 4.79 / 4.04 / 5.60 & 4.68 / \textbf{3.81} / 4.77 / 4.02 / 5.44 \\
\midrule
\multirow{3}{*}{Shunyi}
& $h=3$ & 3.39 / \textbf{1.91} / 3.47 / 2.09 / 3.56 & 3.24 / \textbf{1.87} / 3.33 / 2.12 / 3.54 & 3.16 / \textbf{1.81} / 3.19 / 2.11 / 3.46 \\
& $h=6$ & 3.72 / \textbf{2.55} / 3.78 / 2.72 / 3.95 & 3.62 / \textbf{2.48} / 3.68 / 2.71 / 3.93 & 3.44 / \textbf{2.53} / 3.52 / 2.73 / 3.79 \\
& $h=9$ & 3.95 / \textbf{3.00} / 4.05 / 3.08 / 4.19 & 3.75 / \textbf{2.93} / 3.81 / 3.05 / 4.12 & 3.58 / \textbf{2.84} / 3.63 / 2.97 / 3.90 \\
\midrule
\multirow{3}{*}{Tiantan}
& $h=3$ & 3.70 / 2.33 / 3.78 / \textbf{2.32} / 4.12 & 3.66 / \textbf{2.33} / 3.75 / 2.36 / 4.08 & 3.58 / \textbf{2.33} / 3.68 / 2.38 / 4.10 \\
& $h=6$ & 4.00 / \textbf{2.93} / 4.12 / 2.96 / 4.35 & 3.92 / \textbf{2.89} / 3.99 / 2.98 / 4.36 & 3.86 / \textbf{2.87} / 3.94 / 2.97 / 4.36 \\
& $h=9$ & 4.16 / \textbf{3.26} / 4.26 / 3.27 / 4.54 & 4.12 / \textbf{3.15} / 4.23 / 3.25 / 4.59 & 4.09 / \textbf{3.18} / 4.21 / 3.26 / 4.59 \\
\midrule
\multirow{3}{*}{Wanliu}
& $h=3$ & 3.65 / 2.53 / 3.77 / \textbf{2.46} / 4.05 & 3.66 / 2.51 / 3.78 / \textbf{2.50} / 4.29 & 3.68 / \textbf{2.55} / 3.82 / 2.56 / 4.42 \\
& $h=6$ & 4.00 / 3.06 / 4.09 / \textbf{3.04} / 4.39 & 4.04 / 3.12 / 4.14 / \textbf{3.11} / 4.65 & 4.06 / 3.18 / 4.16 / \textbf{3.15} / 4.73 \\
& $h=9$ & 4.24 / 3.41 / 4.31 / \textbf{3.39} / 4.64 & 4.23 / 3.46 / 4.32 / \textbf{3.41} / 4.88 & 4.20 / 3.45 / 4.29 / \textbf{3.39} / 4.87 \\
\midrule
\multirow{3}{*}{Wanshouxigong}
& $h=3$ & 4.01 / \textbf{2.56} / 4.13 / 2.59 / 4.41 & 3.96 / \textbf{2.47} / 4.08 / 2.61 / 4.58 & 3.90 / \textbf{2.44} / 4.03 / 2.63 / 4.53 \\
& $h=6$ & 4.33 / \textbf{3.21} / 4.43 / 3.32 / 4.76 & 4.25 / \textbf{3.10} / 4.34 / 3.28 / 4.85 & 4.20 / \textbf{3.08} / 4.30 / 3.27 / 4.76 \\
& $h=9$ & 4.54 / \textbf{3.50} / 4.62 / 3.62 / 4.97 & 4.41 / \textbf{3.47} / 4.49 / 3.57 / 5.03 & 4.38 / \textbf{3.53} / 4.47 / 3.56 / 4.91 \\
\bottomrule
\end{tabular}
\end{table}
\newpage

\subsection{Real data: Chicago 2015 crime distribution estimation}
\label{app:chicago}
We further evaluate our framework on the 2015 Chicago Crime dataset (publicly available at \url{https://data.gov/open-gov/}), following an experimental setup similar to that of~\citet{tansey2018maximum, padilla2025risk} but extended to a multivariate response. Specifically, whereas~\citet{padilla2025risk} models the conditional CDF of total log-crime counts through univariate distributional regression, we model the joint distribution of crime composition across the most frequent categories. This category-level formulation is important for downstream decision making, since regions with similar total crime volume may differ substantially in their dominant crime types and therefore require different policing interventions.

Following \citet{tansey2018maximum}, the city is discretized into a $100 \times 100$ grid based on latitude and longitude, and grid cells with no recorded crimes are excluded, yielding $n = 3844$ cells. The covariate $X$ is two dimensional and records the longitude bin index and latitude bin index of each cell, so the regression is purely spatial in our setting. The response variable is four dimensional, consisting of the log-transformed crime counts of the four most frequent crime categories in the dataset, namely theft, battery, criminal damage, and narcotics.

We construct a finite collection of nested target sets indexed by quantile exceedances of the four crime categories. For each crime category $k \in \{1, \ldots, 4\}$, let $\mathcal{T}_k = \{t_0^{(k)}, t_1^{(k)}, t_2^{(k)}, t_3^{(k)}\}$ denote the four marginal quantile thresholds at levels $20\%, 40\%, 60\%, 80\%$ of the empirical response within that category, where $t_\ell^{(k)}$ is the $\ell$-th threshold for category $k$ and increases in $\ell$. Each candidate exceedance set is specified by a level tuple $(\ell_1, \ldots, \ell_4)$ with $\ell_k \in \{0, 1, 2, 3\}$ encoding the threshold level applied to each category, defining $A_{(\ell_1, \ldots, \ell_4)} = \bigl\{ y \in \mathbb{R}^4 : y_k \geq t_{\ell_k}^{(k)} \text{ for all } k = 1, \ldots, 4 \bigr\}.$ The full Cartesian product yields $256$ candidate sets, which we reduce by the same pruning procedure as in Appendix~\ref{app:beijing-sets}. Specifically, we retain only sets whose empirical prevalence is at least $1\%$, then construct the DAG of the retained sets under the partial ordering $A_{(\ell_1, \ldots, \ell_4)} \subseteq A_{(\ell_1', \ldots, \ell_4')}$ if and only if $\ell_k \geq \ell_k'$ for all $k$. Starting from the deepest level, we remove any set whose empirical prevalence differs from that of its closest parent by less than $1$ percentage point. This step removes $174$ sets, leaving $m = 82$ nested sets whose inclusion relations form a non-trivial DAG.

We use an experimental configuration similar to that in \citet{padilla2025risk}. Specifically, each neural network instance in \textbf{Pre-NN} and \textbf{Constrain-NN} is a five-layer MLP with ReLU activations and hidden width $256$, trained for $1000$ epochs with the Adam optimizer at a learning rate of $10^{-3}$ annealing to $10^{-6}$. For \textbf{Engression}, we use the same model architecture and draw $500$ samples per query point for distribution estimation. For \textbf{DRF}, we fit $2000$ trees with the same covariate inputs as the neural network methods. We conduct $30$ Monte Carlo rounds of the experiment, with each replication using an independent random $75/25$ train–test split.

\begin{table}[h]
\centering
\caption{Test MSE across $m = 82$ sets, averaged over $30$ random splits. Standard deviations in parentheses.}
\scriptsize
\label{tab:chicago-results}
\begin{tabular}{lccccc}
\toprule
       & Pre-NN & Constrain-NN & Pre-NN-Clip & DRF & Engression  \\
\midrule
MSE     & $0.1194\ (0.0046)$ & $\mathbf{0.1186\ (0.0044)}$ & $0.1248\ (0.0046)$ & $0.1200\ (0.0032)$ & $0.1408\ (0.0039)$  \\
\bottomrule
\end{tabular}
\end{table}

Table~\ref{tab:chicago-results} reports the test MSE averaged over the $m = 82$ sets and $30$ replications. \textbf{Constrain-NN} achieves the best performance with the lowest MSE, followed by \textbf{Pre-NN} and DRF, which perform comparably, while \textbf{Pre-NN-Clip} performs worse than \textbf{Pre-NN}. These results demonstrate the importance of imposing the structural constraint, as done in \textbf{Constrain-NN}.
{
\subsection{Practical guidance for constructing a set family}
\label{app:set-guideline}

The target set family should be selected according to the conditional probabilities relevant to the scientific problem. We suggest the following general procedure.

\begin{enumerate}
\item \textbf{Use domain-relevant events when they are available.} If the application provides meaningful thresholds or downstream decisions, the sets can be defined directly from them. For example, the Beijing air-quality experiment uses joint pollutant exceedance events based on official grading thresholds.

\item \textbf{Use measurable summaries or metrics for complex responses.} A response object can be mapped to one or more interpretable scalar summaries, with quantiles of their marginal distributions used as thresholds. For a network response, examples include network statistics and distances to a reference network. For a functional response, examples include $L^1$, $L^2$, and $L^\infty$ distances to a reference function. Sublevel sets of a single summary give a nested family, while intersections constructed from several summaries can produce a general DAG.

\item \textbf{Remove empirically redundant sets when necessary.} If an initial family is large, sets having nearly the same empirical prevalence as their closest supersets provide little additional distributional information and may be removed. After a node is removed, its adjacent inclusion relations can be reconnected and transitively redundant edges deleted, preserving the inclusion order among the retained sets.
\end{enumerate}

The estimation procedure only requires membership indicators for the selected sets and the DAG encoding their inclusion relations. It therefore applies in the same way whether the sets arise from scientific knowledge, response summaries, or metrics.
}

\end{document}